\documentclass{article} %
\usepackage{iclr2026_conference,times}

\usepackage{amsmath,amsfonts,bm}

\def\eqref#1{equation~\ref{#1}}

\def\1{\bm{1}}

\def\ra{{\textnormal{a}}}

\def\rd{{\textnormal{d}}}

\def\rg{{\textnormal{g}}}

\def\rs{{\textnormal{s}}}
\def\rt{{\textnormal{t}}}

\DeclareMathAlphabet{\mathsfit}{\encodingdefault}{\sfdefault}{m}{sl}
\SetMathAlphabet{\mathsfit}{bold}{\encodingdefault}{\sfdefault}{bx}{n}

\newcommand{\E}{\mathbb{E}}

\DeclareMathOperator*{\argmin}{arg\,min}

\usepackage{hyperref}
\usepackage{url}

\usepackage{algorithm}
\usepackage{algorithmic}
\makeatletter
\newcommand{\NoNumber}[1]{\item[]\noindent\hskip-\algorithmicindent #1}
\makeatother
\usepackage{enumitem}
\usepackage{longtable}
\usepackage{graphicx}
\usepackage{wrapfig}
\usepackage{subfigure}
\usepackage{caption}
\usepackage{placeins} %
\usepackage{upgreek}
\usepackage{xcolor}
\newcommand{\narrowtexttt}[1]{\texttt{\scalebox{0.85}[1]{#1}}}
\definecolor{myblue}{HTML}{0072B2}
\definecolor{myred}{HTML}{D55E00}
\newcommand*{\BLUE}{\textcolor{myblue}{\textbf{\narrowtexttt{BLUE}}}}
\newcommand*{\RED}{\textcolor{myred}{\textbf{\narrowtexttt{RED}}}}

\renewcommand{\eqref}[1]{(\ref{#1})}
\usepackage{mathtools}
\usepackage{amssymb}
\usepackage{amsthm}
\theoremstyle{plain}
\newtheorem{theorem}{Theorem}[section]

\theoremstyle{definition}
\newtheorem{definition}[theorem]{Definition}

\theoremstyle{remark}

\newtheorem*{remark*}{Remark} 

\DeclareMathOperator{\setsymmetricdifference}{\triangle}

\newcommand*{\Alg}{\mathcal{A}}
\newcommand*{\Renyi}{\mathfrak{D}_\alpha}
\newcommand*{\Loss}{\mathcal{L}}
\newcommand*{\loss}{\ell}
\newcommand*{\group}{\textsl{g}}
\newcommand*{\tgf}{{f_{\textnormal{tail}}}}
\newcommand{\Oh}{\mathcal{O}}
\newcommand*{\Tau}{\mathcal{T}}
\newcommand{\intd}{\:\mathrm{d}}
\newcommand*{\ltwo}{$\smash{L^2}$}

\newcommand{\dpsgd}{\textsc{DP-SGD}}
\newcommand{\idpsgd}{\textsc{IDP-SGD}}
\newcommand{\inosgd}{\textsc{INO-SGD}}
\newcommand{\inosgm}{\textsc{INO-SGM}}
\newcommand{\sample}{\textsc{Sample}}
\newcommand{\scale}{\textsc{Scale}}

\usepackage{tikz}
\newcommand{\term}{%
  \mathbin{%
    \tikz[baseline=0.2ex, scale=0.6]{
      \draw[line width=0.8pt] (0,0) rectangle (1,0.5);
    }
  }
}
\newcommand{\termplus}{%
  \mathbin{%
    \tikz[baseline=0.2ex, scale=0.6]{
      \draw[line width=0.8pt] (0,0) rectangle (1,0.5);
      \draw[line width=0.4pt] (0.5,0.1) -- (0.5,0.4); %
      \draw[line width=0.4pt] (0.35,0.25) -- (0.65,0.25); %
    }
  }
}

\makeatletter
\newcommand\numberthis{\addtocounter{equation}{1}\tag{\theequation}}
\makeatother

\allowdisplaybreaks

\newcommand{\eqpart}[2]{%
  \underbrace{#1}_{\hypertarget{eqpart:#2}{\textcolor{blue}{\text{(#2)}}}}%
}
\newcommand{\eqpartref}[1]{%
  \hyperlink{eqpart:#1}{\textcolor{blue}{(#1)}}%
}

\makeatletter
\newenvironment{breakablealgorithm}
{
    \begin{raggedright}
        \refstepcounter{algorithm}
        \renewcommand{\caption}[1]
        {
            \addcontentsline{loa}{algorithm}{\protect\numberline{\thealgorithm}##1}
            \parbox{\textwidth}
            {
                \hrule height.8pt depth0pt \kern2pt
                {\raggedright\textbf{\fname@algorithm~\thealgorithm} ##1\par}
                \kern2pt\hrule\kern2pt
            }
        }
}
{
        \kern2pt\hrule\relax
    \end{raggedright}
}
\makeatother

\title{INO-SGD: Addressing Utility Imbalance \\ under Individualized Differential Privacy}

\author{Xiao Tian\textsuperscript{$1{,}2$}, Jue Fan\textsuperscript{$1{,}2$}, Rachael Hwee Ling Sim\textsuperscript{$1$} \& Bryan Kian Hsiang Low\textsuperscript{$1$} %
\\
\textsuperscript{$1$}Department of Computer Science, National University of Singapore\\
\textsuperscript{$2$}Agency for Science, Technology and Research (A*STAR), Singapore\\
\texttt{\{xiao.tian, jue.fan, rachael.sim\}@u.nus.edu} \\
\texttt{lowkh@comp.nus.edu.sg} \\
}

\iclrfinalcopy %

\begin{document}

\maketitle

\begin{abstract}
\textit{Differential privacy} (DP) is widely employed in machine learning to protect confidential or sensitive training data from being revealed. As data owners gain greater control over their data due to personal data ownership, they are more likely to set their own privacy requirements, necessitating \textit{individualized DP} (IDP) to fulfil such requests. In particular, owners of data from more sensitive subsets, such as positive cases of stigmatized diseases, likely set stronger privacy requirements, as leakage of such data could incur more serious societal impact. However, existing IDP algorithms induce a critical \textit{utility imbalance} problem: \textit{Data from owners with stronger privacy requirements may be severely underrepresented in the trained model, resulting in poorer performance on similar data from subsequent users during deployment}. In this paper, we analyze this problem and propose the \inosgd\ algorithm, which strategically down-weights data within each batch to improve performance on the more private data across all iterations. Notably, our algorithm is specially designed to satisfy IDP, while existing techniques addressing utility imbalance neither satisfy IDP nor can be easily adapted to do so. Lastly, we demonstrate the empirical feasibility of our approach.
\end{abstract}

\section{Introduction} \label{sec:introduction}

Machine learning (ML) focuses on building a \textit{model} that draws patterns from existing \textit{data} so that it can be applied to future prediction tasks. Since data is essential yet always scarce, \textit{model owners} who possess and train ML models often have to collect data from individual \textit{data owners}. For example, medical institutes acquire data from hospitals to train a model that predicts the risk of heart diseases \citep{panagopoulos2022incentivizing}. To build the best model, model owners should ideally be allowed to use all data without restrictions. However, this brings potential threat to data owners' privacy when some third-party users access the trained model. For example, adversarial techniques such as \textit{membership inference attacks} \citep{shokri2017membership} can reveal identity or sensitive data. Thus, model owners must \textbf{ensure \textit{data privacy} when training models} to achieve a long-term healthy collaboration.

Existing works have established advanced techniques to represent and preserve privacy. \textit{Differential privacy} (DP) introduced by \citet{dwork2006differential} is a widely accepted measure of privacy. The core intuition behind DP is that personal information is protected if for any individual datum $d$, any dataset with $d$ results in a trained model that is almost indistinguishable from another model trained on the same dataset without $d$ with regards to a predetermined \textit{privacy budget} set by data owners. Then a potential attacker cannot tell for certain from the trained model whether an individual datum is present in the dataset. Based on DP, many \textit{differentially private ML} (DP-ML) techniques have been developed and deployed, including \textbf{\textit{differentially private stochastic gradient descent} (\textsc{DP-SGD})} \citep{abadi2016deep}, where Gaussian noise is used to perturb the gradient at each SGD step.

With rising awareness of data protection and privacy, many laws have legally enforced \textit{personal data ownership}, such as \textit{General Data Protection Regulation} (GDPR) \citep{regulation2018general} in the European Union and \textit{California Consumer Privacy Act} (CCPA) \citep{pardau2018california} in the United States. Such laws state that data are legal properties of their owners, which entitle data owners to decide the extent to which model owners can use their data and hence indicate their own privacy requirements. In practice, \textbf{privacy requirements can vary across data owners} \citep{berendt2005privacy, jensen2005privacy}. For example, owners of data from more sensitive subsets (e.g., positive cases of stigmatized diseases, historically disadvantaged races) may require stronger privacy in classification tasks as data leakage could incur discrimination or denial of services \citep{best2023stigma, lai2022examining}.
At first glance, a model owner can fulfil such \textit{individualized differential privacy} (IDP) requirements by maintaining the strongest level of privacy across all data owners. \textbf{However, as described by the \textit{privacy-utility tradeoff} \citep{makhdoumi2013privacy}, using the strongest DP level undesirably reduces the utility (performance) of the trained ML model.} To exploit each owner's privacy budget as much as possible (to attain highest utility), \citet{boenisch2024have} proposes \textbf{\textit{individualized DP-SGD} (\idpsgd)} to achieve IDP, which we introduce in detail in Sec.~\ref{subsec:idp}. 

However, \textbf{(I)} \textbf{does \idpsgd\ have any limitation, such as when data owners holding different groups of data within the data space have different privacy requirements?} For example, in \textit{classification} problems, each class might have different privacy requirements; privacy requirements may vary across \textit{categories} of data sharing similar traits (e.g., different types of the same disease).
In Sec.~\ref{subsec:idp-induced-utility-imbalance-and-learning-dynamics}, we show that \textbf{\idpsgd\ has the following issue}: when data owners with exclusive data have different IDP requirements, the trained ML model may have lower utilities for the more private owners.
During deployment, the imbalanced utilities will negatively affect the model owner (e.g., medical institute with a disease model) and subsequent users (e.g., patients with the same diseases as the more private owners) affected by the model's decisions.
This leads to the next question: 
\textbf{(II)} \textbf{Can we proactively correct this \textit{IDP-induced utility imbalance} across data owners?}
To address this, we propose the \textbf{\textit{individualized noisy ordered SGD} (\inosgd)} algorithm in Sec.~\ref{subsec:inosgd-algorithm}. Our key idea is that \textbf{we should down-weight certain less important gradients, which in our case harm the model, as they worsen the imbalance and distract the model from learning the more difficult data, which would be from the more private owners or harder-to-predict data from less private owners}.
Importantly, the algorithm has to be specially designed such that it still satisfies each data owner's IDP requirement, and we explain in Sec.~\ref{subsec:idp-induced-utility-imbalance-and-learning-dynamics} why existing techniques addressing data imbalance cannot be easily adapted to do so. We theoretically analyze the benefits of \inosgd\ in Sec.~\ref{subsec:inosgd-theoretical-analysis} and demonstrate its empirical performance in Sec.~\ref{sec:experiments}.

\section{Background and Related Works} \label{sec:background-and-related-works}

\subsection{(Individualized) Differential Privacy} \label{subsec:idp}

Let $\mathcal{D}$ be the collection of datasets. Two datasets $D, D^d \in \mathcal{D}$ are \textit{neighboring with regards to datum $d$} if they differ only by $d$ (i.e., $D \setsymmetricdifference D^d = \{d\}$, where $\setsymmetricdifference$ denotes symmetric difference). A randomized \textit{algorithm} $\Alg: \mathcal{D} \to \bm\Omega$ takes in a (training) dataset $D \in \mathcal{D}$ and returns a (randomized) output vector $\bm\omega \in \bm\Omega \subseteq \mathbb{R}^r$. Let $p_\Alg^D$ denote the distribution of $\Alg(D)$. \textit{Differential privacy} (DP) \citep{dwork2006differential} works by comparing the output distributions $\smash{p_\Alg^D}$ and $\smash{p_\Alg^{D^d}}$ when algorithm $\Alg$ takes in neighboring datasets $D$ and $D^d$. 
Intuitively, privacy is protected if any attacker cannot easily distinguish the output distributions produced by any pair of neighboring datasets. In App.~\ref{app:differential-privacy}, we introduce different notions of DP, including \textit{$(\epsilon, \delta)$-DP} \citep{dwork2006our} and \textit{Rényi DP} \citep{mironov2017renyi}, which differ in how the difference between $p_\Alg^D$ and $\smash{p_\Alg^{D^d}}$ is measured. All these notions involve a parameter $\epsilon \geq 0$ called \textit{privacy budget}: a smaller $\epsilon$ (close to $0$) requires both distributions to be more indistinguishable, and hence corresponds to \textit{stronger} privacy.

Suppose there are $N$ data owners $[N] \coloneq \{1, 2, \cdots, N\}$ with subsets $(D_n)_{n \in [N]}$ and privacy budgets $\bm\epsilon \coloneq (\epsilon_n)_{n \in [N]}$. 
\textit{Individualized DP} (IDP) works by restricting the difference in output distributions $\smash{p_\Alg^D}$ and $\smash{p_\Alg^{D^d}}$ by the unique privacy budget of the differing datum $d$'s owner, $o(d)$. Formally,
\begin{definition}[IDP \citep{alaggan2016heterogeneous}]
    A randomized algorithm $\Alg: \mathcal{D} \to \bm\Omega$ satisfies \textit{$(\bm\epsilon, \delta)$-individualized differential privacy} ($(\bm\epsilon, \delta)$-IDP) if for any datum $d$, any pair of neighboring datasets $D, D^d \in \mathcal{D}$ such that $D \setsymmetricdifference D^d = \{d\}$ and every possible output set $\mathbf{O} \subseteq \bm\Omega$, 
    \[\textstyle
            \Pr[\Alg(D) \in \mathbf{O}] \leq e^{\epsilon_{o(d)}} \cdot \Pr[\Alg\left(D^d\right) \in \mathbf{O}] + \delta.\] $\Alg$ satisfies \textit{$(\bm\alpha, \bm{\bar\epsilon})$-individualized Rényi DP} ($(\bm\alpha, \bm{\bar\epsilon})$-IRDP) if for any datum $d$ and any pair of such $D$ and $D^d$, the \textit{Rényi divergence} of order $\alpha_{o(d)}$ (see App.~\ref{app:renyi-differential-privacy}) between $\smash{p_\Alg^D}$ and $\smash{p_\Alg^{D^d}}$ satisfies  
        \[{\mathfrak{D}_{\alpha_{o(d)}}\left(\smash{p_\Alg^d} \;\middle\|\,  \smash{p_\Alg^{D^d}}\right) \leq {\bar\epsilon}_{o(d)}}.\]
\end{definition}
Through algorithm $\Alg$, owners with smaller privacy budgets $\epsilon_n$ attain stronger privacy. To fulfil IDP, \citet{boenisch2024have} proposes two \idpsgd\ variants, \sample\ and \scale\ (refer to App.~\ref{app:idp}). \sample\ samples owner $n$'s data using an \textit{individualized sampling rate} $q_n$; \scale\ controls the algorithm's \textit{modular sensitivity} $\smash{\Delta_\Alg^d} \coloneq \smash{\sup_{D \setsymmetricdifference D^d = \{d\}}} \|\Alg(D) - \smash{\Alg(D^d)}\|$ with respect to each datum $d$, which ensures the maximum change a datum can bring is restricted by its owner's privacy budget. To achieve this, \scale\ scales each individual gradient $\mathbf{g}_d$ such that its norm does not exceed its owner $o(d)$'s \textit{individualized clipping threshold} $C_{o(d)}$. The processed gradients are then summed up and perturbed by a Gaussian noise of $\mathbf{0}_r$ ($r$-sized zero vector) mean and $\smash{\sigma^2 \mathbf{I}_{r \times r}}$ ($r \times r$ identity matrix) variance where $\sigma$ is the \textit{noise scale}. 
The privacy guarantee of \idpsgd\ is as follows:
\begin{theorem}[Privacy of \idpsgd\ \citep{boenisch2024have}] \label{thm:idpsgd-privacy-guarantee}
    \idpsgd\ %
    running for $T$ iterations satisfies $(\bm\alpha, \bar{\bm\epsilon})$-IRDP, where $\bar\epsilon_n = 2T\alpha_n q_n^2 C_n^2/{\sigma^2}$.
\end{theorem}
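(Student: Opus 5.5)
The bound has the shape of the standard small-sampling-rate Rényi DP bound for the subsampled Gaussian mechanism, $2\alpha q^2 \Delta^2/\sigma^2$ per step, composed over $T$ steps. I would therefore prove it one data owner at a time. Fix a datum $d$ with owner $n = o(d)$ and neighboring datasets $D, D^d$ with $D \setsymmetricdifference D^d = \{d\}$. I then bound $\mathfrak{D}_{\alpha_n}\bigl(p_\Alg^D \,\|\, p_\Alg^{D^d}\bigr)$, where $\Alg$ outputs the whole trajectory of noisy gradients (equivalently, the model iterates).

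\textbf{Step 1: one iteration.} Condition on the current parameters. In each iteration every datum $d'$ is included independently, with probability $q_{o(d')}$ under \sample\ (or a common $q$ under \scale). Its gradient is clipped to norm at most $C_{o(d')}$, the clipped gradients are summed, and $\mathcal{N}(\mathbf{0}_r,\sigma^2\mathbf{I}_{r\times r})$ noise is added. Under \scale, the contribution of $d$ is $\mathbf{g}_d \min(1, C_n/\|\mathbf{g}_d\|)$. Hence the modular sensitivity $\Delta_\Alg^d$ of the pre-noise sum with respect to $d$ is at most $C_n$.

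The contributions of all other data are identical under $D$ and $D^d$ and independent of whether $d$ is included. By joint convexity of the Rényi divergence (or equivalently quasi-convexity and post-processing), I condition on the other data's samples. This reduces the problem to comparing the mixture $(1-q_n)\mathcal{N}(\mathbf{0},\sigma^2 \mathbf{I}) + q_n\mathcal{N}(\mathbf{v},\sigma^2 \mathbf{I})$ with $\mathcal{N}(\mathbf{0},\sigma^2\mathbf{I})$, where $\|\mathbf{v}\| \le C_n$. By rotation invariance this is a one-dimensional problem with shift $C_n/\sigma$ in noise units.

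\textbf{Step 2: the sampled Gaussian bound.} I would invoke the known sampled Gaussian mechanism analysis (Mironov et al.; Abadi et al.'s moments accountant). It states that for sampling rate $q$, sensitivity $\Delta$, and $\alpha$ within the admissible range (roughly $q\le 1/5$, $\sigma/\Delta \ge 4$, and $\alpha$ bounded in terms of $\sigma/\Delta$ and $\log(1/q)$), the Rényi divergence of order $\alpha$ is at most $2\alpha q^2\Delta^2/\sigma^2$. Substituting $q = q_n$ and $\Delta = C_n$ gives $2\alpha_n q_n^2 C_n^2/\sigma^2$ per step. The \emph{key point} is that because of the individualized rate and threshold, only owner $n$'s parameters enter the bound.

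\textbf{Step 3: composition.} The $T$ iterations form an adaptive composition: each step's mechanism depends on the previous released outputs, but the per-step bound holds uniformly for every history. Adaptive composition of RDP then adds the bounds, giving $\bar\epsilon_n = 2T\alpha_n q_n^2C_n^2/\sigma^2$. Since $d$ was arbitrary, this is exactly $(\bm\alpha,\bar{\bm\epsilon})$-IRDP.

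The main obstacle is Step 2. The clean quadratic bound is only an asymptotic/regime-restricted upper bound on the exact sampled-Gaussian RDP, so the theorem implicitly requires those parameter conditions. I would state them as assumptions and cite the lemma rather than re-derive it. If a self-contained argument were needed, I would expand $\mathbb{E}_{\mathcal{N}(0,\sigma^2)}\bigl[(1-q+q e^{(2xC-C^2)/(2\sigma^2)})^{\alpha}\bigr]$ binomially and bound the terms of order $k\ge 3$ by the $k=2$ term. A further subtlety is that the divergence must also be checked in the reverse direction, since for add versus remove the roles of $D$ and $D^d$ swap; the cited lemma covers both directions.
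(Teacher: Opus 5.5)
Your proof is correct and takes the same route the paper uses. The statement itself is only cited from Boenisch et al., but the paper's proof of the analogous \inosgd\ privacy theorem follows the identical template: bound the modular sensitivity with respect to $d$ by $C_{o(d)}$, apply the sampled Gaussian mechanism bound $2\alpha q^2\Delta^2/\sigma^2$ per iteration with owner $n$'s own $q_n$ and $C_n$, and sum over the $T$ iterations by RDP composition. Your caveat that this quadratic bound only holds in a restricted regime of $(q,\sigma/\Delta,\alpha)$ is a fair refinement, since the paper's Theorem~\ref{thm:sgm-privacy} states it without those conditions.
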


\subsection{Data Imbalance and Relevance to This Work} \label{subsec:data-imbalance}

In this section, we introduce the \textit{data imbalance} issue in ML and its relevance to this work. Let $\bm\theta$ be model parameters and $\ell(\bm\theta; d)$ denote the loss on datum $d$ under $\bm\theta$. Let $\mathcal{L}(\bm\theta; D) \coloneq \sum_{d \in D} \ell(\bm\theta; d)$ be the optimization objective (empirical loss). Suppose the training set can be partitioned into disjoint groups of data $\group_1, \group_2, \cdots, \group_N$ such that data within each group share similar traits and data across different groups have different traits. Thereby, each group has its own optimization objective $\mathcal{L}_n(\bm\theta) \coloneq \mathcal{L}(\bm\theta; \group_n)$. Data imbalance refers to (\textdagger) \textbf{the imbalance in sizes of different groups in the training set} (hence in each sampled batch of gradient-based algorithms under uniform sampling), that is, $|\group_1|, |\group_2|, \cdots, |\group_N|$ differ significantly. For example, \textit{class imbalance} \citep{niaz2022class} exists when the training set $D$ is partitioned based on class labels.

Data imbalance has two major negative impacts on gradient-based training algorithms such as SGD: \textit{minority initial drop} (\textbf{MID}) \citep{ye2021procrustean} and \textit{biased optimization objective} (\textbf{BOO}). We illustrate them by showing a typical learning curve under data imbalance in Fig.~\ref{fig:data-imbalance-trait} considering $2$ groups, minority group $\textsl{g}_1$ (\RED) and majority group $\textsl{g}_2$ (\BLUE)\footnote{This can be generalized to multiple groups \citep{francazi2023theoretical}.}.
\textbf{MID} refers to an initial drop or fluctuation in model utility on minority group $\textsl{g}_1$, while utility on majority group $\textsl{g}_2$ improves as normal. 
As shown in Fig.~\ref{fig:data-imbalance-mid}, let $\mathbf{G}_{t, \textsl{g}_1}$ and $\mathbf{G}_{t, \textsl{g}_2}$ denote the summed gradient of $\textsl{g}_1$ and $\textsl{g}_2$ used to update model parameters from $\bm\theta_{t-1}$ to $\bm\theta_{t}$ at iteration $t$. 
Let $\angle_t$ denote the angle between $\mathbf{G}_{t, \textsl{g}_1}$ and $\mathbf{G}_{t, \textsl{g}_2}$. 
\citet{francazi2023theoretical} shows that for full-batch gradient descent to ensure the losses for both groups strictly decrease (in expectation) at iteration $t$, $\angle_t$ has to satisfy the following condition (Cond.~\eqref{eqn:mid}):
\begin{equation} \label{eqn:mid} {\textstyle
    1 + \cos\left(\angle_t\right) \frac{\left\|\mathbf{G}_{t, \textsl{g}_2}\right\|}{\left\|\mathbf{G}_{t, \textsl{g}_1}\right\|} > 0.}
\end{equation}
At earlier iterations, when $\bm\theta_{t-1}$ is far from the optimum for both groups, every individual gradient is large. Thus, the magnitude $\left\|\mathbf{G}_{t, \textsl{g}_2}\right\|$ for majority group $\textsl{g}_2$ is much larger than $\left\|\mathbf{G}_{t, \textsl{g}_1}\right\|$ since it contains more gradients.
The angle $\angle_t$ is expected to be large as the directions of $\mathbf{G}_{t, \textsl{g}_1}$ and $\mathbf{G}_{t, \textsl{g}_2}$ differ when the two groups have different optimization objectives. 
As $\cos\left(\angle_t\right)$ and hence the left-hand side of Cond.~\eqref{eqn:mid} are negative, Cond.~\eqref{eqn:mid} is not satisfied. Thus, when the loss for majority group $\textsl{g}_2$ decreases, the loss for minority group $\textsl{g}_1$ may not.
The loss for $\textsl{g}_1$ is only guaranteed to decrease when Cond.~\eqref{eqn:mid} is satisfied after more iterations: as the model parameters approach the optimum of $\textsl{g}_2$, the gradients in $\textsl{g}_2$ become small in magnitude, making the left-hand side of Cond.~\eqref{eqn:mid} positive. 
\textbf{BOO} applies to the final optimum found as shown in Fig.~\ref{fig:data-imbalance-boo} when there are sufficient iterations for the model to converge. 
The training process will focus more on reducing the expected loss $\mathbb{E}_{d \in \textsl{g}_2}[\ell(\bm\theta; d)]$ for majority group $\textsl{g}_2$. This is because with more data from $\textsl{g}_2$,  a decrease in $\mathbb{E}_{d \in \textsl{g}_2}[\ell(\bm\theta; d)]$ will bring a larger decrease in the objective $\mathcal{L}(\bm\theta; D) = \mathcal{L}_1(\bm\theta) + \mathcal{L}_2(\bm\theta)$ than a decrease in $\mathbb{E}_{d \in \textsl{g}_1}[\ell(\bm\theta; d)]$. Therefore, BOO biasedly results in less optimal utility for $\textsl{g}_1$.

At first glance, readers may think that MID is less harmful than BOO because MID would disappear with more iterations. However, this is not true under (I)DP which only allows a limited number of iterations. It is possible that the privacy budgets are used up when MID still goes on, causing a significantly low utility for the minority groups, especially when the ML task is difficult (shown later in Sec.~\ref{sec:experiments}), \textbf{making MID more harmful}. Thus, under (I)DP we should \textbf{focus on the entire \textit{learning dynamics}}, which refers to the model utility for each group at every stage of training.

In our work, however, we do not assume data imbalance (\textdagger) (i.e., each owner can have a similar number of data). 
Instead, we focus on how \textbf{IDP induces the same impacts MID and BOO} for gradient-based algorithms, which lead to \textbf{utility imbalance} across different groups.
In Sec.~\ref{subsec:idp-induced-utility-imbalance-and-learning-dynamics} and App.~\ref{app:compare-fairness}, we explain why this \textbf{cannot be solved in the same way as existing methods} tackling data imbalance. A separate branch of works on DP and biasedness of ML models is \textit{disparate impact of DP} \citep{bagdasaryan2019differential}, which studies how DP-ML models exacerbate data imbalance in the training set. 
Our work differs as we consider IDP and focus on the \textbf{underrepresentation of data from the more private owners}.
The training set itself is not necessarily imbalanced.

\begin{figure}
    \begin{minipage}[t]{0.58\textwidth}
        \centering
        \subfigure[Traits.]{\includegraphics[width=0.326\linewidth]{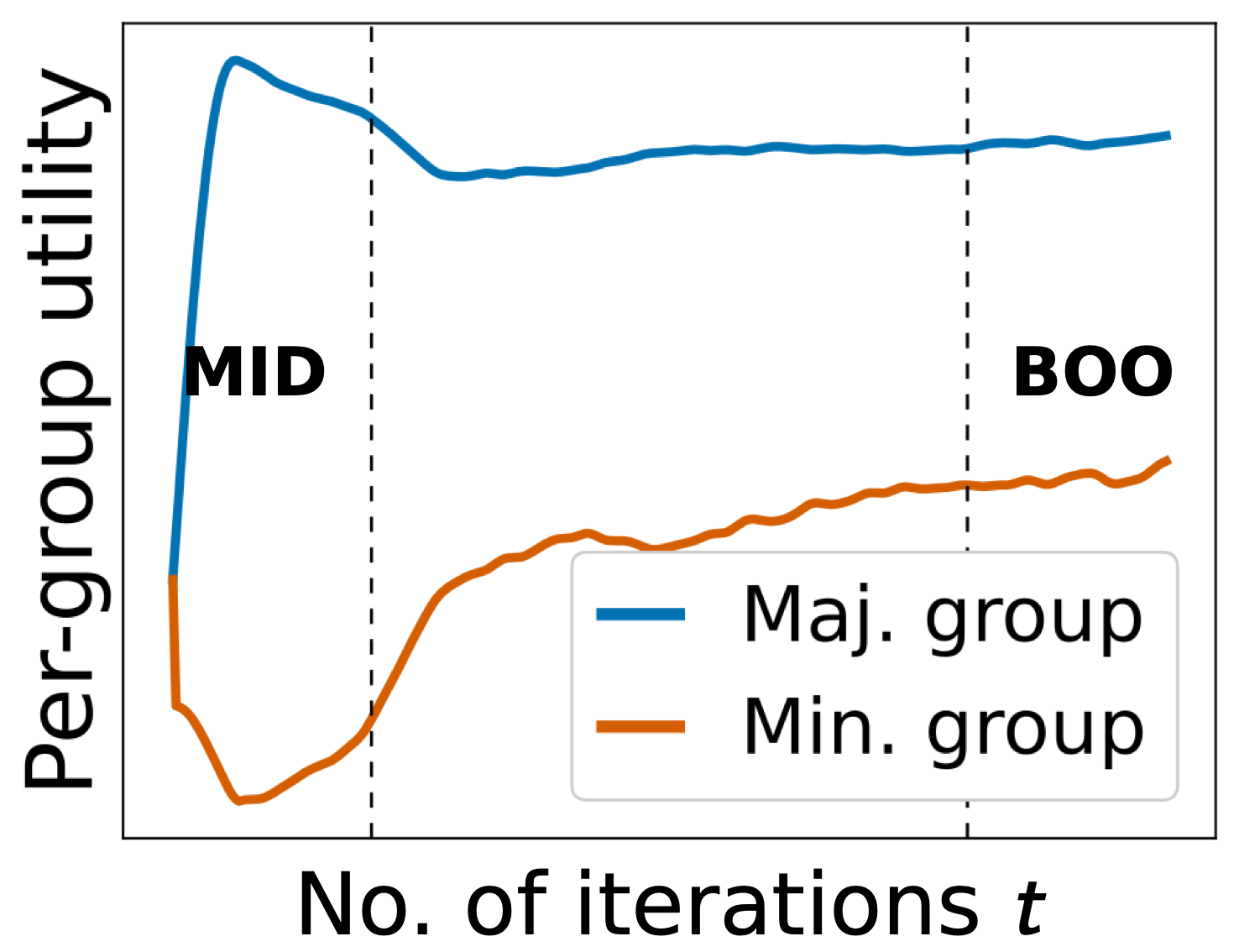}\label{fig:data-imbalance-trait}}
        \hfill
        \subfigure[MID.]{\includegraphics[width=0.326\linewidth]{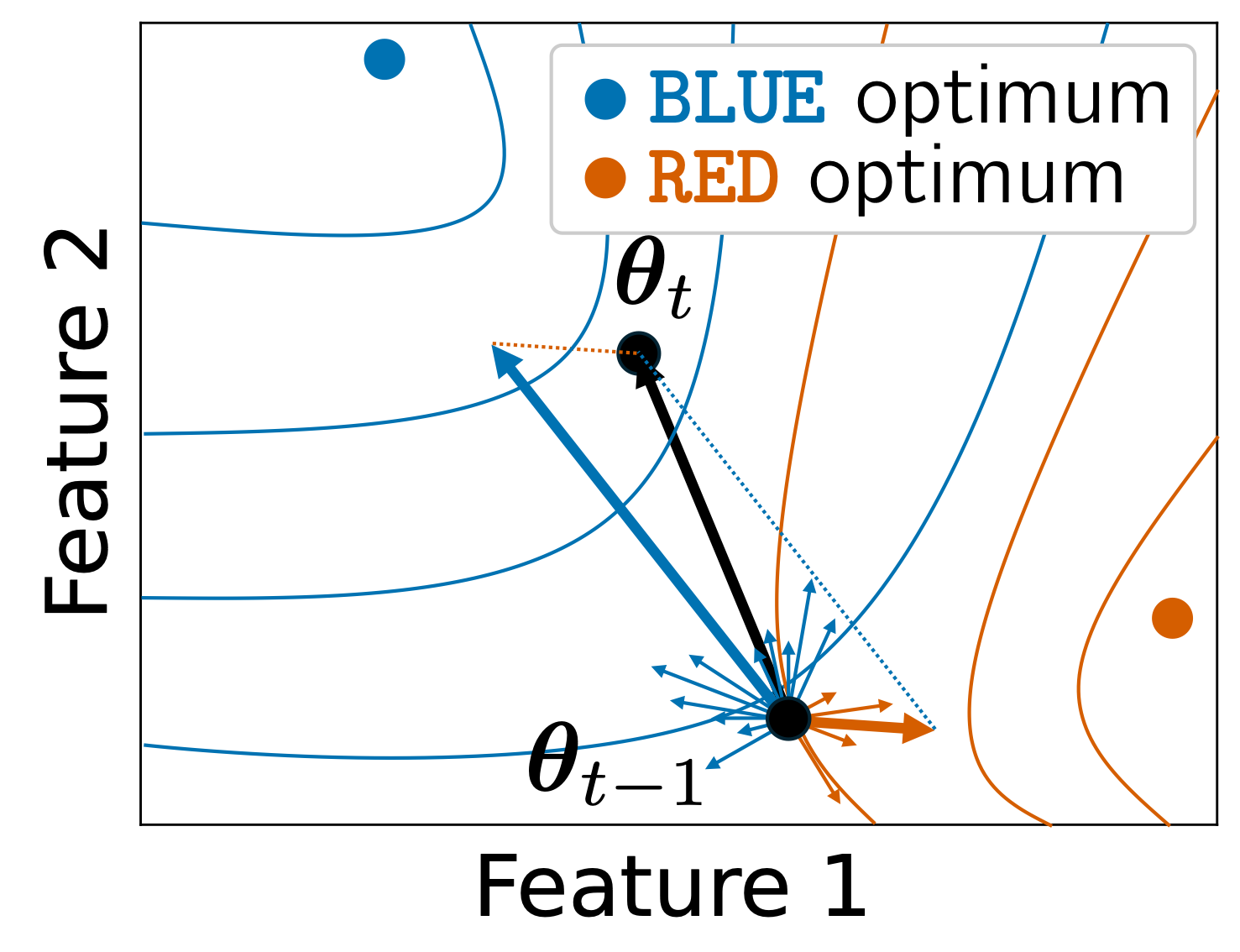}\label{fig:data-imbalance-mid}}
        \hfill
        \subfigure[BOO.]{\includegraphics[width=0.326\linewidth]{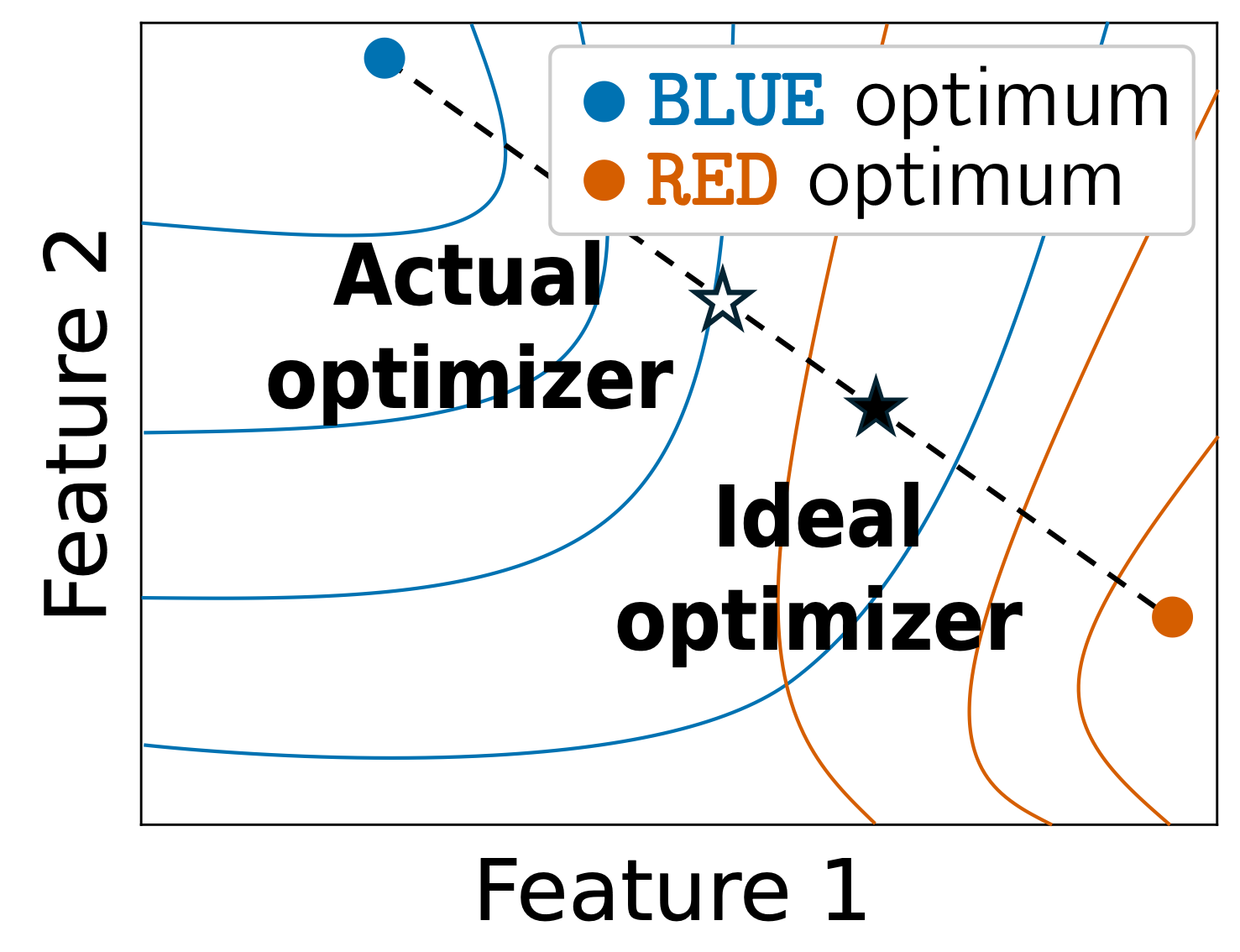}\label{fig:data-imbalance-boo}}
        \vspace{-3mm}
        \caption{\textbf{Graphical illustration of traits and causes of MID and BOO under data imbalance.} MID happens at earlier stages due to imbalanced gradients, while BOO describes the final stage where the trained model favors majority classes. Here \BLUE\ and \RED\ optimum refer to the optimizers within the group (i.e., $\argmin_{\bm\theta} \Loss(\bm\theta; \group_1\ \text{or}\  \group_2)$).}
        \label{fig:data-imbalance}
    \end{minipage}
    \hfill
    \begingroup
    \setlength{\subfigcapskip}{-0.8mm}
    \begin{minipage}[t]{0.384\textwidth}
        \centering
        \subfigure[\sample.]{\includegraphics[width=0.492\linewidth]{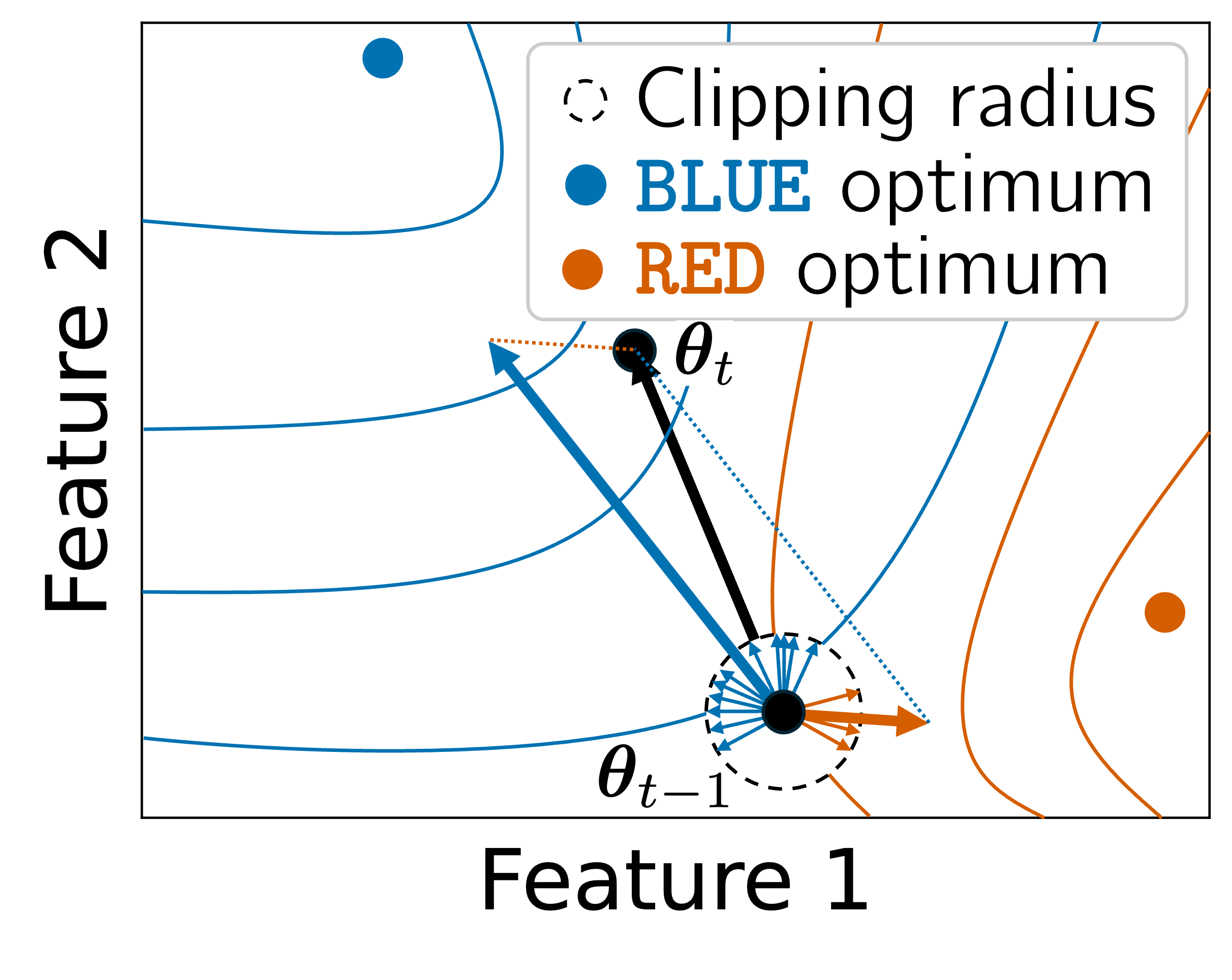}\label{fig:idpsgd-imbalance-sample}}
        \subfigure[\scale.]{\includegraphics[width=0.492\linewidth]{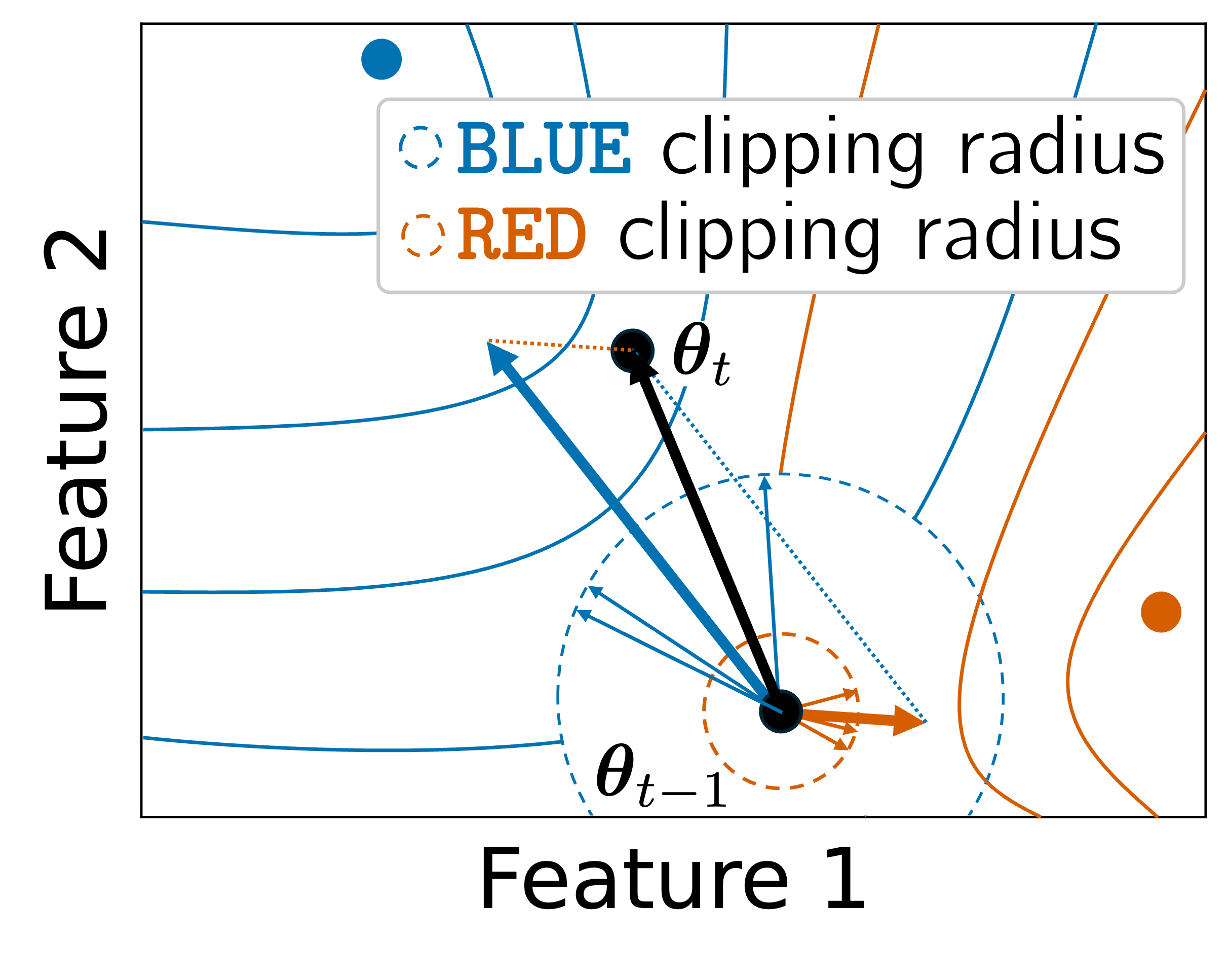}\label{fig:idpsgd-imbalance-scale}}
        \vspace{-3mm}
        \caption{\textbf{Illustration of IDP-induced utility imbalance with $2$ data owners, \BLUE\ (less private) and \RED\ (more private).} Model parameters $\bm\theta$ are updated in a direction that reduces \BLUE's loss but increases \RED.}
        \label{fig:idp-induced-utility-imbalance}
    \end{minipage}
    \endgroup
    \vspace{-2mm}
\end{figure}

\section{Methodology} \label{sec:methodology}

\subsection{IDP-Induced Utility Imbalance} \label{subsec:idp-induced-utility-imbalance-and-learning-dynamics}

We consider the setting where the training set still contains disjoint groups of data $\group_1, \group_2, \cdots, \group_N$ with respective optimization objectives $\mathcal{L}_n(\bm\theta)$, but the training set $D$ can be fairly balanced and each group is of similar size. In this section, we show that \textbf{MID, BOO and utility imbalance can still occur when owners of data from different groups have different IDP requirements}. For example, when owners of positive-case data of stigmatized diseases prefer stronger privacy in classification tasks, IDP can induce utility imbalance (see App.~\ref{app:use-cases} for more examples). 

We first give the intuition why utility imbalance can still arise from both the \sample\ and \scale\ variants of \idpsgd\ on balanced datasets. We demonstrate with $2$ data owners \BLUE\ (less private) and \RED\ (more private) and plot visualizations in Fig.~\ref{fig:idp-induced-utility-imbalance}.
\begin{itemize}[leftmargin=*,itemsep=0pt,topsep=0pt,parsep=0pt,partopsep=0pt]
    \item \textbf{\sample}: In \sample, data owners share the same clipping threshold but have different sampling rates. 
    App.~\ref{app:imbalanced-sampling-clipping} shows that the sampling rate for a less private owner can be significantly larger than a more private one. Thus, in Fig.~\ref{fig:idpsgd-imbalance-sample}, there are more gradients from \BLUE\ than \RED\ at each iteration. Since all gradients are clipped to the same magnitude, the sum of clipped gradients from \BLUE\ is much larger in magnitude than \RED, causing MID to occur. Similarly, the training focuses more on reducing the expected loss of \BLUE\ than \RED\ and BOO occurs.
    \item \textbf{\scale}: In \scale, data owners share the same sampling rate but have different clipping thresholds. App.~\ref{app:imbalanced-sampling-clipping}  shows that the clipping threshold for a less private owner can be significantly larger than a more private one. Thus, in Fig.~\ref{fig:idpsgd-imbalance-scale}, although the numbers of gradients from both owners are similar, each clipped gradient from \BLUE\ has a much larger magnitude than \RED. The magnitude of \BLUE's summed gradients is thus much larger than \RED\ and MID occurs. Such biased gradients similarly focus more on reducing the expected loss of \BLUE\ than \RED\ and BOO occurs.
\end{itemize}
Next, we present the following theorem to theoretically justify why data from the more private owners might not have a decreasing loss under IDP (and hence their utilities may drop):
\begin{theorem}[Informal] \label{thm:idp-induced-utility-imbalance-informal}
Consider two data owners with subsets $D_1, D_2$, sampling rates $q_1, q_2$ and clipping thresholds $C_1, C_2$. At each iteration $t$, if the following condition (\textbf{Cond.~\eqref{eqn:condition-informal}})
\begin{align} \textstyle
    1 + \cos(\angle_t) \frac{q_2 |D_2| C_2}{q_1 |D_1| C_1} > \varsigma \label{eqn:condition-informal}
\end{align} 
is satisfied with $\varsigma > 0$ being a problem-specific term, by choosing a sufficiently small learning rate $\eta$, the expected losses for both owners are guaranteed to decrease under standard assumptions. 
When Cond.~\eqref{eqn:condition-informal} is not satisfied (e.g., $\leq 0$), the loss for owner $1$ may increase (MID).
\end{theorem}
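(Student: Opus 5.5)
We will formalize the statement as a one-step descent analysis of \idpsgd, in the style of the full-batch argument of \citet{francazi2023theoretical} behind Cond.~\eqref{eqn:mid}. The additional ingredients are Poisson sampling with individualized rates, individualized clipping and Gaussian noise.

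\textbf{Setup.} We assume each $\Loss_n$ is $L$-smooth and fix the iterate $\bm\theta_{t-1}$. We write the update as $\bm\theta_t = \bm\theta_{t-1} - \eta(\mathbf{G}_{t,1} + \mathbf{G}_{t,2} + \bm\xi_t)$, where $\mathbf{G}_{t,n} = \sum_{d \in B_t \cap D_n} \mathrm{clip}_{C_n}(\mathbf{g}_d)$ and $\bm\xi_t \sim \mathcal{N}(\mathbf{0}_r, \sigma^2 \mathbf{I}_{r\times r})$. Taking expectation over sampling gives $\E[\mathbf{G}_{t,n}] = q_n \sum_{d \in D_n} \mathrm{clip}_{C_n}(\mathbf{g}_d)$. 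We then write this as $q_n |D_n| C_n \bar{\mathbf{u}}_n$, where $\bar{\mathbf{u}}_n$ is the average normalized clipped gradient of owner $n$. In the early-training regime every gradient is clipped, so $\|\bar{\mathbf{u}}_n\| \le 1$ and its norm measures directional coherence. We define $\angle_t$ as the angle between $\E[\mathbf{G}_{t,1}]$ and $\E[\mathbf{G}_{t,2}]$.

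\textbf{Descent lemma.} By $L$-smoothness,
\[
\E[\Loss_1(\bm\theta_t)] \le \Loss_1(\bm\theta_{t-1}) - \eta \langle \nabla \Loss_1, \E[\mathbf{G}_{t,1}] + \E[\mathbf{G}_{t,2}] \rangle + \tfrac{L\eta^2}{2}\E\|\mathbf{G}_{t,1}+\mathbf{G}_{t,2}+\bm\xi_t\|^2 .
\]
The quadratic term is bounded by a constant $M$ built from $(q_n|D_n|C_n)^2$, the sampling variance and $r\sigma^2$. The key step is relating $\langle \nabla\Loss_1, \E[\mathbf{G}_{t,1}]\rangle$ to $\|\E[\mathbf{G}_{t,1}]\|\,\|\nabla \Loss_1\|$, since clipping distorts the gradient direction. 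We control this through an alignment constant $\kappa_1 \in (0,1]$, defined by $\langle \nabla\Loss_1, \E[\mathbf{G}_{t,1}]\rangle \ge \kappa_1 \|\nabla\Loss_1\|\,\|\E[\mathbf{G}_{t,1}]\|$. We then decompose $\nabla\Loss_1$ into its component along $\E[\mathbf{G}_{t,1}]$ and an orthogonal remainder. The cross term with $\E[\mathbf{G}_{t,2}]$ then contributes $\cos(\angle_t)\|\E[\mathbf{G}_{t,2}]\|$ plus an error from the remainder. Factoring out $\|\E[\mathbf{G}_{t,1}]\|$ gives a first-order term proportional to
\[
1 + \cos(\angle_t)\frac{\|\E \mathbf{G}_{t,2}\|}{\|\E \mathbf{G}_{t,1}\|} - (\text{misalignment error}).
\]
Substituting the norms $q_n|D_n|C_n\|\bar{\mathbf{u}}_n\|$ yields the ratio $\frac{q_2|D_2|C_2}{q_1|D_1|C_1}$, up to coherence factors. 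We absorb the coherence factors and the misalignment error into the problem-specific $\varsigma$. This is the main obstacle: $\varsigma$ must be defined so that it is genuinely problem-dependent but independent of $\eta$. I would first try the definition that makes the inequality exact, namely the remainder term divided by $\|\E \mathbf{G}_{t,1}\|$, and then justify its positivity and size informally.

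\textbf{Conclusion.} If Cond.~\eqref{eqn:condition-informal} holds, the first-order decrease is at least $\eta\,\kappa\,\|\nabla\Loss_1\|\,\|\E \mathbf{G}_{t,1}\|\,(\text{LHS}-\varsigma) > 0$. Choosing
\[
\eta < \frac{2\kappa\|\nabla\Loss_1\|\,\|\E\mathbf{G}_{t,1}\|(\text{LHS}-\varsigma)}{L M}
\]
therefore makes the expected loss of owner $1$ strictly decrease. The same argument with the roles of the owners swapped gives the analogous condition for owner $2$. That condition is typically easy to satisfy because its ratio is the inverse, which is small when owner $2$ is less private.

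For the converse, suppose the left-hand side of Cond.~\eqref{eqn:condition-informal} is $\le 0$. We exhibit the simple case where the individual gradients are unclipped-direction-preserving, so that $\kappa = 1$ and $\varsigma \to 0$. In this case the first-order term for owner $1$ is nonnegative, so the loss increases to first order for every small $\eta$, which is MID. By Theorem~\ref{thm:idpsgd-privacy-guarantee}, the IRDP calibration forces $q_1 C_1 \ll q_2 C_2$ for the more private owner, so the ratio in the condition is large, which explains when the condition fails.
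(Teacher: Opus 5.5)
Your proposal is correct at the level of rigor of this informal statement. It shares the paper's skeleton: the smoothness descent inequality, a per-owner split of the first-order term with the noise vanishing in expectation, small $\eta$ to absorb the quadratic term, clipped norms replaced by $q_n|D_n|C_n$, leftovers folded into $\varsigma$, and a counterexample for the converse. Where you genuinely differ is in how the clipping bias is handled.

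\textbf{The paper's route.} It assumes uniform boundedness $\|\mathbf{g}_{t,d}-\nabla\Loss_{o(d)}(\bm\theta_{t-1})\|\le\xi_{o(d)}$ and writes each clipped gradient as $\phi_{t,d}\mathbf{g}_{t,d}$. It splits $\mathbf{g}_{t,d}$ into the owner's full gradient plus a deviation bounded by Cauchy--Schwarz, and measures $\angle_t$ between $\nabla\Loss_1$ and $\nabla\Loss_2$. It then uses the local Lipschitz continuity of $x\mapsto\min\{1,C_n/x\}$ to get $\bar\phi_{t,n}\approx C_n/\|\nabla\Loss_n\|$ in the all-clipped regime. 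As a result its $\varsigma$ is explicit in $\xi_1,\xi_2$ and the gradient norms.

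\textbf{Your route.} You compute $\|\E\mathbf{G}_{t,n}\|=q_n|D_n|C_n\|\bar{\mathbf{u}}_n\|$ exactly and take $\angle_t$ between the expected clipped sums. You then control the bias geometrically through $\kappa_1$ and the coherence ratio.

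\textbf{What each buys.} Your route gives the exact Poisson expectation, where the paper just sets $|B_{t,n}|=q_n|D_n|$. It also gives a cleaner quadratic bound via $\|\mathrm{clip}_{C_n}(\mathbf{g})\|\le C_n$. The price is that $\kappa_1>0$ is assumed rather than derived, i.e.\ that owner $1$'s expected clipped gradient is a descent direction for $\Loss_1$, which is precisely what clipping bias can break. Defining $\varsigma$ as the exact slack also makes your result closer to a reformulation. The uniform-boundedness route is what turns these quantities into consequences of a standard assumption.

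\textbf{Two small points on $\varsigma$.}
Independence of $\varsigma$ from $\eta$ is automatic, since everything is evaluated at $\bm\theta_{t-1}$, so that is not the real obstacle. Positivity, by contrast, cannot be justified in general. The exact slack, $\cos(\angle_t)\frac{q_2|D_2|C_2}{q_1|D_1|C_1}(1-\|\bar{\mathbf{u}}_2\|/\|\bar{\mathbf{u}}_1\|)$ plus the misalignment error, can be negative. Simply take its maximum with a positive constant.

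\textbf{The converse.} Your noiseless construction is simpler and more general than the paper's numerical two-dimensional example: parallel, all-clipped per-sample gradients within each owner, so $\kappa_1=1$, coherence is $1$, and the slack is $0$. To complete it, take the left-hand side strictly negative. Then use the lower smoothness bound $\Loss_1(\bm\theta-\eta\mathbf{v})\ge\Loss_1(\bm\theta)-\eta\langle\nabla\Loss_1(\bm\theta),\mathbf{v}\rangle-\tfrac{L\eta^2}{2}\|\mathbf{v}\|^2$ to turn a first-order increase into an actual one. However, because its threshold is exactly $0$, it cannot show what the paper's noisy example shows: the threshold can sit strictly above $0$ (about $0.015$ there), so a positive $\varsigma$ is genuinely needed.

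As with the paper's ``by symmetry'', the claim for owner $2$ requires the mirrored condition, which you correctly point out.
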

App.~\ref{app:idp-induced-mid-theorem} proves and further discusses the theorem. Cond.~\eqref{eqn:condition-informal} suggests that when the two owners have different optimization objectives ($\angle_t$ is large and $
\cos(\angle_t) < 0$) and similar number of data ($|D_1| \approx |D_2|$), 
the more private owner $1$ who has a much smaller sampling rate $q_1$ or clipping threshold $C_1$ in the denominator (hence left-hand side of Cond.~\eqref{eqn:condition-informal} $\leq 0$) could suffer from an increasing loss.
Unlike Cond.~\eqref{eqn:mid} (for data imbalance), Cond.~\eqref{eqn:condition-informal} (for IDP-induced utility imbalance) requires a problem-specific term $\varsigma > 0$ due to \idpsgd\ components such as individualized clipping which also makes it harder to satisfy.
For the \idpsgd\ algorithm, as the clipping thresholds are fixed, Cond.~\eqref{eqn:condition-informal} may remain unsatisfied across many iterations for some owners (e.g., the more private owner or minority group). Thus, \textbf{MID becomes longer and more severe.}

\vspace{-3mm}
\paragraph{Undesirability of IDP-induced utility imbalance.}

If the ML model has significantly lower utility on some group (e.g., due to positive-case patients having stronger IDP requirements), the model owner and subsequent users with data in such group who are impacted by the model decisions are affected. Thus, the model owner must address such IDP-induced utility imbalance during training to improve the utilities of worst-off groups during deployment (see App.~\ref{app:use-cases}).

\vspace{-3mm}
\paragraph{Unsuitability of existing works.} 
App.~\ref{app:compare-fairness} gives a detailed review of existing works (possibly loosely) related to utility imbalance and their unsuitability to tackle IDP-induced utility imbalance under IDP. To summarize, we classify existing works into three categories: %
\begin{enumerate}[leftmargin=*,itemsep=0pt,topsep=0pt,parsep=0pt,partopsep=0pt,label=\textbf{(\arabic{*})}
]
    \item \textbf{Correcting data/class imbalance}: This category is the most relevant as they also seek to correct utility imbalance across groups. Their core idea is to control the per-group summed gradients $\left\|\mathbf{G}_{t, \textsl{g}_n}\right\|$ in Cond.~\ref{eqn:mid} to obtain more balanced gradients to reduce MID and BOO. We note that adapting such works either (a) violate IDP requirements (e.g., by upscaling or oversampling the minority groups \citep{he2009learning}), (b) under-utilize the IDP budgets and give lower overall utility (e.g., by downscaling or undersampling the majority groups \citep{he2009learning}),
    or (c) have their effects eliminated by clipping (e.g., rescaling the loss function \citep{japkowicz2000class}).
    \item \textbf{Correcting DP-exacerbated accuracy disparity} \citep{bagdasaryan2019differential, tran2021differentially, xu2021removing}: These works do not satisfy IDP and are tailored to DP's disparate impact from clipping and noise addition, yet IDP-induced utility imbalance arises from an orthogonal cause.
    \item \textbf{Achieving group/individual fairness}: These works seek to achieve similar \textbf{predictions/behaviors} across protected groups, which greatly differs from our goal to achieve similar \textbf{model utilities} across data owners. Thus, we do not compare with this category of works. 
\end{enumerate}
Thus, there is no prior work that addresses IDP-induced utility imbalance or can be easily adapted to do so. Our work seeks to fill this gap while still adhering to each owner's IDP preference.

\subsection{The \inosgd\ Algorithm} \label{subsec:inosgd-algorithm}

\begin{wrapfigure}{r}{0.47\textwidth}
    \begin{minipage}{\linewidth}
        \vspace{-10mm}
        \includegraphics[width=\linewidth]{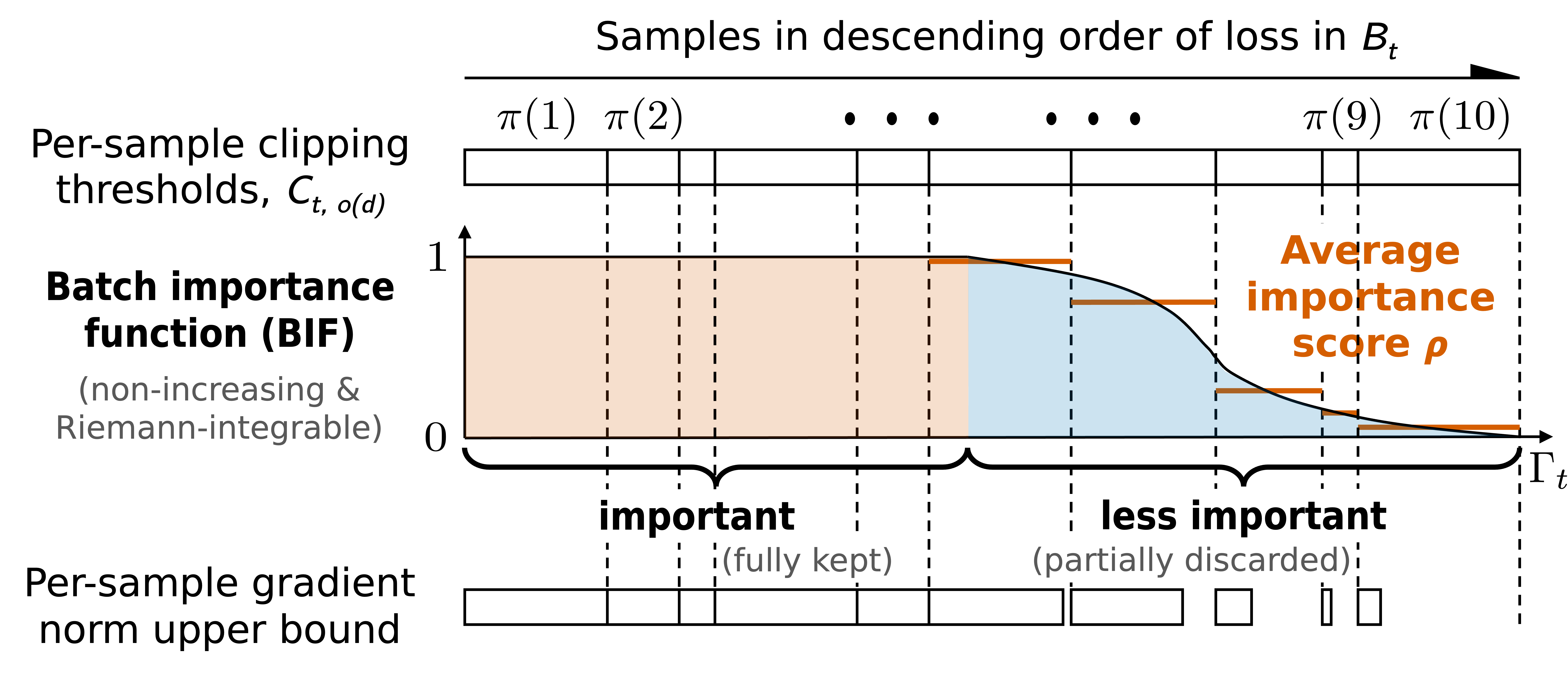}
        \vspace{-3mm}
        \caption{\textbf{Graphical illustration of the \inosgd\ algorithm.} At iteration $t$, a batch $B_t$ (e.g., $10$ data) is sampled. The gradients are computed and sorted by descending loss. 
        \inosgd\ calculates the average importance score of each gradient
        by integrating the importance function within its associated interval. By multiplying the clipped gradients to their scores, important gradients are fully kept while less important ones are partially discarded.}
        \label{fig:inosgd-illustration}
        \vspace{-3.5mm}
    \end{minipage}
\end{wrapfigure}

The imbalanced gradient, sampling rate and clipping threshold (Fig.~\ref{fig:idp-induced-utility-imbalance} \& Cond.~\ref{eqn:condition-informal}) in \idpsgd\ cause the IDP-induced utility imbalance (MID and BOO) and must be addressed. While gradients from the more private owners cannot be upscaled due to IDP, approaches that downscale gradients from the less private owners would waste privacy budgets and lead to a lower overall utility due to the privacy-utility tradeoff (see App.~\ref{app:unsuitability-of-straightforward-approaches} for further discussions and \ref{app:choice-of-baseline} for empirical evidence). How can we instead 
\textbf{strategically discard the less important information (e.g., from less private owners) while retaining the most important information}? The importance of gradient information of each datum in batch $B_t$ at iteration $t$ directly correlates with their individual loss values  \citep{kawaguchi2020ordered, shrivastava2016training}. Thus, at first glance, the solution should keep only the gradients with highest-$\mu$ losses or discard the gradients with smallest-$\mu$ losses. However, these simple approaches violate IDP by increasing the algorithm's modular sensitivity (see App.~\ref{app:unsuitability-of-straightforward-approaches}). 

To exploit the order of losses\footnote{Our algorithm and analysis also work with other sorting order. See App.~\ref{app:alternative-sorting-orders} for discussion.} while still preserving IDP, we devise a unique algorithm, \textbf{\textit{individualized noisy ordered SGD}} (\inosgd) (Fig.~\ref{fig:inosgd-illustration}). 
The key intuition behind \inosgd\ is as follows:
\begin{wrapfigure}{r}{0.47\textwidth}
    \begin{minipage}{\linewidth}
        \centering
        \includegraphics[width=\linewidth]{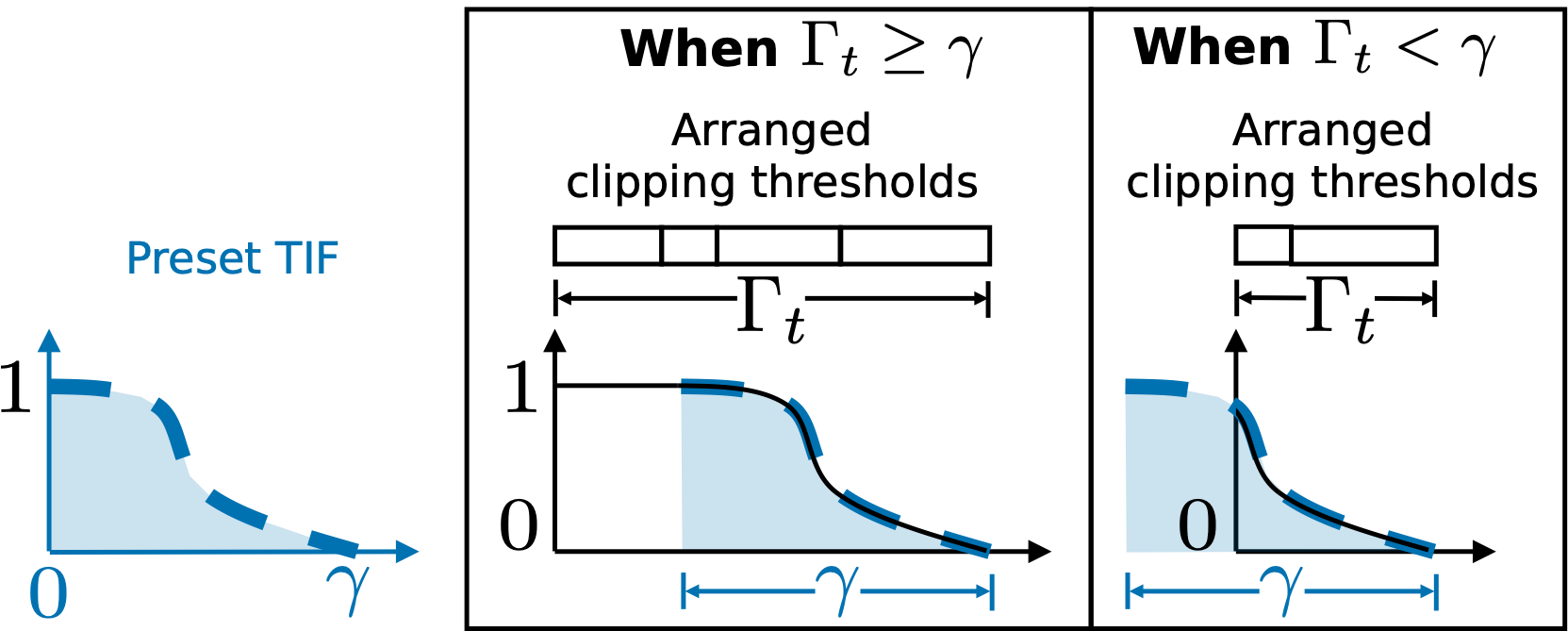}
        \vspace{-3mm}
        \caption{\textbf{At each iteration $t$, BIF $f_t$ (solid line) is constructed by transforming TIF $\tgf$ (blue dashed line).} The $x$-axes refer to the position of each ordered gradient piece and the $y$-axes refer to its importance score.}
        \label{fig:bgf-transformation}
        \vspace{-3mm}
    \end{minipage}
    \vspace{-1.4mm}
\end{wrapfigure}
When a new datum is added, 
\inosgd\ first examines if its gradient $\mathbf{g}_d$ is important based on the rank of $d$'s loss. If it is deemed important, \inosgd\ simply clips it to its clipping threshold $C_{o(d)}$; otherwise, \inosgd\ restricts its norm to an upper bound smaller than $C_{o(d)}$, which makes room to upscale other more important gradients while the total change is capped by $C_{o(d)}$, thus ensuring modular sensitivity $\smash{\Delta_\mathcal{A}^d \leq C_{o(d)}}$ and achieving IDP (further intuition is given in Sec.~\ref{subsubsec:inosgd-privacy}). 

Specifically, \inosgd\ adopts a ``continuized'' view of each datum's gradient as a collection of infinitely small gradient pieces whose loss values are all equal to the datum's loss (see Fig. \ref{fig:inosgd-illustration}). For each datum $d$, 
at most a total of $C_{o(d)}$ of its gradient pieces are kept and the rest are discarded to ensure modular sensitivity $\smash{\Delta_\Alg^d \leq C_{o(d)}}$. 
Hence, the sum of gradient pieces is at most $\Gamma_t \coloneq \sum_{d \in B_t} C_{o(d)}$. Suppose all kept gradient pieces are ordered in descending order of loss. An \textit{importance function} assigns an importance score between $0$ (least important) and $1$ (important) to each gradient piece, which indicates how much of the piece will be eventually kept. Moreover, since the summed gradient will be made noisy, \inosgd\ only assigns low importance scores to a \textbf{limited} number of gradient pieces to maintain a reasonably high signal-to-noise ratio. This is captured by a preset \textit{tail importance function}: only gradient pieces falling within the tail of length $\gamma$ are assigned scores $< 1$.  
\begin{definition}[TIF] \label{def:tail-goodness-function}
    For a fixed length $\gamma > 0$, a \textit{tail importance function} (TIF) $\tgf: [0, \gamma] \to [0, 1]$ is a non-increasing, Riemann-integrable function such that for any $c \in [0, \gamma]$, $\tgf(c)$ represents the importance of gradient piece at the $(c/\gamma)$-th quantile over the tail of length $\gamma$.
\end{definition}

\setlength{\columnsep}{2mm}%
\setlength{\intextsep}{0mm}%
\begin{wrapfigure}{r}{0.47\textwidth}
\vspace{0mm}
\begin{minipage}{\linewidth}
\begin{algorithm}[H]
\caption{\textbf{The \inosgd\ algorithm.}}
\label{alg:inosgd}
\begin{algorithmic}[1] \footnotesize
\NoNumber{{\bfseries Input:} Each owner $n \in [N]$'s datasets $D_n$, sampling rates $q_n$, clipping thresholds $C_n$; initial model parameters $\bm\theta_0$; noise scale $\sigma$; TIF $\tgf$.}
\NoNumber{{\bfseries Output:} Trained model parameters $\bm\theta_T$. }
\STATE $B_t \gets \emptyset; b \gets \sum_{n \in [N]} q_n |D_n|$
\FOR{iteration $t \in [T]$}
    \STATE $B_t \gets \bigcup_n$ subset sampled from $D_n$ w.p. $q_n$
    \STATE $f_t \gets$ BIF computed from $\tgf$ and $B_t$
    \STATE \label{alg:sort} $B_t^\rs \gets B_t$ sorted by order of losses
    \STATE $\pi_t \gets$ function mapping $k \in [|B_t|]$ to index of $B_t^\rs$'s $k$-th datum in $B_t$
    \STATE $(c_k)_{k \in [|B_t|]} \gets$ cumulative sums of first $k$ data's clipping thresholds in $B_t^{\rs}$ 
    \STATE \label{alg:int} $(\rho_k)_{k\in[|B_t|]} \gets $ average importance scores $(\int_{c_{k-1}}^{c_k} f_t /(c_k -  c_{k-1}) \ \intd c)_{k \in [|B_t|]}$
    \FOR{datum $d$ in $B_t$}
        \STATE \label{alg:bottleneck} $\mathbf{g}_{d} \gets \nabla_{\bm\theta} \ell(\bm\theta_{t - 1}; d)$
        \STATE $\Bar{\mathbf{g}}_{d} \gets \mathbf{g}_{d} / \max\{1, \|\mathbf{g}_{d}\| / C_{o(d)}\}$
    \ENDFOR
    \STATE $\overline{\mathbf{G}}_t \gets \sum_{k =1}^{|B_t|} \rho_k \Bar{\mathbf{g}}_{\pi_t(k)}$
    \STATE $\widetilde{\mathbf{G}}_{t} \gets \left(\overline{\mathbf{G}}_t + \mathcal{N}(\mathbf{0}_r, \sigma^2 \mathbf{I}_{r \times r})\right) / b$ 
    \STATE $\bm\theta_t \gets \bm\theta_{t-1} - \eta \widetilde{\mathbf{G}}_{t}$ 
\ENDFOR
\STATE \textbf{return} updated model parameters $\bm\theta_T$
\end{algorithmic}
\end{algorithm}
\end{minipage}
\vspace{-3mm}
\end{wrapfigure}

Based on TIF, for each sampled batch $B_t$, 
we define its unique \textit{batch importance function} (BIF) by assigning importance score $1$ to the gradient pieces outside the tail (i.e., they are important) as shown in Fig.~\ref{fig:inosgd-illustration}. In other words, the domain of BIF includes all gradient pieces within $[0, \Gamma_t]$, where $\Gamma_t$ is the sum of clipping thresholds of data in $B_t$.
The transformation from TIF to BIF is shown in Fig.~\ref{fig:bgf-transformation} and BIF is formally defined in App.~\ref{app:batch-goodness-function}. By integrating the importance scores of all gradient pieces from a gradient $\smash{\mathbf{g}_{\pi_t(k)}}$ ranked $k$-th in the batch, \inosgd\ calculates the average importance score $\rho_k$ of this gradient. The norm of this gradient is then restricted to $\rho_k C_{o(\pi_t(k))}$ (an easy way as shown in Alg.~\ref{alg:inosgd} is to simply scale the clipped gradient by $\rho_k$). In the end, steps similar to \idpsgd\ are performed: summing up the clipped gradients (weighted by importance scores), adding noise to the summed gradient, estimating the average gradient\footnote{Using the expected batch size $b \coloneq \sum_{n \in [N]} q_{n} |D_n|$ instead of the actual $|B_t|$ avoids extra privacy loss.} and performing gradient descent. A detailed pseudocode of \inosgd\ is given in Alg.~\ref{alg:inosgd}. We highlight that the runtime of \inosgd\ is almost the same as \idpsgd\ (App.~\ref{app:runtime-of-inosgd}) as the time taken for backpropagation (Line \ref{alg:bottleneck}) dominates the time needed for the additional sorting and integration (Line \ref{alg:sort} and \ref{alg:int}).

\textbf{Choice of TIF.} To flexibly control the TIF shape via a few parameters, we recommend using a Beta distribution $\mathrm{Beta}(\upalpha, \upbeta)$ to model the model owner's belief regarding the fraction of important data. A larger $\upalpha$ would increase the weights of the more important gradients within the tail while a larger $\upbeta$ should reduce the weights of the less important gradients within the tail. 
The value of a horizontal flip of Beta c.d.f. $\smash{\tgf(c) \coloneq \int_0^{1-c/\gamma} x^{\upalpha-1} (1-x)^{\upbeta-1} \intd x / \int_0^1 x^{\upalpha-1} (1-x)^{\upbeta-1} \intd x}$ at a point represents its importance based on the belief. In App.~\ref{app:alternative-tif}, we include a guideline on how to tune the above hyperparameters in practice.
More generally, one can interpret TIF as a \textit{survival function} \citep{david2012survival} that is specially defined on a finite range. The model owner can also use other functions they deem interpretable (we give an example in App.~\ref{app:alternative-tif}).

\subsection{Theoretical Analysis of INO-SGD} \label{subsec:inosgd-theoretical-analysis}

\subsubsection{Privacy Guarantee of \inosgd}\label{subsubsec:inosgd-privacy}

\begin{theorem}[Privacy of \inosgd] \label{thm:inosgd-privacy}
     For any TIF $\tgf$ and the corresponding BIF $f_t$ at each iteration $t \in [T]$, \inosgd\  satisfies $(\bm\alpha, \Bar{\bm\epsilon})$-IRDP, where $\Bar\epsilon_n = 2T\alpha_n C_{n}^2 q_{n}^2 / {\sigma^2}$.
\end{theorem}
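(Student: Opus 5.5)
We reduce the statement to the per-iteration analysis behind Theorem~\ref{thm:idpsgd-privacy-guarantee}: if one iteration of \inosgd\ has modular sensitivity $\Delta^d \le C_{o(d)}$ with respect to every datum $d$, then it is a subsampled Gaussian mechanism with sampling rate $q_{o(d)}$ and sensitivity $C_{o(d)}$. It then inherits the per-iteration R\'enyi bound $2\alpha_n q_n^2 C_n^2/\sigma^2$ used by \citet{boenisch2024have}. Adaptive composition of R\'enyi DP over $T$ iterations, applied separately for each owner $n$ at order $\alpha_n$, then gives $\bar\epsilon_n = 2T\alpha_n q_n^2 C_n^2/\sigma^2$. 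Division by the fixed constant $b$ and the gradient step are post-processing, so they add nothing.

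The core step is a deterministic sensitivity lemma. Fix $\bm\theta_{t-1}$ and a batch $B$, and let $B' = B \cup \{d\}$. We need
\[
\bigl\|\overline{\mathbf{G}}(B') - \overline{\mathbf{G}}(B)\bigr\| \le C_{o(d)}.
\]
We use the continuized view. For a batch, lay out the sorted clipped gradients on $[0,\Gamma]$. Datum $\pi(k)$ occupies $[c_{k-1},c_k]$ with constant ``density'' $\mathbf{u}_{\pi(k)} \coloneq \bar{\mathbf{g}}_{\pi(k)}/C_{o(\pi(k))}$, which has norm at most $1$. Then
\[
\overline{\mathbf{G}} = \int_0^{\Gamma} f(c)\,\mathbf{u}(c)\intd c,
\]
because $\rho_k \bar{\mathbf{g}}_{\pi(k)} = \int_{c_{k-1}}^{c_k} f\,\mathbf{u}\intd c$. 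Inserting $d$ at its rank splits the line at some position $a$. Pieces before $a$ keep their positions, the new block $[a, a+C_{o(d)}]$ is inserted, and pieces after $a$ shift right by $C_{o(d)}$. We now use the BIF construction (App.~\ref{app:batch-goodness-function}): $f$ equals $1$ on $[0,\Gamma-\gamma]$ and equals $\tgf(\cdot-(\Gamma-\gamma))$ on the tail, so it is right-aligned and the tail moves right by $C_{o(d)}$ when $\Gamma$ grows. We check that $f'(c) = f(c)$ for $c \le a$ and $f'(c + C_{o(d)}) = f(c)$ for $c \ge a$ may fail. Instead, write
\[
f'(c) = f(c) + h(c) \text{ on } [0,a], \qquad f'(c+C_{o(d)}) = f(c) - h'(c) \text{ on } [a,\Gamma],
\]
where $h \ge 0$ and $h' \ge 0$. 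Both signs follow from monotonicity of $\tgf$: on the prefix the tail has moved right, so scores rise; on the suffix the scores, as a function of distance from the end, are unchanged. On closer inspection the suffix is in fact unchanged exactly, because $f'(c+C_{o(d)}) = \tgf(c+C_{o(d)}-(\Gamma+C_{o(d)}-\gamma)) = f(c)$. So only the prefix changes, by an increase $h \ge 0$, together with the new block.

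The difference vector is therefore $\int_0^a h\,\mathbf{u} + \int_a^{a+C_{o(d)}} f'\,\mathbf{u}_d$. Its norm is at most
\[
\int_0^a h + \int_a^{a+C_{o(d)}} f',
\]
since $\|\mathbf{u}\| \le 1$. Total mass is the key quantity. It suffices to show $\int_0^{\Gamma+C} f' - \int_0^{\Gamma} f \le C$, where $C = C_{o(d)}$. This holds because $\int_0^{\Gamma} f = (\Gamma - \gamma) + \int_0^\gamma \tgf$ and $\int_0^{\Gamma+C} f' = (\Gamma + C - \gamma) + \int_0^\gamma \tgf$, so the difference is exactly $C$. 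The same triangle-inequality argument gives the bound for removal by symmetry.

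The main obstacle is handling edge cases of the BIF definition when $\Gamma_t < \gamma$, where the tail is truncated. If the appendix defines $f$ there as the last $\Gamma$ units of $\tgf$ (right-aligned), the suffix-invariance argument still holds. The mass difference then becomes $\int_{\gamma-\Gamma-C}^{\gamma-\Gamma}\tgf \le C$ (with the lower limit clipped at $0$), using $\tgf \le 1$. So the bound survives, but I would verify this against the formal BIF definition first. Once sensitivity is established, the mechanism's output given the Poisson-sampled batch is exactly of the \scale/\sample\ form, and we invoke the subsampled Gaussian RDP analysis of Theorem~\ref{thm:idpsgd-privacy-guarantee} verbatim.
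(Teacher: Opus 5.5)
Your proposal is correct and is essentially the paper's proof. The paper also uses the right-aligned BIF to show that data ranked below the inserted datum keep their weights while data ranked above gain nonnegative weight. It bounds the change by the triangle inequality with $\|\bar{\mathbf{g}}_d\|\le C_{o(d)}$ and telescopes to $\int_0^{C_{o(d)}} f_t^{\ra}\intd c\le C_{o(d)}$, which equals your total-mass difference $\int_0^{\Gamma+C}f'-\int_0^{\Gamma}f$, before applying the SGM bound and RDP composition.

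There is one small slip, which does not affect the result. When $\Gamma<\gamma\le\Gamma+C$, the mass difference is $(\Gamma+C-\gamma)+\int_0^{\gamma-\Gamma}\tgf$, not your integral with the lower limit clipped at $0$. This is still at most $C$, and the paper's BIF is indeed right-aligned when $\Gamma_t<\gamma$, as you guessed.
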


To prove Thm.~\ref{thm:inosgd-privacy} (App.~\ref{app:proof-of-inosgd-privacy}), it is crucial to show that for any datum $d$, the modular sensitivity $\Delta_\Alg^d$ at iteration $t$ is bounded by $d$'s clipping threshold $C_{o(d)}$. When $d$ is added to batch $B_t$, the impact of less important data (with loss smaller than $d$) will not change due to the fixed-length tail of BIF.
The importance score of more important data may increase (as they are ``pushed'' to the left by $C_{o(d)}$). However, we ensure the total change in the summed gradients and $d$'s clipped gradient is bounded by $\smash{C_{o(d)}}$. Our proof makes use of the triangle inequality, the bound on each gradient's \ltwo-norm, and the telescoping sum created by our BIFs. 
Notably, as \inosgd\ consumes privacy budgets \textbf{at the same rate} as \idpsgd, we are not sacrificing number of iterations for better learning dynamics.
In App.~\ref{app:privacy-assessment-via-membership-inference-attack}, we empirically validate the privacy guarantee of \inosgd\ by showing that the \textit{LiRA membership inference attack} \citep{carlini2022membership} cannot effectively distinguish which data are used for training.

\begin{remark*}
The use of order of loss in each batch (e.g., sorting) does not incur extra privacy costs, as such order at iteration $t$ comes from the ML model obtained at iteration $t-1$, which is already protected by (I)DP in earlier rounds (i.e., similar to the gradients we obtain at iteration $t$ for each sampled datum). Thus, although \inosgd\ uses the gradients and the order of loss, as we bound its (modular) sensitivity at each iteration $t$, it strictly satisfies IDP by adaptive composition.
\end{remark*}

\textbf{\inosgm\ mechanism.} We generalize \inosgd\ to a general IDP mechanism based on the \textit{sampled Gaussian mechanism} (SGM) in App.~\ref{app:sgm} as follows (proven and discussed in App.~\ref{app:inosgm}):
\begin{theorem}[\inosgm, informal] \label{thm:inosgm}
    Let $g$ be a function mapping an arbitrary set of data $\mathcal{B}$ to $\mathbb{R}^r$ such that $\|g(d)\|$ is upper bounded by $\smash{\Delta_\Alg^{o(d)}}$. Let $\varpi$ be a score-based function mapping $\mathcal{B}$ to some order. Suppose $\smash{B \coloneq \{d: d \ \text{is sampled with probability $q_{o(d)}$}\}}$. Then releasing 
    $\smash{\sum_{k=1}^{|B|} \rho_k g(\varpi(B)(k)) + \mathcal{N}(\mathbf{0}_r, \sigma^2 \mathbf{I}_{r \times r})}$
    satisfies IRDP, where $\rho_k$ is set in the same way as in BIF.
\end{theorem}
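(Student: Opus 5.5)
We will prove the informal Thm.~\ref{thm:inosgm} by reducing it to the analysis of the sampled Gaussian mechanism (SGM) with individualized sampling rates. This is the same route that yields Thm.~\ref{thm:idpsgd-privacy-guarantee} for \idpsgd. The only mechanism-specific ingredient is a bound on the modular sensitivity. Fix a datum $d$ and any set $B$ not containing $d$, and write $F(B) \coloneq \sum_{k=1}^{|B|} \rho_k^{B} g(\varpi(B)(k))$. The core claim is
\[
\|F(B \cup \{d\}) - F(B)\| \leq \Delta_\Alg^{o(d)} .
\]
Once this holds, the rest follows the standard argument. Conditioning on the sampling of all data other than $d$, the released output on $D$ is a mixture: with probability $1 - q_{o(d)}$ it is $\mathcal{N}(F(B),\sigma^2\mathbf{I})$, and with probability $q_{o(d)}$ it is $\mathcal{N}(F(B\cup\{d\}),\sigma^2\mathbf{I})$. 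The neighbouring dataset $D^d$ gives only the first component. Since the mean shift is at most $\Delta_\Alg^{o(d)}$, the sampled Gaussian mechanism bound of \citet{mironov2017renyi} (App.~\ref{app:sgm}) applies with sensitivity $\Delta_\Alg^{o(d)}$ and rate $q_{o(d)}$. We then take a joint convexity / quasi-convexity step over the randomness of $B$, using that the Rényi divergence of a mixture is bounded by the worst conditional one. This gives the per-owner IRDP bound, which in the small-$q$ regime is $2\alpha q_{o(d)}^2 (\Delta_\Alg^{o(d)})^2/\sigma^2$ per release. Adaptive composition over $T$ iterations then recovers Thm.~\ref{thm:inosgd-privacy} as a special case. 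This is valid because $\varpi$ depends only on the previously released model, as argued in the Remark.

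For the sensitivity bound we use the BIF explicitly. Order $B\cup\{d\}$ by $\varpi$, and let $d$ sit at position $j$ with interval $[c_{j-1}, c_{j-1}+\Delta_\Alg^{o(d)}]$. The items are handled in three groups.
\begin{itemize}[leftmargin=*,itemsep=0pt,topsep=0pt]
\item \textbf{Items after $d$.} These are aligned to the right end $\Gamma$ of the domain, since the tail of length $\gamma$ is anchored there. Their intervals measured from $\Gamma$ are unchanged, so their $\rho_k$ is identical in both batches and they contribute nothing to the difference.
\item \textbf{Items before $d$.} Each interval shifts left by exactly $\Delta_\Alg^{o(d)}$, i.e.\ it moves away from the tail, so its average importance $\rho_k$ can only increase. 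Write $\rho_k' - \rho_k \ge 0$ for this increase.
\item \textbf{The new item $d$.} It contributes $\rho_j\, g(d)$, of norm at most $\rho_j \Delta_\Alg^{o(d)}$.
\end{itemize}
By the triangle inequality and $\|g(d')\| \le \Delta_\Alg^{o(d')}$, the total change is at most
\[
\rho_j\Delta_\Alg^{o(d)} + \sum_{k<j} (\rho_k'-\rho_k)\,\Delta_\Alg^{o(\cdot)} .
\]
Each product $\rho_k \Delta_\Alg^{o(\cdot)}$ is an integral of $f$ over that item's interval, so the bound becomes a sum of integrals of $f$ over shifted intervals. These telescope: the sum $\sum_{k<j}\bigl(\int_{\text{new}} f - \int_{\text{old}} f\bigr)$, together with $\int f$ over $d$'s interval, collapses to the integral of $f$ over a single window of length $\Delta_\Alg^{o(d)}$. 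Since $f\le 1$, that integral is at most $\Delta_\Alg^{o(d)}$.

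The main obstacle is making the telescoping rigorous. We will first fix the precise BIF construction of App.~\ref{app:batch-goodness-function}, namely $f(c) = \tgf(c - (\Gamma-\gamma))$ on the tail and $1$ before it. This gives the translation identity $f_{B\cup\{d\}}(c+\Delta) = f_B(c)$ in the right-anchored coordinate. With this identity, the union of old and new intervals of the preceding items differs only by a single length-$\Delta_\Alg^{o(d)}$ window at the boundary, and monotonicity of $\tgf$ guarantees every difference $\rho_k'-\rho_k$ is nonnegative, so no absolute values are lost. Two edge cases need separate checks. The first is when $\Gamma<\gamma$, where the tail is truncated; we would treat this by extending $f$ with value $\tgf$ on negative coordinates, or by clamping, and verify the identity still holds. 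The second is ties in $\varpi$, which we resolve by any fixed tie-breaking rule.
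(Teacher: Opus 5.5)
Your proposal is correct and follows essentially the same route as the paper. The paper also bounds the modular sensitivity in three parts: items ranked below the new datum keep their scores, and items ranked above gain exactly $\int_{c_{k-1}}^{c_{k-1}+\Delta} f' - \int_{c_k}^{c_k+\Delta} f' \ge 0$. The triangle inequality then combines these with the new item's term, and telescoping collapses the sum to $\int_0^{\Delta} f' \le \Delta$, after which the paper invokes the SGM bound per owner. Your explicit conditioning on the other samples with joint quasi-convexity, and your checks of the $\Gamma<\gamma$ case and of tie-breaking, just spell out details the paper leaves implicit when it cites the SGM theorem.
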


\subsubsection{\inosgd\ Addresses IDP-Induced Utility Imbalance} \label{subsubsec:inosgd-addresses-data-imbalance}

When \inosgd\ strategically downweights data with lower losses, it  mitigates MID for more private owners with higher losses. 
This can be seen by applying Thm.~\ref{thm:idp-induced-utility-imbalance-informal} with the owners with higher and lower loss as owners $1$ and $2$, respectively. When $\cos(\angle_t)$ is negative, \inosgd\ effectively decreases the numerator, increases the left-hand side of Cond.~\ref{eqn:condition-informal}, potentially beyond $\varsigma$.
Now we analyze the \textbf{optimization objective} of \inosgd. The cumulative clipping thresholds $(c_k)_{k \in [|B_t|]}$ in Line \ref{alg:int} of Alg.~\ref{alg:inosgd} fall in different positions across iterations, which complicates the theoretical analysis. 
Thus, we analyze a \textit{uniform-privacy-extension} (UPE) dataset $\smash{\widehat{D}}$ instead. In $\smash{\widehat{D}}$, each owner may have different number of data based on their original sampling rates and clipping thresholds ($\smash{\widehat{D}}$ is imbalanced). To illustrate, consider the \sample\ algorithm with different sampling rates\footnote{Analysis for \scale\ is similar.}. Suppose we sample multiple subsets $(B_\tau)_{\tau  \in [\Tau]}$ for a large $\Tau$ according to sampling rates $(q_n)_{n\in [N]}$. We construct the UPE dataset $\smash{\widehat{D} \coloneq \bigcup_{\tau=1}^\Tau B_\tau}$. Running \inosgd\ on $\smash{\widehat{D}}$ with all sampling rates $=1/\Tau$ is approximately equivalent to running \inosgd\ on $D$ as the sampled batches are similar. 

The vanilla \idpsgd\ is effectively optimizing $\mathcal{L}(\bm\theta; \smash{\widehat{D}})$.
Since $\smash{\widehat{D}}$ is imbalanced, \idpsgd\ suffers from MID and BOO as discussed in Sec.~\ref{subsec:data-imbalance}. In contrast, our \inosgd\ is \textbf{less affected by MID and BOO} as it effectively optimizes a different objective $\mathcal{L}_{\tgf}$ (see App.~\ref{app:inosgd-objective} for verification that the gradient at each iteration is an unbiased estimate of derivative of $\mathcal{L}_{\tgf}$):
\begin{theorem}[Objective of \inosgd] \label{thm:inosgd-objective}
    \inosgd\ specified by TIF $\tgf$ effectively minimizes
    \[\textstyle
    \mathcal{L}_{\tgf}(\bm\theta; \widehat{D}) = \frac{1}{K}\sum_{k=1}^K w_k \ell(\bm\theta;d_{\pi_K(k)}),
    \]
    where $K$ is the number of data in $\smash{\widehat{D}}$ and $\mathbf{w} = (w_k)_{k=1}^K$ is a non-increasing sequence of weighting coefficients less than $1$ and $\pi_K(k)$ represents the index of datum with $k$-th largest loss in $\smash{\widehat{D}}$.
\end{theorem}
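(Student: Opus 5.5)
The plan is to verify that the expected \inosgd\ update on the UPE dataset $\widehat{D}$ equals, up to the constant factor $1/b$ and the zero-mean noise, the gradient of a loss-ordered weighted sum. We work with the uniform-sampling view. Each datum of $\widehat{D}$ is sampled with rate $1/\Tau$, so each batch $B_t$ is a uniformly random subset of expected size $K/\Tau$. Under the UPE reduction every datum carries the same effective clipping contribution, so the cumulative thresholds $c_k$ in Line~\ref{alg:int} of Alg.~\ref{alg:inosgd} sit on a uniform grid, and $\rho_k$ depends only on the in-batch rank $k$ and the batch size. We also assume gradients are not clipped, or that clipping is absorbed into a rescaled loss. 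Then the per-datum contribution is $\rho_k \nabla \ell(\bm\theta; d)$. Because the ordering comes from $\bm\theta_{t-1}$ and is held fixed locally (ties having measure zero), this equals $\nabla$ of $\rho_k \ell$ with $\rho_k$ treated as a constant.

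The key step is to compute the expectation over batch sampling:
\[\textstyle \E\big[\overline{\mathbf{G}}_t\big] = \sum_{j=1}^K \E\big[\mathbb{1}\{d_{\pi_K(j)} \in B_t\}\, \rho_{\mathrm{rank}_{B_t}(d_{\pi_K(j)})}\big]\, \nabla \ell(\bm\theta; d_{\pi_K(j)}).\]
Take the datum with global rank $j$ in $\widehat{D}$. Conditioned on it being sampled, its in-batch rank is $1$ plus the number of sampled data among the $j-1$ globally higher-loss ones. This count is Binomial$(j-1, 1/\Tau)$ and independent of how many lower-loss data are sampled. The BIF tail is anchored at the right end of $[0,\Gamma_t]$, so the score depends on the in-batch rank counted from the end, i.e.\ on the number of sampled lower-loss data. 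That count is Binomial$(K-j, 1/\Tau)$. Either way the resulting weight is a deterministic function of $j$ alone. We define
\[\textstyle w_j \coloneq \frac{K}{\Tau}\cdot\E\big[\rho \mid d_{\pi_K(j)} \in B_t\big]\cdot\frac{\Tau}{K}\]
suitably normalized, so that $\E[\widetilde{\mathbf{G}}_t] = \frac{1}{K}\sum_j w_j \nabla\ell(\bm\theta; d_{\pi_K(j)})$ once $b = K/\Tau$ is used. This identifies $\mathcal{L}_{\tgf}$.

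Two properties then need checking. First, $w_j \le 1$. This holds because $\rho_k \in [0,1]$, since $f_t \in [0,1]$ and $\rho_k$ is an average of it. Second, $\mathbf{w}$ is non-increasing in $j$. This is the main obstacle. For it I would use a coupling argument. Compare global ranks $j$ and $j+1$ by conditioning on everything except the two data $d_{\pi_K(j)}$ and $d_{\pi_K(j+1)}$. The datum of lower global rank always has at least as many sampled lower-loss data behind it, stochastically, so its in-batch position lies further left in $[0,\Gamma_t]$. Since the BIF equals $1$ outside the tail and $\tgf$ is non-increasing within it, $f_t$ is non-increasing in position. The interval average $\rho$ is therefore monotone in position, and monotonicity of $w_j$ follows from first-order stochastic dominance of Binomial$(K-j,\cdot)$ over Binomial$(K-j-1,\cdot)$.

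I expect the delicate part to be handling the variable batch size, since $\Gamma_t$ and hence the tail's location vary with $|B_t|$. The argument above avoids this by counting positions from the right end, where the tail is fixed. Finally, I will note that "effectively minimizes" is meant in the sense that the update is an unbiased stochastic gradient of $\mathcal{L}_{\tgf}$ at the current ordering. Standard SGD arguments then apply.
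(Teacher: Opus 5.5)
Your proposal is correct and follows essentially the paper's argument. Under uniform sampling on $\widehat{D}$ with unit clipping thresholds, you write the weight of the datum of global rank $j$ as the expectation of the BIF interval average over the $\mathrm{Binomial}(K-j,\hat q)$ count of sampled lower-loss data, normalize by $b \approx K\hat q$, and set clipping and noise aside, exactly as the paper does. Your stochastic-dominance step makes explicit the monotonicity of $\mathbf{w}$, which the paper only asserts as ``obvious.''
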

As the weight sequence is non-increasing, a smaller $k$ assigns a larger weight $w_k$ to the $k$-th largest loss in $\smash{\widehat{D}}$. Intuitively, data that are harder to learn (e.g., belonging to more private owners/near the decision boundary) are assigned larger weights than easier ones (e.g., belonging to less private owners).
Hence, \textbf{the more private groups have a larger weight in the objective, thus mitigating MID and BOO.}
The \inosgd\ objective $\mathcal{L}_{\tgf}$ can also be viewed as (see App.~\ref{app:benefit-of-inosgd-objective} for details) 
\begin{itemize}[leftmargin=*,itemsep=0pt,topsep=0pt,parsep=0pt,partopsep=0pt]
    \item the \textit{mixed conditional value-at-risk} %
    \citep{ogryczak2000multiple} of loss %
    $\ell$ (i.e., a weighted expectation of worse-case losses). Thus, data from more private owners with worse/larger losses are improved.  
    \item an objective that better corrects for utility imbalance than average loss when minimized and is less affected by outliers with large loss than the \textit{maximal loss} \citep{shalev2016minimizing}. 
    \item an \textit{ordered weighted average} (OWA) function with positive non-increasing weights. \citet{weymark1981generalized} shows that such functions satisfy the \textit{Pigou-Dalton principle} \citep{pigou1912wealth, dalton1920measurement}: if the model can reduce a larger individual loss (likely from a more private owner) by increasing another smaller individual loss (likely from a less private owner), the OWA loss decreases. Thus, minimizing the OWA loss reduces imbalance across owners with different IDP requirements. 
\end{itemize}

\begin{figure}
    \begin{minipage}[t]{0.63\textwidth}
        \begin{minipage}[t]{\textwidth}
            \centering
            \subfigure[MNIST.]{\includegraphics[width=0.326\columnwidth]{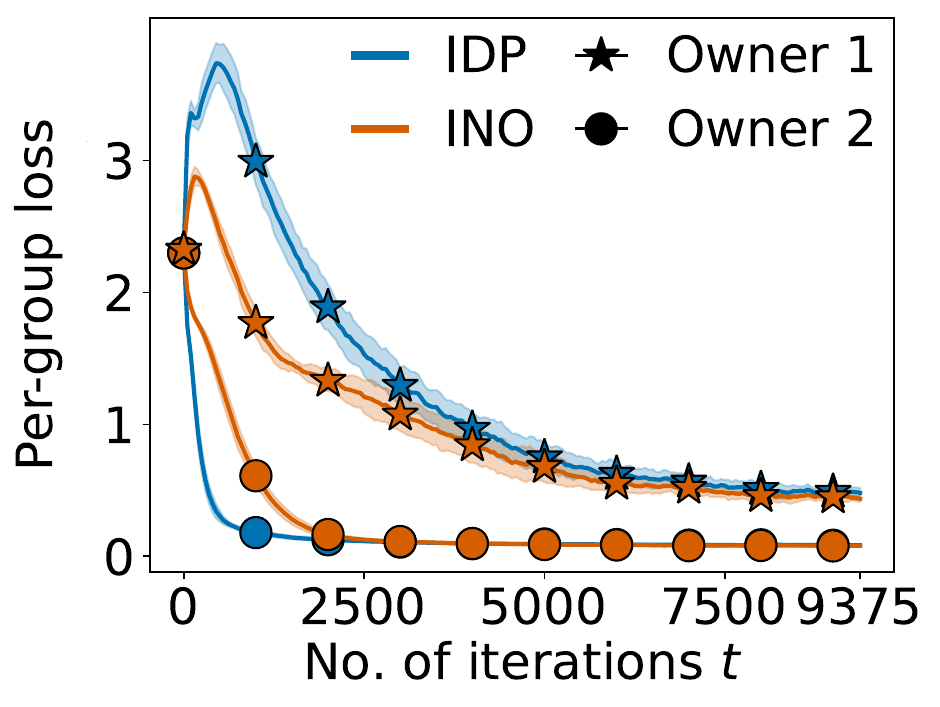}\label{fig:po-mnist-loss}}
            \hfill
            \subfigure[CIFAR-10.]{\includegraphics[width=0.326\columnwidth]{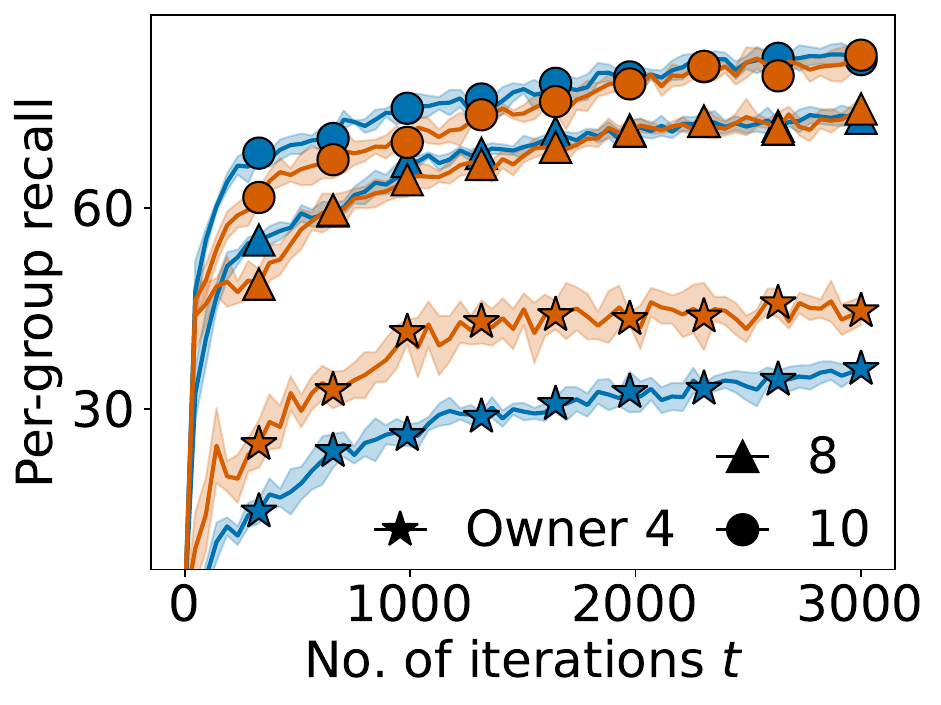}\label{fig:po-cifar-high-recall}}
            \hfill
            \subfigure[CIFAR-100-FV.]{\includegraphics[width=0.326\columnwidth]{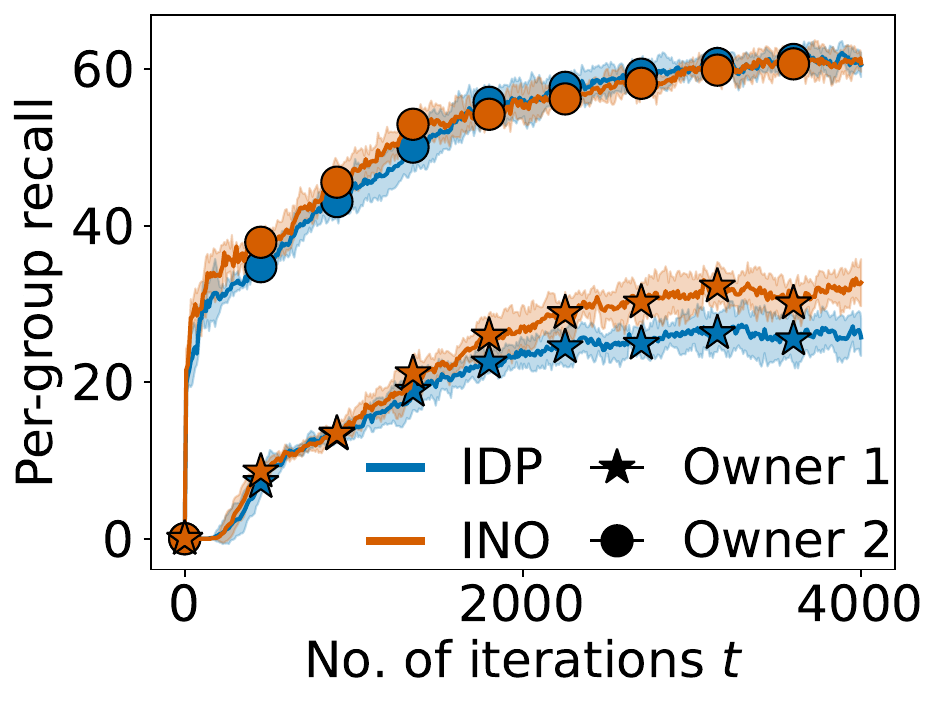}\label{fig:po-cifar100fv-recall}}
            \vspace{-3mm}
            \caption{\textbf{Per-group utility corresponding to different owners for \idpsgd\ and \inosgd.} \inosgd\ significantly improves model utility ($\sim\!10\%$ accuracy) for the more private owners without lowering the utility for the less private owners.}
            \label{fig:addressing-utility-imbalance}
        \end{minipage}
        \setcounter{figure}{6}
        \begin{minipage}[t]{\textwidth}
            \centering
            \subfigure[MNIST.]{\includegraphics[width=0.326\columnwidth]{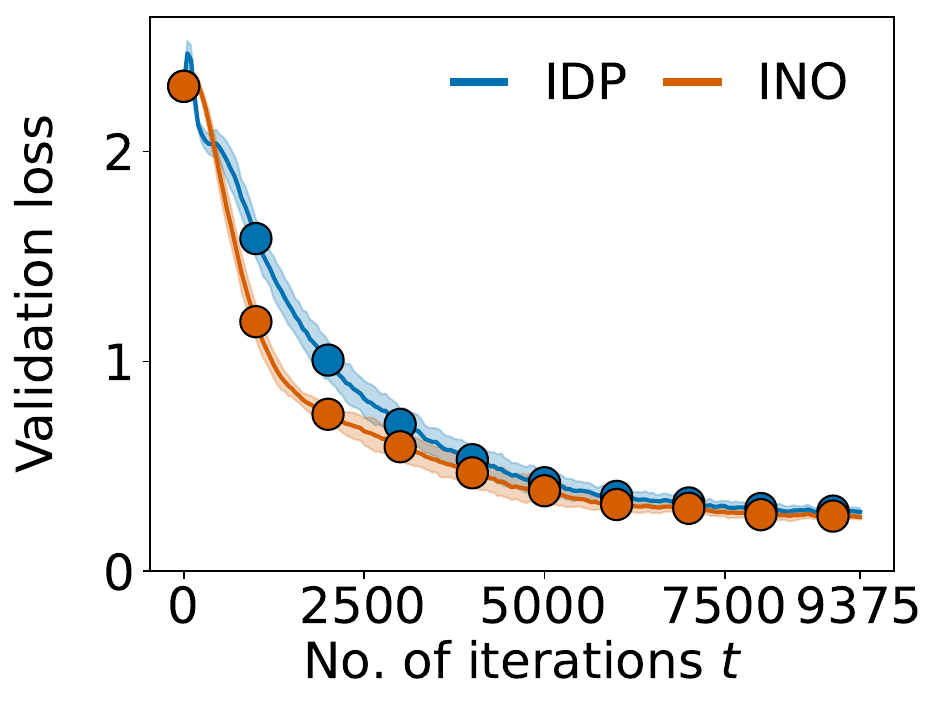}\label{fig:overall-mnist-loss}}
            \hfill
            \subfigure[CIFAR-10.]{\includegraphics[width=0.326\columnwidth]{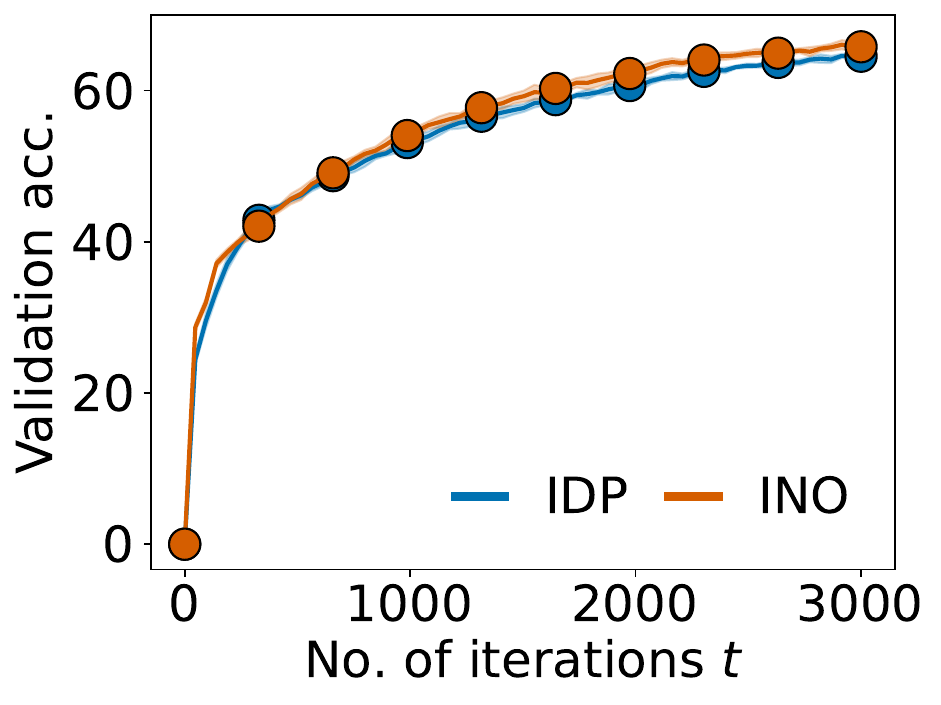}\label{fig:overall-cifar-high-accuracy}}
            \hfill
            \subfigure[CIFAR-100-FV.]{\includegraphics[width=0.326\columnwidth]{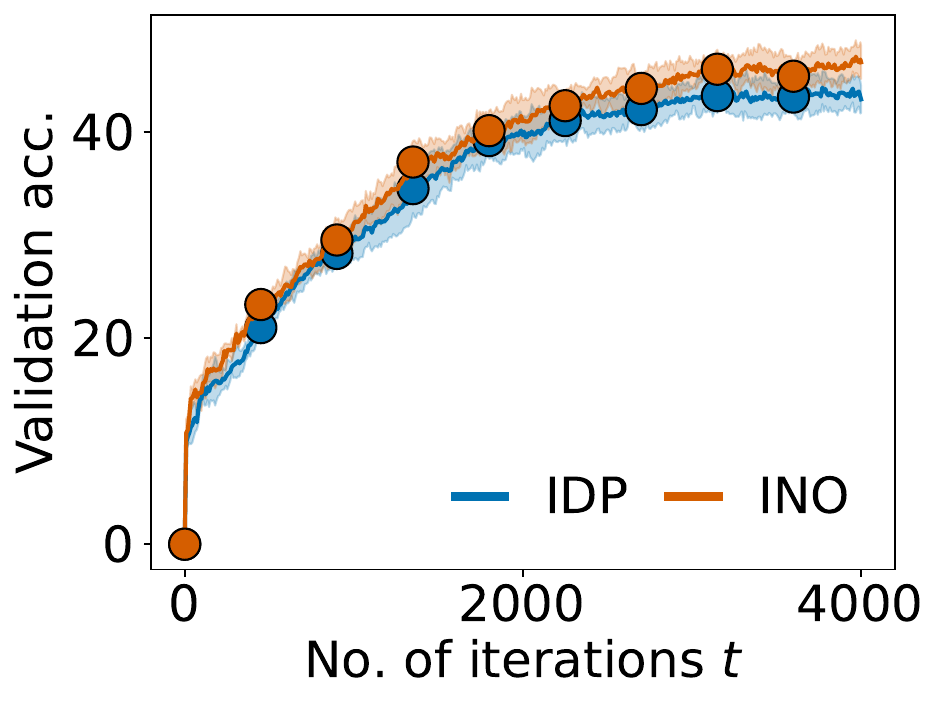}\label{fig:overall-cifar100fv-accuracy}}
            \vspace{-3mm}
            \caption{\textbf{Comparison of overall model utility between \idpsgd\ and \inosgd.} \inosgd\ consistently improves/preserves the overall model performance.}
            \label{fig:preserving-overall-model-utility}
        \end{minipage}
    \end{minipage}
    \hfill
    \setcounter{figure}{5}
    \begin{minipage}[t]{0.34\textwidth}
        \centering
        \vspace{1.6mm}
        \subfigure[CIFAR-10.]{\includegraphics[width=.8\columnwidth]{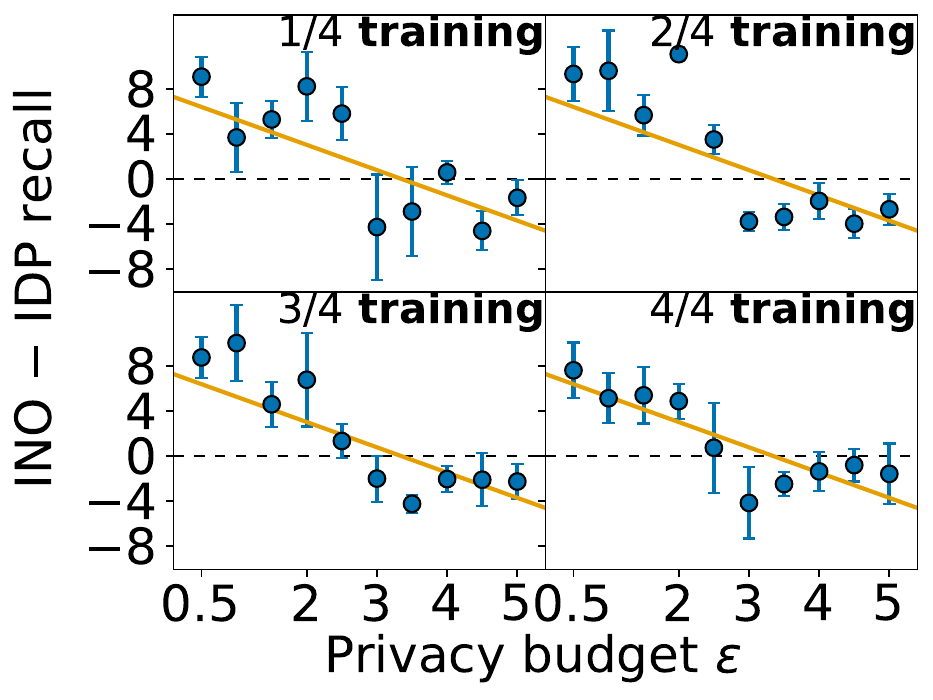}\label{fig:cifar-pb-change}} 
        \\ 
        \subfigure[CIFAR-100.]{\includegraphics[width=.8\columnwidth]{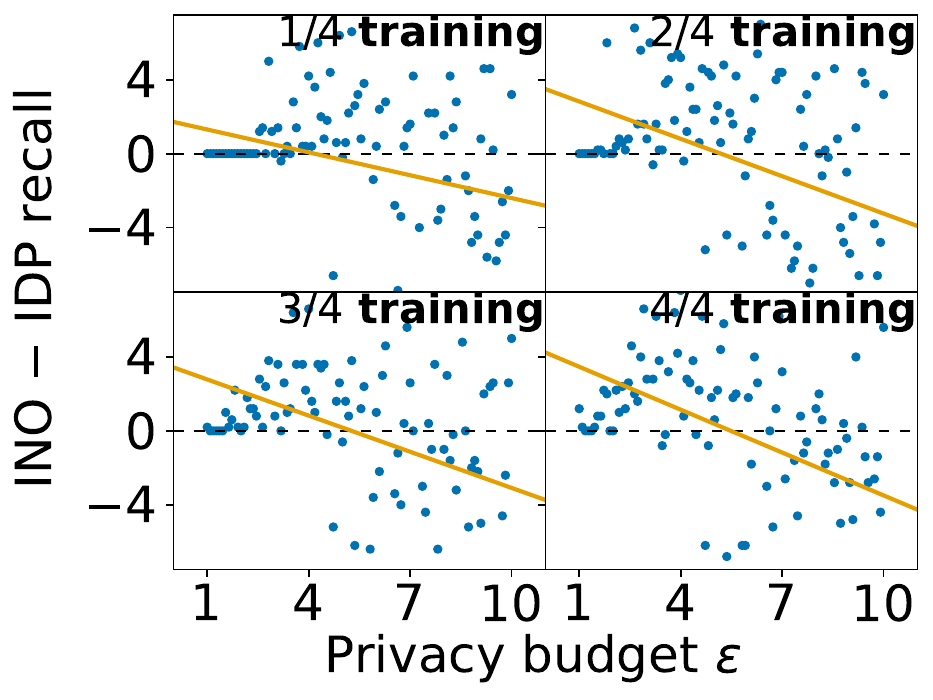}\label{fig:cifar100-pb-change}}
        \vspace{-2mm}
        \caption{\textbf{\inosgd's per-group recall minus \idpsgd's.} Model utilities for the more private owners show higher increase.}
        \label{fig:change-against-budget}
    \end{minipage}
\end{figure}

\setcounter{figure}{7}

\section{Experiments} \label{sec:experiments}

In this section, we empirically compare \inosgd\ with \idpsgd\ under a variety of settings. App.~\ref{app:experiment-setup} describes our setup (e.g., data owners and their privacy budgets) in detail. To summarize\footnote{App.~\ref{app:experiment} also contains additional experiments under more datasets, models and settings.}, we train \citet{papernot2021tempered}'s convolutional neural network models for better performance under (I)DP on the MNIST \citep{lecun1998gradient}, CIFAR-10 or CIFAR-100 \citep{krizhevsky2009learning} dataset.
Each data owner holds data from $1$ (CIFAR-10, CIFAR-100) or multiple distinct classes (MNIST) with no overlap. Additionally, for CIFAR-100-FV, we consider a subset of CIFAR-100 with $2$ data owners each possessing a domain of data \textsc{Fish} and \textsc{Vehicle} respectively. In all experiments, owners with smaller index require stronger privacy. We focus on two questions: (\underline{\textbf{B}}alance) \textit{Does \inosgd\ improve model performance on the worse-off validation groups (measured by per-group loss/recall\footnote{We include some results based on per-group/overall loss since we analyze and solve the problem theoretically through the lens of loss. The recall/accuracy results are provided for all experiments in App.~\ref{app:experiment}.}) corresponding to the more private data owners during training and reduce the performance gap between the more private and less private groups throughout training?} 
(\underline{\textbf{U}}tility) \textit{Since balance often comes at a cost, does \inosgd\ achieve the above by sacrificing overall model utility (measured by validation loss/accuracy)?} We answer (\textbf{B}) in Sec.~\ref{subsec:addressing-utility-imbalance} and (\textbf{U}) in Sec.~\ref{subsec:preserving-overall-model-utility}. Several other desirable properties of \inosgd\ are discussed in Sec.~\ref{subsec:other-empirical-benefits}. Our implementation can be found at \url{https://github.com/snoidetx/ino-sgd}.

\subsection{Addressing Utility Imbalance} \label{subsec:addressing-utility-imbalance}

In Fig.~\ref{fig:addressing-utility-imbalance}, we plot the per-group loss/recall corresponding to each data owner across iterations for \idpsgd\ and \inosgd.
\textbf{For groups corresponding to the more private owners (e.g., $\bigstar$), \inosgd\ gives consistently better performance throughout learning}: the \inosgd\ performance starts to increase earlier and faster than \idpsgd\ while ending with better performance when training completes. We stress that \textbf{MID occurs at different time based on ML task complexity}. In simpler tasks like MNIST, MID occurs at around $1/4$ of the entire training stage, while in more difficult tasks like CIFAR-10, privacy budgets are used up before MID even ends. This aligns with our emphasis that \textbf{MID is more significant under (I)DP}. App.~\ref{app:addressing-utility-imbalance} includes results for more datasets (e.g., real-world clinical datasets), models (e.g., ResNet-18 \citep{he2016deep}) and settings (e.g., more variation in privacy budgets; less heterogeneity among owners; presence of label noises). Moreover, it also includes additional results that show how \inosgd\ improves the worst-group utility and hence further validate that \inosgd\ addresses utility imbalance in Fig.~\ref{appfig:addressing-utility-imbalance-worst}.

To investigate how model performances under \inosgd\ change from \idpsgd\ for data owners with different privacy budgets, we consider the CIFAR-10 and CIFAR-100 datasets where we have many data owners with various privacy budgets.
For both datasets, we plot graphs of changes in model performance against privacy budgets to show \inosgd's effectiveness for different privacy budgets in Fig.~\ref{fig:change-against-budget}. We also plot their linear regression line in orange. \textbf{Across training stages, \inosgd\ significantly increases the recall on validation data groups associated with more private owners (left) at the expense of a moderate decrease in the recall on less private ones (right).} By the end of training, both CIFAR-10 and CIFAR-100 datasets show statistically significant \textit{Pearson correlations} between the changes in recall and privacy budgets ($p$-value $< 0.001$ in both cases). In particular, the scatter plot of CIFAR-100 appears noisier because the complexity of each class in CIFAR-100 varies and has higher variance than CIFAR-10. A more private data owner with easier-to-learn data might attain similar utility as another less private data owner with difficult data.

\subsection{Preserving Overall Model Utility} \label{subsec:preserving-overall-model-utility}

Although the main goal of \inosgd\ is to correct utility imbalance, Fig.~\ref{fig:preserving-overall-model-utility} demonstrates that \inosgd\ also results in equal or slightly better overall (aggregated) model utility. For all experiments, \textbf{\inosgd\ generally achieves a smaller loss/higher accuracy than \idpsgd\ throughout training}. 
This verifies our intuition in Sec.~\ref{subsec:inosgd-algorithm} that by dropping certain information from the sampled gradients, we are not wasting privacy budgets but rather deleting information that negatively affects model performance. The results also imply that vanilla \idpsgd\ leads to a suboptimal privacy-utility tradeoff, while \inosgd\ improves over it. In App.~\ref{app:preserving-overall-model-utility}, we include more results using larger datasets and models under various settings and observe the same trend.

\subsection{Other Empirical Benefits and Discussions} \label{subsec:other-empirical-benefits}

\begin{itemize}[leftmargin=*,itemsep=0pt,topsep=0pt,parsep=0pt,partopsep=0pt]
    \item \textbf{Better within-owner learning dynamics:} In real life, a data owner itself may own data from multiple heterogeneous groups. Such groups may have different sizes and learning complexities, which cause utility imbalance to occur within the owner itself. In App.~\ref{app:better-within-owner-learning-dynamics}, we demonstrate that \inosgd\ addresses this too and results in better within-owner learning dynamics.
    \item \textbf{Robustness to hyperparameter choice:} In App.~\ref{app:components-of-inosgd}, our \textbf{ablation studies} show that \inosgd\ performance is better than \idpsgd\ across different choices of the TIF $\tgf$, including the tail length $\gamma$, specific values of $\upalpha$ and $\upbeta$ when we construct $\tgf$ using a Beta distribution, or more generally the specific forms of TIF. This shows the practicability of \inosgd.
    \item \textbf{Pareto superiority}: The model owner can also choose to set TIF more aggressively to trade some overall utility for less utility imbalance. In App.~\ref{app:pareto-superiority}, we demonstrate that \inosgd\ Pareto-dominates simple methods (that satisfy IDP) in such cases.
\end{itemize}

\section{Conclusion and Discussion} \label{sec:conclusion-and-discussion}

In this work, we identify and analyze the problem of IDP-induced utility imbalance and explain why it differs from standard data imbalance. We then propose the \inosgd\ algorithm and theoretically analyze its properties. Lastly, we empirically show that \inosgd\ addresses the utility imbalance problem on a range of different datasets.
One possible limitation (discussed in App.~\ref{app:limitations}) is that increasing the utility of more private owners might reduce that of the less private ones. While this could be explained by the \textit{IDP-balance-utility tradeoff} (see below and App.~\ref{app:idp-utility-fairness-tradeoff}), future work can consider exploring the Pareto frontier characterizing this tradeoff or other methods to address IDP-induced utility imbalance. Future work can also explore other use cases of our proposed unique IDP mechanism, \inosgm. In App.~\ref{app:qa}, we answer other questions a reader may have.

\paragraph{IDP-balance-utility tradeoff.} The disparate impact of DP and the \textit{privacy-disparity-utility tradeoff}\footnote{As mentioned in Sec.~\ref{subsec:idp-induced-utility-imbalance-and-learning-dynamics}, these are orthogonal to this work.} \citep{bagdasaryan2019differential} has been a well-known drawback that limits the performance of DP-ML models. In this work, we identify that the situation becomes even worse under IDP. In Sec.~\ref{subsec:idp-induced-utility-imbalance-and-learning-dynamics}, we explain that even if the original training set is balanced and there is no underrepresented group, IDP(-SGD) would forcefully create underrepresented groups because of the individualized privacy preferences. We term this the IDP-balance-utility tradeoff. \textbf{Our work successfully gets closer to the boundary of the IDP-balance-utility tradeoff}, because we are able to achieve better balance without losing the overall model utility. Nonetheless, we are still facing the IDP-balance-utility tradeoff. More discussions are included in App.~\ref{app:idp-utility-fairness-tradeoff}. Future research may theoretically characterize this tradeoff or come up with new methods that tackle it.

\clearpage
\section*{Ethics Statement}

This paper presents work whose goal is to advance the field
of machine learning (ML). In particular, it identifies and addresses
the problem of IDP-induced utility imbalance where the trained ML model has lower utilities on certain groups during deployment. This problem is justified and significant as the model owner and subsequent users with data in such groups who are impacted by the model decisions are negatively affected.
Moreover, our work emphasizes data privacy, which
protects the privacy of individuals. Therefore, our research
promotes ethical AI with positive societal
impact.

\section*{Reproducibility Statement}

In App.~\ref{app:experiment-setup}, we describe in detail the devices, datasets, models and settings used in all our experiments. In each experiment subsection, we also clearly explain how each experiment is done. We also provide our source code in the supplementary materials which includes the necessary information for reproducibility. 

\section*{Acknowledgments}

This research is supported by the National Research Foundation, Singapore under its AI Singapore Programme (AISG Award No: AISG$3$-RP-$2022$-$029$). Xiao Tian and Jue Fan are supported by Agency for Science, Technology and Research
(A*STAR) Graduate Academy. 
The authors would also like to thank the anonymous reviewers and AC for their helpful feedback.
The authors would also extend special thanks to Prof. Jonathan Scarlett for his valuable comments
and suggestions at the earlier stage of this research.

\bibliography{references}
\bibliographystyle{iclr2026_conference}

\clearpage
\appendix

\section{Overview}

\subsection{Summary of This Work} \label{app:summary-of-this-work}

The following figure gives a summary of this work.

\begin{figure}[!h]
    \centering
    \vspace{2em}
    \includegraphics[width=\linewidth]{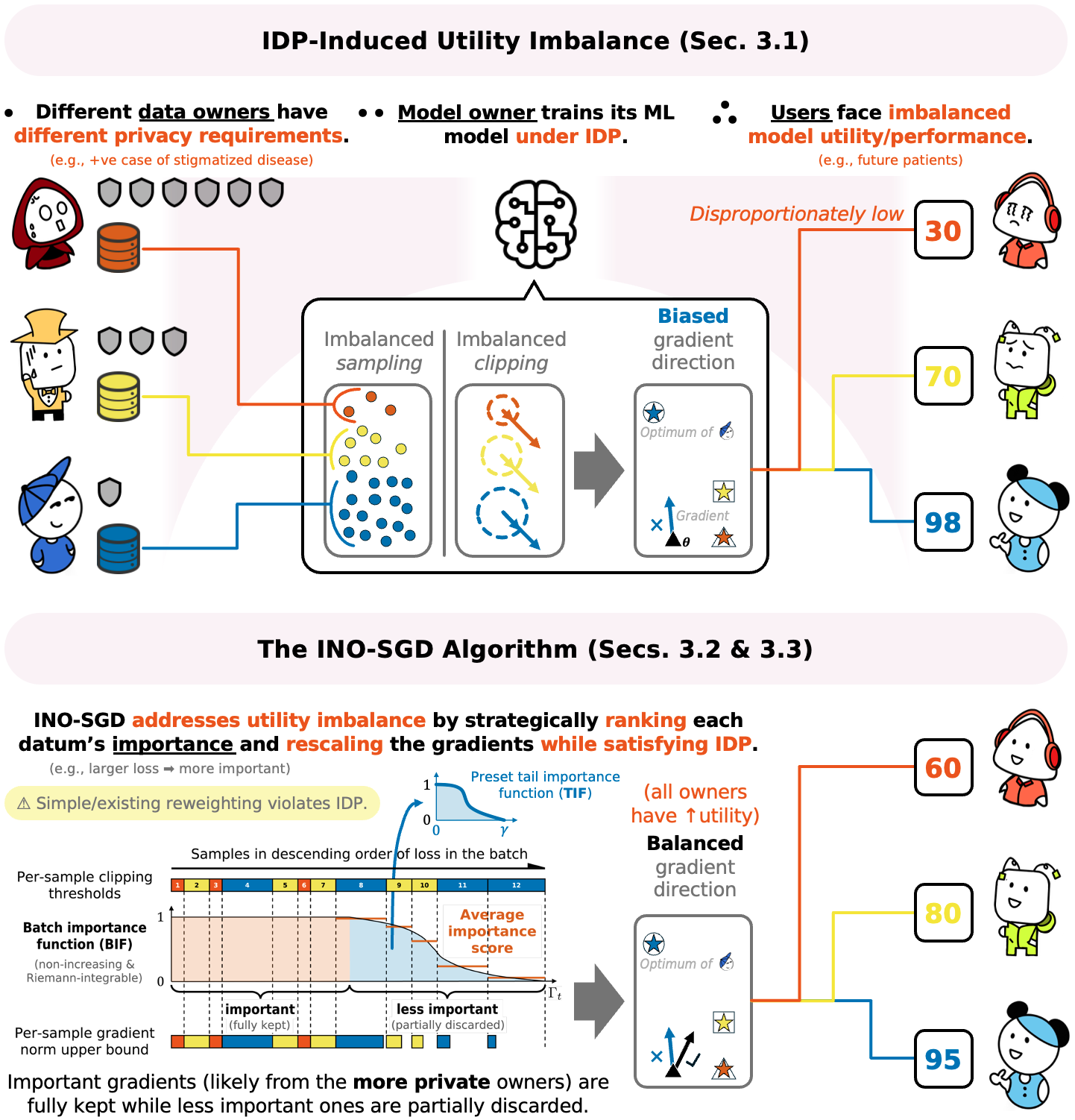}
    \vspace{0.5em}
    \caption{\textbf{Overview of IDP-induced utility imbalance and the \inosgd\ algorithm.}}
    \label{fig:overview}
\end{figure}

\clearpage

\subsection{Summary of Notations} \label{app:summary-of-notations}

Below gives a summary of all notations used in this paper.

\vspace{1em}

\hrule\vspace{0.25cm}

\centerline{\bf Constants and Variables}
\bgroup
\def\arraystretch{1}
\begin{longtable}{ p{1.25in} p{5in} }
$\angle_t$ & Angle between group summed gradients at iteration $t$ \\
$\emptyset$ & Empty set \\
$\mathbf{0}_r$ & $r$-length vector of $0$'s \\
$\Alg$ & Algorithm; ML training algorithm \\
$\alpha$ & Rényi order \\
$\bm\alpha$ & Vector of per-owner Rényi orders \\
$\alpha_n$ & Rényi order computed for data owner $n$ \\
$\upalpha$ & Beta distribution parameter \\
$B$ & Sampled batch from $D$ \\
$B_t$ & Sampled batch at iteration $t$ \\
$B_{t, n}$ & Sampled batch at iteration $t$ from data owner $n$ \\
$B_t^\rs$ & sorted $B_t$ based on descending order of individual losses \\
$\mathcal{B}$ & Arbitrary set of data \\
$b$ & Expected batch size \\
$\upbeta$ & Beta distribution parameter \\
$C$ & Clipping threshold \\
$c$ & Point over the tail of length $\gamma$ \\
$\mathbf{c}$ & Vector of accumulated clipping threshold \\
$c_k$ & Accumulated clipping threshold until (and including) the $k$-th ranked datum \\
$C_n$ & Clipping threshold of data owner $n$ \\
$D$ & Dataset \\
$\widehat{D}$ & UPE dataset associated with $D$ \\
$D^d$ & Neighboring dataset of $D$ with regards to datum $d$ \\
$D_n$ & Dataset of data owner $n$ \\
$\widehat{D}_n$ & UPE dataset of data owner $n$ \\
$\mathcal{D}$ & Set of datasets \\
$e$ & A random event \\
$d$ & Datum within a dataset \\
$\delta$ & Tolerance term in $(\epsilon, \delta)$-DP \\
$\epsilon$ & Privacy budget of a data owner; privacy level of a mechanism \\
$\bm\epsilon$ & Vector of per-owner privacy budgets \\
$\epsilon_n$ & Privacy budget of data owner $n$; \\
$\bar\epsilon$ & Rényi Privacy budget of a data owner \\
$\bar{\bm\epsilon}$ & Vector of Rényi privacy budgets of data owners \\
$\bar\epsilon_n$ & Rényi Privacy budget of data owner $n$ \\
$\eta$ & Learning rate \\
$F_\rg$ & Group fairness metric \\
$\mathcal{F}_t$ & Filtration up to the end of iteration $t$ \\
$\mathbf{G}_t$ & Gradient used for descent at iteration $t$ \\
$\mathbf{G}_{t, \textsl{g}_n}$ & Summed gradient of group $\textsl{g}_n$ at iteration $t$ \\
$\overline{\mathbf{G}}_t$ & Sum of (weighted) clipped gradient at iteration $t$ \\
$\widetilde{\mathbf{G}}_{t}$ & Perturbed gradient at iteration $t$ \\
$\textsl{g}_n$ & Group $n$ within a dataset \\
$\mathbf{g}_d$ & Individual gradient with regards to datum $d$ \\
$\mathbf{g}_{t, d}$ & Individual gradient with regards to datum $d$ at iteration $t$ \\
$\bar{\mathbf{g}}_d$ & Clipped individual gradient with regards to datum $d$ \\
$\bar{\mathbf{g}}_{t,d}$ & Clipped individual gradient with regards to datum $d$ at iteration $t$ \\
$\Gamma_t$ & Sum of clipping thresholds of sampled data at iteration $t$ \\
$\gamma$ & Length of tail \\
$\mathbf{I}_{r \times r}$ & $r \times r$ identity matrix \\
$K$ & Number of data in $\widehat{D}$ \\
$k$ & Index of datum in $\widehat{D}$ \\
$\kappa$ & Number used in fast integration in \inosgd \\
$L_1$ & Lipschitz constant \\
\ltwo & Quantities measured in $2$D Lebesgue measure \\
$L_2$ & Smoothness constant \\
$\lambda$ & Level of CVaR \\
$N$ & Number of data owners \\
$n$ & Index of data owner \\
$\mathbf{O}$ & Subset of $\bm\Theta$ \\
$\bm\Omega$ & Set of algorithm outputs \\
$\bm\omega$ & Vector algorithm outputs \\
$\mathcal{P}$ & Uncertainty set \\
$\mathbf{p}$ & Possible weights within uncertainty set \\
$p_k$ & Possible weight assigned to $k$-th ranked datum \\
$\phi_{t, d}$ & Scaling factor to scale gradient of datum $d$ to its clipped gradient at iteration $t$ \\
$\bar{\phi}_{t, n}$ & Expected scaling factor of owner $n$ at iteration $t$ \\
$\bar{\phi}_{t, n}^{(2)}$ & Expected squared scaling factor of owner $n$ at iteration $t$ \\
$\psi$ & Problem-specific positive term used in Cond.~\ref{eqn:condition-raw} \\
$q$ & Sampling rate \\
$\mathbf{q}$ & Vector of individualized sampling rates \\
$\hat{q}$ & Uniform sampling rate in UPE dataset \\
$q_n$ & Sampling rate of data owner $n$ \\
$r$ & Number of parameters in an ML model \\
$\rho$ & Importance score \\
$\rho^d$ & Importance score of datum $d$ \\
$\rho_k$ & Importance score of the $k$-th ranked data \\
$s$ & Scoring function in \inosgm \\
$\sigma$ & Noise scale \\
$\varsigma$ & Problem-specific positive term used in Cond.~\ref{eqn:condition-refined} \\
$\varsigma_n$ & Problem-specific positive term used in Eq.~\ref{eqn:phi-1-bound} and \ref{eqn:phi-2-bound} \\
$T$ & Total number of iterations \\
$t$ & Index of iteration \\
$\Tau$ & Total number of sampling iterations to contruct $\widehat{D}$ \\
$\tau$ & Index of sampling iteration to construct $\widehat{D}$ \\
$\bm\Theta$ & Set of possible model parameters \\
$\bm\theta$ & Vector of model parameters \\
$\bm\theta^*$ & Optimal model parameter for ERM \\
$\bm\theta_t$ & Model parameters after iteration $t$ \\
$u$ & Possible value of $x\rs$ \\
$v$ & Possible value of $x\rs$ \\
$\mathbf{w}$ & Vector of weights in \inosgd\ objective \\
$w_k$ & Weight associated with $k$-th ranked datum in \inosgd\ objective \\
$\mathcal{X}_\rs$ & Set of possible values of $x_\rs$ \\
$\mathbf{x}$ & Unlabelled datum \\
$x_{\rs}$ & A sensitive attribute \\
$\xi_n$ & Uniform bound for stochastic gradient from owner $n$ \\
$y$ & Label of data \\
$\hat{y}$ & Model's predicted label of data \\
$\mathbf{Z}_t$ & Gaussian noise added at iteration $t$ \\

\end{longtable}
\egroup
\vspace{1em}

\centerline{\bf Functions and Operators}
\bgroup
\def\arraystretch{1}
\begin{longtable}{p{1.25in}p{5in}}
$\setsymmetricdifference$ & Symmetric difference between two sets \\
$\term$ & An arbitrary function term \\
$\termplus$ & An arbitrary non-negative function term \\
$| \cdot |$ & Cardinality of set $\cdot$ \\
$\| \cdot \|$ & \ltwo-norm of $\cdot$ \\
$[\cdot]$ & Set $\{1, 2, \cdots, \cdot\}$ \\
$\Delta(\cdot)$ & Sensitivity of $\cdot$ \\
$\Delta_\cdot^\circ$ & Modular sensitivity of $\cdot$ with regards to datum $\circ$ \\
$\mathbb{E}[\cdot]$ & Expectation of $\cdot$ \\
$\tgf(\cdot)$ & Tail importance function \\
$f_t(\cdot)$ & Batch importance function at iteration $t$ \\
$o(\cdot)$ & Owner of $\cdot$; group membership function \\
$p_\Alg^D(\cdot)$ & Output distribution returned by algorithm $\Alg$ when taking in dataset $D$ \\
$\Pr[\cdot]$ & Probability of $\cdot$ \\
$\mathrm{sort}(\cdot)$ & Sorted version of $\cdot$ in descending order \\
$\mathrm{Beta}(\cdot, \circ)$ & Beta distribution \\
$\Renyi(\cdot \| \circ)$ & Rényi divergence of order $\alpha$ between $\cdot$ and $\circ$ \\
$\mathfrak{D}_\mathbf{x}(\cdot, \circ)$ & Distance between $\cdot$ and $\circ$ over the domain of $\mathbf{x}$ \\
$\mathfrak{D}_y(\cdot, \circ)$ & Distance between $\cdot$ and $\circ$ over the domain of $y$ \\
$\mathcal{L}(\cdot; \circ)$ & Global loss function with parameter $\cdot$ and dataset $\circ$ \\
$\widehat{\mathcal{L}}(\cdot; \circ)$ & Global loss function in the UPE dataset \\
$\mathcal{L}_{\tgf}(\cdot; \circ)$ & \inosgd\ objective with parameter $\cdot$ and dataset $\circ$ \\
$\mathcal{L}_{\circ}(\cdot)$ & Owner $\circ$'s loss function with parameter $\cdot$ \\
$\ell(\cdot; \circ)$ & Individual loss function with parameter $\cdot$ and datum $\circ$ \\
$\ell_k(\cdot)$ & Individual loss function with parameter $\cdot$ and the $k$-th datum \\
$\max\{\cdot, \circ\}$ & Larger value between $\cdot$ and $\circ$ \\
$\mathcal{N}(\cdot, \circ)$ & Gaussian distribution of mean $\cdot$ and variance $\circ$ \\
$\nabla_\circ \cdot$ & Derivative of $\cdot$ with regards to $\circ$ \\
$\pi(\cdot)$ & Index of datum whose loss is ranked $\cdot$ in the whole dataset \\
$\pi_t(\cdot)$ & Index of datum whose loss is ranked $\cdot$ in the batch at iteration $t$ \\
$\varpi(\cdot)$ & Function that maps a set of data to their order
\end{longtable}
\egroup

\hrule\vspace{0.25cm}

\subsection{Use Cases} \label{app:use-cases}

In this section, we explain why this work is important in real life. \textbf{It is common for different groups or data owners to have different privacy preferences.} Data owners with certain attribute values/class labels 
(e.g., stigmatized diseases, race) could be more disadvantaged than others when data leakage occurs out of privacy attack, as this may bring them discrimination or denial of services \citep{best2023stigma, lai2022examining}. Since DP is highly relevant in such fields to protect their privacy, our work's use cases are broad. For example, a hospital located at the center of a disease outbreak will have much more positive cases and require stronger privacy than a hospital located far away to protect their patients. Similarly, students from marginalized backgrounds may prefer stronger privacy; social media users discussing more sensitive topics may desire stronger privacy on their profiles; individuals with poor credit histories may want stronger privacy to avoid unnecessary loss of opportunities.

\textbf{Should model utility on more private data owners' data be lower? Our answer is no.} If the trained ML model has lower utility on some group of data within the data space, the model owner and subsequent users with data from such group will be negatively affected. For example, consider the case where a medical institute trains a disease model for a stigmatized disease. Just because positive-case data owners have stronger privacy does not mean the trained model should perform badly on data from positive-case groups during deployment, since subsequent positive-case patients who are affected by the model's decisions will bear the consequences. 

This work shows in Sec.~\ref{subsec:idp-induced-utility-imbalance-and-learning-dynamics} that IDP could induce severe utility imbalance between the more private data owners and the less private data owners. Because of the reasons above, \textbf{the goal of this work is to correct IDP-induced utility imbalance}, that is, to improve model utility on the worst-off groups corresponding to the more private data owners and reduce the gap between model utilities on groups corresponding to the more private data owners and the less private data owners. Our solution \inosgd\ successfully achieves this without sacrificing overall model utility. As the first such work, our work makes a significant contribution. Note that improving the utilities of worst-off groups (during deployment) is also an important goal in fairness \citep{dullerudfairness, zhao2022inherent} and distributional robustness \citep{chen2023pareto, qiao2025group}.

\paragraph{\idpsgd\ assumes homogeneous data owners.} A naïve approach to address different privacy requirements is to always maintain the strongest DP level. However, this would inevitably lead to a loss in overall model utility (see App.~\ref{app:qa} Q1 for a detailed explanation). The original \idpsgd\ work \citep{boenisch2024have} assumes that each data owner holds similar data. The different privacy preferences come from their natural personalities, where around $34\%$ of people prefer high privacy, $43\%$ prefer medium privacy and $23\%$ prefer low privacy \citep{alaggan2016heterogeneous}. This setting is applicable to ML tasks where training data are in general not very sensitive such as emoji suggestion. In contrast, our work realizes that the nature of data could lead to different privacy preferences too and identifies that this may lead to a severe utility imbalance problem.

\paragraph{Individualized privacy progression.} We also foresee another potential use case of our work and recognize it as \textit{individualized privacy progression} (IPP). IPP captures the change of privacy preferences over time. Due to the privacy-utility tradeoff \citep{makhdoumi2013privacy}, owners who specify low privacy budgets (resulting in low-utility models) may regret their previous choices and wish to increase their privacy budgets. Conversely, owners who set high privacy budgets may become increasingly aware of or averse to the risks of privacy leakage and desire to lower their privacy budgets. When data owners wish to increase their privacy budgets, they may do so to varying degrees based on the sensitivity of their data. If data owners wish to decrease their privacy budgets (assuming that the model has not been attacked yet), the ML model needs to be rolled back to an earlier timestamp in the training process. This reinforces the importance of maintaining a good learning dynamics throughout the model training process.

\section{Additional Background and Related Works}

\subsection{Differential Privacy} \label{app:differential-privacy}

In this section, we provide a detailed background on \textit{differential privacy} (DP) for unfamiliar readers.

Let $\mathcal{D}$ be the set of datasets. Consider a dataset $D \in \mathcal{D}$ and an algorithm $\Alg: \mathcal{D} \to \bm\Omega$ that takes in the dataset $D$ and produces some useful output $\bm\omega = \mathcal{A}(D) \in \bm\Omega$. In the context of (parametric) machine learning (ML), algorithm $\Alg$ could refer to the training algorithm that outputs the vector of trained model parameters. If algorithm $\Alg$ is deterministic and always outputs the same $\bm\omega$ given fixed input dataset $D$, then attackers can reconstruct the input dataset $D$ by inverting algorithm $\mathcal{A}$ (e.g., through \textit{membership inference attack} \citep{shokri2017membership} in ML), which threatens the privacy of data owners. Therefore, when dataset $D$ contains sensitive data, algorithm $\mathcal{A}$ is usually algorithmically designed to be randomized instead and \textit{differential privacy} \citep{dwork2006differential} (DP) is employed to assess the privacy-preserving capability of $\mathcal{A}$.

\begin{figure}[!ht]
    \centering
    \vspace{1em}
    \includegraphics[width=.5\textwidth]{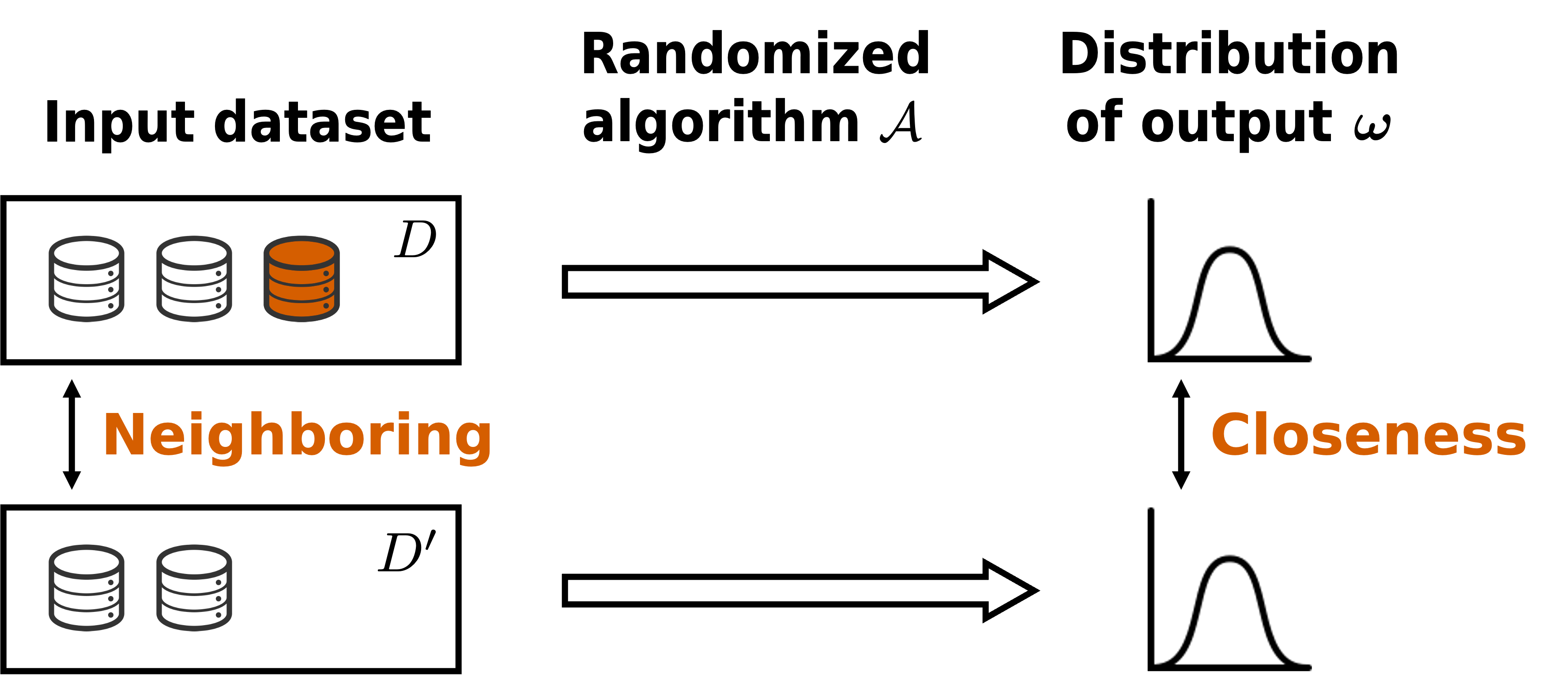}
    \caption{\textbf{Illustration of DP.} When a randomized algorithm $\mathcal{A}$ takes in a pair of neighboring datasets, we say $\mathcal{A}$ satisfies DP if its output distributions are always \textbf{close} to each other. The closeness of distributions is defined differently in different variants of DP.}
    \label{fig:differential-privacy}
\end{figure}

\vspace{1em}
DP works by comparing the distributions of randomized output $\bm\theta$ when algorithm $\Alg$ takes in a pair of \textit{neighboring datasets} $D$ and $D^d$ that differ by only one datum $d$. Intuitively, if attackers cannot distinguish the distributions of $\Alg(D)$ and $\Alg(D^d)$, then they cannot reveal datum $d$ since they are not certain whether $d$ is even in the dataset. If all such pairs of neighboring datasets result in indistinguishable output distributions through $\Alg$, then all data are private. Therefore, the formal definition of (pure) DP follows: 
\begin{definition}[$\epsilon$-DP \citep{dwork2006differential}] \label{def:epsilon-dp}
     A randomized algorithm $\mathcal{A}: \mathcal{D} \to \bm\Omega$ satisfies \textit{$\epsilon$-differential privacy} ($\epsilon$-DP) if for any datum $d$, any pair of neighboring datasets $D, D^d \in \mathcal{D}$ such that $D \setsymmetricdifference D^d = \{d\}$, and every possible output set $\mathbf{O} \subseteq \bm\Omega$, we have \begin{equation} \label{eqn:epsilon-dp}
        \Pr[\Alg(D) \in \mathbf{O}] \leq e^{\epsilon} \cdot \Pr[\Alg(D^d) \in \mathbf{O}],
    \end{equation}
    where $\epsilon \geq 0$ is called the \textit{privacy budget} of data owners or \textit{privacy level} of algorithm $\Alg$, which measures how private the algorithm $\Alg$ is.
\end{definition}

When privacy budget $\epsilon = 0$, the algorithm $\Alg$ is absolutely private. In other words, all pairs of neighboring datasets behave exactly the same through $\Alg$. Iteratively, they all behave the same as the empty set $\emptyset$. Thus, algorithm $\Alg$ is useless in this case. An example of such an algorithm would be $\forall D \in \mathcal{D} \ [\Alg(D) = \mathcal{N}(0, 1)]$, which outputs a random Gaussian noise regardless of the input dataset. When privacy budget $\epsilon$ becomes larger, the distributions of $\Alg(D)$ and $\Alg(D^d)$ become less close and more distinguishable. The algorithm $\Alg$ becomes less private. On the other hand, this also means that the algorithm output $\bm\theta$ is impacted by the data to a larger extent, and hence the output should be more useful or informative. This is commonly known as the \textit{privacy-utility tradeoff} \citep{makhdoumi2013privacy}.

Nevertheless, $\epsilon$-DP is often considered to be too strict in practice. This is because its definition requires a bounded logarithmic difference $\smash{\log \frac{\Pr[\mathcal{A}(D^d) \in \mathbf{O}]}{\Pr[\mathcal{A}(D) \in \mathbf{O}]}}$ for any $\mathbf{O} \subseteq \bm\Omega$, although for some particular output $\bm\omega \in \bm\Omega$, it is almost impossible for algorithm $\Alg$ to output such $\bm\omega$ (i.e., $\Pr[\Alg(D) = \bm\omega] \approx 0$ and $\Pr[\mathcal{A}(D^d) = \bm\omega] \approx 0$). To make sure the bound $\epsilon$ holds for such edge cases, a great amount of extra perturbation and randomization needs to be added to algorithm $\mathcal{A}$, which damages its utility. This phenomenon is more severe in complex mechanisms such as model training in ML. To mitigate this issue, two common relaxations of $\epsilon$-DP are introduced below.

\subsubsection[Epsilon Delta-Differential Privacy]{$(\epsilon, \delta)$-Differential Privacy}

The most straightforward relaxation of $\epsilon$-DP is to add a tolerance term $\delta$ to the right-hand side of Eq.~\ref{eqn:epsilon-dp}. Such relaxation is called \textit{$(\epsilon,\delta)$-DP}, which aims to solve the edge case issue mentioned in the previous section by allowing a very small probability $\delta \in (0, 1)$ that the logarithmic difference bound $\epsilon$ is violated, i.e.,
\begin{definition}[$(\epsilon, \delta)$-DP \citep{dwork2006our}] 
    \label{def:epsilon-delta-dp}
    A randomized algorithm $\Alg: \mathcal{D} \to \bm\Omega$ satisfies \textit{$(\epsilon, \delta)$-differential privacy} ($(\epsilon, \delta)$-DP) if for any datum $d$, any pair of neighboring datasets $D, D^d \in \mathcal{D}$ such that $D \setsymmetricdifference D^d = \{d\}$, and every possible output set $\mathbf{O} \subseteq \bm\Omega$, we have \begin{equation*}
        \Pr[\Alg(D) \in \mathbf{O}] \leq e^{\epsilon} \cdot \Pr[\Alg(D^d) \in \mathbf{O}] + \delta,
    \end{equation*}
    where $\epsilon \geq 0$ is the \textit{privacy budget} set by data owners or \textit{privacy level} of algorithm $\Alg$ (a more private data owner sets a smaller $\epsilon$ for stronger privacy), and $\delta \in [0, 1)$ is a tolerance term which allows for a very small probability that the privacy level $\epsilon$ is violated.
\end{definition}

While $(\epsilon, \delta)$-DP indeed gives better utility in most applications, its major drawback is that it is hard to compute the resulting $(\epsilon, \delta)$-DP level when two or more $(\epsilon, \delta)$-DP mechanisms are sequentially composed together\footnote{In comparison, for pure $\epsilon$-DP, we can simply sum up the exponents for composed DP mechanisms.} (i.e, $\mathcal{A}_2(\mathcal{A}_1(D))$). In fact, because of the existence of tolerance $\delta$, it is now intractable to obtain an exact $(\epsilon, \delta)$ bound for sequentially composed mechanisms, since both multiplication and addition operations appear in the definition. Therefore, for complex DP mechanisms such as ML model training, it is challenging to do \textit{privacy accounting}, that is, to maintain a tight and accurate bound on the privacy consumption as training progresses. Existing techniques such as \textit{moments accountant} \citep{abadi2016deep} provide relatively good accounting and maintain a relatively good model utility.

\subsubsection{Rényi Differential Privacy} \label{app:renyi-differential-privacy}

To make privacy accounting easier and more accurate, alternative variants of DP are proposed.
One of such variants is based on the \textit{Rényi divergence of order $\alpha$ \citep{van2014renyi}}, $\Renyi$, between the output distributions $\smash{p_\Alg^D}$ and $\smash{p_\Alg^{D^d}}$ corresponding to neighboring datasets $D$ and $D^d$. Specifically, 
\begin{definition}[Rényi Divergence \citep{van2014renyi}] 
    The \textit{Rényi divergence of order $\alpha$} between distributions $p_\Alg^D$ and $p_\Alg^{D^d}$, $\Renyi(p_\Alg^D \| p_\Alg^{D^d})$, is defined as follows:
    \begin{equation}
        \Renyi(p_\Alg^D \| p_\Alg^{D^d}) \coloneq \frac{1}{\alpha - 1} \ln \int_{\bm\Omega} \Pr[\Alg(D^d) = \bm\omega] \cdot \left(\frac{\Pr[\Alg(D) = \bm\omega]}{\Pr[\Alg(D^d) = \bm\omega]}\right)^\alpha \intd \bm\omega.
    \end{equation}
\end{definition}
In particular, as $\alpha \to 1$, the above Rényi divergence approaches the Kullback–Leibler (KL) divergence. A smaller Rényi divergence between distributions $\smash{p_\Alg^D}$ and $\smash{p_\Alg^{D^d}}$ means that the two distributions are closer, thus indicating stronger privacy. Formally,
\begin{definition}[$(\alpha, \bar\epsilon)$-RDP \citep{mironov2017renyi}]
A randomized algorithm $\Alg: \mathcal{D} \to \bm\Omega$ satisfies \textit{$(\alpha, \bar\epsilon)$-Rényi differential privacy} ($(\alpha, \bar\epsilon)$-RDP) if for every pair of neighboring datasets $D, D^d \in \mathcal{D}$, we have \begin{equation*}
        \Renyi\left(p_\Alg^D\ \middle\|\  p_\Alg^{D^d}\right) \leq \bar\epsilon,
    \end{equation*}
    where $\bar\epsilon \geq 0$ is the \textit{Rényi privacy level} of algorithm $\Alg$.
\end{definition}

The RDP bound of sequentially composed RDP mechanisms is simple and tight:
\begin{theorem}[Sequential Composition of RDP \citep{mironov2017renyi}] \label{thm:rdp-composition}
 Suppose mechanism $\mathcal{A}_1$ satisfies $(\alpha, \bar{\epsilon}_1)$-RDP and $\mathcal{A}_2$ satisfies $(\alpha, \bar{\epsilon}_2)$-RDP, then their sequential composition $(\mathcal{A}_1(D), \mathcal{A}_2(D))$ that releases both outputs satisfies $(\alpha, \bar{\epsilon}_1 + \bar{\epsilon}_2)$-RDP.
\end{theorem}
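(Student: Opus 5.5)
The plan is to work directly with the integral in the definition of the Rényi divergence and use the fact that the joint output density factorizes. I will prove the statement in the slightly stronger adaptive form, which is the form used later for \inosgd. In that form $\mathcal{A}_2$ may take $\bm\omega_1 = \mathcal{A}_1(D)$ as auxiliary input and satisfies $(\alpha, \bar\epsilon_2)$-RDP for every fixed value of $\bm\omega_1$. The non-adaptive statement is the special case where $\mathcal{A}_2$ ignores $\bm\omega_1$.

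Fix neighboring $D, D^d$ and assume $\alpha > 1$; the case $\alpha \to 1$ (KL) then follows by taking limits, or by the same argument with logarithms. Write
\[
p_1(\bm\omega_1) = \Pr[\mathcal{A}_1(D) = \bm\omega_1], \qquad q_1(\bm\omega_1) = \Pr[\mathcal{A}_1(D^d) = \bm\omega_1],
\]
and let $p_2(\bm\omega_2 \mid \bm\omega_1)$, $q_2(\bm\omega_2 \mid \bm\omega_1)$ be the conditional densities of $\mathcal{A}_2$ on $D$ and $D^d$ given $\bm\omega_1$. The joint densities are then $p_1 p_2$ and $q_1 q_2$. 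The quantity to bound is
\[
\exp\bigl((\alpha-1)\,\Renyi(p_1p_2 \,\|\, q_1q_2)\bigr) = \int\!\!\int q_1 q_2 \Bigl(\frac{p_1p_2}{q_1q_2}\Bigr)^{\alpha} \intd\bm\omega_2 \intd\bm\omega_1 .
\]

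The argument has three steps.
\begin{enumerate}
\item Since every term in the integrand is nonnegative, apply Tonelli to perform the inner integral over $\bm\omega_2$ first. This pulls out the factor $q_1(\bm\omega_1)\,(p_1/q_1)^\alpha$, leaving
\[
\int q_2 \Bigl(\frac{p_2}{q_2}\Bigr)^{\alpha} \intd\bm\omega_2 = \exp\bigl((\alpha-1)\,\Renyi(p_2(\cdot\mid\bm\omega_1) \,\|\, q_2(\cdot\mid\bm\omega_1))\bigr).
\]
\item Bound this inner factor by $e^{(\alpha-1)\bar\epsilon_2}$ uniformly in $\bm\omega_1$, using the RDP guarantee of $\mathcal{A}_2$. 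Because $\alpha - 1 > 0$, the map $x \mapsto e^{(\alpha-1)x}$ is increasing, so the direction of the inequality is preserved.
\item The remaining outer integral equals $e^{(\alpha-1)\Renyi(p_1\|q_1)} \le e^{(\alpha-1)\bar\epsilon_1}$. Taking $\frac{1}{\alpha-1}\ln$ of the product of the two bounds gives $\bar\epsilon_1 + \bar\epsilon_2$.
\end{enumerate}

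I expect the main obstacle to be measure-theoretic rather than computational. The densities must exist; otherwise one works with Radon–Nikodym derivatives with respect to a common dominating measure, such as the sum of the two output distributions. Zero-density sets must be handled, with $0 \cdot (\cdot)^\alpha$ conventions and absolute continuity failures giving divergence $+\infty$, in which case the bound is vacuous or trivially true. Finally, the bound on the inner factor must hold for every $\bm\omega_1$ rather than merely on average. The last point is exactly where uniformity of $\mathcal{A}_2$'s RDP guarantee over its auxiliary input is needed, and it is what makes the adaptive version, and hence the Remark on using the previous iterate's loss ordering, go through.
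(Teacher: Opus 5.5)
Your argument is correct and is essentially the standard proof: the paper gives no proof of its own and only cites \citet{mironov2017renyi}, whose composition result is proved for exactly this adaptive form. It does so the same way you do, by factoring the joint density, integrating out $\bm\omega_2$ first, bounding the conditional Rényi moment by $e^{(\alpha-1)\bar\epsilon_2}$ uniformly in $\bm\omega_1$, and then bounding the remaining integral by $e^{(\alpha-1)\bar\epsilon_1}$. One small caveat: the $\alpha = 1$ case does not follow ``by taking limits'', because a KL-only hypothesis gives no control at any $\alpha > 1$; your alternative route, the KL chain rule, is the correct one there.
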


Moreover, RDP algorithms satisfy $(\epsilon, \delta)$-DP via the following theorem:
\begin{theorem}[Relation to $(\epsilon, \delta)$-DP \citep{balle2020hypothesis}] \label{thm:conversion-between-dps}
    A randomized algorithm $\mathcal{A}$ that satisfies $(\alpha, \bar\epsilon)$-Rényi DP also satisfies $\left(\bar\epsilon + \log((\alpha - 1)/{\alpha}) - (\log \delta + \log \alpha)/(\alpha - 1), \delta\right)$-DP for any $\delta \in (0, 1)$.
\end{theorem}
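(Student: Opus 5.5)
The plan is the hypothesis-testing / privacy-loss argument of \citet{balle2020hypothesis}. I will show that the $\delta$ implied by the Rényi moment bound is at most the target $\delta$ once $\epsilon$ is chosen as stated.

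Fix neighboring $D, D^d$ and write $P = p_\Alg^D$, $Q = p_\Alg^{D^d}$, with likelihood ratio $R(\bm\omega) = \Pr[\Alg(D)=\bm\omega]/\Pr[\Alg(D^d)=\bm\omega]$. If $P$ has mass where $Q$ vanishes, the Rényi divergence is infinite for $\alpha>1$, so I may assume $P \ll Q$.

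\textbf{Step 1 (reduction to a hockey-stick bound).} For every output set $\mathbf{O}$ and every $\epsilon \ge 0$,
\[
\Pr[\Alg(D)\in\mathbf{O}] - e^{\epsilon}\Pr[\Alg(D^d)\in\mathbf{O}] = \int_{\mathbf{O}} (R - e^\epsilon)\, \mathrm{d}Q \le \E_Q\bigl[(R - e^\epsilon)_+\bigr].
\]
So it suffices to show $\E_Q[(R-e^\epsilon)_+] \le \delta$ for the stated $\epsilon$.

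\textbf{Step 2 (pointwise domination by a power of $R$).} For $\alpha>1$, bound $(r - e^\epsilon)_+ \le c_\alpha\, r^\alpha$ for all $r \ge 0$, where
\[
c_\alpha = \sup_{r > e^\epsilon} \frac{r - e^\epsilon}{r^\alpha}.
\]
Setting the derivative to zero gives the maximizer $r^\star = \alpha e^\epsilon/(\alpha-1)$. Substituting, $c_\alpha = \frac{(\alpha-1)^{\alpha-1}}{\alpha^\alpha} e^{-(\alpha-1)\epsilon}$. This optimization is the one genuinely non-routine step. Its purpose is to avoid the cruder Markov bound on $\Pr_P[\log R > \epsilon]$, which gives only $\epsilon = \bar\epsilon + \log(1/\delta)/(\alpha-1)$. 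The sharper constant is exactly the source of the $\log((\alpha-1)/\alpha) - \log\alpha/(\alpha-1)$ correction.

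\textbf{Step 3 (use the RDP hypothesis).} By the definition of Rényi divergence, $\E_Q[R^\alpha] = \exp((\alpha-1)\Renyi(P\|Q)) \le \exp((\alpha-1)\bar\epsilon)$. Combining with Step 2,
\[
\E_Q\bigl[(R-e^\epsilon)_+\bigr] \le \frac{(\alpha-1)^{\alpha-1}}{\alpha^\alpha}\exp\bigl((\alpha-1)(\bar\epsilon - \epsilon)\bigr).
\]

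\textbf{Step 4 (solve for $\epsilon$).} Set the right-hand side equal to $\delta$, take logarithms and divide by $\alpha-1$. This gives
\[
\epsilon = \bar\epsilon + \log(\alpha-1) - \frac{\alpha}{\alpha-1}\log\alpha - \frac{\log\delta}{\alpha-1},
\]
which rearranges to $\bar\epsilon + \log((\alpha-1)/\alpha) - (\log\delta + \log\alpha)/(\alpha-1)$, as claimed.

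Two remarks complete the argument. If this $\epsilon$ came out negative, the bound still holds trivially after replacing it by $\max\{\epsilon, 0\}$. Since $D, D^d$ and $\mathbf{O}$ were arbitrary, Step 1 then gives $(\epsilon,\delta)$-DP.
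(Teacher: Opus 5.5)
Your proof is correct. The supremum in Step 2 is computed correctly, and the maximizer $r^\star=\alpha e^{\epsilon}/(\alpha-1)$ does lie above $e^{\epsilon}$. The identity $\E_Q[R^\alpha]=e^{(\alpha-1)\mathfrak{D}_\alpha(P\,\|\,Q)}$ is just the definition. Solving $\frac{(\alpha-1)^{\alpha-1}}{\alpha^{\alpha}}e^{(\alpha-1)(\bar\epsilon-\epsilon)}=\delta$ gives exactly the stated $\epsilon$. Your handling of $P\not\ll Q$ (infinite divergence for $\alpha>1$) is sound, and so is your handling of a possibly negative $\epsilon$, since nothing in Steps 1--4 uses $\epsilon\ge 0$.

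The paper does not prove this statement. It imports the result as background from Balle et al., so there is no in-paper argument to compare against. The cited source reaches the result through its hypothesis-testing view, which, as I understand it, works with a single event; treat the following description as a reconstruction rather than a quotation. Applying data processing to $\bm\omega\mapsto\mathbf{1}[\bm\omega\in\mathbf{O}]$ (or Hölder directly) gives $P(\mathbf{O})^{\alpha}Q(\mathbf{O})^{1-\alpha}\le e^{(\alpha-1)\bar\epsilon}$. Maximizing $P(\mathbf{O})-e^{\epsilon}Q(\mathbf{O})$ under this constraint reduces to maximizing $p-e^{\epsilon-\bar\epsilon}p^{\alpha/(\alpha-1)}$ over $p$. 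Its optimum is the same constant $\frac{(\alpha-1)^{\alpha-1}}{\alpha^{\alpha}}e^{(\alpha-1)(\bar\epsilon-\epsilon)}$.

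Your route differs in that it bounds the hockey-stick quantity $\E_Q[(R-e^{\epsilon})_+]$ directly by pointwise domination of the integrand, with no event-level reduction. The event-level route instead makes the significance/power interpretation explicit. Both yield the same $\delta$, so neither is sharper here.
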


For data owners, it is clearly easier for them to specify their privacy preferences using the $(\epsilon, \delta)$-DP notion because both $\epsilon$ and $\delta$ have a straightforward meaning. Specifically, $\epsilon$ is usually set to be larger than $0.1$ and lower than $10$, while $\delta$ is set to a small number (e.g., $10^{-5}$). Meanwhile, they may find it challenging to determine the appropriate value for Rényi order $\alpha$ and corresponding Rényi privacy budget $\bar\epsilon$.
In constrast, model owners need to trace the privacy consumption when they train their ML models and thus $(\alpha, \bar\epsilon)$-RDP is more suitable for them. Since Thm.~\ref{thm:conversion-between-dps} allows for the conversion between both notions, it becomes a common choice \textbf{for data owners to use $(\epsilon, \delta)$-DP to specify privacy preferences} and \textbf{for model owners to use $(\alpha, \bar\epsilon)$-RDP to perform privacy accounting} (known as \textit{RDP accountant}).

\subsubsection{Sampled Gaussian Mechanism} \label{app:sgm}

The \textit{sampled Gaussian mechanism} (SGM) is one of the most common DP mechanisms which converts any non-private algorithm $\Alg$ to one that satisfies DP. It has two components, the \textit{noise addition mechanism} and the \textit{sampling mechanism}.

\paragraph{Noise addition mechanism.} The noise addition mechanism is the most common technique to achieve DP. It works by perturbing the non-private algorithm output $\bm\omega$ with certain noise of magnitude comparable to the difference in algorithm outputs given by neighboring datasets, such that the perturbed outputs become indistinguishable to attackers.

A quantity called \textit{sensitivity} is used to represent the maximum difference in algorithm outputs given by neighboring datasets, which is defined formally below (we focus on the \ltwo\ case which is the most practical):
\begin{definition}[Sensitivity] \label{def:l2-sensitivity}
     The \textit{(\ltwo-)sensitivity} of an (non-private) algorithm $\Alg$ is defined as the \ltwo-norm of maximum change in mechanism outputs when it takes in a pair of neighboring datasets, that is,
    \begin{equation*}
        \Delta_\Alg \coloneq \sup_{\substack{d \in D \\ D, D^d \in \mathcal{D} \\ D \triangle D^d = \{d\}}} \|\Alg(D) - \Alg(D^d)\|,
    \end{equation*}
    where $D, D^d$ are neighboring datasets.
\end{definition}

\textit{Clipping} \citep{abadi2016deep} is commonly used to control the \ltwo-sensitivity of an algorithm $\mathcal{A}$. For a vector $\bm\omega$ and a predetermined \textit{clipping threshold} $C > 0$, the clipped vector is \begin{equation*}
    \bar{\bm\omega} \coloneq \frac{\bm\omega}{\max\{1, \|\bm\omega\| / C\}},
\end{equation*}
where we scale down $\bm\omega$ if its \ltwo-norm exceeds $C$ and do nothing otherwise. For example, given $C = 5$, the vector $\binom{6}{8}$ will be clipped to $\binom{3}{4}$. Another vector $\binom{2}{2}$ will not be clipped since its norm is already less than $C = 5$. By returning the clipped output for mechanism $\Alg$, we can limit its sensitivity to at most $C$ when we add a datum to or remove a datum from the input dataset $D$.

After computing or controlling the sensitivity of algorithm $\mathcal{A}$, model owners can add noise of corresponding scale to achieve different level of privacy. The most practical way is to add Gaussian noise and the privacy guarantee is given below:
\begin{theorem}[Gaussian Mechanism \citep{dwork2006our}] \label{thm:gaussian-mechanism}
     For any algorithm $\mathcal{A}$ with output dimensionality $r$, privacy budget $\epsilon$ and tolerance $\delta \in (0, 1)$, adding a Gaussian noise of $\mathbf{0}_r$ ($r$-length vector of $0$'s) mean and $\sigma^2 \mathbf{I}_{r \times r}$ ($r \times r$ identity matrix) variance to the output achieves $(\epsilon, \delta)$-DP, where \begin{equation*}
        \sigma = \frac{\Delta_\mathcal{A}\sqrt{2\log(1.25/\delta)}}{\epsilon}.
    \end{equation*}
\end{theorem}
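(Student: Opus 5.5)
The plan is the standard privacy-loss-random-variable argument. We work under the usual implicit restriction $\epsilon \in (0,1)$ of \citet{dwork2006our}, since the stated calibration of $\sigma$ is not sufficient for large $\epsilon$.

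\textbf{Reduction to one dimension.} Fix neighbouring datasets $D, D^d$ and write $\bm\mu = \Alg(D)$, $\bm\mu' = \Alg(D^d)$, $\mathbf{v} = \bm\mu - \bm\mu'$. By Definition~\ref{def:l2-sensitivity}, $\|\mathbf{v}\| \le \Delta_\Alg$. The two output densities are those of $\mathcal{N}(\bm\mu, \sigma^2 \mathbf{I}_{r\times r})$ and $\mathcal{N}(\bm\mu', \sigma^2 \mathbf{I}_{r\times r})$.

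\textbf{Privacy loss.} Define the privacy loss at output $\bm\omega$ as
\[
L(\bm\omega) = \ln \frac{p_\Alg^D(\bm\omega)}{p_\Alg^{D^d}(\bm\omega)} .
\]
Substituting $\bm\omega = \bm\mu + \mathbf{Z}$ with $\mathbf{Z} \sim \mathcal{N}(\mathbf{0}_r, \sigma^2 \mathbf{I}_{r\times r})$ and expanding the squared norms gives
\[
L = \frac{\|\mathbf{Z} + \mathbf{v}\|^2 - \|\mathbf{Z}\|^2}{2\sigma^2} = \frac{2\langle \mathbf{Z}, \mathbf{v}\rangle + \|\mathbf{v}\|^2}{2\sigma^2}.
\]
By rotational invariance of the isotropic Gaussian, $\langle \mathbf{Z}, \mathbf{v}\rangle \sim \mathcal{N}(0, \sigma^2\|\mathbf{v}\|^2)$. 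Hence $L$ is one-dimensional Gaussian with mean $\|\mathbf{v}\|^2/(2\sigma^2)$ and standard deviation $\|\mathbf{v}\|/\sigma$. Both the mean and the upper tail are monotone in $\|\mathbf{v}\|$, so the worst case is $\|\mathbf{v}\| = \Delta_\Alg$.

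\textbf{From a loss tail bound to $(\epsilon,\delta)$-DP.} The goal is $\Pr[L > \epsilon] \le \delta$. Given this, let $\mathbf{S} = \{\bm\omega : L(\bm\omega) \le \epsilon\}$ be the good set. For any output set $\mathbf{O}$,
\[
\Pr[\Alg(D) \in \mathbf{O}] \le \Pr[\Alg(D) \in \mathbf{O} \cap \mathbf{S}] + \Pr[\Alg(D) \notin \mathbf{S}] \le e^{\epsilon} \Pr[\Alg(D^d) \in \mathbf{O}] + \delta .
\]
The first term is bounded by integrating the pointwise density ratio bound $e^{\epsilon}$ over $\mathbf{O} \cap \mathbf{S}$. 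This is exactly Definition~\ref{def:epsilon-delta-dp}.

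\textbf{Tail bound (main obstacle).} Standardise: $L > \epsilon$ is equivalent to $N > t$, where $N \sim \mathcal{N}(0,1)$ and $t = \epsilon\sigma/\Delta_\Alg - \Delta_\Alg/(2\sigma)$. Set $\sigma = c\Delta_\Alg/\epsilon$; then $t = c - \epsilon/(2c)$. Apply the Mills-ratio bound
\[
\Pr[N > t] \le \frac{e^{-t^2/2}}{t\sqrt{2\pi}} .
\]
It then suffices to show
\[
\ln t + \frac{t^2}{2} \ge \ln\!\Big(\frac{1}{\delta}\sqrt{\frac{2}{\pi}}\Big) + \ln\frac{1}{\sqrt 2}
\]
whenever $c^2 \ge 2\ln(1.25/\delta)$.

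This algebra is the delicate step. It uses $\epsilon \le 1$ to control the cross term, since $t^2 \ge c^2 - \epsilon$. It also uses that $c \ge \sqrt{2\ln 1.25}$ makes $t$ bounded away from $0$, so the $\ln t$ term is harmless. The constant $1.25$ is what absorbs these slack terms. I would follow the case analysis in Dwork and Roth's appendix: first verify that the bound is monotone in $c$, and then check it at the threshold $c^2 = 2\ln(1.25/\delta)$.
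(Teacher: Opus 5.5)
The paper gives no proof of this statement; it quotes it from \citet{dwork2006our} as background. The benchmark is therefore the standard argument, and your outline follows it. The privacy-loss computation, the reduction to a one-dimensional Gaussian with worst case $\|\mathbf{v}\|=\Delta_{\mathcal{A}}$, and the good-set step are all correct. Adding $\epsilon<1$ is also right and necessary, since the paper's statement omits it. With $\sigma=c\Delta_{\mathcal{A}}/\epsilon$ and $\delta$ fixed, the smallest $\delta'$ for which the mechanism is $(\epsilon,\delta')$-DP is $\Phi(\epsilon/(2c)-c)-e^{\epsilon}\Phi(-\epsilon/(2c)-c)$, with $\Phi$ the standard normal c.d.f. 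This tends to $1$ as $\epsilon\to\infty$.

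The gap is in the tail-bound step, on part of the range $\delta\in(0,1)$ that the statement allows.
\begin{itemize}
\item Your claim that $c\ge\sqrt{2\ln 1.25}$ keeps $t=c-\epsilon/(2c)$ away from $0$ is false. We have $t\le 0$ whenever $c^2\le\epsilon/2$. At the threshold this happens for $\delta\ge 1.25e^{-\epsilon/4}$, about $0.97$ as $\epsilon\to 1$. There the Mills-ratio bound and $\ln t$ are not even defined.
\item Even where $t>0$, the planned check at the threshold fails for moderately large $\delta$. Your reduction via $t^2\ge c^2-\epsilon$ needs $\ln t\ge \epsilon/2-\ln 1.25-\tfrac12\ln\pi\approx -0.30$ as $\epsilon\to1$, i.e.\ $t\gtrsim 0.74$. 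This holds only for $\delta\lesssim 0.63$.
\item Concretely, $\delta=0.8$ and $\epsilon\to 1$ give $t\approx 0.416$ and $e^{-t^2/2}/(t\sqrt{2\pi})\approx 0.88>\delta$. So even the plain Mills bound does not certify $\Pr[N>t]\le\delta$ there.
\item Following Dwork--Roth does not rescue this. Their case analysis makes $\ln t\ge 0$ by assuming $c\ge 3/2$, which corresponds to $\delta\lesssim 0.4$.
\end{itemize}

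The conclusion is still true in that regime, but it needs a separate case rather than the Mills ratio.
\begin{itemize}
\item For $\delta\ge\tfrac12$: if $t\ge 0$ then $\Pr[N>t]\le\tfrac12\le\delta$. If $t<0$, this forces $\delta>1.25e^{-1/4}>0.97$, while $t>\sqrt{2\ln 1.25}-1/(2\sqrt{2\ln 1.25})>-0.09$. Hence $\Pr[N>t]<0.54<\delta$.
\item For $\delta<\tfrac12$: $c\ge\sqrt{2\ln 2.5}>1.35$, hence $t>0.97$, and your chain closes with room to spare.
\end{itemize}
A minor point: your right-hand side $\ln(\tfrac1\delta\sqrt{2/\pi})+\ln\tfrac{1}{\sqrt2}$ equals $\ln\tfrac{1}{\sqrt{\pi}\,\delta}$. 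The Mills bound only needs $\ln\tfrac{1}{\sqrt{2\pi}\,\delta}$, since passing from Dwork--Roth's two-sided $\delta/2$ to a one-sided $\delta$ gains a factor $2$, not $\sqrt2$. This is harmless because your condition is still sufficient, but it makes the threshold check fail earlier than necessary.
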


\paragraph{Sampling mechanism.}

Privacy can be amplified by \textit{sampling} \citep{balle2020privacy}. Suppose instead that we sample a subset $B \subseteq D$ by independently sampling each datum in $D$ with probability $q$ (i.e., \textit{Poisson sampling}). We call $q$ the \textit{sampling rate}. Privacy is amplified through Poisson sampling because for any neighboring datasets, the datum by which they differ has probability $1 - q$ of not being sampled, in which case the mechanism outputs on neighboring datasets will be exactly the same.

Combining the Gaussian mechanism and sampling mechanism together, we have the \emph{sampled Gaussian mechanism} (SGM), which refers to the process where we first sample a subset $B \subseteq D$ using Poisson sampling, pass it into algorithm $\Alg$, clip the output to clipping threshold $C$ and perturb the output with Gaussian noise. SGM achieves the following privacy guarantee:

\begin{theorem}[Privacy of SGM \citep{mironov2019renyi}]
    \label{thm:sgm-privacy}
    An SGM with sampling rate $q$, sensitivity $\Delta_\Alg$ and noise scale $\sigma$ satisfies $(\alpha, \bar\epsilon)$-RDP, where \begin{equation*}
        \bar\epsilon = \frac{2\alpha q^2 \Delta_\Alg^2}{\sigma^2}.
    \end{equation*}
\end{theorem}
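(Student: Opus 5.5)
The plan is to follow the argument of \citet{mironov2019renyi}: reduce the SGM to a one-dimensional comparison between a Gaussian mixture and a Gaussian, then bound the Rényi moment directly. The bound $\bar\epsilon = 2\alpha q^2\Delta_\Alg^2/\sigma^2$ cannot hold for all parameters, since as $\alpha \to \infty$ the true Rényi divergence grows faster than linearly. I will therefore prove it under the standard regime of \citet{mironov2019renyi}: $q$ small (e.g.\ $q \le 1/5$), $\sigma \ge 4\Delta_\Alg$, and $\alpha$ bounded by a quantity of order $\sigma^2\log(1/(q\alpha(1+\sigma^2)))/\Delta_\Alg^2$. I will make this implicit regime explicit at the start.

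\textbf{Step 1 (reduction).} Fix neighbours $D, D^d$ and condition on the sampling of all data other than $d$. Given that sampling, the output under $D^d$ is $\mathcal{N}(\mathbf{v}, \sigma^2\mathbf{I})$. Under $D$ it is the mixture $(1-q)\mathcal{N}(\mathbf{v},\sigma^2\mathbf{I}) + q\,\mathcal{N}(\mathbf{v}+\mathbf{u},\sigma^2\mathbf{I})$ with $\|\mathbf{u}\|\le\Delta_\Alg$.

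By joint quasi-convexity of $\Renyi$ over the shared sampling randomness, it suffices to bound the divergence for fixed $\mathbf{v}$. Translation and rotation invariance reduce this to one dimension: $\mu_0 = \mathcal{N}(0,\sigma^2)$ and $\mu = (1-q)\mu_0 + q\mathcal{N}(\Delta,\sigma^2)$. Increasing $\|\mathbf{u}\|$ to $\Delta_\Alg$ is monotone, so we may take $\Delta = \Delta_\Alg$. Both directions $\Renyi(\mu\|\mu_0)$ and $\Renyi(\mu_0\|\mu)$ must then be bounded. As in \citet{mironov2019renyi}, the second is dominated by the first in this regime, so I focus on $\Renyi(\mu\|\mu_0)$.

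\textbf{Step 2 (moment expansion).} Write $A_\alpha = \E_{z\sim\mu_0}\big[((1-q) + q\,e^{(2z\Delta-\Delta^2)/(2\sigma^2)})^\alpha\big]$. For integer $\alpha$, expand binomially and use the Gaussian moment identity $\E_{\mu_0}[e^{k(2z\Delta-\Delta^2)/(2\sigma^2)}] = e^{(k^2-k)\Delta^2/(2\sigma^2)}$. This gives
\begin{equation*}
A_\alpha = \sum_{k=0}^{\alpha}\binom{\alpha}{k}(1-q)^{\alpha-k}q^k e^{(k^2-k)\Delta^2/(2\sigma^2)}.
\end{equation*}
Handling each term:
\begin{itemize}
\item The $k=0,1$ terms together equal $(1-q)^\alpha + \alpha q(1-q)^{\alpha-1}$.
\item The $k=2$ term is $\binom{\alpha}{2}q^2(1-q)^{\alpha-2}e^{\Delta^2/\sigma^2}$.
\item Combining the $k \le 2$ terms and using $e^{\Delta^2/\sigma^2}-1 \le 2\Delta^2/\sigma^2$ (valid since $\sigma\ge4\Delta$) gives a bound of the form $1 + \alpha(\alpha-1)q^2\Delta^2/\sigma^2$.
\end{itemize}
Taking $\frac{1}{\alpha-1}\ln$ and using $\ln(1+x)\le x$ then yields the leading contribution $\alpha q^2\Delta^2/\sigma^2$. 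This leaves a factor of $2$ of slack for everything else.

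\textbf{Step 3 (tail terms, the main obstacle).} The terms $k\ge3$ carry $q^k e^{(k^2-k)\Delta^2/(2\sigma^2)}$, and the exponential growth in $k^2$ must be beaten by $q^k\binom{\alpha}{k}$. My first attempt is to bound the ratio of consecutive terms by $\alpha q e^{k\Delta^2/\sigma^2}$. The constraints $q\le1/5$, $\sigma\ge 4\Delta$ and the cap on $\alpha$ force this ratio to be at most a constant below $1$. The tail is then geometrically dominated by the $k=3$ term, which is $O(\alpha^3 q^3\Delta^2/\sigma^2)$, i.e.\ $O(q\alpha)$ times the $k=2$ contribution, and so fits inside the slack from Step 2.

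For non-integer $\alpha$, I would either interpolate using the log-convexity of $\alpha\mapsto(\alpha-1)\Renyi$, or follow Mironov's split of the integral at $z = \sigma^2/2$ and bound each piece separately. Finally, compose over the $T$ iterations with Thm.~\ref{thm:rdp-composition} when this result is applied to \idpsgd\ or \inosgd.
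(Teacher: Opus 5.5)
The paper gives no proof of this statement. It quotes the bound from \citet{mironov2019renyi} and drops that paper's side conditions. Your opening observation is therefore correct and necessary: for $q<1/2$ the bound fails as $\alpha\to\infty$.

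One correction there: $\Renyi(\mu\|\mu_0)$ grows linearly in $\alpha$, not superlinearly. It lies between $\alpha\Delta_\Alg^2/(2\sigma^2)+\frac{\alpha}{\alpha-1}\ln q$ and $\alpha\Delta_\Alg^2/(2\sigma^2)$. The bound fails because this slope exceeds $2q^2\Delta_\Alg^2/\sigma^2$.

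Steps 1 and 2 are sound: the reduction, the exact binomial formula for integer $\alpha$, and the bound $1+\alpha(\alpha-1)q^2\Delta_\Alg^2/\sigma^2$ on the $k\le 2$ terms.

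\textbf{The gap is in Step 3.}

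\emph{The cancellation is lost.} To get the $k\le2$ bound you replaced $\sum_{k\le2}\binom{\alpha}{k}(1-q)^{\alpha-k}q^k$ by $1$. This discards $-\Pr[K\ge 3]$, where $K\sim\mathrm{Bin}(\alpha,q)$. The summands you keep for $k\ge3$ are therefore raw. For example, $T_3=\binom{\alpha}{3}(1-q)^{\alpha-3}q^3e^{3\Delta_\Alg^2/\sigma^2}$ is of order $(\alpha q)^3$ when $\alpha q\lesssim1$, with no factor $\Delta_\Alg^2/\sigma^2$. Relative to your slack $\alpha(\alpha-1)q^2\Delta_\Alg^2/\sigma^2$, it is of size $\alpha q\,\sigma^2/\Delta_\Alg^2$, not $O(\alpha q)$.

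\emph{A concrete failure.} Take $q=0.01$, $\sigma=40\Delta_\Alg$, $\alpha=50$. The theorem holds here with room to spare:
\begin{itemize}
\item the true excess is $A_\alpha-1\approx7.7\times10^{-5}$;
\item the admissible excess is $e^{2\alpha(\alpha-1)q^2\Delta_\Alg^2/\sigma^2}-1\approx3.1\times10^{-4}$;
\item yet your $T_3$ alone is $\approx1.2\times10^{-2}$.
\end{itemize}

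\emph{The ratio bound is too lossy.} You bounded the consecutive ratio by $\alpha q\,e^{k\Delta_\Alg^2/\sigma^2}$, replacing $\frac{\alpha-k}{(k+1)(1-q)}$ by $\alpha$. This is below $1$ only if $\alpha q<1$.

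\emph{Your cap hides the problem.} Read literally, your cap needs $q\alpha(1+\sigma^2/\Delta_\Alg^2)<1$ for its logarithm to be positive. Under that restriction the raw tail does fit, but only for this unstated reason. The resulting range is $\alpha=O(\Delta_\Alg^2/(q\sigma^2))$, e.g.\ $\alpha<6$ for $q=0.01$, $\sigma=4\Delta_\Alg$. That is much narrower than where the bound holds and than what the paper's accountant uses.

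\emph{The fix.} Keep the cancellation:
\[
A_\alpha=1+\sum_{k=2}^{\alpha}\binom{\alpha}{k}(1-q)^{\alpha-k}q^k\Bigl(e^{k(k-1)\Delta_\Alg^2/(2\sigma^2)}-1\Bigr)\le 1+\E\Bigl[\tfrac{K(K-1)\Delta_\Alg^2}{2\sigma^2}\,e^{K(K-1)\Delta_\Alg^2/(2\sigma^2)}\Bigr].
\]
Now every summand carries $\Delta_\Alg^2/\sigma^2$. The range of $K$ where the exponential is $O(1)$ contributes about $\alpha(\alpha-1)q^2\Delta_\Alg^2/(2\sigma^2)$. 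The cap on $\alpha$ is then needed only to make the binomial tail $\binom{\alpha}{k}q^k\le(e\alpha q/k)^k$ beat $e^{k(k-1)\Delta_\Alg^2/(2\sigma^2)}$. This tail bound keeps the $1/k$ decay your ratio bound threw away.

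\textbf{Fractional $\alpha$.} The function $\alpha\mapsto(\alpha-1)\Renyi=\ln A_\alpha$ is convex; it is $A_\alpha$ that is log-convex. The chord of $c\,\alpha(\alpha-1)$ between $m$ and $m+1$ lies above the curve by $c\,t(1-t)$ at $\alpha=m+t$. So interpolation only works if the integer bound is proved with a constant strictly below $2q^2\Delta_\Alg^2/\sigma^2$. Such room exists, since $e^x-1\le 2x$ is very loose for $x\le 1/16$. But you must reserve it rather than spend the whole factor $2$ on the tail.
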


\subsubsection{Differentially Private Stochastic Gradient Descent}

Based on SGM, \textit{differentially private stochastic gradient descent} (\dpsgd) \citep{abadi2016deep} is a widely used DP algorithm to train ML models (DP-ML). Consider the \textit{empirical risk minimization} (ERM) formulation of ML. Let $\Loss(\bm\theta; D) \coloneq \sum_{d \in D} \loss(\bm\theta; d)$ be the empirical loss function we wish to minimize, where $\loss(\bm\theta; d)$ is the individual loss associated with datum $d$. To approach the minimizer $\bm\theta^*$, SGD works by iteratively sampling a batch of data and use their averaged or summed gradients to update the model parameter. In contrast, \dpsgd\ works by iteratively applying SGM to compute the noisy batch gradients $\widetilde{\mathbf{G}}_t \gets (1/b) \sum_{d \in B_t} \bar{\mathbf{g}}_d + \mathcal{N}(\mathbf{0}_r, \sigma^2 \mathbf{I}_{r \times r})$ used to update model parameters $\bm\theta$ at each iteration $t$. Here, $B_t$ is the Poisson sampled batch using sampling rate $b/|D|$ to achieve the expected batch size $b$. To control the sensitivity of the batch gradient, each individual gradient $\mathbf{g}_d \gets \nabla_{\bm\theta}\loss(\bm\theta_{t-1}, d)$ is ``clipped'' to a predetermined clipping threshold $C$ using $\bar{\mathbf{g}}_d \gets \mathbf{g}_d / \max\{1, \|\mathbf{g}_d\|/C\}$ such that the sensitivity of their average is at most $C$. The privacy of \dpsgd\ is thus guaranteed by adaptive composition of Thm.~\ref{thm:sgm-privacy} as follows. 
\begin{theorem}[Privacy of \dpsgd\ \citep{mironov2019renyi}]
    The \dpsgd\ algorithm running for $T$ iterations with sampling rate $q$, clipping threshold $C$ and noise scale $\sigma$ satisfies $(\alpha, \bar\epsilon)$-RDP, where \begin{equation*}
        \bar\epsilon = \frac{2T\alpha q^2 C^2}{\sigma^2}.
    \end{equation*}
\end{theorem}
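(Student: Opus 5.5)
The plan is to treat \dpsgd\ as a $T$-fold adaptive composition of sampled Gaussian mechanisms. We bound the privacy loss of a single iteration with Thm.~\ref{thm:sgm-privacy}, sum these bounds with Thm.~\ref{thm:rdp-composition}, and absorb every remaining step into post-processing.

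\textbf{Step 1 (one iteration).} Fix an iteration $t$ and condition on $\bm\theta_{t-1}$, the output of earlier iterations. The map $B \mapsto \sum_{d \in B} \bar{\mathbf{g}}_d$ is then a deterministic function of the batch, since each $\bar{\mathbf{g}}_d$ depends only on $d$ and $\bm\theta_{t-1}$. Take neighboring datasets $D, D^d$ with $D \setsymmetricdifference D^d = \{d\}$, and couple the Poisson sampling of all common data. The two batch sums then differ by at most the single term $\bar{\mathbf{g}}_d$. By clipping, $\|\bar{\mathbf{g}}_d\| = \|\mathbf{g}_d\|/\max\{1, \|\mathbf{g}_d\|/C\} \le C$, so the sensitivity in Def.~\ref{def:l2-sensitivity} is at most $C$.

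With Poisson sampling at rate $q$ and Gaussian noise $\mathcal{N}(\mathbf{0}_r, \sigma^2 \mathbf{I}_{r\times r})$, releasing the noisy sum is exactly an SGM. Thm.~\ref{thm:sgm-privacy} then gives $(\alpha, 2\alpha q^2 C^2/\sigma^2)$-RDP for this iteration, for any fixed $\bm\theta_{t-1}$.

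\textbf{Step 2 (post-processing).} Rescaling by $1/b$ and the update $\bm\theta_t = \bm\theta_{t-1} - \eta \widetilde{\mathbf{G}}_t$ are deterministic, data-independent functions of the noisy sum and of the already-released $\bm\theta_{t-1}$. Here $b$ is the expected batch size, not $|B_t|$, which is what keeps it data-independent. By the data-processing inequality for Rényi divergence, these steps do not increase the divergence.

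The form $(1/b)\sum + \mathcal{N}(\mathbf{0}_r,\sigma^2\mathbf{I}_{r \times r})$ needs one reading choice. I would interpret the noise as added to the sum before scaling, which matches Alg.~\ref{alg:inosgd}. Under the other reading one just replaces $C$ by $C/b$, and that only improves the bound.

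\textbf{Step 3 (composition).} Apply Thm.~\ref{thm:rdp-composition} iteratively over $t = 1, \dots, T$ to get $\bar\epsilon = T \cdot 2\alpha q^2 C^2/\sigma^2$.

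I expect the main obstacle to be that Thm.~\ref{thm:rdp-composition} is stated for non-adaptive mechanisms, whereas here the $t$-th mechanism depends on previous outputs through $\bm\theta_{t-1}$. I would handle this with the standard adaptive version of the argument from \citet{mironov2017renyi}. Write the joint density of $(\bm\theta_1,\dots,\bm\theta_T)$ as a product of conditionals and bound the Rényi moment iteratively. For each conditional, the per-step bound from Step 1 holds uniformly over every value of $\bm\theta_{t-1}$, so one integrates out the last step, uses the uniform bound, and inducts backwards. Uniformity over $\bm\theta_{t-1}$ is exactly what Step 1 guarantees, since the sensitivity bound $C$ never depends on the model parameters.
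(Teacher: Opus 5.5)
Your proposal is correct and follows the paper's argument. The paper justifies the statement in one line, as adaptive composition over $T$ iterations of the SGM bound (Thm.~\ref{thm:sgm-privacy}), with sensitivity $C$ coming from clipping; your explicit handling of post-processing (noise added before the $1/b$ rescaling, as in Alg.~\ref{alg:dpsgd}) and of the adaptive form of composition supplies details the paper leaves implicit.
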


The detailed algorithm is given in Alg.~\ref{alg:dpsgd} below:

\vspace{2mm}
\begin{algorithm}
\caption{\textbf{The \dpsgd\ algorithm.}}
\label{alg:dpsgd}
\begin{algorithmic}[1]
\item[] {\bfseries Input:} Dataset $D$; initial model parameters $\bm\theta_0$; sampling rates $q$; clipping threshold $C$; noise scale $\sigma$.
\item[] {\bfseries Output:} Trained model parameters $\bm\theta_T$. 
\STATE $b \gets q |D|$
\FOR{iteration $t \in [T]$}
    \STATE $B_t \gets$ subset sampled from $D$ with rate $q$
    \FOR{datum $d$ in $B_t$}
        \STATE $\mathbf{g}_{d} \gets \nabla_{\bm\theta} \ell(\bm\theta_{t - 1}, d)$
        \STATE $\Bar{\mathbf{g}}_{d} \gets \mathbf{g}_{d} / \max\{1, \|\mathbf{g}_{d}\| / C\}$
    \ENDFOR
    \STATE $\overline{\mathbf{G}}_t \gets \sum_{d \in B_t} \Bar{\mathbf{g}}_{d}$
    \STATE $\widetilde{\mathbf{G}}_{t} \gets \frac{1}{b}\left(\overline{\mathbf{G}}_t + \mathcal{N}(\mathbf{0}_r, \sigma^2 \mathbf{I}_{r \times r})\right)$ 
    \STATE $\bm\theta_t \gets \bm\theta_{t-1} - \eta \widetilde{\mathbf{G}}_{t}$ 
\ENDFOR
\STATE \textbf{return} updated model parameters $\bm\theta_T$
\end{algorithmic}
\end{algorithm}
\vspace{2mm}
\FloatBarrier

\subsection{Individualized Differential Privacy} \label{app:idp}

Readers may refer to the main paper for detailed backgrounds on IDP and the \idpsgd\ algorithm. The pseudocode of \idpsgd\ is given in Alg.~\ref{alg:idpsgd} below:

\vspace{2mm}
\begin{algorithm}
\caption{\textbf{The \idpsgd\ algorithm.}}
\label{alg:idpsgd}
\begin{algorithmic}[1]
\item[] {\bfseries Input:} Datasets of all data owners $D_1, D_2, \cdots, D_N$; initial model parameters $\bm\theta_0$; per-owner sampling rates $q_1, q_2, \cdots, q_N$; per-owner clipping thresholds $C_1, C_2, \cdots, C_N$; noise scale $\sigma$.
\item[] {\bfseries Output:} Trained model parameters $\bm\theta_T$. 
\STATE $b \gets \sum_{n=1}^N q_n |D_n|$
\FOR{iteration $t \in [T]$}
    \FOR{owner $n \in [N]$}
        \STATE $B_{t,n} \gets$ subset sampled from $D_n$ with rate $q_{n}$ 
    \ENDFOR
    \STATE $B_t \gets \bigcup_{n=1}^N B_{t,n}$
    \FOR{datum $d$ in $B_t$}
        \STATE $\mathbf{g}_{d} \gets \nabla_{\bm\theta} \ell(\bm\theta_{t - 1}, d)$
        \STATE $\Bar{\mathbf{g}}_{d} \gets \mathbf{g}_{d} / \max\{1, \|\mathbf{g}_{d}\| / C_{o(d)}\}$
    \ENDFOR
    \STATE $\overline{\mathbf{G}}_t \gets \sum_{k =1}^{|B_t|}  \Bar{\mathbf{g}}_{\pi_t(k)}$
    \STATE $\widetilde{\mathbf{G}}_{t} \gets \frac{1}{b}\left(\overline{\mathbf{G}}_t + \mathcal{N}(\mathbf{0}_r, \sigma^2 \mathbf{I}_{r \times r})\right)$ 
    \STATE $\bm\theta_t \gets \bm\theta_{t-1} - \eta \widetilde{\mathbf{G}}_{t}$ 
\ENDFOR
\STATE \textbf{return} updated model parameters $\bm\theta_T$
\end{algorithmic}
\end{algorithm}
\vspace{2mm}

For ease of reference we also include the pseudocodes for the \sample\ and \scale\ variants of \idpsgd\ respectively below. The \sample\ algorithm assigns different sampling rate but the same clipping threshold for different data owners:

\vspace{2mm}
\begin{breakablealgorithm}
\caption{\textbf{The \sample\ algorithm.}}
\label{alg:idpsgd-sample}
\begin{algorithmic}[1]
\item[] {\bfseries Input:} Datasets of all data owners $D_1, D_2, \cdots, D_N$; initial model parameters $\bm\theta_0$; per-owner sampling rates $q_1, q_2, \cdots, q_N$; clipping threshold $C$; noise scale $\sigma$.
\item[] {\bfseries Output:} Trained model parameters $\bm\theta_T$. 
\STATE $b \gets \sum_{n=1}^N q_n |D_n|$
\FOR{iteration $t \in [T]$}
    \FOR{owner $n \in [N]$}
        \STATE $B_{t,n} \gets$ subset sampled from $D_n$ with rate $q_{n}$ 
    \ENDFOR
    \STATE $B_t \gets \bigcup_{n=1}^N B_{t,n}$
    \FOR{datum $d$ in $B_t$}
        \STATE $\mathbf{g}_{d} \gets \nabla_{\bm\theta} \ell(\bm\theta_{t - 1}, d)$
        \STATE $\Bar{\mathbf{g}}_{d} \gets \mathbf{g}_{d} / \max\{1, \|\mathbf{g}_{d}\| / C\}$
    \ENDFOR
    \STATE $\overline{\mathbf{G}}_t \gets \sum_{k =1}^{|B_t|}  \Bar{\mathbf{g}}_{\pi_t(k)}$
    \STATE $\widetilde{\mathbf{G}}_{t} \gets \frac{1}{b}\left(\overline{\mathbf{G}}_t + \mathcal{N}(\mathbf{0}_r, \sigma^2 \mathbf{I}_{r \times r})\right)$ 
    \STATE $\bm\theta_t \gets \bm\theta_{t-1} - \eta \widetilde{\mathbf{G}}_{t}$ 
\ENDFOR
\STATE \textbf{return} updated model parameters $\bm\theta_T$
\end{algorithmic}
\end{breakablealgorithm}
\vspace{2mm}

The \scale\ algorithm assigns different clipping thresholds but the same sampling rate for different data owners:

\vspace{2mm}
\begin{breakablealgorithm}
\caption{\textbf{The \scale\ algorithm.}}
\label{alg:idpsgd-scale}
\begin{algorithmic}[1]
\item[] {\bfseries Input:} Datasets of all data owners $D_1, D_2, \cdots, D_N$; initial model parameters $\bm\theta_0$; sampling rate $q$; per-owner clipping thresholds $C_1, C_2, \cdots, C_N$; noise scale $\sigma$.
\item[] {\bfseries Output:} Trained model parameters $\bm\theta_T$. 
\STATE $b \gets \sum_{n=1}^N q_n |D_n|$
\FOR{iteration $t \in [T]$}
    \STATE $B_t \gets$ subset sampled from $D$ with rate $q$
    \FOR{datum $d$ in $B_t$}
        \STATE $\mathbf{g}_{d} \gets \nabla_{\bm\theta} \ell(\bm\theta_{t - 1}, d)$
        \STATE $\Bar{\mathbf{g}}_{d} \gets \mathbf{g}_{d} / \max\{1, \|\mathbf{g}_{d}\| / C_{o(d)}\}$
    \ENDFOR
    \STATE $\overline{\mathbf{G}}_t \gets \sum_{k =1}^{|B_t|}  \Bar{\mathbf{g}}_{\pi_t(k)}$
    \STATE $\widetilde{\mathbf{G}}_{t} \gets \frac{1}{b}\left(\overline{\mathbf{G}}_t + \mathcal{N}(\mathbf{0}_r, \sigma^2 \mathbf{I}_{r \times r})\right)$ 
    \STATE $\bm\theta_t \gets \bm\theta_{t-1} - \eta \widetilde{\mathbf{G}}_{t}$ 
\ENDFOR
\STATE \textbf{return} updated model parameters $\bm\theta_T$
\end{algorithmic}
\end{breakablealgorithm}
\vspace{2mm}

Specially, the two algorithms may be used jointly (i.e., different sampling rates and different clipping thresholds) which results in Alg.~\ref{alg:idpsgd} and they all satisfy the same privacy guarantee (Thm.~\ref{thm:idpsgd-privacy-guarantee}).

\subsection{Comparison of Imbalance and Fairness Notions}\label{app:compare-fairness}

In this section, we discuss various notions related to utility imbalance across groups, including \textbf{(1)} correcting data/class imbalance; \textbf{(2)} correcting DP-exacerbated accuracy disparity \citep{bagdasaryan2019differential}; and \textbf{(3)} achieving group/individual fairness. We will explain why and how our work, which focuses on IDP-induced utility imbalance/accuracy disparity, is related to \textbf{(1)}, somewhat related to but different from \textbf{(2)} and significantly different from \textbf{(3)}.

\paragraph{(1) Data/class imbalance.} As introduced in the main paper, data imbalance refers to (\textdagger) the imbalance in sizes of different groups in the training set, where each group optimizes a different objective (e.g., each group holds a class (i.e., class imbalance) or a lot more of one majority class over the others). Existing works on (i) can be broadly grouped into two categories: \textbf{data-wise approaches} and \textbf{optimization-wise approaches}. Data-wise approaches \citep{he2009learning} focus on manipulating the distribution of original dataset or sampled data at each iteration, so that the number of gradients for each group is similar. \textit{Oversampling} works by sampling more data from the minority groups, but is infeasible under IDP due to the limited privacy budget of more private data owners. \textit{Undersampling} works by selecting fewer data from the majority group and making their quantity comparable to the minority group, but is also unsuitable under IDP as it would worsen the model utility when data are already scarce. 
Optimization-wise approaches, on the other hand, focus on modifying the optimization objectives or algorithms. Many works \citep{japkowicz2000class, ross2017focal} attempt to rescale the loss function in order to obtain gradients of different magnitudes for different groups, but their effects will be eliminated by gradient clipping in DP/IDP. \citet{anand1993improved} and \citet{francazi2023theoretical} rescale the obtained gradients directly to balance gradients from each group, but this will in turn change the modular sensitivity of \idpsgd\ and hence violate the privacy requirement of the algorithm. 

We would like to restate that \textbf{our setting differs as we do not assume (\textdagger)}, though we recognize data imbalance as a closely related problem since they both exhibit MID and BOO.

\paragraph{(2) DP-exacerbated accuracy disparity.} \citet{bagdasaryan2019differential} first identifies that if there exist underrepresented groups in the training set (i.e., (\textdagger) the training set is imbalanced), training with DP algorithms would make such groups even more underrepresented in the trained ML model, which is reflected as even lower accuracy/utility on data from such groups (thus the term ``DP-exacerbated''). In particular, it explains how clipping and noise addition of DP jointly lead to such a disparate impact and worsen accuracy disparity between the well-represented groups and underrepresented groups. In contrast, our work does not assume (\textdagger) the training set is imbalanced. Instead, we identify that IDP itself can induce utility imbalance. Thus, \textbf{our work is orthogonal to the DP-exacerbated accuracy disparity line of works}. Since our work also operates under (I)DP, we use a different term ``utility imbalance'' to avoid confusion.

Nevertheless, since (ii) and our work both address accuracy disparity/utility imbalance, we give a quick review of existing works on DP-exacerbated accuracy disparity. 
We thoroughly examine follow-up works of \citep{bagdasaryan2019differential} and classify them as follows. \citep{farrand2020neither, ganev2021dpsgdvspatedisparate, ganev2022robin,  uniyal2022dpsgdvspatedisparate} analyze the problem of DP-exacerbated accuracy disparity theoretically/empirically.
\citet{tran2021differentially} uses an alternative objective function which penalizes different (clipped) gradients and (trace of) Hessians among groups, which the authors identify as sources of the disparate impact of DP; \citet{xu2021removing} computes different clipping thresholds for different data using additional privacy costs in order to control the contribution of each datum; \citet{esipovadisparate} spends additional privacy budgets at each iteration to compute a strict clipping bound (that is larger than the clipping threshold) and discards the gradients larger than the clipping bound; \citet{kulynych2022you} adopts importance sampling on each group's data based on the group size; \citet{knolle2023biasawareminimisationunderstandingmitigating} adds a regularizing term to the loss function that penalizes large individual gradient norms which bring larger bias when clipped.
Notably, these works operate under standard DP and assume uniform privacy budgets across all data owners. Thus, the impact of an additional privacy cost is the same for each owner. In the IDP setting, however, the same additional privacy cost consumes a larger fraction of the more private data owners’ privacy budgets, which worsens the IDP-induced utility imbalance as we had novelly pointed out in Sec.~\ref{subsec:idp-induced-utility-imbalance-and-learning-dynamics}. Moreover, these works are tailored to the disparate impact of DP caused by its clipping and noise addition components, while our utility imbalance arises from an orthogonal cause. Thus, they cannot address IDP-induced utility imbalance. For example, when using \citet{tran2021differentially}'s alternative objective function, the different sampling rates and/or clipping thresholds introduced by \idpsgd\ could still lead to MID, BOO and utility imbalance. Last but not least, it is unclear how these works can even be adapted to satisfy IDP.

\paragraph{(3) Group/individual fairness.} Here we will explain how group/individual fairness works focus on achieving similar model \textbf{predictions/behaviors} across different groups/individuals, \textbf{which significantly differs from} our work and works in (i) and (ii) which focus on achieving similar model \textbf{utilities/accuracies} across different groups/data owners.
\begin{itemize}[leftmargin=*,itemsep=0pt,topsep=0pt,parsep=0pt,partopsep=0pt]
    \item \textbf{Group fairness:} Consider a binary classification problem with class label $y \in \{0, 1\}$ representing whether an outcome is positive (e.g., whether a candidate is qualified to a position). Let $\hat{y}$ denote the model prediction. Let $x_\rs$ denote a sensitive attribute (e.g., race) and $\mathcal{X}_\rs$ denote the set of its possible values. Each value corresponds to a group. A general definition of \textit{group fairness}, $F_\rg$, is as follows \citep{tran2025fairdpcertifiedfairnessdifferential}:
    \begin{equation*}
        F_\rg \coloneq \max_{u, v \in \mathcal{X}_\rs} \left(\Pr[\hat{y} = 1 | x_\rs = u, e] - \Pr[\hat{y} = 1 | x_\rs = v, e]\right),
    \end{equation*}
    where $u, v$ are possible values of $x_\rs$, $e$ is a random event, and the model owner needs to minimize $F_\rg$ or ensure it is bounded when training their models. When $e = \emptyset$, $F_\rg$ measures the \textit{statistical parity} \citep{darlington1971another}, which means that whether an outcome is positive should be independent of the sensitive attribute; when $e$ is the event $y=1$, $F_\rg$ measures the \textit{equality of opportunity} \citep{hardt2016equality}, which means that all qualified candidates should get similar opportunity of positive outcome; when $e = y$, $F_\rg$ measures the \textit{equality of odds} \citep{hardt2016equality}, which is stricter than equality of opportunity and requires similar probability even when $y$ is negative. To this end, readers may understand why such works have a different goal from ours: it seeks to achieve similar model predictions/behaviors across different groups (which may lead to a greater imbalance in utilities on some validation sets). 
    Works addressing group fairness under DP include \citep{ding2020differentially, jagielski2019differentially, lowy2023stochastic, tran2021differentiallya, xian2024differentially, xu2019achieving, zhang2021balancing}.
    \item \textbf{Individual fairness:} Similar to group fairness, \textit{individual fairness} \citep{dwork2012fairness} seeks to achieve similar model predictions/behaviors across different individuals, which is satisfied when  \begin{equation*}
        \forall \mathbf{x}, \mathbf{x}' \: \left[\mathfrak{D}_y(\hat{y}, \hat{y}') \leq \mathfrak{D}_{\mathbf{x}}(\mathbf{x}, \mathbf{x}')\right],
    \end{equation*}
    where $\mathfrak{D}_y$ is some distance metric over the predicted labels and $\mathfrak{D}_{\mathbf{x}}$ is some distance metric over the inputs. Likewise, its focus is on the model predictions $\hat{y}$ and differs from ours.
\end{itemize}
\textbf{Since the goal of group and individual fairness significantly differ from ours, we do not compare with works under this category (iii).}

\section{Additional Theoretical Results}

\subsection{IDP-Induced Utility Imbalance} \label{app:idp-induced-utility-imbalanced}

\subsubsection{Imbalanced Sampling Rates or Clipping Thresholds} \label{app:imbalanced-sampling-clipping}

In Fig.~\ref{fig:idpsgd-imbalance-cause}, we show the imbalance in sampling rates and clipping thresholds of data owners that occur in the \sample\ and \scale\ variants of \idpsgd\ respectively, computed using an RDP accountant. For example, for \sample\ a less private owner with $\epsilon = 8$ can have a $10$ times larger sampling rate ($0.19$) than a more private owner with $\epsilon=0.6$ ($0.019$) when clipping threshold $C = 1$ and noise scale $\sigma = 4$; likewise, for \scale\ a less private owner with $\epsilon=3$ can have a $10$ times larger clipping threshold ($3.29$) than a more private owner with $\epsilon=0.5$ ($0.33$) when sampling rate $q=0.05$. These lead to the imbalanced distribution of gradients in the sampled batch $B_t$ (Fig.~\ref{fig:idp-induced-utility-imbalance}) and cause the utility imbalance problem as explained in Sec.~\ref{subsec:idp-induced-utility-imbalance-and-learning-dynamics}.

\begin{figure}[!h]
    \vspace{5mm}
    \centering
    \begin{minipage}[t]{0.48\linewidth}
        \centering
        \includegraphics[width=0.5\linewidth]{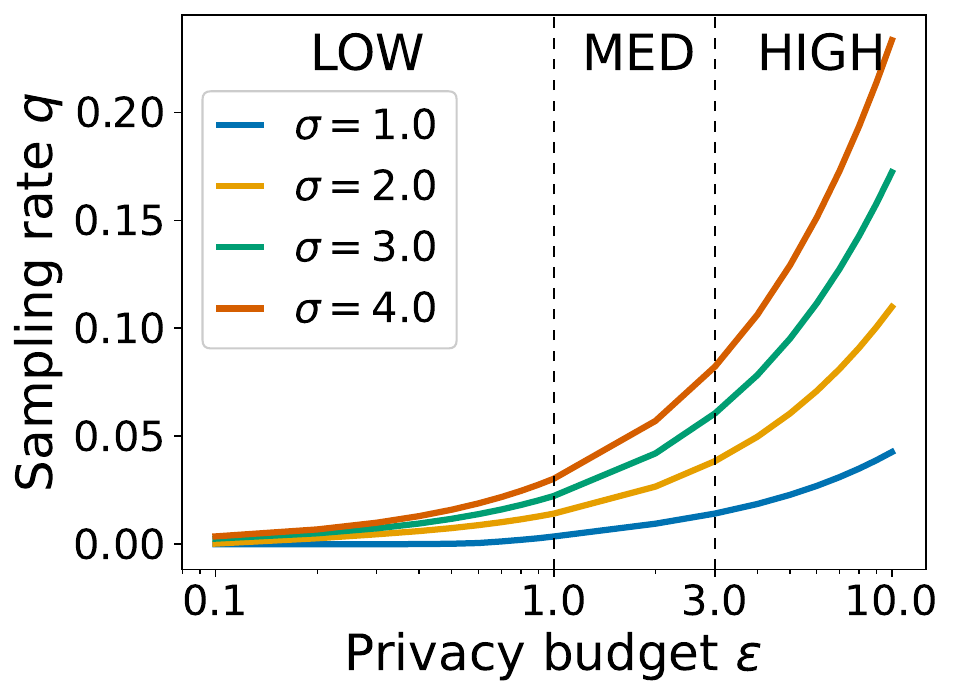}
        \caption*{(a) \textbf{Sampling rates $q$ corresponding to different privacy budgets $\epsilon$ with added noise scales $\sigma = 1.0$ to $4.0$ in \sample.} Specifically, we set tolerance $\delta = 10^{-5}$, clipping threshold $C = 1.0$ and number of iterations $T = 1000$.}
        \label{fig:idpsgd-imbalance-sampling-rates}
    \end{minipage}
    \hfill
    \begin{minipage}[t]{0.48\linewidth}
        \centering
        \includegraphics[width=0.5\linewidth]{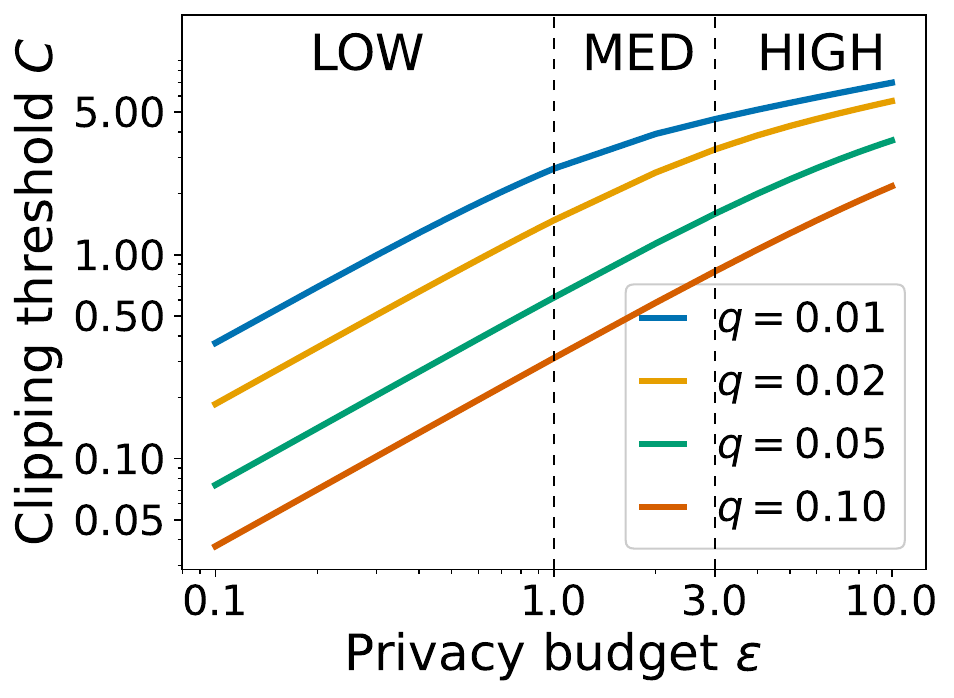}
        \captionsetup{format=plain, labelformat=parens, labelsep=none}
        \caption*{(b) \textbf{Clipping thresholds $C$ corresponding to different privacy budgets $\epsilon$ with sampling rate $q = 0.01$ to $0.10$ in \scale.} Specifically, we set tolerance $\delta = 10^{-5}$, added noise scale $\sigma = 4.0$ and number of iterations $T = 1000$.}
        \label{fig:idpsgd-imbalance-clipping-thresholds}
    \end{minipage}
    \caption{\textbf{Imbalance of individualized sampling rates and clipping thresholds under \idpsgd's \sample\ and \scale\ variants.} (a) and (b) show the discrepancies in sampling rates and clipping thresholds for different privacy preferences.}
    \label{fig:idpsgd-imbalance-cause}
\end{figure}
\vspace{5mm}

\subsubsection{Proof and Discussions for Theorem \ref{thm:idp-induced-utility-imbalance-informal}} \label{app:idp-induced-mid-theorem}

In this section, we state the formal version of Thm.~\ref{thm:idp-induced-utility-imbalance-informal}, prove it and provide some additional discussions.
Specifically, the first part of the theorem (Eq.~\eqref{eqn:condition-refined}) informs when the MID phenomenon \textbf{cannot} happen and therefore how to mitigate it; the second part of the theorem shows that MID \textbf{can} happen when the condition is not satisfied. Without loss of generality, we suppose there are two data owners $1$ and $2$. 

\begingroup
  \renewcommand{\thetheorem}{\ref{thm:idp-induced-utility-imbalance-informal}}
\begin{theorem}[Condition for Per-Owner Monotone Decrease of Loss in \idpsgd, formal] \label{theorem:idpsgd-condition}
Assume each data owner $n$'s objective function $\Loss_n$ is $L_1$-Lipschitz and $L_2$-smooth. Assume the stochastic gradient sampled for each owner satisfies uniform boundedness with regards to the owner's full gradient (i.e., $\Pr[\|\mathbf{g}_{t,d} - \nabla_{\bm\theta}\Loss_{o(d)}(\bm\theta_{t-1})\|^2 \leq \xi_{o(d)}^2] = 1$ for some constant $\xi_n$ for each owner $n$).

Consider two data owners with dataset, sampling rate and clipping threshold $(D_1, q_1, C_1)$ and $(D_2, q_2, C_2)$, respectively. Recall that their corresponding loss objectives are $\Loss_1$ and $\Loss_2$.
At each iteration $t$, when the following condition
\begin{align}
    1 + \cos(\angle_t) \frac{q_2 |D_2| C_2}{q_1 |D_1| C_1} > \varsigma \label{eqn:condition-refined}
\end{align} 
is satisfied with $\varsigma > 0$ being a problem-specific term, by choosing a sufficiently small learning rate $\eta$, the losses for both owners are guaranteed to decrease in expectation. Moreover, whenever
\begin{align*}
    1 + \cos(\angle_t) \frac{q_2 |D_2| C_2}{q_1 |D_1| C_1} < 0,
\end{align*}
there exist ML tasks such that the loss for some owner increases, so the above condition is necessary to guarantee a decreasing loss for both owners.
\end{theorem}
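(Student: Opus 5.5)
The plan is a one-step descent analysis via the $L_2$-smoothness of each $\Loss_n$, followed by a counterexample construction for necessity. Fix iteration $t$ and condition on $\mathcal{F}_{t-1}$. The \idpsgd\ update is $\bm\theta_t = \bm\theta_{t-1} - \eta\widetilde{\mathbf{G}}_t$ with
\[
\widetilde{\mathbf{G}}_t = \tfrac{1}{b}\Bigl(\textstyle\sum_{d\in B_{t,1}}\phi_{t,d}\mathbf{g}_{t,d} + \sum_{d\in B_{t,2}}\phi_{t,d}\mathbf{g}_{t,d} + \mathbf{Z}_t\Bigr),
\]
where $\phi_{t,d}=1/\max\{1,\|\mathbf{g}_{t,d}\|/C_{o(d)}\}$ is the clipping factor and $\mathbf{Z}_t$ is the Gaussian noise. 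Smoothness gives
\[
\E[\Loss_1(\bm\theta_t)] \le \Loss_1(\bm\theta_{t-1}) - \eta\langle\nabla\Loss_1,\E\widetilde{\mathbf{G}}_t\rangle + \tfrac{L_2\eta^2}{2}\E\|\widetilde{\mathbf{G}}_t\|^2 .
\]
The quadratic term is finite: the $\|\bar{\mathbf{g}}\|\le C_n$ bounds, Poisson batch sizes and $r\sigma^2$ from the noise control it. Hence for small $\eta$ it suffices that the first-order term be strictly negative, i.e.\ $\langle\nabla\Loss_1,\E\widetilde{\mathbf{G}}_t\rangle>0$, and symmetrically for owner $2$.

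To compute $\E\widetilde{\mathbf{G}}_t$, I use that Poisson sampling and zero-mean noise give $\E\overline{\mathbf{G}}_t = \sum_n q_n\sum_{d\in D_n}\phi_{t,d}\mathbf{g}_{t,d}$. I then write $\mathbf{g}_{t,d}=\nabla\Loss_n+\mathbf{e}_d$ with $\|\mathbf{e}_d\|\le\xi_n$, and $\phi_{t,d}\|\mathbf{g}_{t,d}\|\le C_n$. Away from stationarity, i.e.\ when $\|\nabla\Loss_n\|>\xi_n$, each $\phi_{t,d}$ lies in a band around $\bar\phi_{t,n}\approx C_n/\|\nabla\Loss_n\|$ whose width is controlled by $\xi_n/\|\nabla\Loss_n\|$, with the convention that this becomes $1$ when no clipping occurs. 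Thus owner $n$'s expected clipped contribution equals $q_n|D_n|C_n\,\widehat{\nabla\Loss_n}$ up to an error vector of relative size bounded by a problem-specific $\varsigma_n$. Taking the inner product with $\nabla\Loss_1$ and dividing by $q_1|D_1|C_1\|\nabla\Loss_1\|$ gives
\[
1+\cos(\angle_t)\frac{q_2|D_2|C_2}{q_1|D_1|C_1} - \varsigma ,
\]
where $\varsigma$ collects the errors of both owners and depends on $\xi_n$, $\|\nabla\Loss_n\|$, $L_1$, and the ratio of the $q_n|D_n|C_n$. Cond.~\eqref{eqn:condition-refined} therefore makes the first-order term for owner $1$ negative.

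For owner $2$ the analogous quantity is $1+\cos(\angle_t)\frac{q_1|D_1|C_1}{q_2|D_2|C_2}-\varsigma$. I will handle it under the WLOG convention that owner $1$ is the one with the smaller $q_n|D_n|C_n$. If $\cos\angle_t\ge0$ it is automatic once $\varsigma<1$. If $\cos\angle_t<0$, Cond.~\eqref{eqn:condition-refined} forces $|\cos\angle_t|\frac{q_2|D_2|C_2}{q_1|D_1|C_1}<1-\varsigma$, and since the inverse ratio is at most $1$ the owner-$2$ term is even larger. With both first-order terms strictly negative, I choose $\eta$ below the minimum of the two thresholds (first-order coefficient over $L_2\E\|\widetilde{\mathbf{G}}_t\|^2/2$).

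For necessity, I will build an explicit two-owner quadratic task. Take $\Loss_n(\bm\theta)=\tfrac{|D_n|}{2}\|\bm\theta-\bm\theta_n^*\|^2$ with identical data within each owner, so $\xi_n=0$. Choose $\bm\theta_{t-1}$ far from both optima so that every gradient is clipped exactly to $C_n$. The expected update direction is then $-(q_1|D_1|C_1\hat{\mathbf{u}}_1+q_2|D_2|C_2\hat{\mathbf{u}}_2)$, and its inner product with $\nabla\Loss_1$ is proportional to $1+\cos(\angle_t)\frac{q_2|D_2|C_2}{q_1|D_1|C_1}<0$. Hence the first-order change of $\Loss_1$ is positive, and by smoothness $\E\Loss_1(\bm\theta_t)>\Loss_1(\bm\theta_{t-1})$ for small $\eta$. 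The noise only adds the nonnegative term $L_2\eta^2r\sigma^2/(2b^2)$, which does not help.

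The main obstacle I expect is the second step: bounding the clipping factors $\phi_{t,d}$ uniformly from the $\xi_n$ assumption so as to produce a clean $\varsigma$. Mixed clipped and unclipped gradients, and the regime $\|\nabla\Loss_n\|\lesssim\xi_n$, will need separate handling. I would first try defining $\bar\phi_{t,n}$ and $\bar\phi^{(2)}_{t,n}$ as per-owner averages and bounding their deviations by $\varsigma_n$, absorbing the leftovers into $\varsigma$.
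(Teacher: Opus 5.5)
Your proposal is correct. The sufficiency part follows the paper's skeleton: the $L_2$-smoothness descent lemma, splitting $\widetilde{\mathbf{G}}_t$ into the two owners' clipped sums plus zero-mean noise that only enters the $\eta^2$ term, then shrinking $\eta$. The middle step and the necessity construction differ, as follows.

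\textbf{How the clipping error is controlled.} The paper works with scalar clipping factors:
\begin{itemize}
\item It writes $\phi_{t,d}\mathbf{g}_{t,d}=\phi_{t,d}\nabla\Loss_n-\phi_{t,d}(\nabla\Loss_n-\mathbf{g}_{t,d})$ and applies Cauchy--Schwarz, giving a raw condition in the averages $\bar\phi_{t,n}$ with slack $\psi$.
\item It places $\bar\phi_{t,n}$ within $\varsigma_n$ of $\min\{1,C_n/\|\nabla\Loss_n\|\}$, using local Lipschitzness of $x\mapsto\min\{1,C_n/x\}$.
\item It then simply \emph{assumes} $C_n$ lies below the gradient norms, so the minimum equals $C_n/\|\nabla\Loss_n\|$. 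This assumption is how the paper disposes of the mixed-clipping case you flag as your main obstacle, and you may adopt it too.
\end{itemize}
Your vector-level perturbation is cleaner in that all-clipped regime. The bound $\bigl\|C_n\mathbf{g}/\|\mathbf{g}\|-C_n\widehat{\nabla\Loss_n}\bigr\|\le 2C_n\xi_n/\|\nabla\Loss_n\|$ gives the explicit slack $\varsigma=2\xi_1/\|\nabla\Loss_1\|+2R\,\xi_2/\|\nabla\Loss_2\|$, where $R=q_2|D_2|C_2/(q_1|D_1|C_1)$. Taking the exact Poisson expectation also avoids the paper's simplification $|B_{t,n}|=q_n|D_n|$.

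\textbf{Owner $2$.} Your treatment is more careful than the paper's ``by symmetry'', which is not literally valid because the condition is asymmetric. Ordering the owners by $q_n|D_n|C_n$ matches the paper's reading that owner $1$ is the more private one, and it supplies the missing step. You still need $\varsigma$ to dominate owner $2$'s error term as well, for instance by taking the maximum of the two.

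\textbf{Necessity.} The paper gives a single numerical example ($\cos\angle_t=-12/13$, $C_1=1$, uniform gradient noise), which additionally shows the slack $\varsigma>0$ cannot be dropped. Your noiseless quadratic instead covers every configuration with a negative left-hand side. Two small fixes:
\begin{itemize}
\item Smoothness only upper-bounds the loss change. The increase must come from convexity, or from the exact expansion $\E[\Loss_1(\bm\theta_t)]-\Loss_1(\bm\theta_{t-1})=-\eta\langle\nabla\Loss_1,\E\widetilde{\mathbf{G}}_t\rangle+\tfrac{|D_1|\eta^2}{2}\E\|\widetilde{\mathbf{G}}_t\|^2$, which in fact gives an increase for every $\eta>0$, not just small $\eta$.
\item A global quadratic is not $L_1$-Lipschitz; the paper's example has the same defect. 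A Huber-type per-sample loss whose linear-regime slope exceeds $C_n$ restores the hypotheses without changing your argument.
\end{itemize}
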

\endgroup

\textit{Remark.} \begin{itemize}[leftmargin=*,itemsep=0pt,topsep=0pt,parsep=0pt,partopsep=0pt]
    \item The exact definitions of $\varsigma$ and ``sufficiently small'' are formally stated in the full proof.
    \item Although the theorem looks comparable to the original \citet{francazi2023theoretical}'s condition, proving it is significantly more difficult due to the consideration of various components of IDP, including individualized sampling rates $q_n$, individualized clipping thresholds $C_n$, the gradient clipping mechanism that introduces clipping bias to each owner's true gradient and the injected Gaussian noise to guarantee DP.
\end{itemize}

\textit{Interpretation.} Thm.~\ref{theorem:idpsgd-condition} explicitly illustrates the effects of individualized sampling rates $q_n$ and individualized clipping thresholds $C_n$ on per-owner model utility. We now explain what it informs about the utility imbalance problem as we demonstrate graphically in Sec.~\ref{subsec:idp-induced-utility-imbalance-and-learning-dynamics}. As shown at the start of this section (App.~\ref{app:imbalanced-sampling-clipping}), \idpsgd\ itself would induce a severe imbalance in the individualized sampling rates $q_n$ and individualized clipping thresholds $C_n$. Therefore, even if the original training set is balanced ($|D_1| \approx |D_2|$), the ratio $\smash{\frac{q_2 |D_2| C_2}{q_1 |D_1| C_1}}$ would still be large (without loss of generality assume owner $1$ is more private). Moreover, the angle $\angle_t$ is expected to be large as the directions of each owner's full gradient differ when the two groups have different optimization objectives. As $\cos(\angle_t)$ and hence the left-hand side of Eq.~\ref{eqn:condition-refined} are negative, the condition is not satisfied. Thus, when the loss for the less private owner $2$ decreases, the loss for the more private owner $1$ may not (which explains why MID can be induced by \idpsgd\ too). Unlike the original condition (Eq.~\ref{eqn:mid}), the ratio $\smash{\frac{q_2 |D_2| C_2}{q_1 |D_1| C_1}}$ stays constant throughout the training (in contrast to the data imbalance setting where the corresponding ratio in Eq.~\ref{eqn:mid} gradually decreases) and hence the MID period can be even longer, which worsens the utility imbalance problem.

\textit{Relation to \inosgd.} From our proof, it is also clear that the larger the left-hand side of Eq.~\ref{eqn:condition-refined} is, the less severe the utility imbalance problem will be, even when the left-hand side is negative. Our solution, the \inosgd\ algorithm, aims to increase the left-hand side as much as possible without hurting the overall model utility. By down-weighting the less important data within each batch, \inosgd\ is empirically reducing the clipping thresholds $C_n$ for the less private data owners in a strategic way that incurs no penalty on overall model utility by fully utilizing each owner's privacy budget.

\textit{Justification of assumptions.} Our assumptions in Thm.~\ref{theorem:idpsgd-condition} are reasonable and in line with existing imbalance/DP literatures. The smoothness assumption on each owner's objective function is often used to characterize the change in loss when a parameter update is made, as a less restrictive assumption than convexity or strong convexity. It is used in recent imbalance works such as \citep{francazi2023theoretical, zhang2025gradstopexploringtrainingdynamics} and DP works such as \citep{arora2023faster, chen2020understanding, das2023beyond, sha2024clipbodytailseparately}. The uniform boundedness assumption is used to bound the stochastic in the sampled gradients, which is widely used in optimization, especially clipping, literature \citep{li2023convergence, zhang2020improved, zhanggradient}. 

\textit{Proof outline.} Since the full proof below is rather long, we provide a \textbf{shorter outline} here for better understanding of the proof. \begin{enumerate}[leftmargin=*,itemsep=0pt,topsep=0pt,parsep=0pt,partopsep=0pt]
    \item We use the symbol $\term$ to represent a term and $\termplus$ to represent a non-negative term. Note that we only need to prove for one owner since the proof for the other owner is identical by symmetry. We first upper bound the \textbf{change in loss $\Loss_1(\bm\theta)$} using the smoothness assumption (see Eq.~\ref{eqn:smoothness}). It can be easily seen that term \eqpartref{B} in Eq.~\ref{eqn:smoothness} is non-negative, and the whole right-hand side can be written in the format 
    \[
    -\eta\left(\,\term - \eta \,\termplus\,\right).
    \]
    To this end, if we manage to rewrite and bound terms \eqpartref{A} well such that in the above expression, $\term > 0$, then by choosing a learning rate $\eta$ smaller than $\term\, /\, \termplus$, the above expression is negative and we guarantee the decreasing loss. We eventually achieve this in Eq.~\ref{eqn:expected-change-in-loss} of the format \[
    -\eta\left(\,\term - \,\termplus - \eta \,\termplus\,\right),
    \]
    where the first $\term$ gives a raw version of the target condition (Eq.~\ref{eqn:condition-raw}), the middle $\termplus$ is from an unavoidable clipping bias (which does not appear in the original \citet{francazi2023theoretical}'s non-private setting). Thus, we need the first $\term$ to be larger than the middle $\termplus$ to guarantee a decreasing loss, which leads to the right-hand side of the final condition (Eq.~\ref{eqn:condition-refined}) not being $0$.
    \begin{enumerate}[leftmargin=*,itemsep=0pt,topsep=0pt,parsep=0pt,partopsep=0pt]
        \item Term \eqpartref{A} involves the inner product of owner $1$'s full gradient and the noisy summed clipped gradient $\widetilde{\mathbf{G}}_t$ from both owners. To analyze the imbalance between owners, we break down $\widetilde{\mathbf{G}}_t$ to three components: (i) summed clipped gradient of owner $1$ (term \eqpartref{C}); (ii) summed clipped gradient of owner $2$ (term \eqpartref{D}); (iii) DP noise. The final bound is given in Eq.~\ref{eqn:part-a-expectation}. \begin{itemize}[leftmargin=*,itemsep=0pt,topsep=0pt,parsep=0pt,partopsep=0pt]
            \item For (i) and (ii), we need to handle the \textit{clipping bias}, that is, the different directions of each owner's full gradient and (expectation of) each owner's summed clipped gradient. We handle this by breaking down the summed clipped gradient to the full gradient plus the clipping bias (in our analysis of terms \eqpartref{C} and \eqpartref{D}). The clipping bias is captured by a term, $\phi_{t,d}$, which is the scaling factor of each gradient to obtain its clipped gradient.
            \item Therefore, (ii) involves the inner product of owner $1$'s full gradient and owner $2$'s full gradient, which highlights the role of $\cos(\angle_t)$.
            \item (iii) vanishes as we take expectation because the DP noise is of zero mean.
        \end{itemize}
        \item Although term \eqpartref{B} is non-negative, we derive a bound in order to better understand it. Similarly, we break down $\widetilde{\mathbf{G}}_t$ to three components: (i) summed clipped gradient of owner $1$ (term \eqpartref{C}); (ii) summed clipped gradient of owner $2$ (term \eqpartref{D}); (iii) DP noise. Since \eqpartref{B} involves a quadratic term, we need to additionally consider the variance of the three components. The final bound is given in Eq.~\ref{eqn:part-b-expectation}.
        \begin{itemize}
            \item For (i) and (ii), the variances can be bounded using $\xi_1^2$ and $\xi_2^2$ because of our uniform boundedness assumption (terms \eqpartref{F} and \eqpartref{G}).
            \item For (iii), the variance is $r\sigma^2$ since the DP noise is Gaussian.
        \end{itemize}
    \end{enumerate}
    \item Now we have obtained the desired form and a raw condition (Eq.~\ref{eqn:condition-raw}) to guarantee decreasing loss which involves the sum of clipping factors $\phi_{t,d}$. Note that the number of clipping factors in the sum is determined by the number of data sampled from each owner, i.e., $q_n |D_n|$. The clipping factor is determined by the clipping threshold of each owner, i.e., $C_n$. So we just have to further rewrite and bound the raw condition (Eq.~\ref{eqn:condition-raw}) to a version that involves $q_n |D_n|$ and $C_n$ to understand the role of $q_n$ and $C_n$, which is exactly the final version (Eq.~\ref{eqn:condition-refined}). \begin{itemize}[leftmargin=*,itemsep=0pt,topsep=0pt,parsep=0pt,partopsep=0pt]
        \item We observe that the function mapping each owner's gradient to its clipping factor threshold is Lipschitz. Moreover, the mean of each owner's gradient is the owner's full gradient. Therefore, we can derive a bound on the clipping factor using each owner's full gradient (Eq.~\ref{eqn:phi-1-bound} and \ref{eqn:phi-2-bound}).
        \item By substituting the above bound into the raw condition (Eq.~\ref{eqn:condition-raw}), the norm of each owner's full gradient is replaced by their clipping thresholds (Eq.~\ref{eqn:condition-refined}), which explains why without data imbalance, IDP itself can induce MID and utility imbalance.
    \end{itemize}
\end{enumerate} 

The \textbf{full proof} of Theorem~\ref{theorem:idpsgd-condition} is as follows.

\textit{Full proof.} We focus on the objective $\Loss_1(\bm\theta)$ for owner $1$ (proof for owner $2$ is the same by symmetry). Since $\Loss_1$ is $L_2$-smooth, we have
\begin{align}
    \Loss_1(\bm\theta_t) - \Loss_1(\bm\theta_{t-1}) \leq \eqpart{-\eta\left\langle\nabla_{\bm\theta}\Loss_1(\bm\theta_{t-1}) , \widetilde{\mathbf{G}}_t\right\rangle}{A} + \eqpart{\frac{L_2 \eta^2}{2} \left\|\widetilde{\mathbf{G}}_t\right\|^2}{B}. \label{eqn:smoothness}
\end{align}
Let $\mathbf{Z}_t \sim \mathcal{N}(\mathbf{0}_r, \sigma^2 \mathbf{I}_{r \times r})$ denote the random Gaussian noise added at iteration $t$ (to ensure IDP). Let $\mathbf{g}_{t, d}$ and $\bar{\mathbf{g}}_{t, d}$ denote the gradient and clipped gradient associated with datum $d$ at iteration $t$ respectively. Let $\smash{\phi_{t,d} \coloneq \min \{1, {C_{o(d)}}/{\|\mathbf{g}_{t,d}\|}\} = 1/ \max\{1, {\|\mathbf{g}_{t,d}\|}/{C_{o(d)}}\}} \in (0, 1]$ denote the scaling factor to scale a gradient to its corresponding clipped gradient, i.e., $\bar{\mathbf{g}}_{t,d} = \phi_{t, d} \mathbf{g}_{t, d}$. Then,
\begin{align*}
    \text{Term \eqpartref{A}} &= -\eta\left\langle\nabla_{\bm\theta}\Loss_1(\bm\theta_{t-1}), \frac{1}{b} \left(\overline{\mathbf{G}}_t + \mathbf{Z}_t\right)\right\rangle \\
    &= -\eta\left\langle\nabla_{\bm\theta}\Loss_1(\bm\theta_{t-1}), \frac{1}{b}\sum_{d \in B_{t, 1}} \bar{\mathbf{g}}_{t,d} + \frac{1}{b}\sum_{d \in B_{t, 2}} \bar{\mathbf{g}}_{t,d} + \frac{1}{b} \mathbf{Z}_t \right\rangle \\
    &= -\eta\left\langle\nabla_{\bm\theta}\Loss_1(\bm\theta_{t-1}), \frac{1}{b}\sum_{d \in B_{t, 1}} \bar{\mathbf{g}}_{t,d} \right\rangle - \eta\left\langle\nabla_{\bm\theta}\Loss_1(\bm\theta_{t-1}), \frac{1}{b}\sum_{d \in B_{t, 2}} \bar{\mathbf{g}}_{t,d} \right\rangle - \eta\left\langle\nabla_{\bm\theta}\Loss_1(\bm\theta_{t-1}), \frac{1}{b}\mathbf{Z}_t \right\rangle \\
    &= -\eta\left\langle\nabla_{\bm\theta}\Loss_1(\bm\theta_{t-1}), \frac{1}{b}\sum_{d \in B_{t, 1}} \phi_{t,d}\mathbf{g}_{t,d} \right\rangle - \eta\left\langle\nabla_{\bm\theta}\Loss_1(\bm\theta_{t-1}), \frac{1}{b}\sum_{d \in B_{t, 2}} \phi_{t,d}\mathbf{g}_{t,d} \right\rangle - \eta\left\langle\nabla_{\bm\theta}\Loss_1(\bm\theta_{t-1}), \frac{1}{b}\mathbf{Z}_t \right\rangle \\
    &= \eqpart{-\frac{\eta}{b}\sum_{d \in B_{t,1}} \left\langle\nabla_{\bm\theta}\Loss_1(\bm\theta_{t-1}), \phi_{t,d}\mathbf{g}_{t,d} \right\rangle}{C}  \eqpart{-\frac{\eta}{b}\sum_{d \in B_{t,2}} \left\langle\nabla_{\bm\theta}\Loss_1(\bm\theta_{t-1}), \phi_{t,d}\mathbf{g}_{t,d} \right\rangle}{D} - \eta\left\langle\nabla_{\bm\theta}\Loss_1(\bm\theta_{t-1}), \frac{1}{b}\mathbf{Z}_t \right\rangle.
\end{align*}
\begin{align*}
    \text{Term \eqpartref{C}} &= -\frac{\eta}{b} \sum_{d \in B_{t,1}} \left\langle \nabla_{\bm\theta}\Loss_1(\bm\theta_{t-1}),\ \phi_{t,d} \nabla_{\bm\theta}\Loss_1(\bm\theta_{t-1}) - (\phi_{t,d} \nabla_{\bm\theta}\Loss_1(\bm\theta_{t-1}) - \phi_{t,d} \mathbf{g}_{t, d}) \right\rangle \\
    &= -\frac{\eta}{b} \sum_{d \in B_{t, 1}} \phi_{t,d} \|\nabla_{\bm\theta}\Loss_1(\bm\theta_{t-1})\|^2 - \phi_{t,d} \left\langle \nabla_{\bm\theta}\Loss_1(\bm\theta_{t-1}), \nabla_{\bm\theta}\Loss_1(\bm\theta_{t-1}) - \mathbf{g}_{t,d} \right\rangle \\
    &\stackrel{(1)}{\leq} -\frac{\eta}{b} \sum_{d \in B_{t, 1}} \phi_{t,d} \|\nabla_{\bm\theta}\Loss_1(\bm\theta_{t-1})\|^2 - \phi_{t,d} \| \nabla_{\bm\theta}\Loss_1(\bm\theta_{t-1})\|\| \nabla_{\bm\theta}\Loss_1(\bm\theta_{t-1}) - \mathbf{g}_{t,d} \| \\
    &\stackrel{(2)}{\leq} -\frac{\eta}{b} \sum_{d \in B_{t, 1}} \phi_{t,d} \|\nabla_{\bm\theta}\Loss_1(\bm\theta_{t-1})\|^2 - \phi_{t,d} \xi_1 \| \nabla_{\bm\theta}\Loss_1(\bm\theta_{t-1})\| ,
\end{align*}
where (1) is due to Cauchy–Schwarz inequality and (2) is due to the uniform boundedness assumption. Likewise,
\begin{align*}
    \text{Term \eqpartref{D}} &\leq -\frac{\eta}{b} \sum_{d \in B_{t, 2}} \phi_{t,d} \left\langle \nabla_{\bm\theta}\Loss_1(\bm\theta_{t-1}), \nabla_{\bm\theta}\Loss_2(\bm\theta_{t-1}) \right\rangle - \phi_{t,d} \xi_2 \| \nabla_{\bm\theta}\Loss_1(\bm\theta_{t-1})\| \\
    &= -\frac{\eta}{b} \sum_{d \in B_{t, 2}} \phi_{t,d} \| \nabla_{\bm\theta}\Loss_1(\bm\theta_{t-1})\|\| \nabla_{\bm\theta}\Loss_2(\bm\theta_{t-1}) \| \cos(\angle_t) - \phi_{t,d} \xi_2 \| \nabla_{\bm\theta}\Loss_1(\bm\theta_{t-1})\|,
\end{align*}
where (recall that) $\angle_t$ is the angle between the two owner's repective summed gradients $\nabla_{\bm\theta}\Loss_1(\bm\theta_{t-1})$ and $\nabla_{\bm\theta}\Loss_2(\bm\theta_{t-1})$. Therefore,
\begin{align*}
    \text{Term \eqpartref{A}} &\leq -\frac{\eta}{b} \sum_{d \in B_{t, 1}} \left(\phi_{t,d} \|\nabla_{\bm\theta}\Loss_1(\bm\theta_{t-1})\|^2 - \phi_{t,d} \xi_1 \| \nabla_{\bm\theta}\Loss_1(\bm\theta_{t-1})\|\right)  \\
    &\phantom{{}={}} - \frac{\eta}{b} \sum_{d \in B_{t, 2}} \left(\phi_{t,d} \| \nabla_{\bm\theta}\Loss_1(\bm\theta_{t-1})\|\| \nabla_{\bm\theta}\Loss_2(\bm\theta_{t-1}) \| \cos(\angle_t) - \phi_{t,d} \xi_2 \| \nabla_{\bm\theta}\Loss_1(\bm\theta_{t-1})\|\right)  \\
    &\phantom{{}={}} -\eta\left\langle\nabla_{\bm\theta}\Loss_1(\bm\theta_{t-1}), \frac{1}{b}\mathbf{Z}_t \right\rangle. \numberthis\label{eqn:part-a}
\end{align*}
Now we move on to term \eqpartref{B}:
\begin{align*}
    \text{Term \eqpartref{B}} &= \frac{L_2 \eta^2}{2} \left\|\frac{1}{b}(\overline{\mathbf{G}}_t + \mathbf{Z}_t)\right\|^2 \\
    &= \frac{L_2 \eta^2}{2b^2} \left\langle \overline{\mathbf{G}}_t + \mathbf{Z}_t, \overline{\mathbf{G}}_t + \mathbf{Z}_t \right\rangle \\
    &= \eqpart{\frac{L_2 \eta^2}{2b^2} \|\overline{\mathbf{G}}_t\|^2}{E} + \frac{L_2 \eta^2}{b^2} \langle \overline{\mathbf{G}}_t, \mathbf{Z}_t \rangle + \frac{L_2 \eta^2}{2b^2} \|\mathbf{Z}_t\|^2.
\end{align*}
\begin{align*}
    \text{Term \eqpartref{E}} &= \frac{L_2 \eta^2}{2b^2} \left\|\sum_{d \in B_{t,1}} \bar{\mathbf{g}}_{t,d} + \sum_{d \in B_{t,2}} \bar{\mathbf{g}}_{t,d}\right\|^2 \\
    &\stackrel{(3)}{\leq} \eqpart{\frac{L_2 \eta^2}{b^2} \left\| \sum_{d \in B_{t,1}} \bar{\mathbf{g}}_{t,d} \right\|^2}{F} + \eqpart{\frac{L_2 \eta^2}{b^2} \left\| \sum_{d \in B_{t,2}} \bar{\mathbf{g}}_{t,d} \right\|^2}{G},
\end{align*}
where (3) is due to Cauchy-Schwarz inequality.
\begin{align*}
    \text{Term \eqpartref{F}} &\stackrel{(4)}{\leq} \frac{L_2 \eta^2 |B_{t,1}| }{b^2} \sum_{d \in B_{t,1}} \| \bar{\mathbf{g}}_{t,d}  \|^2 \\
    &= \frac{L_2 \eta^2 |B_{t,1}| }{b^2} \sum_{d \in B_{t,1}} \| \phi_{t,d}\mathbf{g}_{t,d} \|^2 \\
    &= \frac{L_2 \eta^2 |B_{t,1}| }{b^2} \sum_{d \in B_{t,1}} \phi_{t,d}^2 \|  \nabla_{\bm\theta}\Loss_1(\bm\theta_{t-1}) - (\nabla_{\bm\theta}\Loss_1(\bm\theta_{t-1}) - \mathbf{g}_{t, d}) \|^2 \\
    &= \frac{L_2 \eta^2 |B_{t,1}| }{b^2} \sum_{d \in B_{t,1}} \phi_{t,d}^2 \left(\|\nabla_{\bm\theta}\Loss_1(\bm\theta_{t-1})\|^2 - 2\langle\nabla_{\bm\theta}\Loss_1(\bm\theta_{t-1}), \nabla_{\bm\theta}\Loss_1(\bm\theta_{t-1}) - \mathbf{g}_{t,d}\rangle + \|\nabla_{\bm\theta}\Loss_1(\bm\theta_{t-1}) - \mathbf{g}_{t,d}\|^2\right) \\
    &\stackrel{(5)}{\leq} \frac{L_2 \eta^2 |B_{t,1}| }{b^2} \sum_{d \in B_{t,1}} \phi_{t,d}^2 \left(\|\nabla_{\bm\theta}\Loss_1(\bm\theta_{t-1})\|^2 + 2\|\nabla_{\bm\theta}\Loss_1(\bm\theta_{t-1})\|\| \nabla_{\bm\theta}\Loss_1(\bm\theta_{t-1}) - \mathbf{g}_{t,d}\| + \|\nabla_{\bm\theta}\Loss_1(\bm\theta_{t-1}) - \mathbf{g}_{t,d}\|^2\right) \\
    &\stackrel{(6)}{\leq} \frac{L_2 \eta^2 |B_{t,1}| }{b^2} \sum_{d \in B_{t,1}} \phi_{t,d}^2 \left(\|\nabla_{\bm\theta}\Loss_1(\bm\theta_{t-1})\|^2 + 2\|\nabla_{\bm\theta}\Loss_1(\bm\theta_{t-1})\|\xi_1 + \xi_1^2\right),
\end{align*}
where (4) and (5) are due to Cauchy-Schwarz inequality, (6) is due to the uniform boundedness assumption. Likewise,
\begin{align*}
\text{Term \eqpartref{G}} \leq \frac{L_2 \eta^2 |B_{t,2}| }{b^2} \sum_{d \in B_{t,2}} \phi_{t,d}^2 \left(\|\nabla_{\bm\theta}\Loss_2(\bm\theta_{t-1})\|^2 + 2\|\nabla_{\bm\theta}\Loss_2(\bm\theta_{t-1})\|\xi_2 + \xi_2^2\right).
\end{align*}
Therefore, 
\begin{align*}
    \text{Term \eqpartref{B}} &\leq \frac{L_2 \eta^2 |B_{t,1}| }{b^2} \sum_{d \in B_{t,1}} \phi_{t,d}^2 \left(\|\nabla_{\bm\theta}\Loss_1(\bm\theta_{t-1})\|^2 + 2\|\nabla_{\bm\theta}\Loss_1(\bm\theta_{t-1})\|\xi_1 + \xi_1^2\right) \\
    &\phantom{{}={}} +\frac{L_2 \eta^2 |B_{t,2}| }{b^2} \sum_{d \in B_{t,2}} \phi_{t,d}^2 \left(\|\nabla_{\bm\theta}\Loss_2(\bm\theta_{t-1})\|^2 + 2\|\nabla_{\bm\theta}\Loss_2(\bm\theta_{t-1})\|\xi_2 + \xi_2^2\right) \\
    &\phantom{{}={}} +\frac{L_2 \eta^2}{b^2} \langle \overline{\mathbf{G}}_t, \mathbf{Z}_t \rangle + \frac{L_2 \eta^2}{2b^2} \|\mathbf{Z}_t\|^2 \numberthis\label{eqn:part-b}.
\end{align*}
Let $\mathcal{F}_{t-1}$ denote the filtration at iteration $t$ (i.e., all the algorithm histories up to the start of iteration $t$). Now consider the expectation of terms \eqpartref{A} (Eq.~\ref{eqn:part-a}) and \eqpartref{B} (Eq.~\ref{eqn:part-b}) given filtration $\mathcal{F}_{t-1}$ at iteration $t$ (i.e., now that we know we are already at a particular $\bm\theta_{t-1}$, what happens to the loss after we update the parameters at iteration $t$?):
\begin{align*}
    \E[\text{term \eqpartref{A}} \mathrel{|} \mathcal{F}_{t-1}] &\leq \E\bigg[-\frac{\eta}{b} \sum_{d \in B_{t, 1}} \left(\phi_{t,d} \|\nabla_{\bm\theta}\Loss_1(\bm\theta_{t-1})\|^2 - \phi_{t,d} \xi_1 \| \nabla_{\bm\theta}\Loss_1(\bm\theta_{t-1})\|\right) \\
    &\phantom{{}={}} -\frac{\eta}{b} \sum_{d \in B_{t, 2}} \left( \phi_{t,d} \| \nabla_{\bm\theta}\Loss_1(\bm\theta_{t-1})\|\| \nabla_{\bm\theta}\Loss_2(\bm\theta_{t-1}) \| \cos(\angle_t) - \phi_{t,d} \xi_2 \| \nabla_{\bm\theta}\Loss_1(\bm\theta_{t-1})\|\right) \bigg| \mathcal{F}_{t-1}\bigg] \\
    & \phantom{{}={}} -\eqpart{\E\left[\eta\left\langle\nabla_{\bm\theta}\Loss_1(\bm\theta_{t-1}), \frac{1}{b}\mathbf{Z}_t \right\rangle \middle| \mathcal{F}_{t-1}\right]}{H}.
\end{align*}
Term \eqpartref{H} $= 0$ because $\mathbf{Z}_t$ is a Gaussian noise with zero mean. For the terms before \eqpartref{H}, given filtration $\mathcal{F}_{t-1}$, everything, except $B_{t, 1}, B_{t, 2}$ and hence $\phi_{t,d}$, are constant. Like other works that theoretically analyze DP \citep{abadi2016deep, bassily2019private}, we assume $|B_{t, 1}| = q_1|D_1|$ and $|B_{t, 2}| = q_2|D_2|$. Let $\bar\phi_{t, 1} \coloneq \E_{d \in D_1}[\phi_{t, d} | \mathcal{F}_{t-1}]$ and similarly $\bar\phi_{t, 2} \coloneq \E_{d \in D_2}[\phi_{t, d} | \mathcal{F}_{t-1}]$. We have
\begin{align*}
    \E[\text{term \eqpartref{A}} \mathrel{|} \mathcal{F}_{t-1}] &\leq -\frac{\eta q_1 |D_1|}{b}  \left(\bar\phi_{t,1} \|\nabla_{\bm\theta}\Loss_1(\bm\theta_{t-1})\|^2 - \bar\phi_{t,1} \xi_1 \| \nabla_{\bm\theta}\Loss_1(\bm\theta_{t-1})\|\right) \\
    &\phantom{{}={}} - \frac{\eta q_2 |D_2|}{b}  \left( \bar\phi_{t,2} \| \nabla_{\bm\theta}\Loss_1(\bm\theta_{t-1})\|\| \nabla_{\bm\theta}\Loss_2(\bm\theta_{t-1}) \| \cos(\angle_t) - \phi_{t,2} \xi_2 \| \nabla_{\bm\theta}\Loss_1(\bm\theta_{t-1})\|\right) \\
    &= -\frac{\eta}{b}\big(q_1 |D_1| \bar\phi_{t,1} \|\nabla_{\bm\theta}\Loss_1(\bm\theta_{t-1})\|^2  - q_1 |D_1|\bar\phi_{t,1} \xi_1 \|\nabla_{\bm\theta}\Loss_1(\bm\theta_{t-1})\|\\
    &\phantom{{}={}}+q_2 |D_2| \bar\phi_{t,2} \|\nabla_{\bm\theta}\Loss_1(\bm\theta_{t-1})\| \|\nabla_{\bm\theta}\Loss_2(\bm\theta_{t-1})\|\cos(\angle_t) - q_2|D_2|\bar\phi_{t,2} \xi_2 \|\nabla_{\bm\theta}\Loss_1(\bm\theta_{t-1})\| \big). \numberthis\label{eqn:part-a-expectation}
\end{align*}
Now we move on to the expectation of term \eqpartref{B}:
\begin{align*}
\E[\text{term \eqpartref{B}} \mathrel{|} \mathcal{F}_{t-1}] &\leq \E\bigg[\frac{L_2 \eta^2 |B_{t,1}| }{b^2} \sum_{d \in B_{t,1}} \phi_{t,d}^2 \left(\|\nabla_{\bm\theta}\Loss_1(\bm\theta_{t-1})\|^2 + 2\|\nabla_{\bm\theta}\Loss_1(\bm\theta_{t-1})\|\xi_1 + \xi_1^2\right) \\
&\phantom{{}={}} + \frac{L_2 \eta^2 |B_{t,2}| }{b^2} \sum_{d \in B_{t,2}} \phi_{t,d}^2 \left(\|\nabla_{\bm\theta}\Loss_2(\bm\theta_{t-1})\|^2 + 2\|\nabla_{\bm\theta}\Loss_2(\bm\theta_{t-1})\|\xi_2 + \xi_2^2\right) \bigg|\mathcal{F}_{t-1}\bigg] \\
&\phantom{{}={}} +\eqpart{\E\bigg[\frac{L_2 \eta^2}{b^2} \langle \overline{\mathbf{G}}_t, \mathbf{Z}_t \rangle \bigg| \mathcal{F}_{t-1}\bigg]}{I} + \eqpart{\E\bigg[\frac{L_2 \eta^2}{2b^2} \|\mathbf{Z}_t\|^2\bigg| \mathcal{F}_{t-1}\bigg]}{J}.
\end{align*}
Since the Gaussian noise $\mathbf{Z}_t \sim \mathcal{N}(\mathbf{0}_r, \sigma^2 \mathbf{I}_{r \times r})$, term \eqpartref{I} $= 0$ and term \eqpartref{J} = $\frac{L_2 \eta^2 r \sigma^2}{2b^2}$. We additionally denote  $\bar\phi_{t,1}^{(2)} \coloneq \E_{d \in D_1}[\phi_{t, d}^2 | \mathcal{F}_{t-1}]$ and similarly $\bar\phi_{t,2}^{(2)} \coloneq \E_{d \in D_2}[\phi_{t, d}^2 | \mathcal{F}_{t-1}]$. Then,
\begin{align*}
\E[\text{term \eqpartref{B}} \mathrel{|} \mathcal{F}_{t-1}] &\leq -\frac{\eta}{b}\bigg(-\frac{L_2 \eta}{b}\Big(q_1^2 |D_1|^2 \bar\phi_{t, 1}^{(2)} \left(\|\nabla_{\bm\theta}\Loss_1(\bm\theta_{t-1})\|^2 + 2\|\nabla_{\bm\theta}\Loss_1(\bm\theta_{t-1})\|\xi_1 + \xi_1^2\right) \\
&\phantom{{}={}} + q_2^2 |D_2|^2 \bar\phi_{t,d}^{(2)} \left(\|\nabla_{\bm\theta}\Loss_2(\bm\theta_{t-1})\|^2 + 2\|\nabla_{\bm\theta}\Loss_2(\bm\theta_{t-1})\|\xi_2 + \xi_2^2\right) + \frac{r\sigma^2}{2}\Big)\bigg). \numberthis\label{eqn:part-b-expectation}
\end{align*}
Combining Eq.~\ref{eqn:part-a-expectation} and \ref{eqn:part-b-expectation},
\begin{align*}
\E[\Loss_1(\bm\theta_t) \mathrel{|} \mathcal{F}_{t-1}] - \Loss_1(\bm\theta_{t-1}) &\leq -\frac{\eta}{b} \bigg(q_1 |D_1| \bar\phi_{t,1} \|\nabla_{\bm\theta}\Loss_1(\bm\theta_{t-1})\|^2 \\
&\phantom{{}={}} + q_2 |D_2| \bar\phi_{t,2} \|\nabla_{\bm\theta}\Loss_1(\bm\theta_{t-1})\| \|\nabla_{\bm\theta}\Loss_2(\bm\theta_{t-1})\|\cos(\angle_t) \\
&\phantom{{}={}} - q_1 |D_1|\bar\phi_{t,1} \xi_1 \|\nabla_{\bm\theta}\Loss_1(\bm\theta_{t-1})\| - q_2|D_2|\bar\phi_{t,2}\xi_2 \|\nabla_{\bm\theta}\Loss_1(\bm\theta_{t-1})\| \\
&\phantom{{}={}} -\frac{L_2 \eta}{b}\Big(q_1^2 |D_1|^2 \bar\phi_{t, 1}^{(2)} \left(\|\nabla_{\bm\theta}\Loss_1(\bm\theta_{t-1})\|^2 + 2\|\nabla_{\bm\theta}\Loss_1(\bm\theta_{t-1})\|\xi_1 + \xi_1^2\right) \\
&\phantom{{}={}} + q_2^2 |D_2|^2 \bar\phi_{t,d}^{(2)} \left(\|\nabla_{\bm\theta}\Loss_2(\bm\theta_{t-1})\|^2 + 2\|\nabla_{\bm\theta}\Loss_2(\bm\theta_{t-1})\|\xi_2 + \xi_2^2\right) + \frac{r\sigma^2}{2}\Big) \bigg) \\
&= -\frac{\eta}{b}\bigg(\eqpart{\left(q_1 |D_1| \bar\phi_{t,1} + q_2 |D_2| \bar\phi_{t,2} \frac{\|\nabla_{\bm\theta}\Loss_2(\bm\theta_{t-1})\|}{\|\nabla_{\bm\theta}\Loss_1(\bm\theta_{t-1})\|} \cos(\angle_t)\right)}{K} \|\nabla_{\bm\theta}\Loss_1(\bm\theta_{t-1})\|^2  \\
&\phantom{{}={}} \eqpart{- \left(q_1 |D_1| \bar\phi_{t,1} \xi_1 + q_2 |D_2| \bar\phi_{t,2} \xi_2\right) \|\nabla_{\bm\theta}\Loss_1(\bm\theta_{t-1})\|}{L} \\
&\phantom{{}={}} \underbrace{-\frac{L_2 \eta}{b}\Big(q_1^2 |D_1|^2 \bar\phi_{t, 1}^{(2)} \left(\|\nabla_{\bm\theta}\Loss_1(\bm\theta_{t-1})\|^2 + 2\|\nabla_{\bm\theta}\Loss_1(\bm\theta_{t-1})\|\xi_1 + \xi_1^2\right)}_{\text{\hypertarget{eqpart:M}{\textcolor{blue}{\text{(M)}}}, part 1}} \\
&\phantom{{}={}} \underbrace{+ q_2^2 |D_2|^2 \bar\phi_{t,d}^{(2)} \left(\|\nabla_{\bm\theta}\Loss_2(\bm\theta_{t-1})\|^2 + 2\|\nabla_{\bm\theta}\Loss_2(\bm\theta_{t-1})\|\xi_2 + \xi_2^2\right) + \frac{r\sigma^2}{2}\Big)}_{\text{\textcolor{blue}{(M)}, part 2}} \bigg). \numberthis\label{eqn:expected-change-in-loss}
\end{align*}
In order to guarantee a decreasing loss in expectation, the learning rate $\eta$ should be chosen such that the above term $\leq 0$. Note that terms \eqpartref{L} and \eqpartref{M} are never positive. Thus, term \eqpartref{K} must be positive, and it should be positive enough at least to compensate term \eqpartref{L} (since we can control term \eqpartref{M} by varying the step size $\eta$). Let $\psi \coloneq \left(q_1 |D_1| \bar\phi_{t,1} \xi_1 + q_2 |D_2| \bar\phi_{t,2} \xi_2\right) \|\nabla_{\bm\theta}\Loss_1(\bm\theta_{t-1})\| / \|\nabla_{\bm\theta}\Loss_1(\bm\theta_{t-1})\|^2$ (clearly $\psi \geq 0$)
and we arrive at the following condition: 
\begin{align}
    q_1 |D_1| \bar\phi_{t,1} + q_2 |D_2| \bar\phi_{t,2} \frac{\|\nabla_{\bm\theta}\Loss_2(\bm\theta_{t-1})\|}{\|\nabla_{\bm\theta}\Loss_1(\bm\theta_{t-1})\|} \cos(\angle_t) > \psi. \label{eqn:condition-raw}
\end{align}
$q_1 |D_1|$ and $q_2 |D_2|$ are clearly controlled by the individualized sampling rates $q_1$ and $q_2$ specified by \idpsgd\ (e.g., the \sample\ variant). Now we will break down $\bar\phi_{t,1}$ and $\bar\phi_{t,2}$ to see how the above is also controlled by the individualized clipping thresholds $C_1$ and $C_2$ specified by \idpsgd\ (e.g., the \scale\ variant). We have
$\bar\phi_{t,1} = \E_{d \in D_1}[\phi_{t, d} | \mathcal{F}_{t-1}] = \E_{d \in D_1}[\min\left\{1, C_1 / \|\mathbf{g}_{t,d}\|\right\} | \mathcal{F}_{t-1}]$.The function $\min\{1, C_1 / x\}$ has derivative $- C_1 x^{-2}$ on $(C_1, +\infty)$. Thus, when $x \in [\|\nabla_{\bm\theta}\Loss_1(\bm\theta_{t-1})\| - \xi_1, \|\nabla_{\bm\theta}\Loss_1(\bm\theta_{t-1})\| + \xi_1]$ (by uniform boundedness and triangle inequality), the function $\min\{1, C_1 / x\}$ is locally Lipschitz with Lipschitz constant $C_1 / (\max\{\|\nabla_{\bm\theta}\Loss_1(\bm\theta_{t-1})\| - \xi_1, C_1\})^2$. Also, the $L^2$-norm $\|\cdot\|$ is $1$-Lipschitz. Thus, we have
\begin{align}
    \left|\bar\phi_{t,1} - \min\left\{1, \frac{C_1}{\|\nabla_{\bm\theta}\Loss_1(\bm\theta_{t-1})\|}\right\}\right| \leq \frac{C_1}{(\max\{\|\nabla_{\bm\theta}\Loss_1(\bm\theta_{t-1})\| - \xi_1, C_1\})^2} \cdot \xi_1. \label{eqn:phi-1-bound}
\end{align}
Similarly, 
\begin{align}
    \left|\bar\phi_{t,2} - \min\left\{1, \frac{C_2}{\|\nabla_{\bm\theta}\Loss_2(\bm\theta_{t-1})\|}\right\}\right| \leq \frac{C_2}{(\max\{\|\nabla_{\bm\theta}\Loss_2(\bm\theta_{t-1})\| - \xi_2, C_2\})^2} \cdot \xi_2. \label{eqn:phi-2-bound}
\end{align}
That is, $\bar\phi_{t,1}$ lies in a small region around the clipping factor for owner $1$'s full gradient, $\min\{1, C_1/\|\nabla_{\bm\theta}\Loss_1(\bm\theta_{t-1})\|\}$ and $\bar\phi_{t,2}$ lies in a small region around the clipping factor for owner $2$'s full gradient, $\min\{1, C_2/\|\nabla_{\bm\theta}\Loss_2(\bm\theta_{t-1})\|\}$. Denote the right-hand sides of Eq.~\ref{eqn:phi-1-bound} and Eq.~\ref{eqn:phi-2-bound} as $\varsigma_1 > 0$ and $\varsigma_2 > 0$ respectively. To guarantee Eq.~\ref{eqn:condition-raw} to hold, we need 
\begin{align*}
q_1 |D_1| \min\left\{1, \frac{C_1}{\|\nabla_{\bm\theta}\Loss_1(\bm\theta_{t-1})\|}\right\} &+ q_2 |D_2| \min\left\{1, \frac{C_2}{\|\nabla_{\bm\theta}\Loss_2(\bm\theta_{t-1})\|}\right\} \frac{\|\nabla_{\bm\theta}\Loss_2(\bm\theta_{t-1})\|}{\|\nabla_{\bm\theta}\Loss_1(\bm\theta_{t-1})\|} \cos(\angle_t) \\
&> q_1 |D_1| \varsigma_1 + q_2 |D_2| \varsigma_2 + \psi.  
\end{align*}
To fully utilize the privacy budgets, except at the very end of training, the clipping thresholds $C_1$ and $C_2$ should be set smaller than the raw gradients (works on \textit{adaptive clipping thresholds} \citep{bu2023automatic, xia2023differentially} actually enforce this). Thus,  the above reduces to
\begin{align*}
q_1 |D_1| C_1 / \|\nabla_{\bm\theta}\Loss_1(\bm\theta_{t-1})\| + q_2 |D_2| C_2 \cos(\angle_t)/ \|\nabla_{\bm\theta} \Loss_1(\bm\theta_{t-1})\| > q_1 |D_1| \varsigma_1 + q_2 |D_2| \varsigma_2 + \psi,
\end{align*}
which requires $q_1 |D_1| C_1 + q_2 |D_2| C_2 \cos(\angle_t) > 0$ even when the bound is tightest (i.e., $\varsigma_1 = \varsigma_2 = 0$). Therefore, letting $\varsigma = \|\nabla_{\bm\theta}\Loss_1(\bm\theta_{t-1})\| (q_1 |D_1| \varsigma_1 + q_2 |D_2| \varsigma_2 + \psi) / (q_1 |D_1| C_1)$ (which is always positive), the condition can be reduced to Eq.~\ref{eqn:condition-refined}.

When Eq.~\ref{eqn:condition-refined} and hence Eq.~\ref{eqn:condition-raw} are satisfied, referring to the expected loss change in Eq.~\ref{eqn:expected-change-in-loss}, by choosing a $\eta$ smaller than 
\begin{equation*}
    \text{\scalebox{0.8}{$ %
    \frac{\underbrace{\left(q_1 |D_1| \bar\phi_{t,1} + q_2 |D_2| \bar\phi_{t,2} \frac{\|\nabla_{\bm\theta}\Loss_2(\bm\theta_{t-1})\|}{\|\nabla_{\bm\theta}\Loss_1(\bm\theta_{t-1})\|} \cos(\angle_t)\right)}_{\text{\eqpartref{K}}} \|\nabla_{\bm\theta}\Loss_1(\bm\theta_{t-1})\|^2 - \underbrace{\left(q_1 |D_1| \bar\phi_{t,1} \xi_1 + q_2 |D_2| \bar\phi_{t,2} \xi_2\right) \|\nabla_{\bm\theta}\Loss_1(\bm\theta_{t-1})\|}_{\text{\eqpartref{L}}}}{\underbrace{\frac{L_2}{b} \left(q_1^2 |D_1|^2 \bar\phi_{t, 1}^{(2)} \left(\|\nabla_{\bm\theta}\Loss_1(\bm\theta_{t-1})\|^2 + 2\|\nabla_{\bm\theta}\Loss_1(\bm\theta_{t-1})\|\xi_1 + \xi_1^2\right) + q_2^2 |D_2|^2 \bar\phi_{t,d}^{(2)} \left(\|\nabla_{\bm\theta}\Loss_2(\bm\theta_{t-1})\|^2 + 2\|\nabla_{\bm\theta}\Loss_2(\bm\theta_{t-1})\|\xi_2 + \xi_2^2\right) + \frac{r\sigma^2}{2}\right)}_{\text{\eqpartref{M} without } \eta}},
$}}
\end{equation*}
the loss is expected to decrease. Note that both the numerator and the denominator are guaranteed to be positive by Eq.~\ref{eqn:condition-raw}.

\textit{Tightness of the result.} We first show that whenever $\smash{1 + \cos(\angle_t) \frac{q_2 |D_2| C_2}{q_1 |D_1| C_1} < 0}$, the losses for both owners are not guaranteed to decrease together. Consider number of parameters $r =2$ and objectives $\Loss_1(\bm\theta) = \theta_1^2 + (100-\theta_2)^2$ and $\Loss_2(\bm\theta) = (100-\theta_1)^2 + \theta_2^2$, the clipping thresholds for owner $1$ and $2$ are $C_1 = 1$ and $C_2$ respectively and $1$ datum is sampled from each owner, i.e., $q_1 |D_1| = q_2|D_2| = 1$. Suppose now $\bm\theta_{t-1} = \smash{\binom{40}{40}}$. Suppose that the mean gradient of each owner $n$ is the actual gradient $\nabla_{\bm\theta} \Loss_n(\bm\theta_{t-1})$, that is, $\smash{\binom{80}{-120}}$ for owner $1$ and $\smash{\binom{-120}{80}}$ for owner $2$. In this case, $\cos(\angle_t) = (80 \times (-120) + (-120) \times 80) / (120^2 + 80^2) = -12/13$. Suppose that for both owners, if their respective mean gradient is $\mathbf{g}$, the stochastic gradient is a random sample from $\smash{\binom{\mathrm{Uniform}(g_1 - 50, g_1 + 50)}{\mathrm{Uniform}(g_2 - 5, g_2 + 5)}}$. Assuming that the DP noise is negligibly small\footnote{In our proof, we can see that the DP noise appears only in term \eqpartref{M} which can be compensated by a sufficiently small learning rate $\eta$.}, the expected sum of clipped gradient is 
\begin{align*}
\begin{pmatrix}
    \frac{1}{10^{3}}
\int_{30}^{130}\!\!\int_{-125}^{-115}
\frac{g_{1}}{\sqrt{g_{1}^{2}+g_{2}^{2}}}\intd g_{2}\intd g_{1} \\
    \frac{1}{10^{3}}
\int_{30}^{130}\!\!\int_{-125}^{-115}
\frac{g_{2}}{\sqrt{g_{1}^{2}+g_{2}^{2}}}\intd g_{2}\intd g_{1}
\end{pmatrix} + 
\begin{pmatrix}
    \frac{1}{10^3}\int_{-170}^{-70}\!\!\int_{75}^{85}
\frac{C_2 g_{1}}{\sqrt{g_{1}^{2}+g_{2}^{2}}}\intd g_{2}\intd g_{1} \\
    \frac{1}{10^{3}}\int_{-170}^{-70}\!\!\int_{75}^{85}
\frac{C_2 g_{2}}{\sqrt{g_{1}^{2}+g_{2}^{2}}}\intd g_{2}\intd g_{1}
\end{pmatrix} \approx
\begin{pmatrix}
    0.532 \\
    -0.830
\end{pmatrix} + C_2 \begin{pmatrix}
    -0.816 \\
    0.566
\end{pmatrix}.
\end{align*}
    The loss for owner $1$ will decrease only when the expected summed gradient lies between $\smash{\binom{-120}{-80}}$ and owner $1$'s mean gradient, i.e., $C_2 < 1.067$, that is, $\smash{1 + \cos(\angle_t) \frac{q_2 |D_2| C_2}{q_1 |D_1| C_1} < 0.0148}$. This suffices to show that whenever $\smash{1 + \cos(\angle_t) \frac{q_2 |D_2| C_2}{q_1 |D_1| C_1} < 0}$, the losses for both owners are not guaranteed to decrease together. This also shows that the problem-specific positive term $\varsigma$ in the right-hand side of Eq.~\ref{eqn:condition-refined} is necessary, as compared with the original \citet{francazi2023theoretical}'s condition (Eq.~\ref{eqn:mid}).

In our proof, the bound for term \eqpartref{A} in Eq.~\ref{eqn:part-a} can achieve equality when all the stochastic gradients for an owner $n$ are in the same direction and their norms are either $\| \nabla_{\bm\theta}\Loss_n(\bm\theta_{t-1}) - \xi_n\|$ or $\| \nabla_{\bm\theta}\Loss_n(\bm\theta_{t-1}) + \xi_n\|$, so it is tight. This is important because only term \eqpartref{A} is responsible for the condition (Eq.~\ref{eqn:condition-refined}). Term \eqpartref{B} is only related to term \eqpartref{M} in Eq.~\ref{eqn:expected-change-in-loss} and thus even if a tighter bound can be attained, one will arrive at a similar result by choosing a slightly larger learning rate $\eta$. Therefore, we choose a reasonably good bound for \eqpartref{B}.

\subsection{The \inosgd\ Algorithm} \label{app:the-inosgd-algorithm}

\subsubsection{Unsuitability of Straightforward Approaches} \label{app:unsuitability-of-straightforward-approaches}

In this section, we give a list of straightforward approaches to address IDP-induced utility imbalance that might be considered at first glance. We will argue why all of them are inadequate:
\begin{itemize}[leftmargin=*,itemsep=0pt,topsep=0pt,parsep=0pt,partopsep=0pt]
    \item \textbf{Enforcing the lowest privacy budget for all owners} (e.g., by undersampling or using a smaller clipping thresholds): This approach aligns with \dpsgd, which under-utilizes IDP budgets and has a significantly lower overall model utility compared to \idpsgd\ \citep{boenisch2024have} (see empirical evidence in App.~\ref{app:pareto-superiority}). Our algorithm will fully utilize each owners' IDP budgets instead.
    \item \textbf{Randomly downscaling owners with higher privacy budgets} (e.g., by down-weighting or undersampling): This approach has several issues. Firstly, for multiple data owners and privacy budgets, it is unclear how to set (i) the cutoff between less and more private owners and (ii) how much to downscale from each less private owner. For (i), for example, for $10$ owners with $\epsilon_n \in [0.5, 5]$ evenly, it is unsuitable to directly use the mean $2.25$ as the cutoff since the impact of privacy does not scale linearly with privacy budgets and is hard to quantify \textit{a priori}; for (ii), downscaling in the same way from owners with $\epsilon_n = 3$ and $\epsilon_n = 5$ is unsuitable, as there is still imbalance between these two and downscaling owner $3$ incurs different utility loss than owner $5$. Due to (i) and (ii), when there are $N$ owners, finding the optimal number of data to drop from \textbf{each} owner requires $O(N)$ hyperparameters, making the approach inadequate. Secondly, this approach also under-utilizes IDP budgets and leads to a worse overall model utility. We provide empirical evidence in App.~\ref{app:choice-of-baseline} and App.~\ref{app:pareto-superiority}. Thirdly, when some data from less private owners still have higher loss (e.g., a harder-to-predict one), randomly downscaling would undesirably worsen utility imbalance. Our algorithm would order data by their losses to identify the important data we should fully keep. This is sensible as the importance of gradient information of each datum in batch $B_t$ at iteration $t$ directly correlates with their individual loss values \citep{kawaguchi2020ordered, shrivastava2016training}.
    \item \textbf{Keeping samples with top-$\mu$ losses}: As an example, consider a batch $B$ with $16$ data and we keep only the gradients with top-$8$ losses. A private datum $d$ is added to batch $B$ and its loss is ranked $1$st. The original $8$-th datum $d'$ will no longer be in top-$8$ and will be discarded. Suppose that the clipping thresholds of $d$ and $d'$ are $C_{o(d)}$ and $C_{o(d')}$ (recall that $o(d)$ refers to owner of $d$). The \ltwo-norm of change in the sum of gradients within batch $B$ can reach a maximum of $C_{o(d)} + C_{o(d')}$ (by triangle inequality), which contradicts the requirement that the addition of $d$ should only cause a change of \ltwo-norm $C_{o(d)}$ in the sum of gradients. Thus, keeping samples with top-$\mu$ losses does not satisfy DP (when all clipping thresholds equate to $C$, the change can reach $2C$) nor IDP.
    \item \textbf{Discarding samples with smallest-$\mu$ losses}: Consider the DP case when all clipping thresholds are equal. Suppose again that a batch $B$ of $16$ data is sampled and we remove the smallest-$8$ losses. When a private datum $d$ is added to $B$, if its loss is larger than the datum with $9$-th loss, it will be kept and the norm of change in the summed gradient is at most $C_{o(d)}$; if its loss is smaller than the datum with $9$-th loss, it will be discarded and the original $9$-th datum $d'$'s gradient will be kept instead of discarded. The norm of change in the summed gradient is still at most $C_{o(d')}$. When $d$ is instead already in $B$ and will be removed from $B$, the analysis will be similar to the norm of change when $d$ is added to $B \setminus \{d\}$. Thus, this approach actually satisfies DP. However, it does not satisfy IDP. In the case of IDP, when $d$ is added to the batch and its loss is smaller than the $9$-th loss, it will be discarded. The original $9$-th datum, say $d'$, will be kept instead of discarded, causing a change in the summed gradient with norm $C_{o(d')}$, which can be larger than $C_{o(d)}$. Thus, the modular sensitivity $\Delta_\Alg^d$ is no longer $C_{o(d)}$ and this approach does not satisfy IDP.
\end{itemize}

\subsubsection{Alternative Sorting Orders} \label{app:alternative-sorting-orders}

In the main paper we propose using the order of loss to rank the importance of data within each batch. However, it can in fact work with many other sorting orders since it is an \inosgm\ mechanism as introduced in Sec.~\ref{subsubsec:inosgd-privacy} and App.~\ref{app:inosgm}.

One example is a mixed order: we may rank data primarily based on their ownership (i.e., data from the more private data owners are ranked higher) followed by loss values within each owner. In this way, ideally, gradients from the more private data will always be ranked very high and they will never fall into the tail region and be discarded or down-weighted. However, we find that this order does not perform as well as directly using the order of loss:
\begin{itemize}[leftmargin=*,itemsep=0pt,topsep=0pt,parsep=0pt,partopsep=0pt]
    \item Sometimes, the less private owner's data can be difficult to learn and still have large losses and vice versa. \inosgd\ using this order would undesirably downweight such data by assigning them low importance scores. 
    \item The least private owner's data are always at the tail, and they could never be learnt. For example, in our \textbf{CM-PC} experiments (see App.~\ref{app:experiment-setup}) with $10$ data owners, the least private owner $10$'s maximum accuracy is only $0.2\%$ throughout the learning dynamics using this order, which means that its data is always severely down-weighted and barely learnt.
\end{itemize}

\subsubsection{Batch Importance Function} \label{app:batch-goodness-function}

Based on the transformation described in Sec.~\ref{subsec:inosgd-algorithm}, we include the formal definition of BIF below:
\begin{definition}[BIF] \label{def:batch-goodness-function}
    Given a preset TIF $\tgf: [0, \gamma] \to [0, 1]$ and a sampled batch $B_t$ of data whose sum of clipping thresholds equals $\Gamma_t$, the \textit{batch importance function} (BIF) $f_t: [0, \Gamma_t] \to [0, 1]$ is defined in the following way (under the notation of ordered pairs): \begin{itemize}
        \item If $\Gamma_t \geq \gamma$, then $f_t = \{(c, 1) : c \in [0, \Gamma_t - \gamma]\} \cup \{(c, \tgf(c - (\Gamma _t- \gamma)) : c \in (\Gamma_t - \gamma, \Gamma_t])\}$;
        \item If $\Gamma_t < \gamma$, then $f_t = \{(c, \tgf(c + (\gamma - \Gamma_t))) : c \in [0, \Gamma_t]\}$.
    \end{itemize}
\end{definition}

Since TIF is non-increasing and Riemann-integrable, it can be clearly seen that the BIF is also non-increasing and Riemann-integrable, which justifies how \inosgd\ computes per-sample importance score via integration.

\subsubsection{Runtime of \inosgd} \label{app:runtime-of-inosgd}

We highlight that the runtime of INO-SGD does not differ much from that of IDP-SGD despite the additional operations, i.e., sorting and integrating:
\begin{itemize}[leftmargin=*,itemsep=0pt,topsep=0pt,parsep=0pt,partopsep=0pt]
    \item The most costly steps, including backpropagation and per-sample gradient processing (clipping) are shared by both IDP-SGD and INO-SGD.
    \item The time taken to sort the gradients in batch $B_t$ is $\Oh(|B_t| \log |B_t|)$, which is negligible compared with the backpropagation time.
    \item Simple TIFs, such as a step function (e.g., in App.~\ref{app:alternative-tif}), can be integrated exactly in $\Oh(1)$ time. The integration of more complex TIFs can be parallelized within each batch. Moreover, since the same TIF is used throughout training, the model owner can pre-compute a \textit{fast integration array}: \begin{enumerate}
        \item Pick a number $\kappa$ much larger than the expected sum of clipping thresholds $\sum_{n \in [N]} q_n|D_n|C_n$;
        \item Compute a BIF $f_\kappa: [0, \kappa] \to [0, 1]$ based on the TIF $\tgf$;
        \item Compute the cumulative integrals $F_\kappa(x) \coloneq \int_0^x f_\kappa(c) \intd c$ for $c = 0, \cdots, \kappa$ up to a certain precision. The $F_\kappa(x)$'s form the fast integration array.
        \item At each iteration $t$, $\int_{c_1}^{c_2} f_t(c) \intd c = \int_{c_1 + \kappa - \Gamma_t}^{c_2 + \kappa - \Gamma_t} f_\kappa(c) \intd c = F_\kappa(c_2 + \kappa - \Gamma_t) - F_\kappa(c_1 + \kappa - \Gamma_t)$ becomes a simple $\Oh(1)$ lookup.
    \end{enumerate}
\end{itemize}
On CIFAR-10 with batch size $1024$, the wallclock runtime of $1$ epoch in seconds (averaged over $10$ epochs) for \idpsgd\ and \inosgd\ are $11.80 \pm 1.21$ and $11.89 \pm 0.64$ respectively, which supports our claim that \inosgd\ introduces negligible computational overhead.

\subsubsection{Remarks on Tail Importance Functions} \label{app:alternative-tif}

In this section we first give an example of an alternative TIF: the model owner can use a step function that splits the $\gamma$-tail into partitions and specifies constant importance scores within each partition: $1/2$ for the first, $1/4$ for the second, $\cdots$, $0$ for the last. App.~\ref{app:components-of-inosgd} shows that \inosgd\ can still give competitive performance for such functions.

Although \inosgd\ may work with various forms of TIFs, we still recommend the Beta-distribution-based TIF mentioned in Sec.~\ref{subsec:inosgd-algorithm} in the main paper, which allows the model owner to flexibly control the TIF shape with only a few parameters. Below is a guideline on how to tune these hyperparameters in practice:
\begin{itemize}[leftmargin=*,itemsep=0pt,topsep=0pt,parsep=0pt,partopsep=0pt]
    \item The TIF with some tail length $\gamma_1$ and some $\upbeta_1$ has a similar shape to another TIF with a larger tail length $\gamma_2 >\gamma_1 $  and larger beta $\upbeta_2 > \upbeta_1$, so it suffices to fix a moderately large tail length $\gamma$ (e.g., 50\% of the total clipping threshold) to only tune $\upalpha$ and $\upbeta$; OR
    \item fix $\upalpha$ and $\upbeta$ (e.g., $1$ and $1$) and set the tail length to (or only tune the tail length around) a certain percentage of the expected sum of clipping thresholds for each batch.
\end{itemize}
We demonstrate that \textbf{INO-SGD is robust to hyperparameter choice} in Sec.~\ref{subsec:other-empirical-benefits} and App.~\ref{app:components-of-inosgd}. Thus, tuning according to the above guidelines is practical in real deployments. 
When the model owner is willing to trade more overall utility for less imbalance, they can manage the tradeoff by increasing the tail length $\gamma$ (the easiest way), or setting a larger $\upalpha$ and smaller $\upbeta$. 

\subsection{Theoretical Analysis of \inosgd} \label{app:theoretical-analysis-of-inosgd}

\subsubsection{Proof and Discussions for Theorem \ref{thm:inosgd-privacy}} \label{app:proof-of-inosgd-privacy}

In this section, we give a formal proof of Thm.~\ref{thm:inosgd-privacy} (privacy of \inosgd). Consider the sub-algorithm in Line 7-16 of Alg.~\ref{alg:inosgd} which takes in a batch of data $B_t$ and returns the sum of clipped gradients $\overline{\mathbf{G}}_t$. Let $c_0 = 0$ and $c_k$ denote the accumulated clipping threshold until (and including) the $k$-th ranked datum $d_{\pi_t(k)}$. We first prove that for any datum $d$, the modular sensitivity $\Delta_\Alg^d$ of the algorithm at iteration $t$ is bounded by $d$'s corresponding clipping threshold $C_{o(d)}$. Suppose that a new datum $d_\ra$ is added to the batch $B_t$. Without loss of generality, assume that $d_\ra$ is ranked the $a$-th in the new batch $B_t \cup \{d_\ra\}$. The importance function changes from $f_t$ to $f_t^\ra$. It can be seen that all data that are ranked higher than $d_\ra$ receive larger gradient weights while data that are ranked lower than $d_\ra$ are not affected. Therefore, the change in the sum of clipped gradients $\overline{\mathbf{G}}_t$ consists of $d_\ra$'s (weighted) clipped gradient and the increase in (weighted) clipped gradients of all data that are ranked higher than $d_\ra$, that is,
\begin{equation}
    \frac{\int_{c_{a-1}}^{c_a} f_t^\ra \ \intd c}{C_{o(d)}} \cdot \Bar{\mathbf{g}}_{d_\ra} + \sum_{k=1}^{a-1} \left(\frac{\int_{c_{k-1}}^{c_k} f_t^\ra \ \intd c}{c_k - c_{k-1}} - \frac{\int_{c_{k-1}}^{c_k} f_t \ \intd c}{c_k - c_{k-1}}\right) \cdot \Bar{\mathbf{g}}_{d_k}.
\end{equation}
Because of how BIF is constructed from TIF, we have $\int_{c_{k-1}}^{c_k} f_t^\ra \ \intd c - \int_{c_{k-1}}^{c_k} f_t \ \intd c = \int_{c_{k-1}}^{c_k} f_t^\ra \ \intd c - \int_{c_{k-1} + C_{o(d_\ra)}}^{c_k + C_{o(d_\ra)}} f_t^\ra \ \intd c = \int_{c_{k-1}}^{c_{k-1} + C_{o(d_\ra)}} f_t^\ra \ \intd c - \int_{c_k}^{c_k + C_{o(d_\ra)}} f_t^\ra \ \intd c$. Thus,
\begin{align*}
    & \phantom{{}={}} \left\|\frac{\int_{c_{a-1}}^{c_a} f_t^\ra \ \intd c}{C_{o(d_\ra)}} \cdot \Bar{\mathbf{g}}_{d_\ra} + \sum_{k=1}^{a-1} \left(\frac{\int_{c_{k-1}}^{c_k} f_t^\ra \ \intd c}{c_k - c_{k-1}} - \frac{\int_{c_{k-1}}^{c_k} f_t \ \intd c}{c_k - c_{k-1}}\right) \cdot \Bar{\mathbf{g}}_{d_k}\right\| \\
    & \stackrel{(1)}{\leq} \frac{\int_{c_{a-1}}^{c_a} f_t^\ra \ \intd c}{C_{o(d_\ra)}} \left\|\Bar{\mathbf{g}}_{d_\ra}\right\| + \sum_{k=1}^{a-1} \frac{\int_{c_{k-1}}^{c_{k-1} + C_{o(d_\ra)}} f_t^\ra \ \intd c - \int_{c_k}^{c_k + C_{o(d_\ra)}} f_t^\ra \ \intd c}{c_k - c_{k-1}} \left\|\Bar{\mathbf{g}}_{d_k}\right\| \\
    & = \int_{c_{a-1}}^{c_a} f_t^\ra \ \intd c \cdot \frac{\left\|\Bar{\mathbf{g}}_{d_\ra}\right\|}{C_{o(d_\ra)}} + \sum_{k=1}^{a-1} (\int_{c_{k-1}}^{c_{k-1} + C_{o(d_\ra)}} f_t^\ra \ \intd c - \int_{c_k}^{c_k + C_{o(d_\ra)}} f_t^\ra \ \intd c) \cdot \frac{\left\|\Bar{\mathbf{g}}_{d_k}\right\|}{c_k - c_{k-1}} \\
    & \stackrel{(2)}{\leq} \int_{c_{a-1}}^{c_a} f_t^\ra \ \intd c + \sum_{k=1}^{a-1} (\int_{c_{k-1}}^{c_{k-1} + C_{o(d_\ra)}} f_t^\ra \ \intd c - \int_{c_k}^{c_k + C_{o(d_\ra)}} f_t^\ra \ \intd c) \\
    & \stackrel{(3)}{=} \int_{c_{a-1}}^{c_a} f_t^\ra \ \intd c + \int_{c_{0}}^{c_{0} + C_{o(d_\ra)}} f_t^\ra \ \intd c - \int_{c_{a-1}}^{c_a} f_t^\ra \ \intd c \\
    & = \int_{c_{0}}^{c_{0} + C_{o(d_\ra)}} f_t^\ra \ \intd c \\
    & \leq C_{o(d_\ra)}.
\end{align*}
Here (1) is because of triangle inequality and the fact that BIF is non-increasing; 
(2) is because $\Bar{\mathbf{g}}_{d} = \mathbf{g}_{d} / \max\{1, \|\mathbf{g}_{d}\| / C_{o(d)}\}$ and hence $\|\Bar{\mathbf{g}}_{d}\| \leq C_{o(d)}$ for any datum $d$; (3) is by telescoping sum. 

When a datum $d_\rd$ is deleted from the batch $B_t$, the change in sum of clipped gradients is equal to the case where $d_\rd$ is added to the batch $B_t \setminus \{d_\rd\}$. The \ltwo-norm of the change is hence bounded by $d_\rd$'s clipping threshold $C_{o(d_\rd)}$. Therefore, we have proven the modular sensitivity statement.

For data owner $n \in [N]$ whose sampling rate is $q_{n}$ at iteration $t$ and whose clipping threshold is $C_{n}$ (and thus the modular sensitivity corresponding to its data), each iteration of the \inosgd\ algorithm satisfies $(\alpha_n, \Bar{\epsilon}_n)$-RDP by Thm.~\ref{thm:sgm-privacy} (since it is an SGM), where \begin{equation}
    \Bar\epsilon_n = 2\alpha_n \frac{C_{n}^2 q_{n}^2}{\sigma_t^2}.
\end{equation}
By composition of RDP along all $T$ iterations (Thm.~\ref{thm:rdp-composition}), we have shown Thm.~\ref{thm:inosgd-privacy}. \qed

\begin{remark*}
Note that the use of order of loss in each batch does not incur extra privacy costs, as such order at iteration $t$ comes from the ML model obtained at iteration $t-1$, which is itself a standard output under \textit{adaptive composition} (i.e., similar to the gradients we obtain at iteration $t$ for each sampled datum). Thus, for a mechanism using the gradients and/or the order of loss, as long as we bound its (modular) sensitivity at iteration $t$, it strictly satisfies IDP by adaptive composition.
\end{remark*}

\begin{remark*}
We assume that the model owners have decided on a TIF beforehand (e.g., based on past experience) and do not consider the cost of tuning any hyperparameter. This is consistent with other works that devise (I)DP algorithms to solve certain problems (e.g., \citet{xu2021removing, esipovadisparate, lowy2023stochastic}). Bounding the DP cost in tuning hyperparameters can be addressed by DP hyperparameter tuning works \citep{sopa2025scalable, wang2023dphypo, ding2025revisiting}. When the privacy budget for hyperparameter selection is severely limited, the practitioner can consider a few hyperparameter tuning choices based on the following guidelines for the Beta distribution-based TIF. The practitioner can fix a moderately large tail length $\gamma$
 (e.g., $50\%$ of total clipping threshold) to only tune $\upalpha$ and $\upbeta$; when privacy budget is extremely limited, the practitioner can fix $\upalpha$ and $\upbeta$
 (e.g., 1, 1) and set tail length to (or only tune it around) a fixed percentage of the expected sum of clipping thresholds for each batch, and we show \textbf{INO-SGD is robust to hyperparameter choices} in Fig.~\ref{appfig:components-of-inosgd-tail-length} and Fig.~\ref{appfig:components-of-inosgd-alpha-beta}. %
\end{remark*}

\subsubsection{The \inosgm\ Mechanism} \label{app:inosgm}

In Thm.~\ref{thm:inosgm}, we give a description of the \inosgm\ mechanism. We provide the full version below and prove it.

\begingroup
  \renewcommand{\thetheorem}{\ref{thm:inosgm}}%
\begin{theorem}[\inosgm, formal] \label{thm:inosgm-formal}
    Let $g$ be a function mapping an arbitrary set of data $\mathcal{B}$ to $\mathbb{R}^r$ such that $\|g(d)\|$ is upper bounded by $\smash{\Delta_\Alg^{o(d)}}$. Let $\varpi_s$ be a function mapping $\mathcal{B}$ to some order that works in the following way: \begin{itemize}[leftmargin=*,itemsep=0pt,topsep=0pt,parsep=0pt,partopsep=0pt]
        \item $s: \mathbb{R}^r \to \mathbb{R}$ is a scoring function that can assign a score to each datum in $\mathcal{B}$.
        \item For a subset $B \subseteq \mathcal{B}$, $\varpi_s(B)$ returns an order of data in $B$ such that $\varpi_s(B)(k)$ returns the datum $d$ with $k$-th largest score $s(d)$ in $B$.
    \end{itemize} 
    Suppose $\smash{B \coloneq \{d: d \ \text{is sampled with probability $q_{o(d)}$}\}}$ is a sampled subset of data. Then releasing 
    $\sum_{k=1}^{|B|} \rho_k g(\varpi_s(B)(k)) + \mathcal{N}(\mathbf{0}_r, \sigma^2 \mathbf{I}_{r \times r})$
    satisfies IRDP, where $\rho_k$ is set in the same way as in BIF.
\end{theorem}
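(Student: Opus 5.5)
The proof will follow the one given for Thm.~\ref{thm:inosgd-privacy}: \inosgd\ is the special case $g(d)=\bar{\mathbf{g}}_d$, $\Delta_\Alg^{o(d)}=C_{o(d)}$ and score $s(d)=\ell(\bm\theta_{t-1};d)$. The plan has two parts. First, show that the deterministic map $B\mapsto \sum_{k=1}^{|B|}\rho_k\, g(\varpi_s(B)(k))$ has modular sensitivity at most $\Delta_\Alg^{o(d)}$ with respect to any datum $d$. Then apply Thm.~\ref{thm:sgm-privacy} owner by owner, with sampling rate $q_{o(d)}$, to obtain $(\alpha_n, 2\alpha_n q_n^2(\Delta_\Alg^{n})^2/\sigma^2)$-IRDP.

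Throughout, the BIF is built from the TIF over the domain $[0,\Gamma]$ with $\Gamma=\sum_{d\in B}\Delta_\Alg^{o(d)}$. The cumulative positions are $c_k$, the partial sums of $\Delta_\Alg^{o(\cdot)}$ in the $\varpi_s$ order, and
\[
\rho_k=\frac{\int_{c_{k-1}}^{c_k} f\,\intd c}{c_k-c_{k-1}}.
\]

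Step 1 (effect of adding a datum). Take $B$ and insert $d_\ra$, which lands at rank $a$ in $B\cup\{d_\ra\}$. Because the ordering is induced by a fixed score function, the relative order of the other data is unchanged. Because the BIF is anchored at the right end by the fixed-length tail (Def.~\ref{def:batch-goodness-function}), every datum ranked below $d_\ra$ keeps exactly the same integration window relative to $f$, so its weight is unchanged. Each datum ranked above $d_\ra$ has its window in $f^\ra$ shifted left by $\Delta:=\Delta_\Alg^{o(d_\ra)}$ compared with its window in $f$. This gives the identity
\[
\int_{c_{k-1}}^{c_k}f^\ra-\int_{c_{k-1}}^{c_k}f=\int_{c_{k-1}}^{c_{k-1}+\Delta}f^\ra-\int_{c_k}^{c_k+\Delta}f^\ra ,
\]
and the right-hand side is $\geq 0$ because $f^\ra$ is non-increasing. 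The one check needed here is that this shift identity holds in both regimes of Def.~\ref{def:batch-goodness-function} ($\Gamma\geq\gamma$ and $\Gamma<\gamma$), including the case where the new batch crosses the threshold $\gamma$.

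Step 2 (norm bound). Write the change in output as $\rho^\ra_a\, g(d_\ra)$ plus $\sum_{k<a}(\rho_k^\ra-\rho_k)\,g(d_k)$. Apply the triangle inequality and the bound $\|g(d_k)\|\le\Delta_\Alg^{o(d_k)}=c_k-c_{k-1}$. Each term then reduces to a raw integral difference, and these telescope. What remains is $\int_{c_{a-1}}^{c_a}f^\ra+\int_0^{\Delta}f^\ra-\int_{c_{a-1}}^{c_{a-1}+\Delta}f^\ra=\int_0^\Delta f^\ra\le\Delta$, using $f\le 1$. Removing a datum is the same computation run in reverse (add it to $B\setminus\{d\}$). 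This gives modular sensitivity at most $\Delta_\Alg^{o(d)}$.

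Step 3 (privacy accounting). The score function and the ordering depend only on released or public quantities (for example, the previous iterate). So, conditioned on those, the release is an SGM whose sensitivity for owner $n$ is $\Delta_\Alg^{n}$. Thm.~\ref{thm:sgm-privacy} then gives the per-owner IRDP bound, and Thm.~\ref{thm:rdp-composition} handles composition if the mechanism is used adaptively.

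The main obstacle is Step 1. I need a careful argument that the weights of lower-ranked data are unchanged and that the higher-ranked windows shift by exactly $\Delta$ in all tail regimes. I would handle this by extending $f$ to a single non-increasing function on $(-\infty,\gamma]$, equal to $1$ on the left and to $\tgf$ on the last $\gamma$. In that view every BIF is just a window of this one function, right-aligned at $\gamma$, and both facts follow immediately.
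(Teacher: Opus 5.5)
Your proposal is correct and follows essentially the same route as the paper's proof. Both split the change into $d_\ra$'s weighted term plus the weight increases of higher-ranked data, use the same shift identity, apply the triangle inequality with $\|g(d_k)\|\le c_k-c_{k-1}$, telescope to $\int_0^{\Delta}f^\ra\le\Delta$, handle removal by symmetry, and finish with the per-owner SGM bound. Your one addition is welcome: treating every BIF as a right-aligned window of a single non-increasing function on $(-\infty,\gamma]$ actually justifies the shift identity in both regimes of Def.~\ref{def:batch-goodness-function}, including when the batch crosses $\gamma$, whereas the paper only asserts it ``because of how BIF is constructed from TIF.''
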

\endgroup

More specifically, $g$ represents a function that takes in a datum and returns the information we wish to obtain from the datum; $\varpi_s$ constructs the ordering of any sampled subset $\mathcal{B}$ such that $\varpi_s(B)(k)$ gives the $k$-th ranked datum in $\mathcal{B}$. This can be used to represent the importance, certainty, usefulness or any other properties of data. By summing up $\rho_k g(\varpi_s(B)(k))$, information from better data is assigned a larger weight and thus we will in general obtain useful results. The \inosgm\ mechanism can potentially be applied to any field where order of data is important and can be used to improve the outcome.

Now we provide a proof of Thm.~\ref{thm:inosgm}. When a new datum $d_a$ is added to $B$ and assuming $d_a$ is ranked the $a$-th in $B\cup \{d_a\}$, the importance function changes from $f$ to $f^\ra$. Similar to the proof of Thm.~\ref{thm:inosgd-privacy}, the change in the sum of obtained information 
consists of $d_\ra$'s (weighted) information and the increase in (weighted) information of all data that are ranked higher than $d_\ra$, that is,
\begin{equation}
    \frac{\int_{c_{a-1}}^{c_a} f^\ra \ \intd c}{\Delta_{o(d)}} \cdot g(d_\ra) + \sum_{k=1}^{a-1} \left(\frac{\int_{c_{k-1}}^{c_k} f^\ra \ \intd c}{c_k - c_{k-1}} - \frac{\int_{c_{k-1}}^{c_k} f \ \intd c}{c_k - c_{k-1}}\right) \cdot g(d_k).
\end{equation}
Because of how BIF is constructed from TIF, we have $\int_{c_{k-1}}^{c_k} f^\ra \ \intd c - \int_{c_{k-1}}^{c_k} f \ \intd c = \int_{c_{k-1}}^{c_k} f^\ra \ \intd c - \int_{c_{k-1} + \Delta_{o(d_\ra)}}^{c_k + \Delta_{o(d_\ra)}} f^\ra \ \intd c = \int_{c_{k-1}}^{c_{k-1} + \Delta_{o(d_\ra)}} f^\ra \ \intd c - \int_{c_k}^{c_k + \Delta_{o(d_\ra)}} f^\ra \ \intd c$. Thus,
\begin{align*}
    & \phantom{{}={}} \left\|\frac{\int_{c_{a-1}}^{c_a} f^\ra \ \intd c}{\Delta_{o(d_\ra)}} \cdot g(d_\ra) + \sum_{k=1}^{a-1} \left(\frac{\int_{c_{k-1}}^{c_k} f^\ra \ \intd c}{c_k - c_{k-1}} - \frac{\int_{c_{k-1}}^{c_k} f \ \intd c}{c_k - c_{k-1}}\right) \cdot g(d_k)\right\| \\
    & \stackrel{(1)}{\leq} \frac{\int_{c_{a-1}}^{c_a} f^\ra \ \intd c}{\Delta_{o(d_\ra)}} \left\|g(d_\ra)\right\| + \sum_{k=1}^{a-1} \frac{\int_{c_{k-1}}^{c_{k-1} + \Delta_{o(d_\ra)}} f^\ra \ \intd c - \int_{c_k}^{c_k + \Delta_{o(d_\ra)}} f^\ra \ \intd c}{c_k - c_{k-1}} \left\|g(d_k)\right\| \\
    & = \int_{c_{a-1}}^{c_a} f^\ra \ \intd c \cdot \frac{\left\|g(d_\ra)\right\|}{\Delta_{o(d_\ra)}} + \sum_{k=1}^{a-1} (\int_{c_{k-1}}^{c_{k-1} + \Delta_{o(d_\ra)}} f^\ra \ \intd c - \int_{c_k}^{c_k + \Delta_{o(d_\ra)}} f^\ra \ \intd c) \cdot \frac{\left\|g(d_k)\right\|}{c_k - c_{k-1}} \\
    & \stackrel{(2)}{\leq} \int_{c_{a-1}}^{c_a} f^\ra \ \intd c + \sum_{k=1}^{a-1} (\int_{c_{k-1}}^{c_{k-1} + \Delta_{o(d_\ra)}} f^\ra \ \intd c - \int_{c_k}^{c_k + \Delta_{o(d_\ra)}} f^\ra \ \intd c) \\
    & \stackrel{(3)}{=} \int_{c_{a-1}}^{c_a} f^\ra \ \intd c + \int_{c_{0}}^{c_{0} + \Delta_{o(d_\ra)}} f^\ra \ \intd c - \int_{c_{a-1}}^{c_a} f^\ra \ \intd c \\
    & = \int_{c_{0}}^{c_{0} + \Delta_{o(d_\ra)}} f^\ra \ \intd c \\
    & \leq \Delta_{o(d_\ra)}.
\end{align*}
Here (1) is because of triangle inequality and the fact that BIF is non-increasing; 
(2) is because $\|g(d)\| \leq \Delta_{o(d)}$ for any datum $d$ as defined; (3) is by telescoping sum. 

When a datum $d_\rd$ is deleted from the batch $B$, the change in sum of clipped gradients is equal to the case where $d_\rd$ is added to the batch $B \setminus \{d_\rd\}$. The \ltwo-norm of the change is hence bounded by $d_\rd$'s clipping threshold $\Delta_{o(d_\rd)}$, thus completing the proof for modular sensitivity.

For data owner $n \in [N]$ whose sampling rate is $q_{n}$ and whose clipping threshold is $\Delta_{n}$ (and thus the modular sensitivity corresponding to its data), the \inosgm\ algorithm satisfies $(\alpha_n, \Bar{\epsilon}_n)$-RDP by Thm.~\ref{thm:sgm-privacy}, where \begin{equation}
    \Bar\epsilon_n = 2\alpha_n \frac{\Delta_{n}^2 q_{n}^2}{\sigma^2}.
\end{equation}
\qed

\begin{remark*}
Thm.~\ref{thm:inosgm-formal} assumes functions $g$ and $s$ do not consume further privacy (i.e., they should be independent of dataset $\mathcal{B}$ (e.g., at the first iteration of SGD where the model parameters are randomly initialized) or are already protected by IRDP (e.g., at each iteration of \inosgd). 
\end{remark*}

\subsubsection{\inosgd\ Objective} \label{app:inosgd-objective}

In this section, we will give a formal proof of Thm.~\ref{thm:inosgd-objective} (objective of \inosgd). At iteration $t$, the batch $B_t$ sampled from dataset $D$ with per-owner sampling rates $\mathbf{q}$ is equivalent as a batch sampled from $\widehat{D}$ with uniform sampling rate $\hat{q}$. Let $b$ denote the (expected) batch size of $B_t$. For simplicity assume $C = 1$\footnote{In DP analysis, only the ratio between clipping threshold and noise scale, called \textit{noise multiplier}, matters.} and $\gamma$ is an integer\footnote{$\gamma$ can always be rounded up to the nearest integer by lengthening the tail.}. The expectation of summed gradient (omitting gradient clipping and noise temporarily) is equal to 
\begin{align*}
    \mathbb{E}_{B_t}[\mathbf{G}_t] &= \sum_{k=1}^K \Pr\left[\hat{d}_k \in B_t\right] \cdot \left(\Pr[\pi(k) \in \text{$\gamma$-tail}] \cdot \text{corresponding $\rho$} + \Pr[\pi(k) \notin \text{$\gamma$-tail}]\right) \\
    & \phantom{{}=\sum_{k=1}^K} \cdot \nabla \ell_{\pi(k)}(\bm\theta) \\
    &= \sum_{k=1}^K \hat{q} \cdot \Bigg(\sum_{i=0}^{\gamma - 1} \binom{K-k}{i} \hat{q}^i (1 - \hat{q})^{K-k-i}\int_{\gamma - (i + 1)}^{\gamma - i} f_{\rt\rg} \intd c + \\
    & \phantom{{} = \sum_{k=1}^K \hat{q} \cdot \Bigg(} \sum_{i=\gamma}^{K- k} \binom{K-k}{i} \hat{q}^i (1 - \hat{q})^{K-k-i} \bigg) \cdot \nabla\ell_{\pi(k)}(\mathbb{\bm\theta}).
\end{align*}
The inner bracketed sum is obviously larger for smaller $k$ and it is always at most $1$. By setting $w_k = \frac{K\hat{q}}{b} (\sum_{i=0}^{\gamma - 1} \binom{K-k}{i} \hat{q}^i (1 - \hat{q})^{K-k-i}\int_{\gamma - (i + 1)}^{\gamma - i} f_{\rt\rg} \intd c + \sum_{i=\gamma}^{K- k} \binom{K-k}{i} \hat{q}^i (1 - \hat{q})^{K-k-i} ) \cdot \nabla\ell_{\pi(k)}(\mathbb{\bm\theta})$, we show that the expectation of $\frac{\mathbf{G}_t}{b}$ is equal to the gradient of the \inosgd\ objective $\mathcal{L}_{f_{\rt\rg}}\left(\bm\theta, \widehat{D}\right)$. Therefore, we have proven Thm.~\ref{thm:inosgd-objective}.

The fact that vanilla SGD with gradient clipping optimizes the SGD objective function can be generalized to \inosgd\ as a similar first-order optimization technique. Similar to vanilla clipped SGD, \inosgd\ with clipping and noise addition also converges to the optimum of the stated objective while suffering from an unavoidable bias. \qed

\subsubsection{Benefits of \inosgd\ Objective} \label{app:benefit-of-inosgd-objective}

In the main paper we give some brief summaries of interpretations of \inosgd's objective. Here we give more details and discuss them further: \begin{itemize}[leftmargin=*,itemsep=0pt,topsep=0pt,parsep=0pt,partopsep=0pt]
    \item We mentioned in the main paper that the \inosgd\ objective can be viewed as the mixed conditional value-at-risk (mixed CVaR) of the loss function $\ell$, when $\ell$ is viewed as a random function of $\smash{\hat{d} \in \widehat{D}}$. Specifically, CVaR and mixed CVaR are concepts from risk theory. The $\lambda$-level CVaR refers to the average loss when only the data with top $100\lambda\%$ losses remain in the dataset (i.e., $100\lambda\%$ worst-case tail of losses) and the mixed CVaR represents a weighted average of CVaRs at different $\lambda \in (0, 1]$ (i.e., different severity of worst-case). Thus, the trained model will improve the performance on data with the larger losses which are likely to belong to more private owners.
    \item We also mentioned that the \inosgd\ objective can be viewed as an objective that better corrects for utility imbalance than average loss when minimized and is less affected by outliers/noises with large loss than the \textit{maximal loss}. This is shown empirically in Fig.~\ref{appfig:po-cifar100fv-noisy-recall} and Fig.~\ref{appfig:overall-cifar100fv-noisy-recall}.
    \item An alternative interpretation is through \textit{distributionally robust optimization} (DRO) \citep{goh2010distributionally}. DRO studies how to preserve robust model performance if the actual data distribution during ML model deployment is different from the training data distribution. To achieve this, one typical solution is to consider distributions of training data that are different from the training set itself and train a model that can minimize the maximum loss among all these distributions. Specifically, the \inosgd\ loss $\mathcal{L}_{f_{\rt\rg}}\left(\bm\theta, \widehat{D}\right)$ recovers the DRO objective \begin{equation*}
        \inf_{\bm\theta} \sup_{\mathbf{p} \in \mathcal{P}} \sum_{k=1}^K p_k \ell_k(\bm\theta),
    \end{equation*}
    where \textit{uncertainty set} $\mathcal{P} = \{\mathbf{p} : \mathrm{sort}(\mathbf{p}) \preceq \mathbf{w}; \sum_k p_k = 1; \mathbf{p} \geq \mathbf{0}\}$ denotes the set of possible true probabilities of each datum in $\widehat{D}$ occurring in the underlying distribution, $\mathrm{sort}(\mathbf{p})$ denotes the sorted sequence of $\mathbf{p}$ in descending order and $\preceq$ denotes a partial order over decreasing sequences of length $K$ defined as $\mathbf{p} \preceq \mathbf{p}' \Leftrightarrow \forall k \in [K] \: \left[\sum_{j=1}^k p_j \leq \sum_{j=1}^k p'_j\right]$. In particular, the cardinality of the uncertainty set $\mathcal{P}$ increases and the trained model becomes more distributionally robust when the learner chooses a larger length of tail $\gamma$ or faster decaying tail importance function $f_{\rt\rg}$ (e.g, large $\upalpha$ and small $\upbeta$ when using a Beta distribution). This explains why our \inosgd\ can give better results than the vanilla \idpsgd\ when IDP-induced utility imbalance occurs.
\end{itemize}

\section{Additional Experiment Results} \label{app:experiment}

The experiments are run on the following machines:
\begin{itemize}[leftmargin=*,itemsep=0pt,topsep=0pt,parsep=0pt,partopsep=0pt]
    \item Ubuntu 18.04.5 LTS, Intel(R) Xeon(R) Gold 6226R (2.90GHz) with NVIDIA GeForce RTX 3080 GPUs (CUDA 11.3);
    \item Ubuntu 22.04.3 LTS, AMD EPYC 7763 with NVIDIA L40 GPUs (CUDA 12.3).
\end{itemize}
All the programs are written in Python and executed using Anaconda. The implementation details and hyperparameters used can be found from our codes in the supplementary materials.

\subsection{Setup} \label{app:experiment-setup}

\subsubsection{Datasets and Data Owners} \label{app:datasets-and-data-owners}

We empirically evaluate our proposed method \inosgd\ on various tabular, vision and language datasets. For each dataset, we first split the training set into $N$ disjoint subsets representing the datasets owned by $N$ data owners, in a way that data within each subset share similar traits and data across different subsets have different traits (e.g., we can split by classes or categories) to simulate the different subsets that could have different privacy preferences. We then assign each data owner a unique privacy budget ranging from low (private) to high (less private). Besides the standard settings (\textbf{MN}, \textbf{CM}, \textbf{CL}, \textbf{CH}, \textbf{CF}, \textbf{SU} (explained in detail below)), we consider the following special settings:  
\begin{itemize}[leftmargin=*,itemsep=0pt,topsep=0pt,parsep=0pt,partopsep=0pt]
    \item \textbf{Real-world clinical datasets} where owners of (likely) positive-case data have stronger privacy requirements, which supports our motivation (\textbf{CA}, \textbf{PN});
    \item \textbf{Noisy datasets} with random flipped labels (\textbf{CFN});
    \item Each data owner possesses $1$ datum and has its own privacy budget. Data owners with data from the same group have similar privacy budgets sampled from the same distribution with a unique mean associated with the group. This resonates with real-life scenarios where \textbf{data owners with sensitive data in general have less private budgets but can vary} (\textbf{CS}, \textbf{CFS});
    \item \textbf{Sensitive data occasionally have weaker privacy requirements and vice versa} (\textbf{COL}, \textbf{COM});
    \item Each owner has a fixed privacy budget but they hold a randomly sampled class in each experiment run. This can \textbf{eliminate the potential impact of class complexities on per-owner utility} (\textbf{CR}).
\end{itemize}
The details on our datasets and how we set the data owners are described below:
\begin{itemize}[leftmargin=*,itemsep=0pt,topsep=0pt,parsep=0pt,partopsep=0pt]
    \item \textbf{CA}: We use the UCI \underline{\textbf{CA}}rdiotocography dataset \citep{cardiotocography_193}, which contains $2126$ tabular data of fetal cardiotocograms (CTGs) from $3$ classes (\textsc{Normal}, \textsc{Suspicious}, \textsc{Pathological}) representing whether a fetus is healthy or not. We assign privacy budgets $3$ (most private), $4$ and $5$ (least private) to pathological, suspicious and normal fetuses. Tolerance $\delta_n = 10^{-5}$ for all owners $n \in [N]$. %
    \item \textbf{PN}: We use the \underline{\textbf{Pn}}eumonia dataset \citep{kermany2018identifying}, which contains chest X-ray images from $3$ classes (\textsc{Normal}, \textsc{BacterialPneumonia} and \textsc{ViralPneumonia}). We randomly pick $1341$ data from each class as the training set. We set $2$ data owners: Owner 1 possesses \textsc{normal} data and requires $(5, 10^{-5})$-DP (less private), while Owner 2 possesses \textsc{BacterialPneumonia} and \textsc{ViralPneumonia} patients' data and requires $(0.5, 10^{-5})$-DP (more private).
    \item \textbf{MN}: We use the \underline{\textbf{MN}}IST dataset \citep{lecun1998gradient}, whose training set contains $60000$ grayscale images of handwritten digits from $0$ to $9$. In our experiment, we suppose that there are $N = 2$ data owners, with one possessing digits from $0$ to $4$ ($30596$ data) and another possessing digits from $5$ to $9$ ($29404$ data). To simulate individualized privacy preferences, we assign privacy budgets $\epsilon_1 = 0.1$ (private) and $\epsilon_2 = 1$ (less private) and tolerance $\delta_1 = \delta_2 = 10^{-5}$.
    \item \textbf{CM}: We use the \underline{\textbf{C}}IFAR-10 dataset \citep{krizhevsky2009learning}, whose training set contains $50000$ images of objects from $10$ classes (e.g., \textsc{Cat}, \textsc{Dog}). We set $N = 10$ data owners, where owner $n$ possesses data from class $n-1$. \textbf{CM} refers to a \textbf{\underline{M}}ore private setting where we uniformly assign privacy budgets $\epsilon_n$ ranging from $0.5$ (most private) to $5$ (least private) and tolerance $\delta_n = 10^{-5}$ for all $n \in [N]$.
    \item \textbf{CL}: Similar to \textbf{CM}, but considers a \textbf{\underline{L}}ess private setting where we uniformly assign privacy budgets $\epsilon_n$ ranging from $6$ (most private) to $15$ (least private) and tolerance $\delta_n = 10^{-5}$ for all $n \in [N]$.
    \item \textbf{CR}: Also similar to \textbf{CM}, but considers each data owner holding a \underline{\textbf{R}}andomly sampled class in each run. Their privacy preferences remain the same. This could eliminate the impact of inherent difference in class complexities.
    \item \textbf{CS}: We use \underline{\textbf{C}}IFAR-10 and consider the case where each owner possesses only $1$ datum and owners possessing data from the same class have \underline{\textbf{S}}imilar (but not same) privacy preferences. Specifically, each class's mean privacy budget uniformly ranges from $0.5$ (most private) to $5$ (least private), and each owner's privacy budget is uniformly sampled from $\text{class mean} \pm 0.25$.
    \item \textbf{COL}: We use \underline{\textbf{C}}IFAR-10 and consider the case where occasionally some sensitive data have weaker privacy requirements and vice versa, that is, there are \underline{\textbf{O}}verlaps among data owners. \textbf{COL} considers \underline{\textbf{L}}ess overlaps where each owner possesses $90\%$ data from one class and $10\%$ data from any other classes.
    \item \textbf{COM}: Similar to \textbf{COL}, but considers \underline{\textbf{M}}ore overlaps among data owners where each owner possesses $70\%$ data from one class and $30\%$ data from any other classes.
    \item \textbf{CH}: We use the \underline{\textbf{C}}IFAR-\underline{\textbf{100}} dataset \citep{krizhevsky2009learning}, whose training set contains $50000$ images of objects from $100$ classes (e.g., \textsc{Flatfish}, \textsc{Bicycle}). Every $5$ classes come from one category (e.g., \textsc{Flatfish} comes from the \textsc{Fish} category). We set $N = 100$ data owners, where owner $n$ possesses data from class $n-1$. We uniformly assign privacy budgets $\epsilon_n$ ranging from $1$ (most private) to $10$ (least private) and tolerance $\delta_n = 10^{-5}$ for all $n \in [N]$.
    \item \textbf{CF}: We consider $N = 2$ data owners, each possessing data from one \textbf{category} (i.e., $5$ classes). Specifically, Owner $1$ owns the \textsc{Fish} category containing \textsc{AquariumFish}, \textsc{Flatfish}, \textsc{Ray}, \textsc{Shark} and \textsc{Trout} classes, while Owner $2$ owns the \textsc{Vehicles1} category containing \textsc{Bicycle}, \textsc{Bus}, \textsc{Motorcycle}, \textsc{PickupTruck} and \textsc{Train} classes. Owner $1$ requires $\left(1, 10^{-5}\right)$-DP (private) while Owner $2$ requires $\left(5, 10^{-5}\right)$-DP (less private). We denote this as \underline{\textbf{C}}IFAR-100-\underline{\textbf{F}}V.
    \item \textbf{CFS}: Similar to \textbf{CF} and \textbf{CS}, each category's mean privacy budget is $1$ and $5$ respectively and each owner's privacy budget is sampled from the Gaussian distribution with mean $=$ its category mean and standard deviation $= 0.5$.
    \item \textbf{CFN}: We use \underline{\textbf{C}}IFAR-100-\underline{\textbf{F}}V and consider the case where the dataset is more \underline{\textbf{N}}oisy. Specifically, we introduce $10\%$ random label flips.
    \item \textbf{SU}: The \underline{\textbf{SU}}rnames dataset \citep{robertson2017nlp} (\textbf{SU}) contains $20074$ surnames from $18$ different countries, with Country $4$ being the majority country containing $9408$ surnames. We consider $N = 2$ data owners, including a less private data owner who possesses data from the majority country and requires $\left(3, 10^{-5}\right)$-DP and a more private data owner who possesses data from other countries and requires $\left(2, 10^{-5}\right)$-DP. This is because data from other countries might be more difficult to obtain compared to the majority country with plenty of data.
 \end{itemize}

 For all the datasets listed above, we use their default split of training and validation sets in our experiments.

\subsubsection{Models and Training}

In the main paper, for vision datasets (MNIST, CIFAR-10, CIFAR-100 and CIFAR-100-FV), we use a set of convolutional neural network (CNN) models proposed by \citet{papernot2021tempered} which cater to DP-ML and are widely used in existing literature \citep{tramer2021differentially, boenisch2024have}. These CNN models are smaller than popular models such as VGG \citep{simonyan2014very} or ResNet \citep{he2016deep}, but are more suitable for DP-ML because the (expected) magnitude of added Gaussian noise (drawn from $\mathcal{N}(0, \sigma^2 \mathbf{I}_{r \times r})$) increases with the number of model parameters $r$. Also, they use Tanh activation functions which are shown to perform better in DP-ML.
We supplement these CNN experiments by considering more complex models and embeddings such as ResNet-18 \citep{he2016deep} and SimCLRv2 \citep{chen2020big}. For language dataset (Surnames), we use a character-level long short-term memory (LSTM) model from \citet{boenisch2024have}, which is a type of recurrent neural network (RNN) model that treats each surname as a sequence of characters (i.e., letters and special characters).

The details on all the models used are listed below:
\begin{itemize}[leftmargin=*,itemsep=0pt,topsep=0pt,parsep=0pt,partopsep=0pt]
    \item \textbf{MLP}: Refers to a simple \underline{\textbf{M}}ulti-\underline{\textbf{L}}ayer \underline{\textbf{P}}erceptron (MLP) model with $3$ hidden layers each containing $47$ neurons and ReLU activations.
    \item \textbf{PC}: Refers to \underline{\textbf{P}}apernot et al. \citeyearpar{papernot2021tempered}'s \underline{\textbf{C}}NN models. For MNIST, this refers to a CNN model with 5 hidden layers (Conv-MaxPool-Conv-MaxPool-FC) with Tanh activation at each convolutional and fully connected layer; for CIFAR-10, CIFAR-100 and CIFAR-100-FV, this refers to a \underline{\textbf{C}}NN model with 10 hidden layers (Conv-Conv-MaxPool-Conv-Conv-Maxpool-Conv-Conv-Maxpool-FC) with Tanh activation at each convolutional and fully connected layer.
    \item \textbf{RN}: Refers to a \underline{\textbf{R}}es\underline{\textbf{N}}et-18 \citep{he2016deep} model trained from scratch.
    \item \textbf{SC}: Refers to a pretrained \underline{\textbf{S}}im\underline{\textbf{C}}LRv2 model (\texttt{r50\_2x\_sk1}) \citep{chen2020big} which we use to obtain embeddings and train a linear model upon them.
    \item \textbf{EN}: Refers to a pretrained \underline{\textbf{E}}fficient\underline{\textbf{N}}et B1 model \citep{tan2019efficientnet} which we used to obtain embeddings and train an MLP with $1$ hidden layer containing $512$ neurons and ReLU activations.
    \item \textbf{LS}: Refers to a character-level \underline{\textbf{LS}}TM model (Embedding-LSTM-FC). Each input is mapped to a 64-dimensional embedding. The LSTM has a hidden size of 128, with internal linear layers projecting to 512 dimensions.
\end{itemize}

We will label each experiment in the format [dataset]-[model] (e.g., \textbf{MN-PC}). For each dataset-model combination, we choose from the two variants of \idpsgd, \sample\ and \scale, at random since they usually have similar performances and it is not yet clearly understood which variant has advantages over another. Specifically, we use \sample\ for Cardiotocography, MNIST, CIFAR-100, CIFAR-100-FV and Surnames datasets, and \scale\ for Pneumonia, CIFAR datasets. The same setting is used when we test our method \inosgd. Furthermore, we repeat each experiment for $5$ times and show the means and standard deviations of results to tackle randomness.

\subsubsection{Choice of Baseline} \label{app:choice-of-baseline}

Since this is the first work that recognizes and addresses IDP-induced utility imbalance, there is no prior work that mitigates the problem and is suitable as baseline. Therefore, we compare against the most important baseline IDP-SGD in all experiments.

In Sec.~\ref{subsec:idp-induced-utility-imbalance-and-learning-dynamics} and App.~\ref{app:unsuitability-of-straightforward-approaches}, we have explained why existing techniques addressing utility imbalance (either in non-private or private setting) neither satisfy IDP nor can be easily adapted to do so. To add on, we explain why common techniques to address data imbalance in the \textbf{non-private} setting are \textbf{not suitable} in the IDP setting:
\begin{itemize}[leftmargin=*,itemsep=0pt,topsep=0pt,parsep=0pt,partopsep=0pt]
    \item \textbf{Strategically dropping data from the less private data owners before training} (e.g., with data pruning): This does not satisfy IDP. For neighboring datasets $D$ and $D^d$, if the dropping algorithm is deterministic, the resulting datasets $D'$ and $\smash{{D^d}'}$ may not be neighboring, thus can be identified by attackers even if \idpsgd\ is applied to  $D'$ and $\smash{{D^d}'}$; if the dropping algorithm is randomized, the resulting datasets $D'$ and $\smash{{D^d}'}$ may not follow a similar distribution, thus can be identified too. Therefore, it remains non-trivial to design a dropping algorithm that preserves IDP, which is beyond the scope of this work.
    \item \textbf{Randomly dropping data from less private owners before training}: This has two issues. Firstly, for multiple data owners and privacy budgets, it is unclear how to set (i) the cutoff between less and more private owners and (ii) how much data to drop from each less private owner. For (i), for example, for $10$ owners with $\epsilon_n \in [0.5, 5]$ evenly, it is unsuitable to directly use the mean $2.25$ as the cutoff since the impact of privacy does not scale linearly with privacy budgets and is hard to quantify \textit{a priori}; for (ii), dropping equal number of data from owners with $\epsilon_n = 3$ and $\epsilon_n = 5$ is unsuitable, as there is still imbalance between these two and dropping data from $\epsilon_n = 3$ incurs different utility loss than $\epsilon_n = 5$. Due to (i) and (ii), finding the optimal number of data to drop from each owner requires $O(N)$ hyperparameters, making it unsuitable as baseline. Secondly, this method may under-utilize the privacy budgets and data from the less private data owners. Due to the privacy-utility tradeoff \citep{makhdoumi2013privacy}, the overall utility would be lower than our current baseline \idpsgd\ which fully utilizes the privacy budgets of each owner. Below we show the overall validation accuracy (mean (standard deviation)) when we drop $10\%$ to $50\%$ data from less private owners (cutoff $=$ mean):
    \begin{table}[!h]
        \centering
        \vspace{2mm}
        \caption{\textbf{Random undersampling degrades overall validation accuracy.}}
        \label{apptab:random-undersampling}
        \scriptsize \setlength{\tabcolsep}{4pt}
        \begin{tabular}{||c c c c c c c c||} 
            \hline
            \textbf{Setting} & \textbf{\idpsgd} & \textbf{Drop $10\%$} & \textbf{Drop $20\%$} & \textbf{Drop $30\%$} & \textbf{Drop $40\%$} & \textbf{Drop $50\%$} & \textbf{\inosgd} \\
            \hline\hline
            \textbf{CF-PC} & $46.4\ (0.53)$ & $43.88\ (0.77)$ & $43.56\ (0.99)$ & $42.9\ (1.67)$ & $42.06\ (1.88)$ & $40.72\ (2.51)$ & $48.74\ (0.65)$ \\
            \hline
            \textbf{CL-PC} & $63.35\ (1.28)$ & $62.48\ (1.74)$ & $61.64\ (2.31)$ & $60.41\ (3.31)$ & $58.83\ (4.11)$ & $56.47\ (5.06)$ & $66.73\ (0.24)$ \\
            \hline
        \end{tabular}
        \vspace{2mm}
    \end{table}
    \item \textbf{Reducing less private owners' privacy budgets}: Similar to the above, it is still unclear how to set the cutoff and how much privacy budgets to reduce from each less private owner. Moreover, this method under-utilizes the privacy budgets and data from the less private owners. Due to the privacy-utility tradeoff \citep{makhdoumi2013privacy}, the overall utility would be lower than our current baseline \idpsgd\ which fully utilizes the privacy budgets of each owner. We validate the above in App.~\ref{app:pareto-superiority}. Note that this includes other methods like reducing less private owners' sampling rates in \sample\ and clipping thresholds in \scale.
\end{itemize}

\begin{figure}[!ht]
    \centering
    \subfigure[\textbf{CA-MLP.}]{\includegraphics[width=0.24\columnwidth]{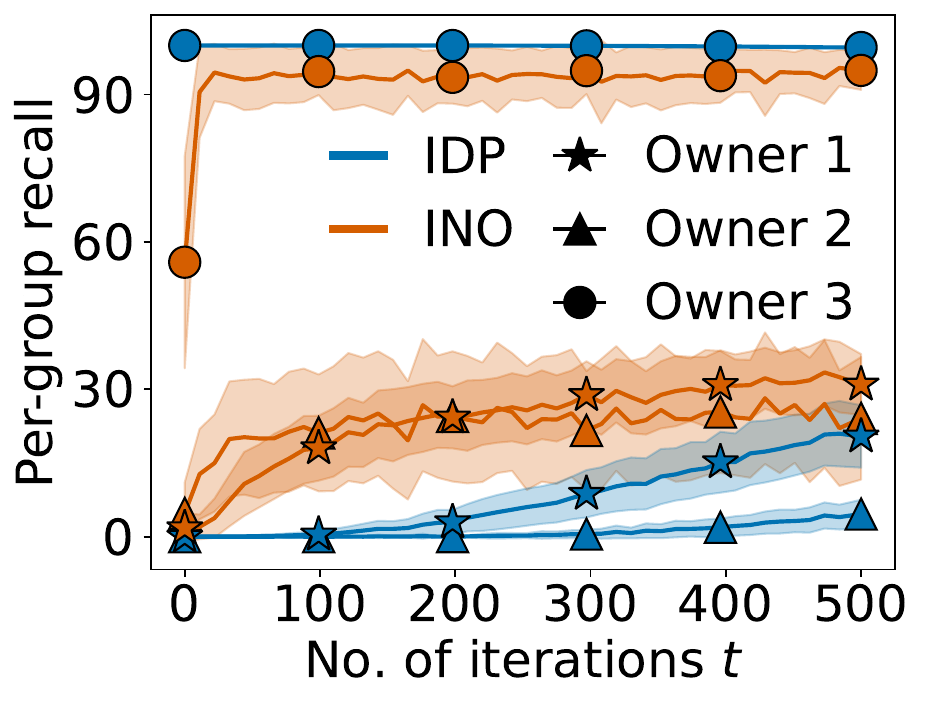} \label{appfig:po-cardio-recall}}%
    \subfigure[\textbf{CA-MLP.}]{\includegraphics[width=0.24\columnwidth]{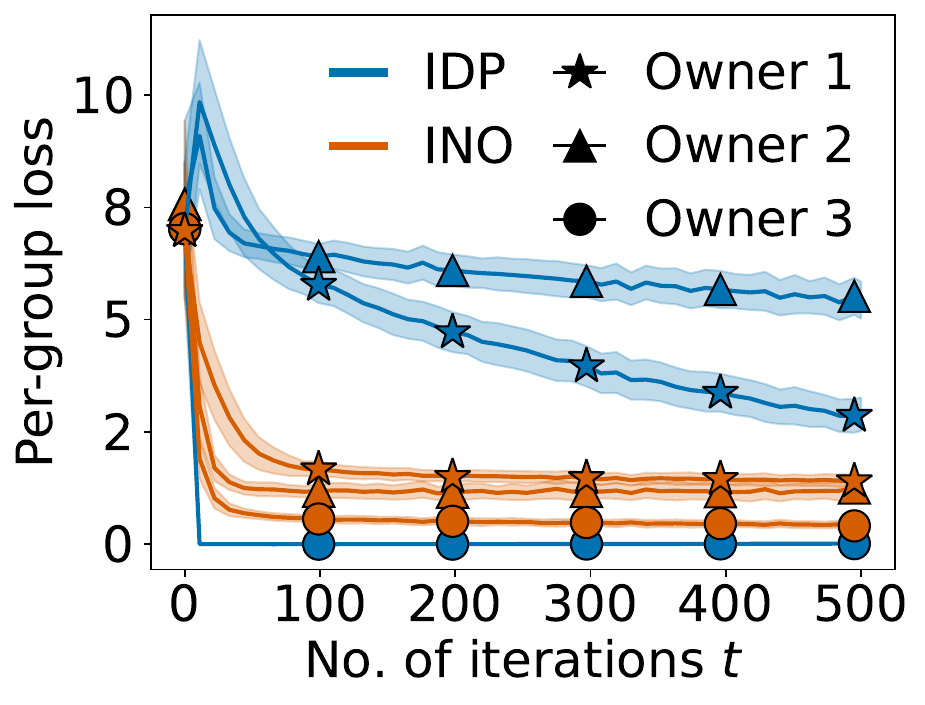} \label{appfig:po-cardio-loss}}%
    \subfigure[\textbf{PN-EN.}]{\includegraphics[width=0.24\columnwidth]{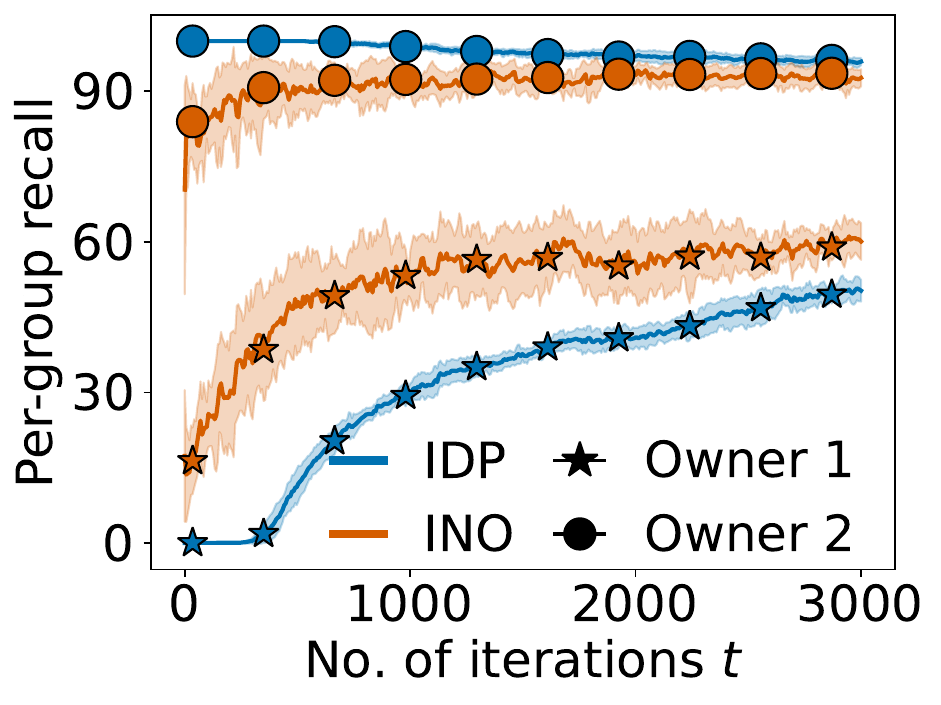}\label{appfig:po-pneumonia-recall} }%
    \subfigure[\textbf{PN-EN.}]{\includegraphics[width=0.24\columnwidth]{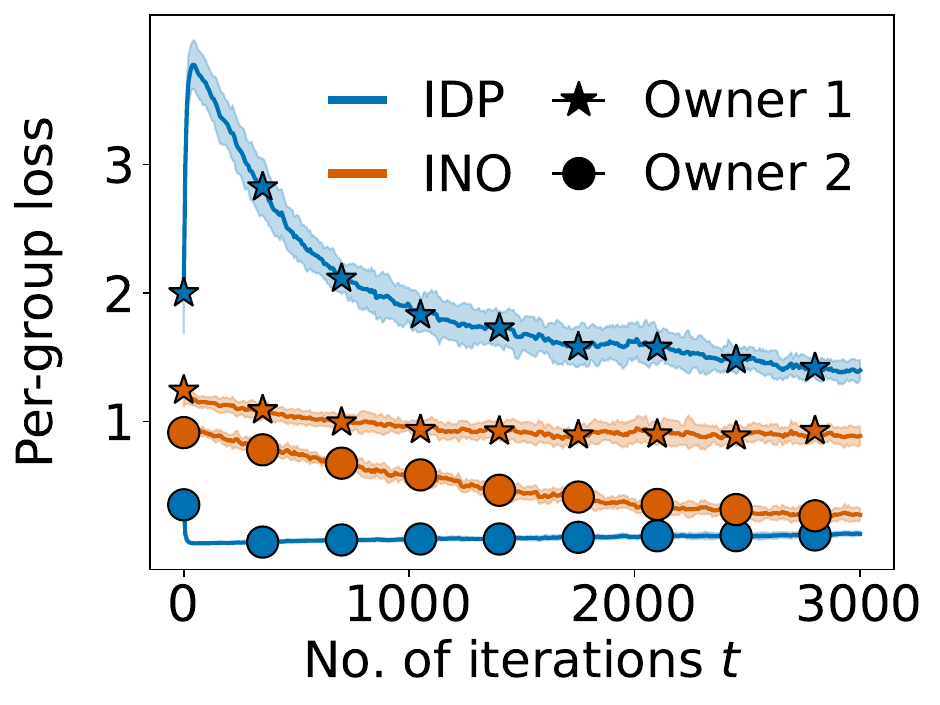} \label{appfig:po-pneumonia-loss}}%
    \\
    \subfigure[\textbf{MN-PC.}]{\includegraphics[width=0.24\columnwidth]{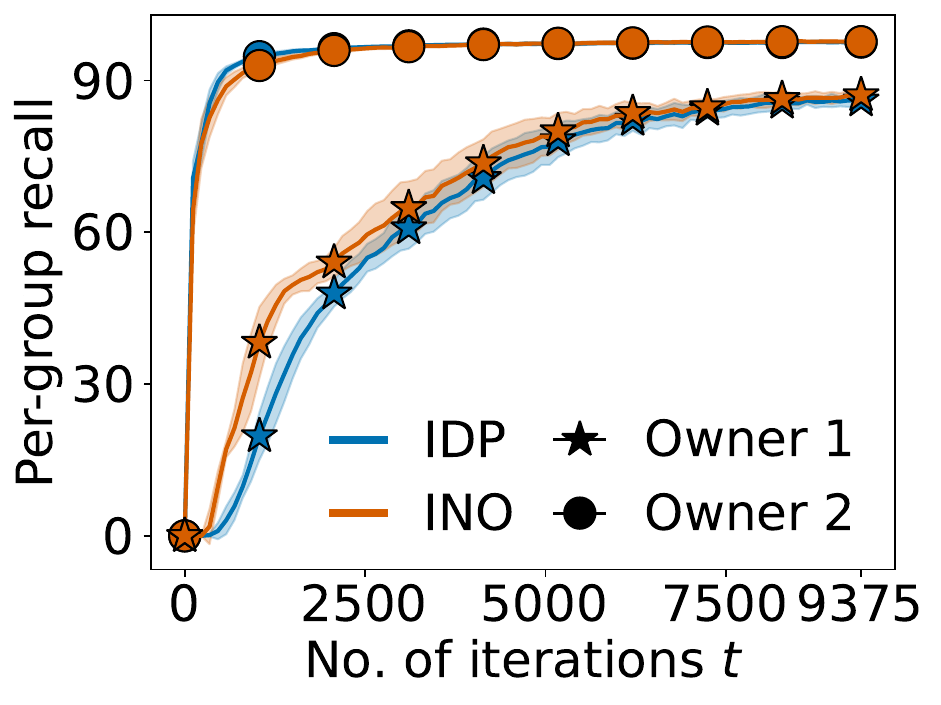} \label{appfig:po-mnist-recall}}%
    \subfigure[\textbf{CF-PC.}]{\includegraphics[width=0.24\columnwidth]{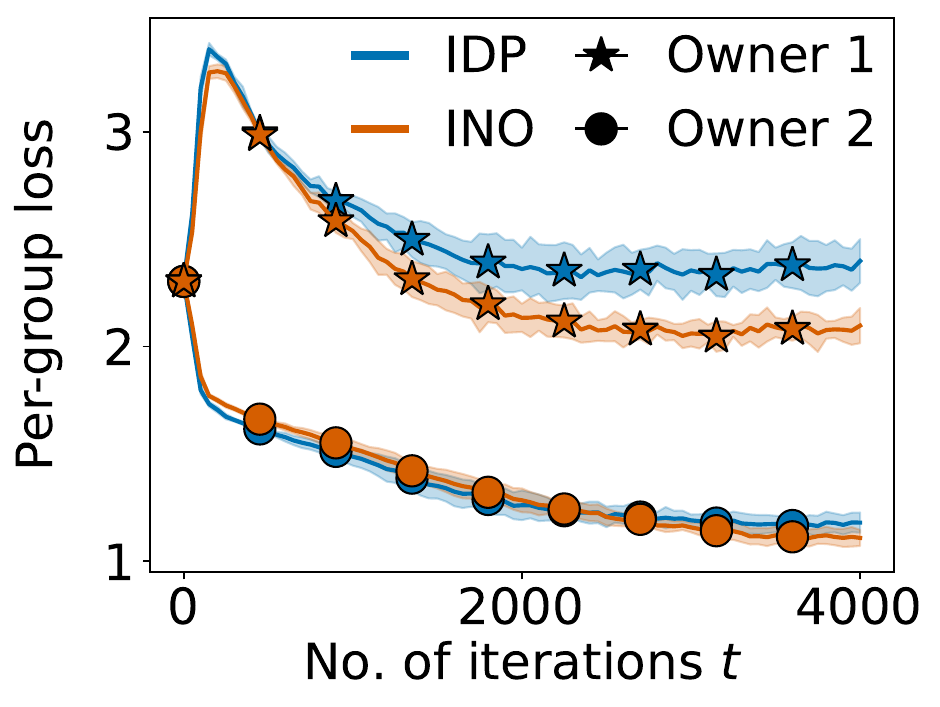} \label{appfig:po-cifar100fv-loss}}%
    \subfigure[\textbf{SU-LS.}]{\includegraphics[width=0.24\columnwidth]{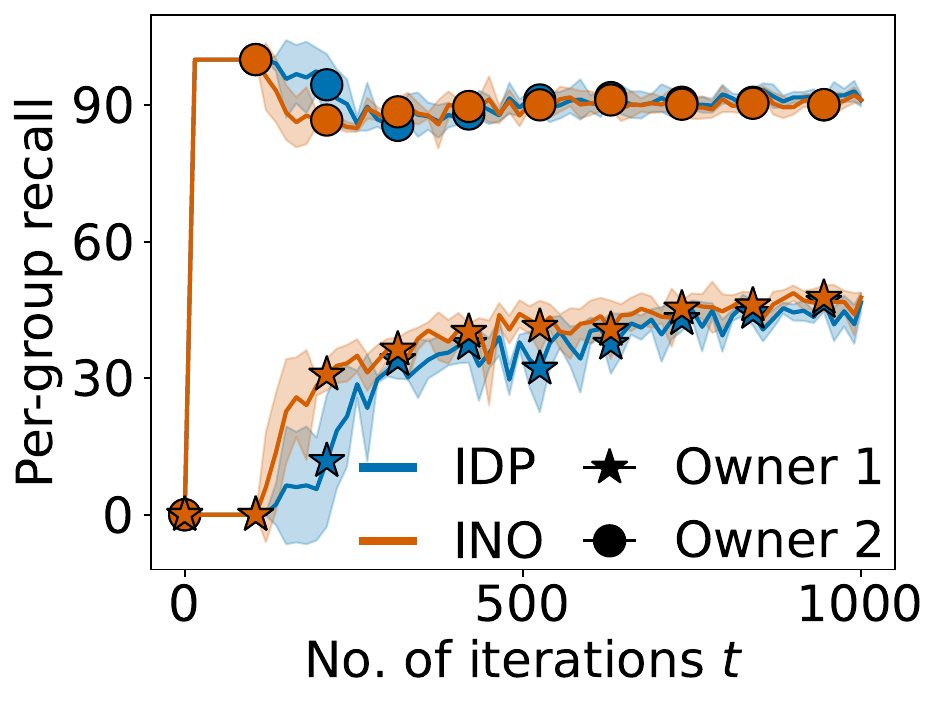} \label{appfig:po-surnames-recall}}%
    \subfigure[\textbf{CFN-PC.}]{\includegraphics[width=0.24\columnwidth]{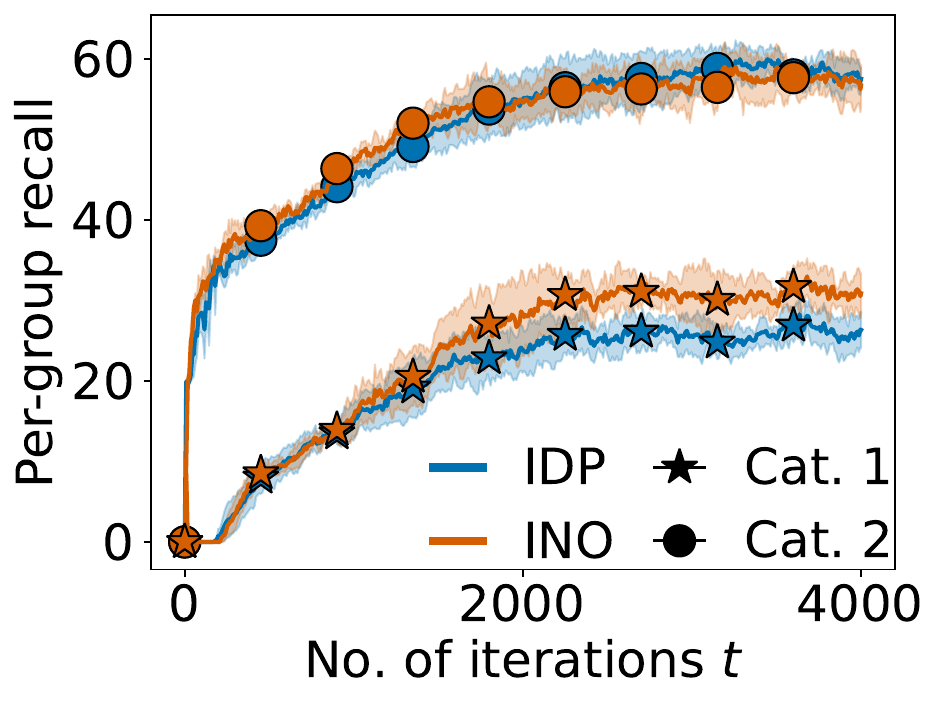} \label{appfig:po-cifar100fv-noisy-recall}}%
    \\
    \subfigure[\textbf{CM-PC O$2, 5, 8$.}]{\includegraphics[width=0.24\columnwidth]{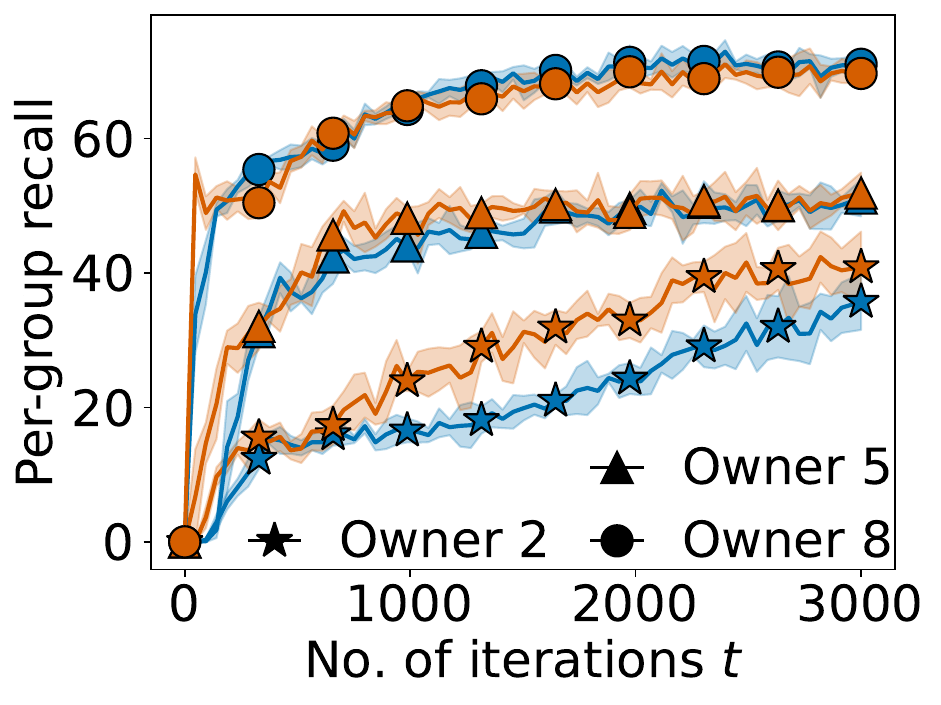}}%
    \subfigure[\textbf{CM-PC O$3, 6, 9$.}]{\includegraphics[width=0.24\columnwidth]{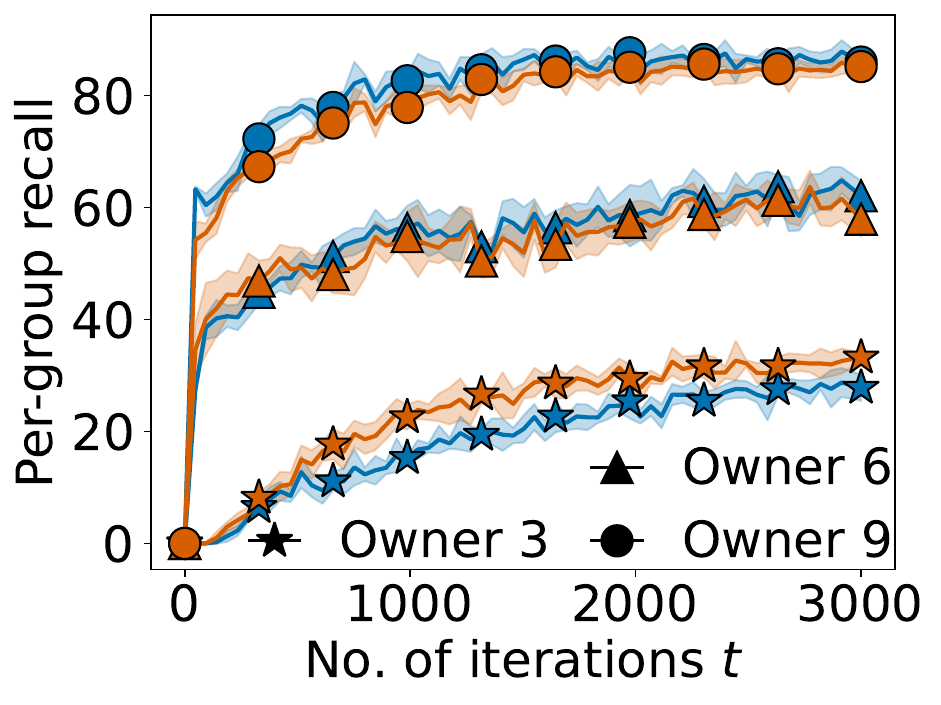}}%
    \subfigure[\textbf{CM-PC O$1, 4, 7, 10$.}]{\includegraphics[width=0.24\columnwidth]{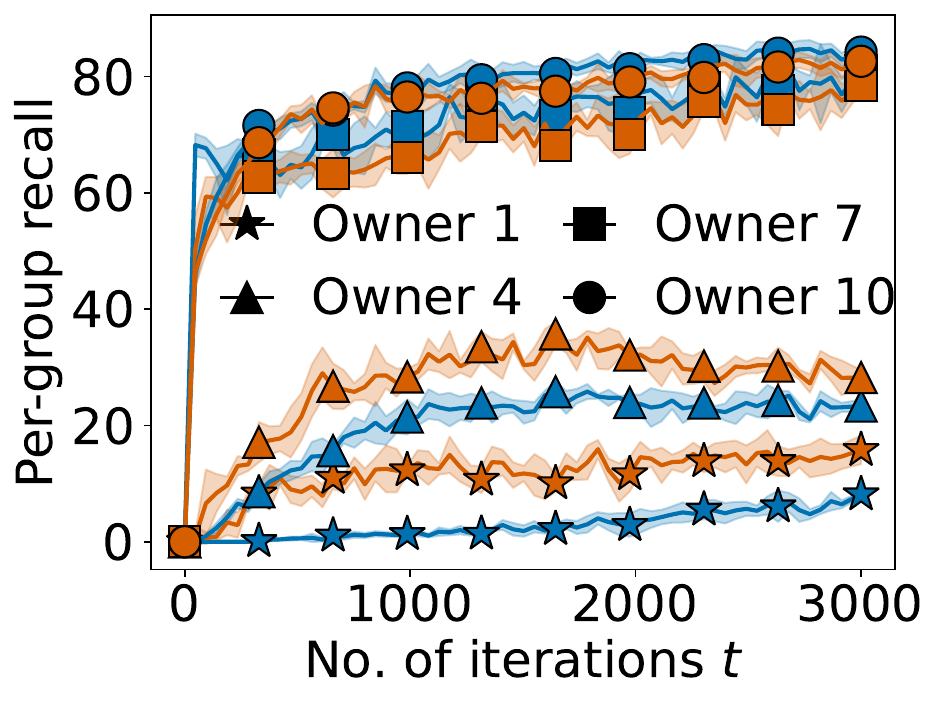}}%
    \subfigure[\textbf{CL-PC O$1, 3, 9$.}]{\includegraphics[width=0.24\columnwidth]{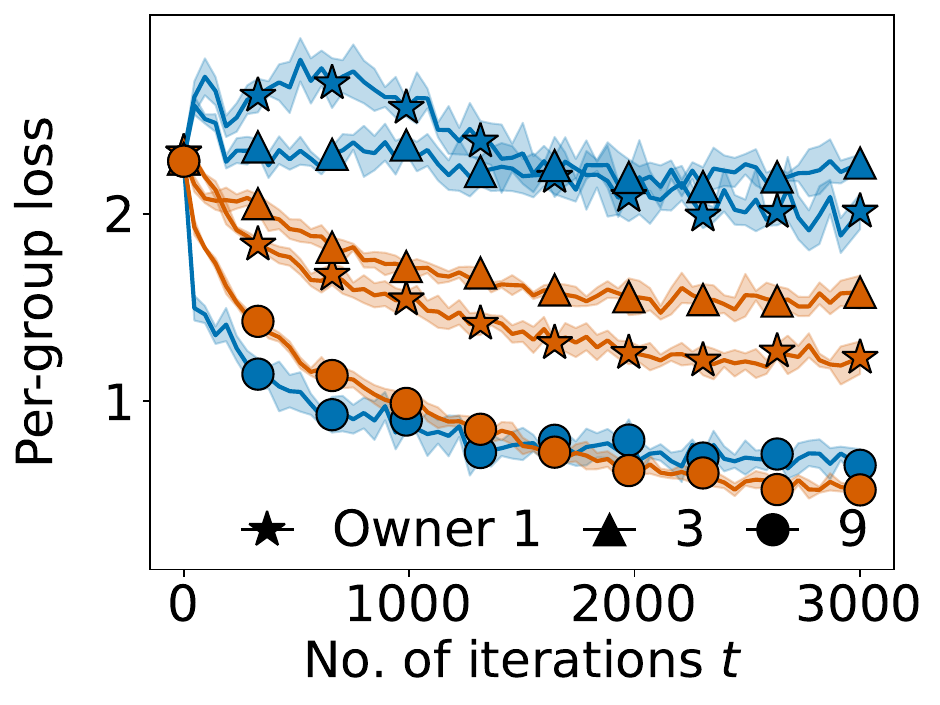}}%
    \\
    \subfigure[\textbf{CM-RN O$2, 5, 8$.}]{\includegraphics[width=0.24\columnwidth]{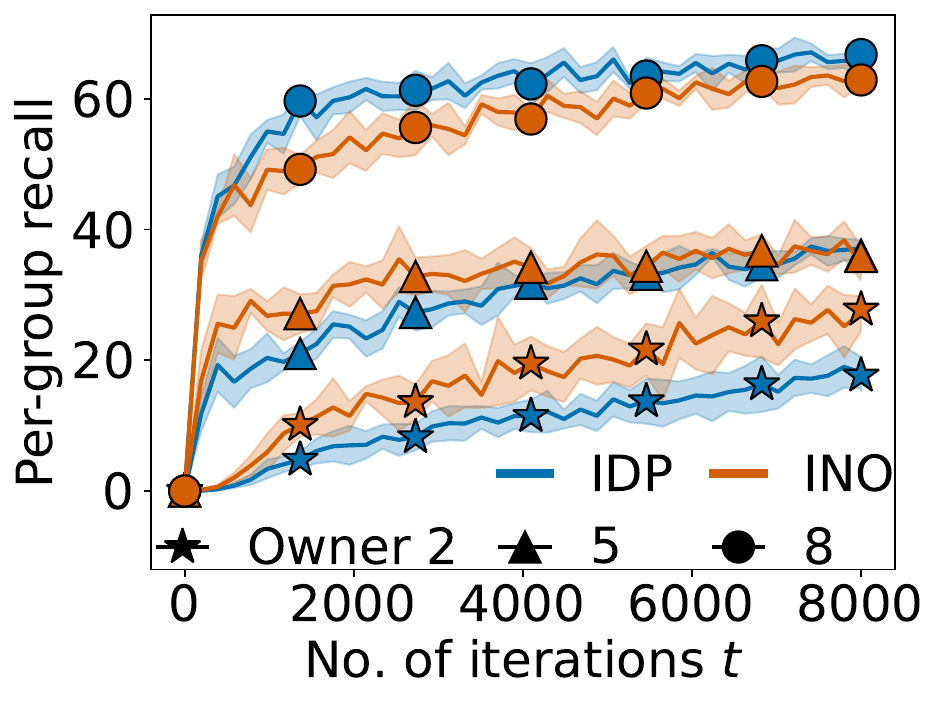} \label{appfig:po-cifar-resnet-recall-258}}%
    \subfigure[\textbf{CM-RN O$3, 6, 9$.}]{\includegraphics[width=0.24\columnwidth]{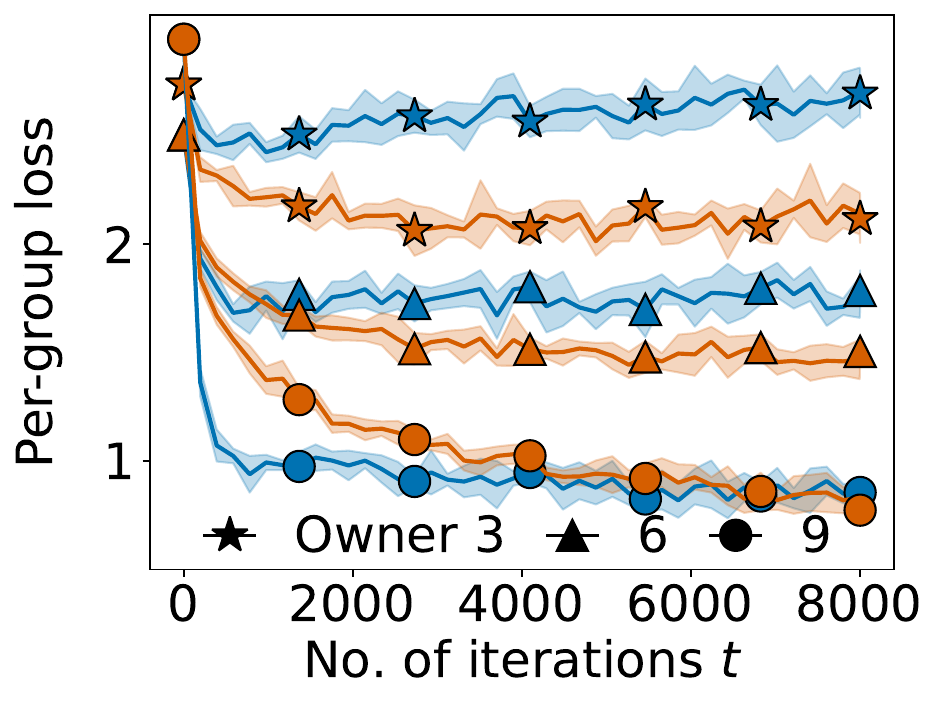} \label{appfig:po-cifar-resnet-loss-369}}%
    \subfigure[\textbf{CM-SC O$3, 7$.}]{\includegraphics[width=0.24\columnwidth]{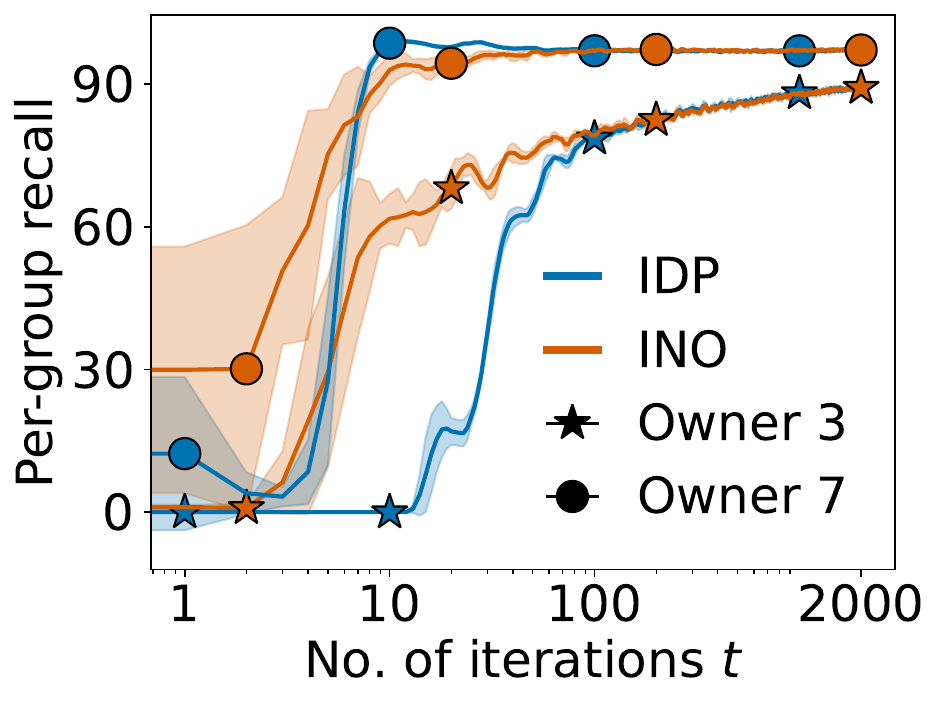} \label{appfig:po-cifar-features-recall-37}}%
    \subfigure[\textbf{CM-SC O$2, 9$.}]{\includegraphics[width=0.24\columnwidth]{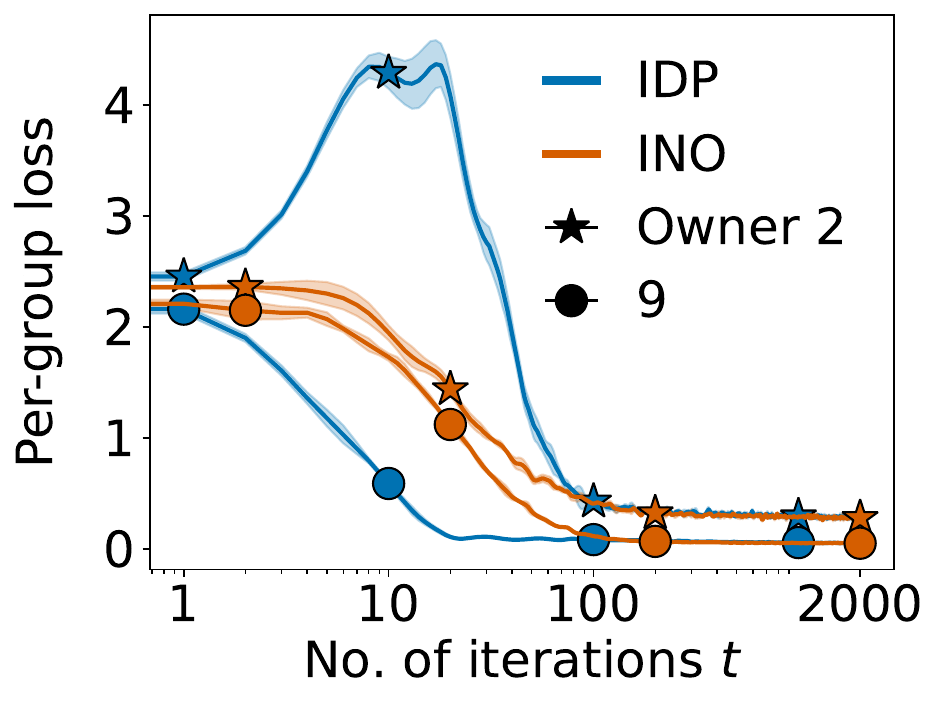} \label{appfig:po-cifar-features-loss-29}}%
    \\
    \subfigure[\textbf{CS-PC O$1, 4, 7, 10$.}]{\includegraphics[width=0.24\columnwidth]{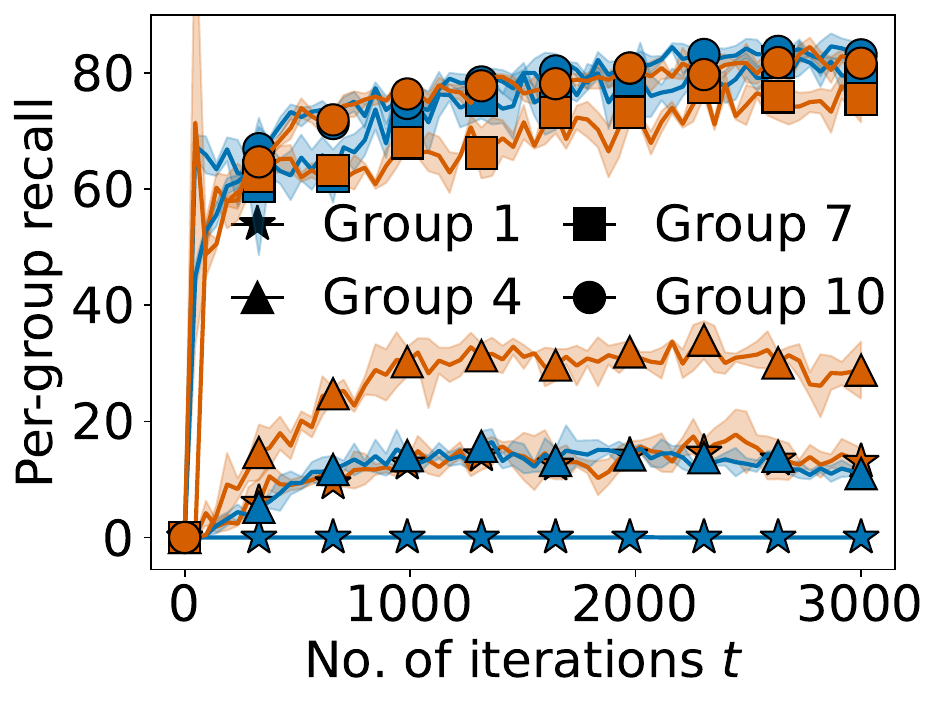} \label{appfig:po-cifar-continuous-recall}}%
    \subfigure[\textbf{CFS-PC.}]{\includegraphics[width=0.24\columnwidth]{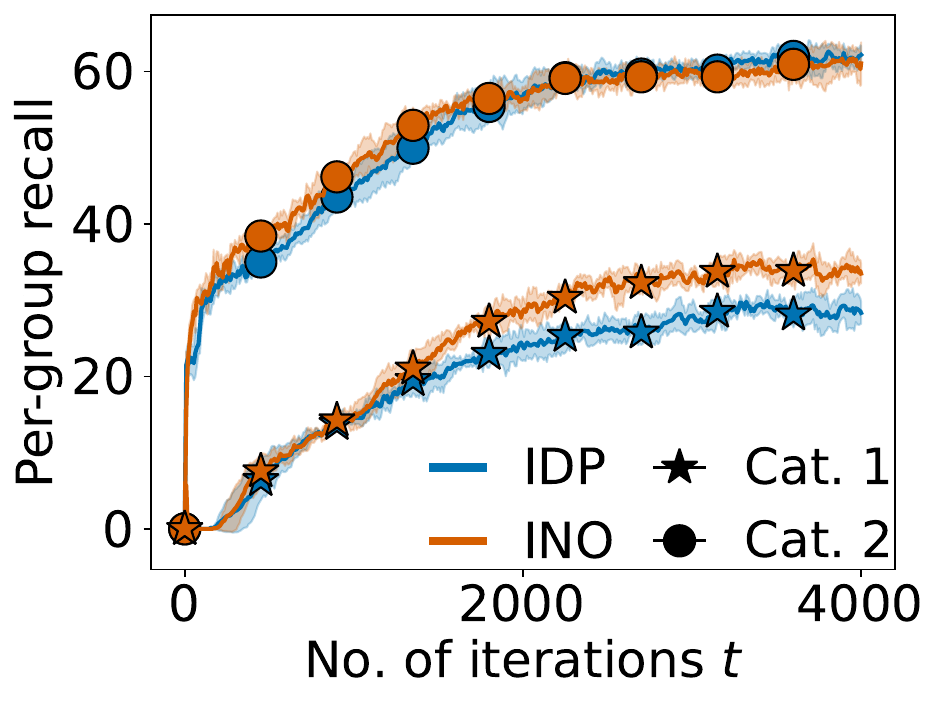} \label{appfig:po-cifar100fv-continuous-recall}}%
    \subfigure[\textbf{COL-PC O$1, 4, 7, 10$.}]{\includegraphics[width=0.24\columnwidth]{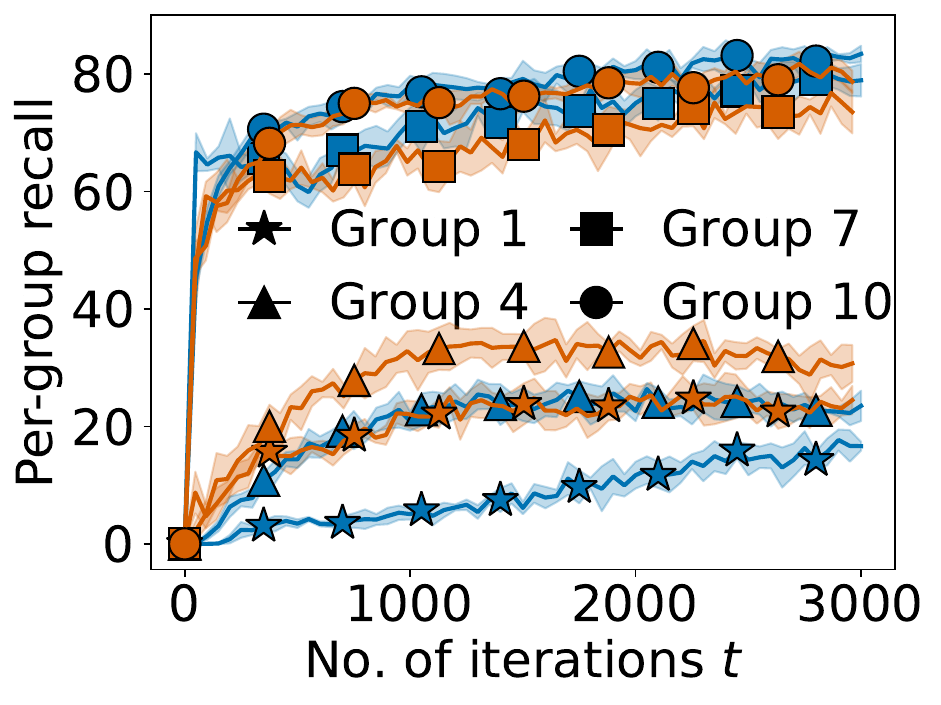} \label{appfig:po-cifar-overlap-less-recall}}%
    \subfigure[\textbf{COM-PC O$1, 4, 7, 10$.}]{\includegraphics[width=0.24\columnwidth]{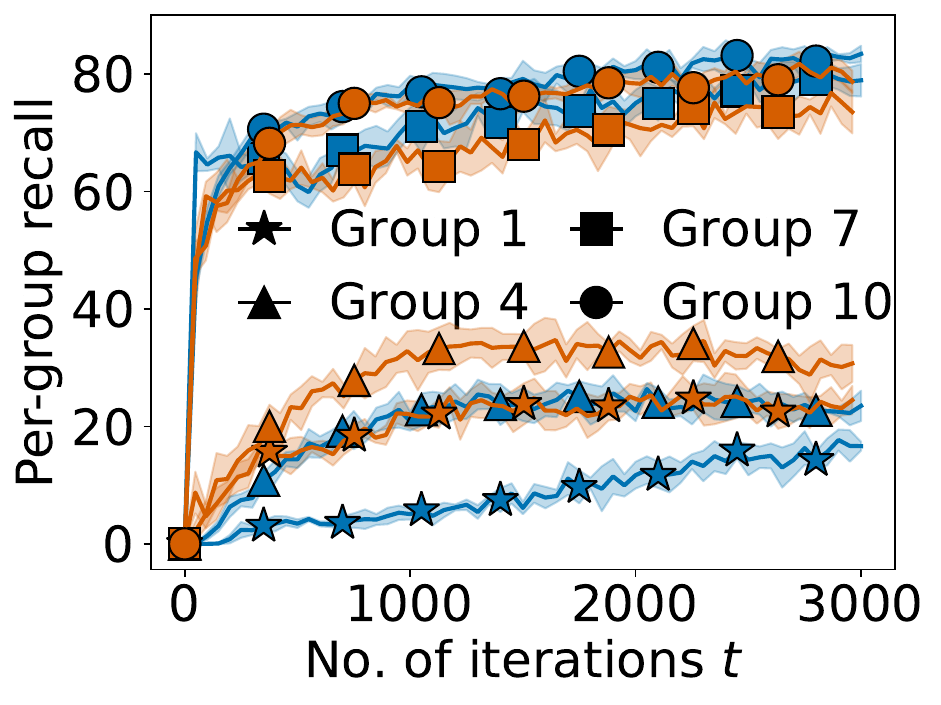} \label{appfig:po-cifar-overlap-more-recall}}%
    \\
    \subfigure[\textbf{CH-PC O$24, 50, 76$.}]{\includegraphics[width=0.24\columnwidth]{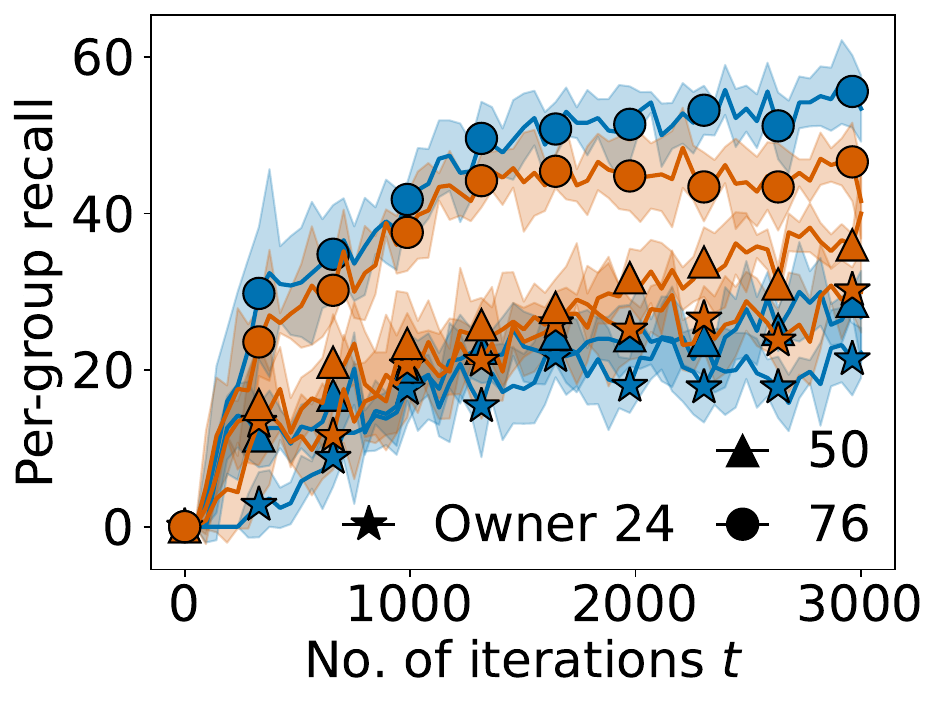}}
    \subfigure[\textbf{CH-PC O$31, 63, 95$.}]{\includegraphics[width=0.24\columnwidth]{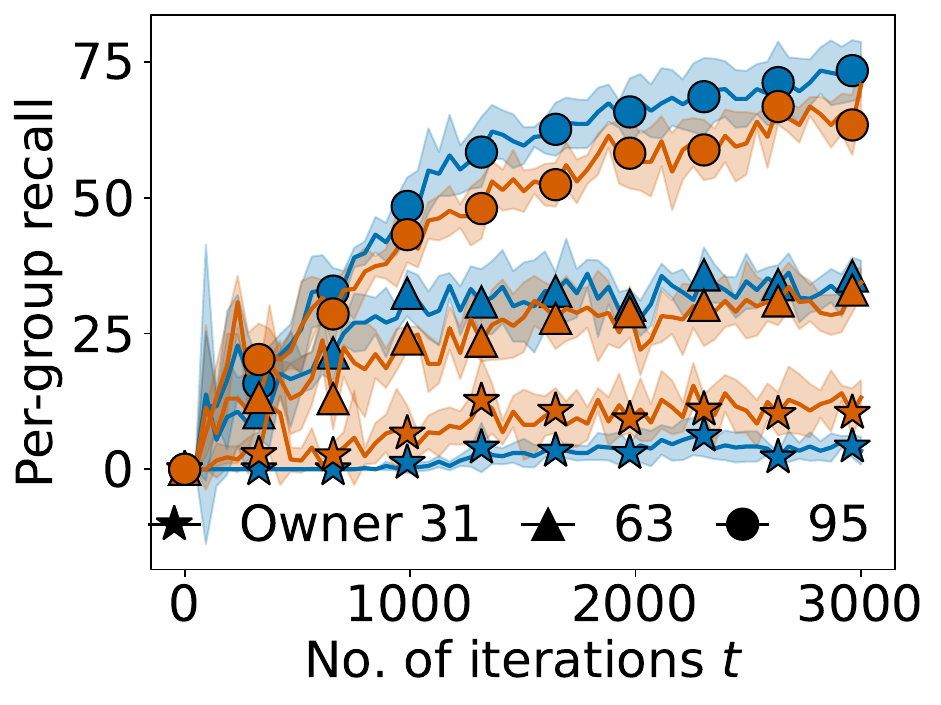}}
    \subfigure[{\scalebox{0.97}[1]{\mbox{\textbf{CH-PC O$22, 58, 98$.}}}}]{\includegraphics[width=0.24\columnwidth]{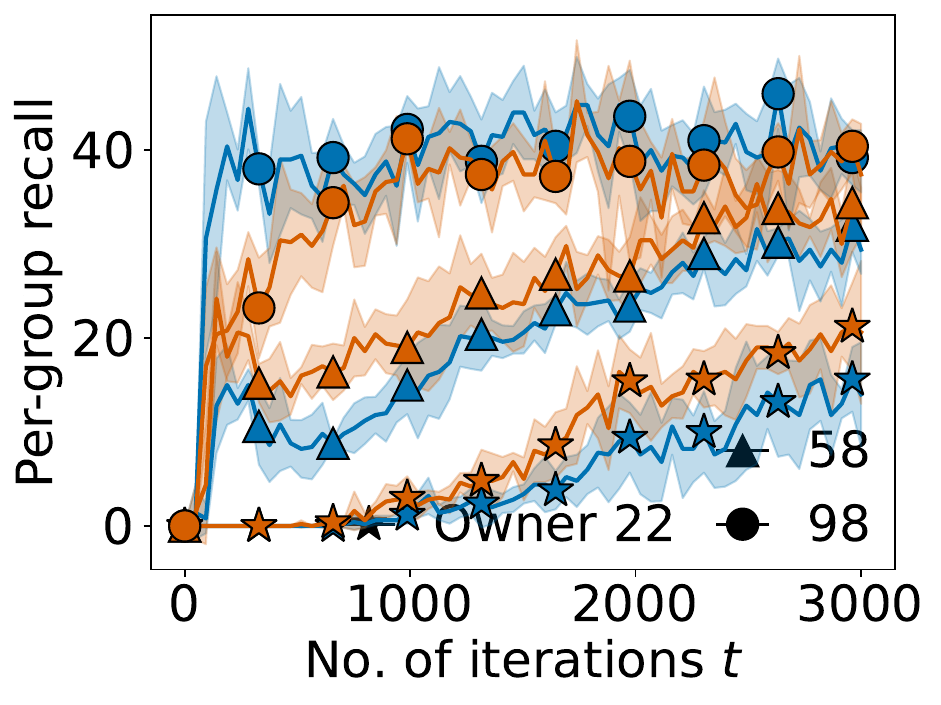}}
    \caption{\textbf{Comparison of per-owner model utility between \idpsgd\ and \inosgd.} Here \textbf{O$2, 5, 8$} refers to Owners $2$, $5$ and $8$ and likewise for the others. \inosgd\ consistently improves the model performance for the more private data owners while sometimes conservatively lowering the performance for the less private data owners. In general, it results in a more balanced learning dynamics.}
    \label{appfig:addressing-utility-imbalance}
\end{figure}

\begin{figure}[!ht]
    \centering
    \subfigure[\textbf{CA-MLP.}]{\includegraphics[width=0.24\columnwidth]{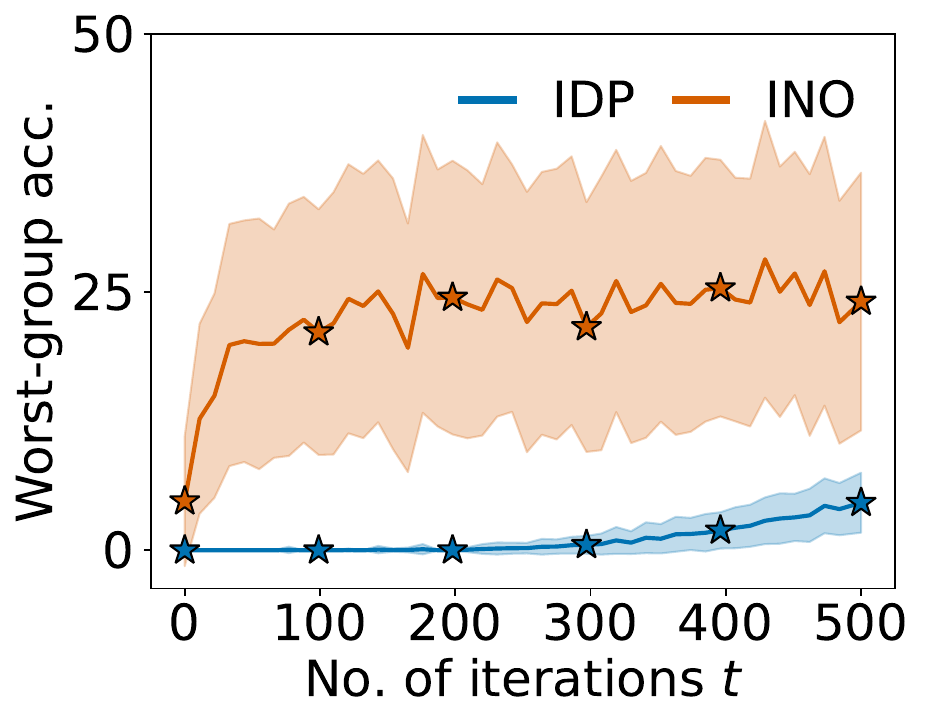} \label{appfig:po-cardio-worst}}%
    \subfigure[\textbf{PN-EN.}]{\includegraphics[width=0.24\columnwidth]{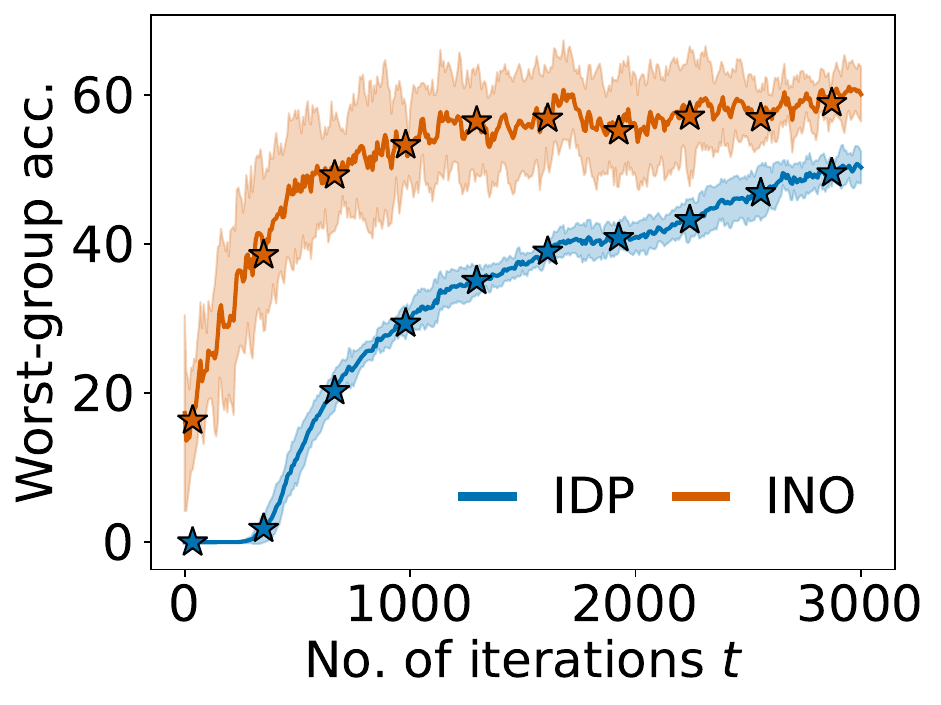}\label{appfig:po-pneumonia-worst} }%
    \subfigure[\textbf{MN-PC.}]{\includegraphics[width=0.24\columnwidth]{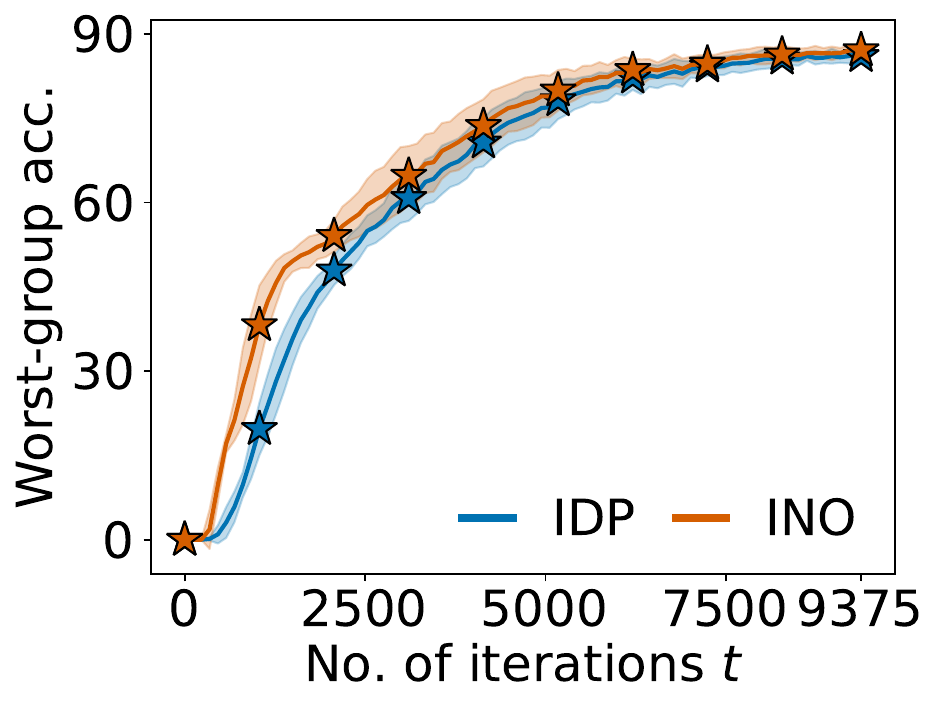} \label{appfig:po-mnist-worst}}%
    \subfigure[\textbf{CF-PC.}]{\includegraphics[width=0.24\columnwidth]{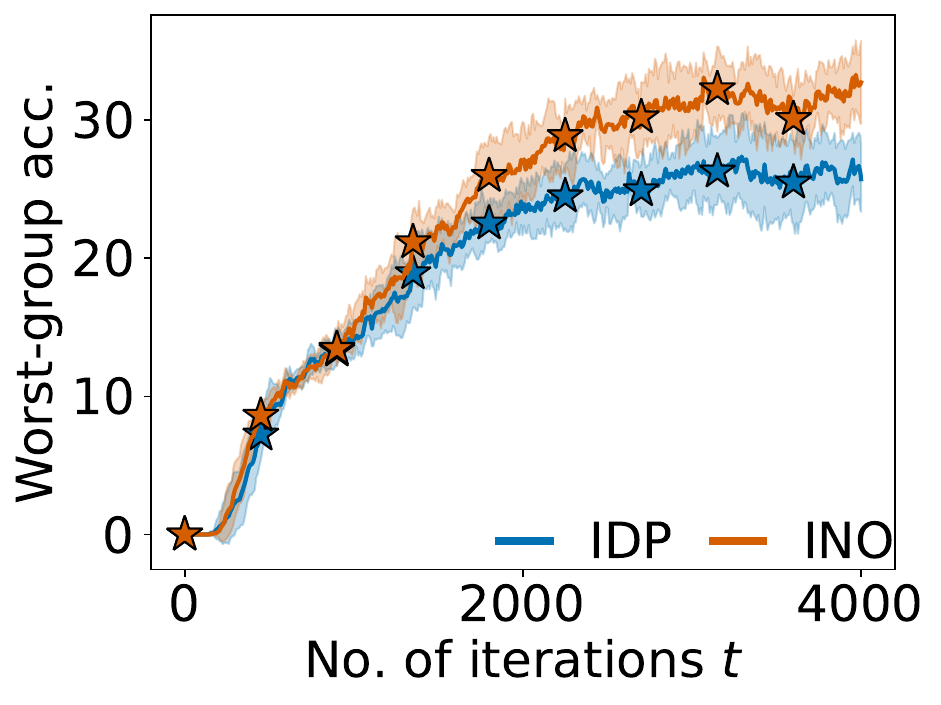} \label{appfig:po-cifar100fv-worst}}%
    \\
    \subfigure[\textbf{SU-LS.}]{\includegraphics[width=0.24\columnwidth]{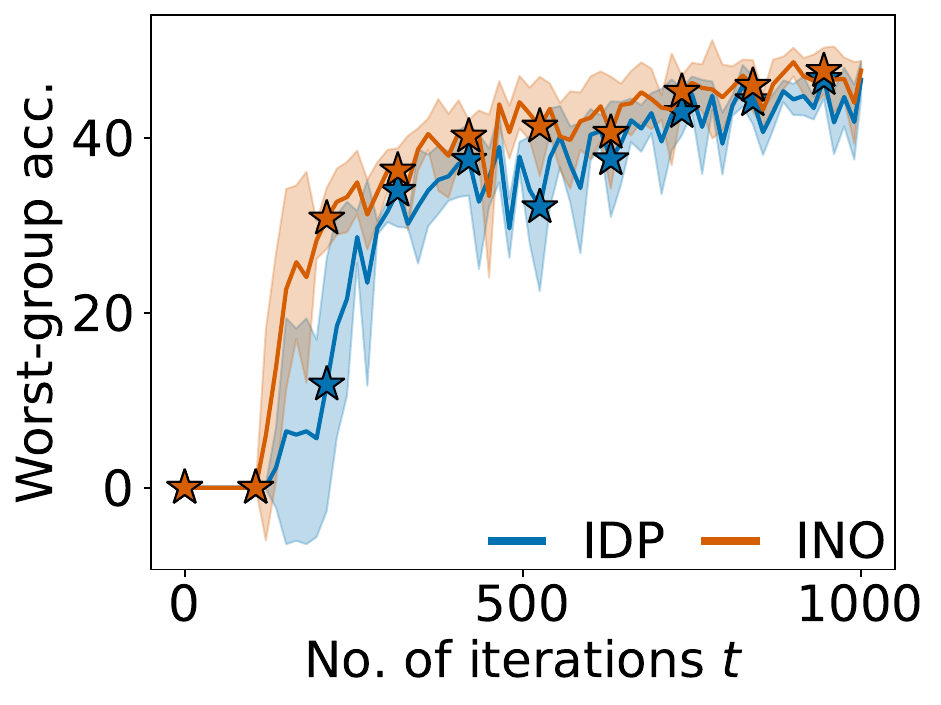} \label{appfig:po-surnames-worst}}%
    \subfigure[\textbf{CFN-PC.}]{\includegraphics[width=0.24\columnwidth]{Figures/Experiments/Per_Owner/cifar100fv-noisy_recall.pdf} \label{appfig:po-cifar100fv-noisy-worst}}%
    \subfigure[\textbf{CM-PC.}]{\includegraphics[width=0.24\columnwidth]{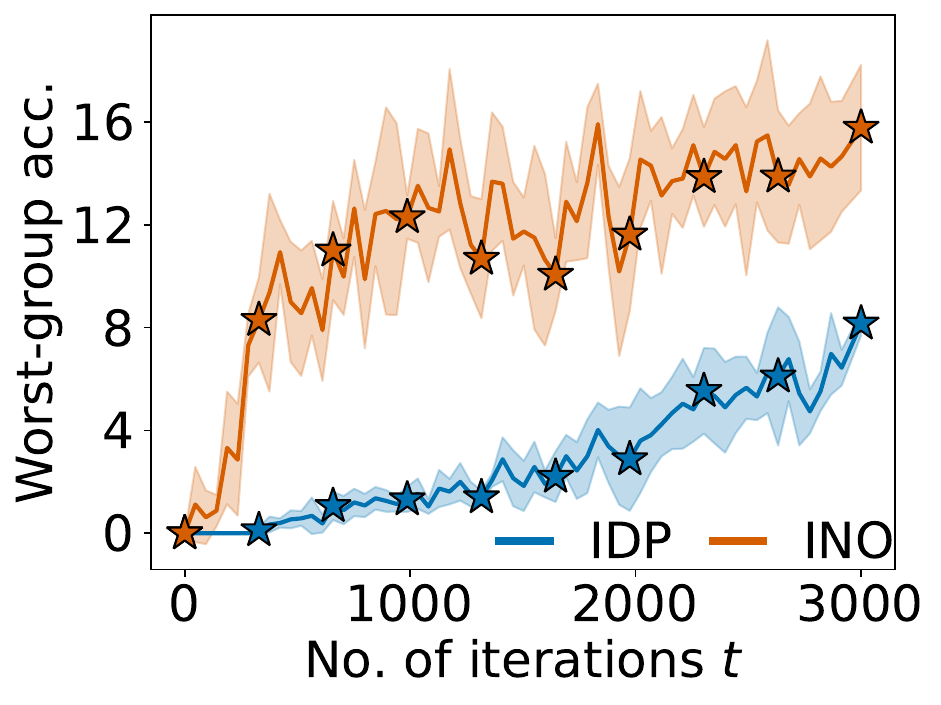}}%
    \subfigure[\textbf{CL-PC.}]{\includegraphics[width=0.24\columnwidth]{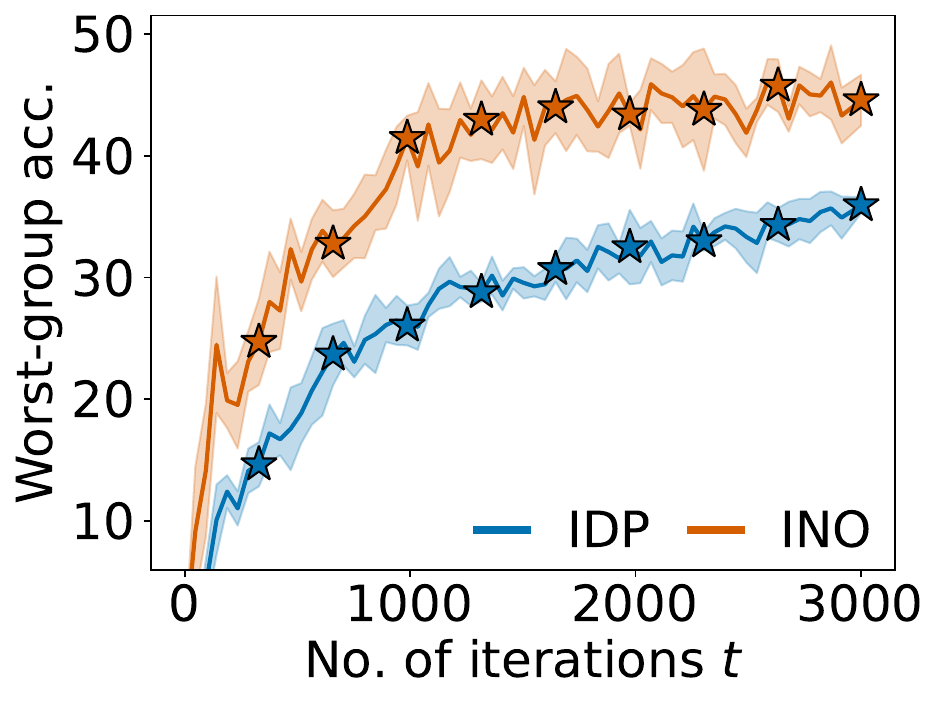}}%
    \\
    \subfigure[\textbf{CM-RN.}]{\includegraphics[width=0.24\columnwidth]{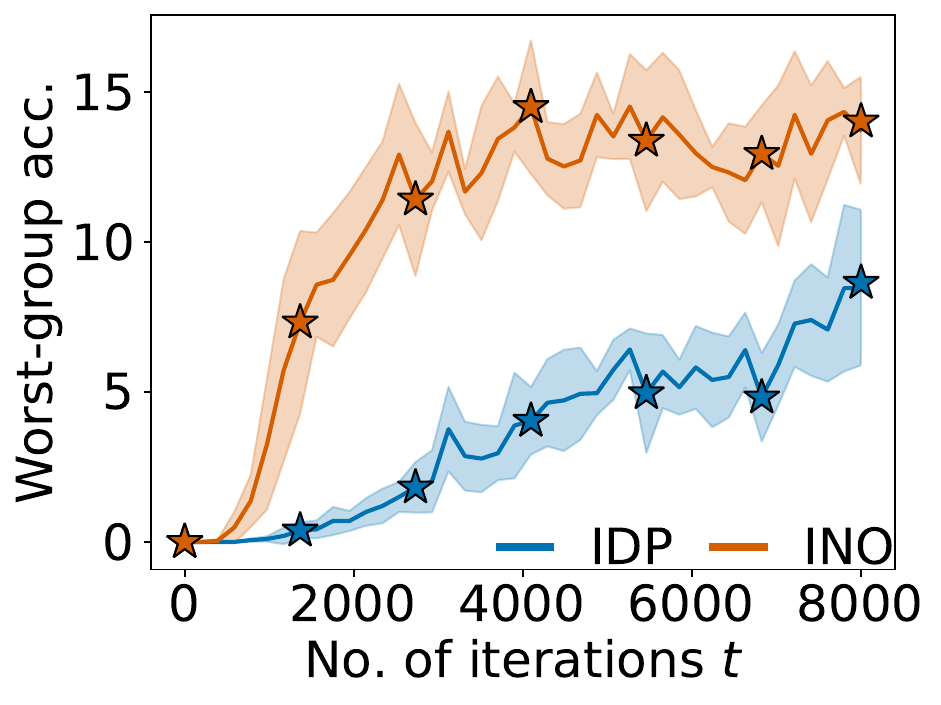} \label{appfig:po-cifar-resnet-worst}}%
    \subfigure[\textbf{CM-SC.}]{\includegraphics[width=0.24\columnwidth]{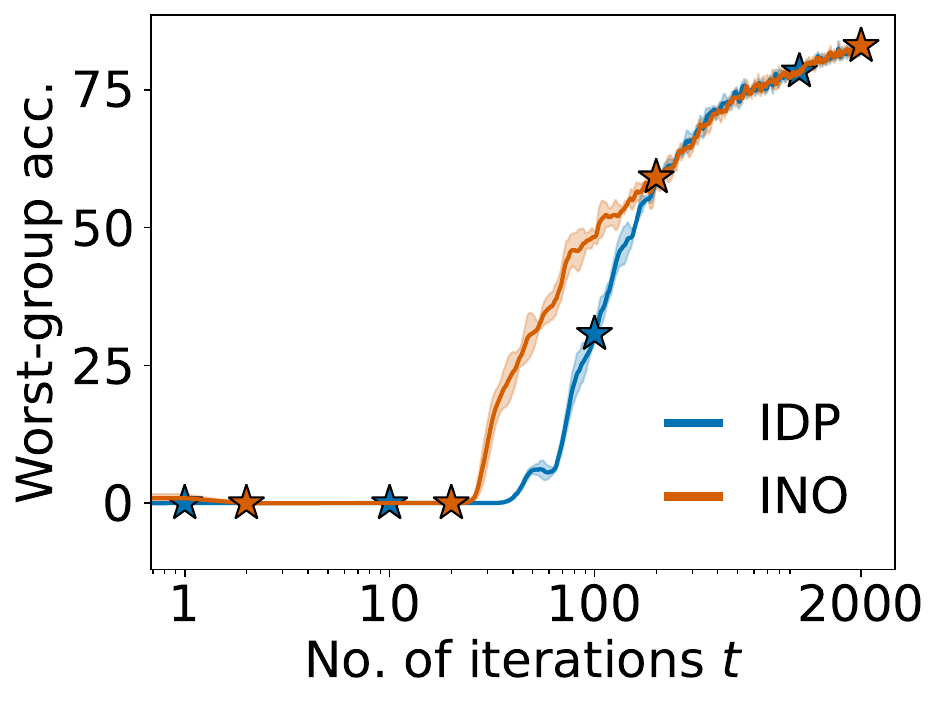} \label{appfig:po-cifar-features-worst}}%
    \subfigure[\textbf{CS-PC.}]{\includegraphics[width=0.24\columnwidth]{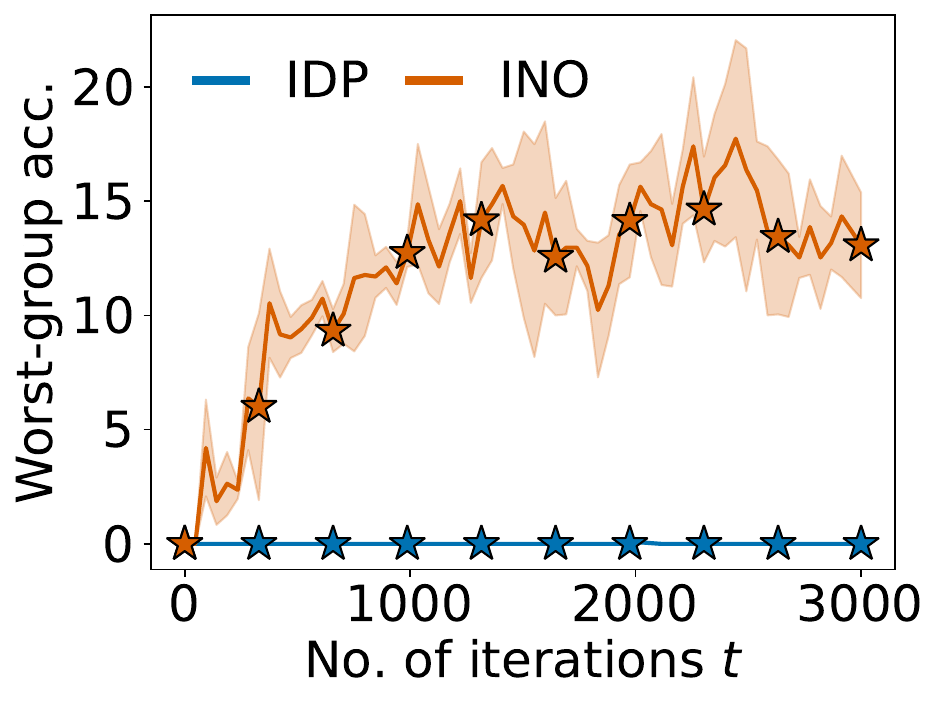} \label{appfig:po-cifar-continuous-worst}}%
    \subfigure[\textbf{CFS-PC.}]{\includegraphics[width=0.24\columnwidth]{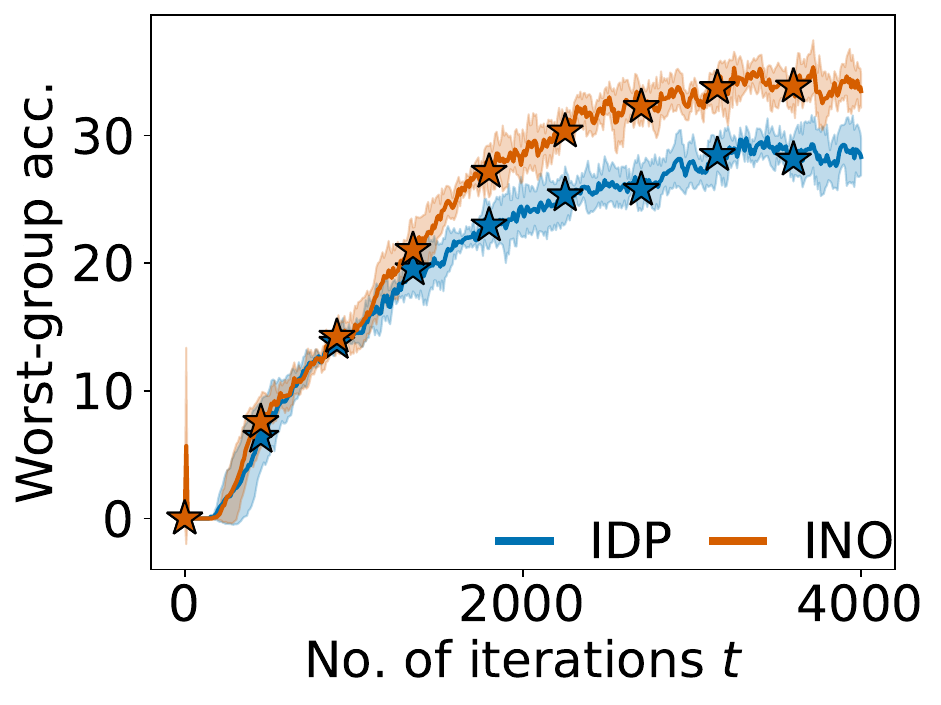} \label{appfig:po-cifar100fv-continuous-worst}}%
    \\
    \subfigure[\textbf{COL-PC.}]{\includegraphics[width=0.24\columnwidth]{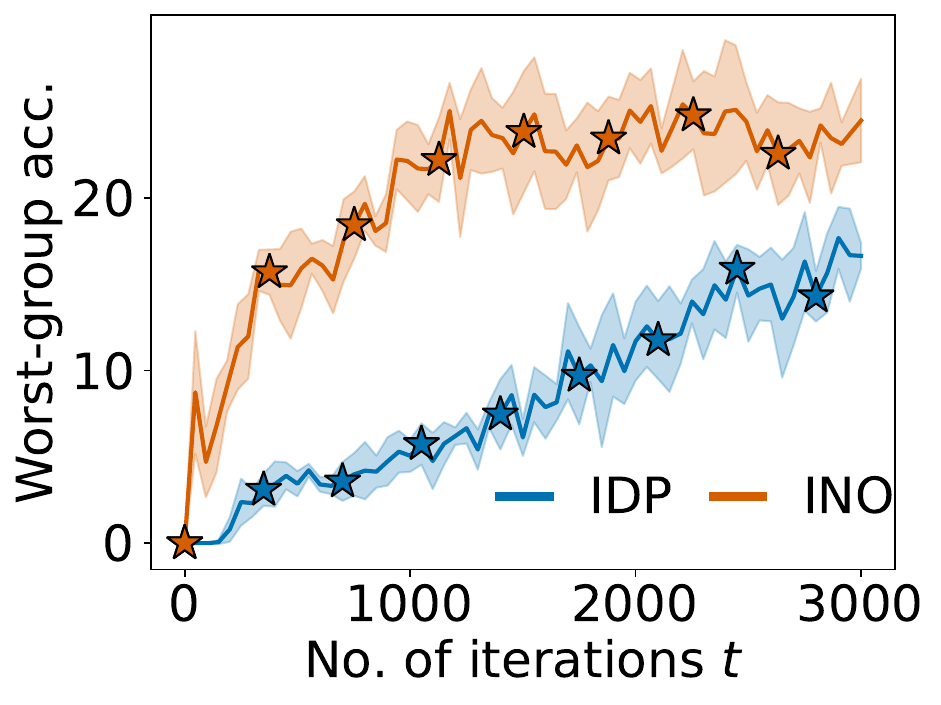}\label{appfig:po-cifar-overlap-less-worst}}%
    \subfigure[\textbf{COM-PC.}]{\includegraphics[width=0.24\columnwidth]{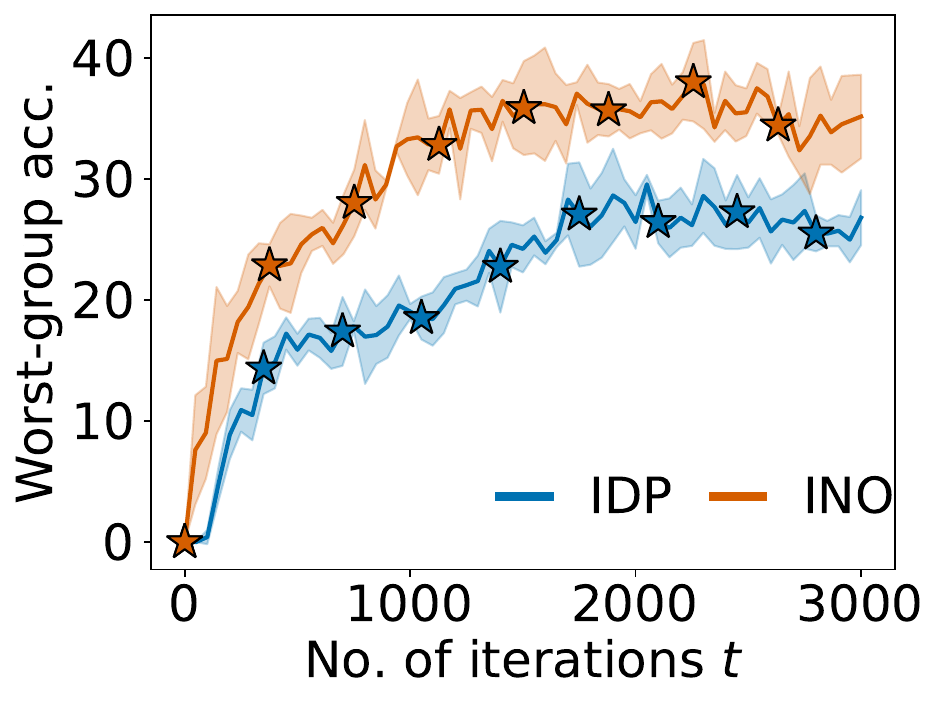} \label{appfig:po-cifar-overlap-more-worst}}%
    \subfigure[\textbf{CH-PC.}]{\includegraphics[width=0.24\columnwidth]{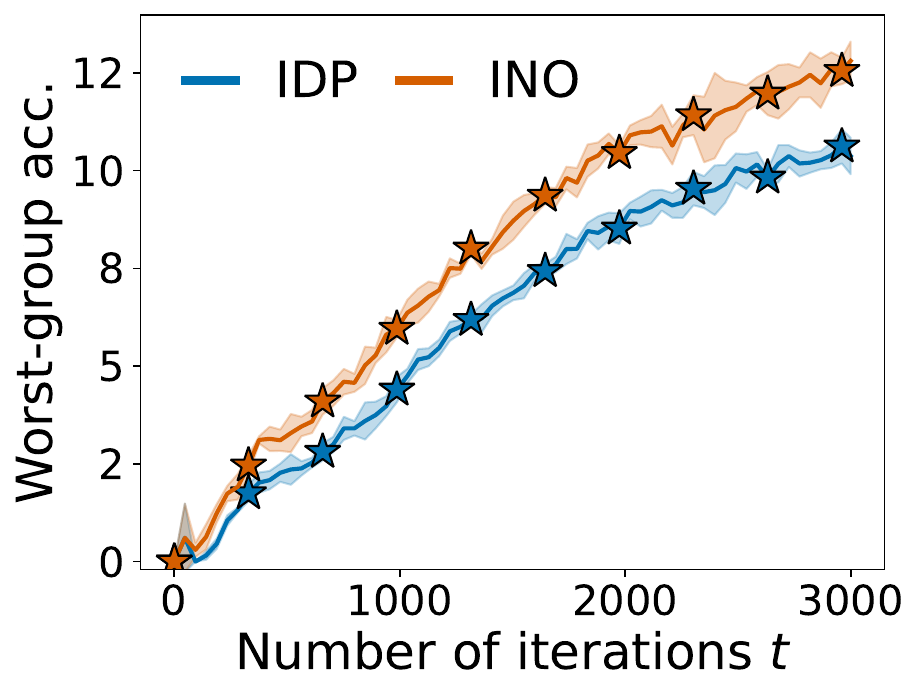}\label{appfig:po-cifar100-worst}}
    \caption{\textbf{Comparison of worst-owner accuracy between \idpsgd\ and \inosgd.} \inosgd\ consistently improves the model performance for the most private data owners and thus address utility imbalance.}
    \label{appfig:addressing-utility-imbalance-worst}
\end{figure}

\subsection{Addressing Utility Imbalance} \label{app:addressing-utility-imbalance}

In the main paper, Fig.~\ref{fig:addressing-utility-imbalance}\subref{fig:po-mnist-loss}, \subref{fig:po-cifar-high-recall} and \subref{fig:po-cifar100fv-recall} are under the \textbf{MN-PC}, \textbf{CL-PC} and \textbf{CF-PC} setting respectively; Fig.~\ref{fig:change-against-budget}\subref{fig:cifar-pb-change} and \subref{fig:cifar100-pb-change} are under the \textbf{CM-PC} and \textbf{CH-PC} setting respectively.

In Fig.~\ref{appfig:addressing-utility-imbalance}, we present more experiments under various settings and more complex models (e.g., \textbf{SU-LS}, \textbf{CM-RN}, \textbf{CM-SC}) that illustrate how \inosgd\ still addresses utility imbalance induced by \idpsgd. In particular, for experiments with more data owners (e.g., \textbf{CH-PC}), we show $3$ or $4$ data owners whose privacy preferences range from low to high for a clearer visualization. We note the following points:
\begin{itemize}[leftmargin=*,itemsep=0pt,topsep=0pt,parsep=0pt,partopsep=0pt]
    \item Fig.~\ref{appfig:po-cardio-recall}-\subref{appfig:po-pneumonia-loss} are real-world clinical datasets where owners of (likely) positive-case data have stronger privacy requirements, which reinstates our motivation that utility imbalance is undesirable for subsequent users of the trained ML model.
    \item Fig.~\ref{appfig:po-cifar100fv-noisy-recall} considers each category as a group since each owner's data contain label noises. It shows that \inosgd's performance is robust to label noises.
    \item Fig.~\ref{appfig:po-cifar-continuous-recall} and \subref{appfig:po-cifar100fv-continuous-recall} consider possible real-world settings where each data owner only possesses $1$ datum and can choose its own privacy budgets. Owners who possess sensitive data can generally prefer stronger privacy but have small variations among themselves. The two results show that \inosgd\ is still effective under such settings.
    \item Fig.~\ref{appfig:po-cifar-overlap-less-recall} and \subref{appfig:po-cifar-overlap-more-recall} consider possible real-world settings where occasionally sensitive data have weaker privacy requirements and vice versa. The two results show that \inosgd\ is still effective under such settings.
\end{itemize}
\begin{remark*}
    Similar to MNIST (\textbf{MN-PC}), Pneumonia and CIFAR-10 with extracted embeddings (\textbf{PN-EF}/\textbf{CM-SC}) are also relatively easy-to-learn tasks for which MID happens at an earlier stage of training. In contrast, in more difficult tasks such as CIFAR-10 (\textbf{CM-PC}/\textbf{CM-RN}) and CIFAR-100 (\textbf{CH-PC}), privacy budgets are used up before MID even ends. This reinstates the importance of considering the entire learning dynamics and the benefit of our \inosgd\ algorithm.
\end{remark*}

\FloatBarrier

Another way to assess utility imbalance is through the \textit{worst-group accuracy} (i.e., the accuracy of the worst group(s) at different stages of training). In Fig.~\ref{appfig:addressing-utility-imbalance-worst}, we demonstrate that \inosgd\ consistently improves the worst-group accuracy over \idpsgd. Specifically, for \textbf{CH-PC} (Fig.~\ref{appfig:po-cifar100fv-worst}), we show the average of the worst $50\%$ owners due to a large number of small privacy budgets and difficult classes. %

In Fig.~\ref{appfig:change-against-budget}, we repeat the experiments in Fig.~\ref{fig:change-against-budget} (graphs of changes in model performance from \idpsgd\ to \inosgd\ against different privacy budgets) on the \textbf{CR-PC} setting, where each of the $10$ data owners has a fixed privacy budget but receive a randomly sampled class in each random run. This could eliminate the impact of inherent different in class complexities. We can see that \inosgd\ still effectively improve the utility on the more private owners.

\begin{figure}[!t]
    \centering
    \subfigure[\textbf{CR-PC.}]{\includegraphics[width=0.24\columnwidth]{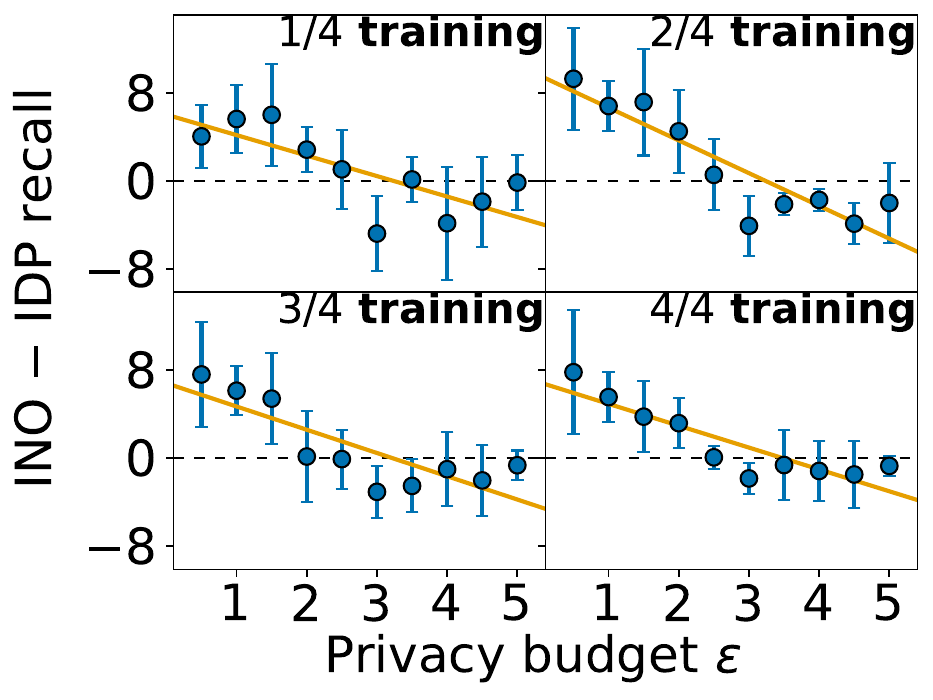} \label{appfig:cab-cifar-random-recall}}%
    \subfigure[\textbf{CR-PC.}]{\includegraphics[width=0.24\columnwidth]{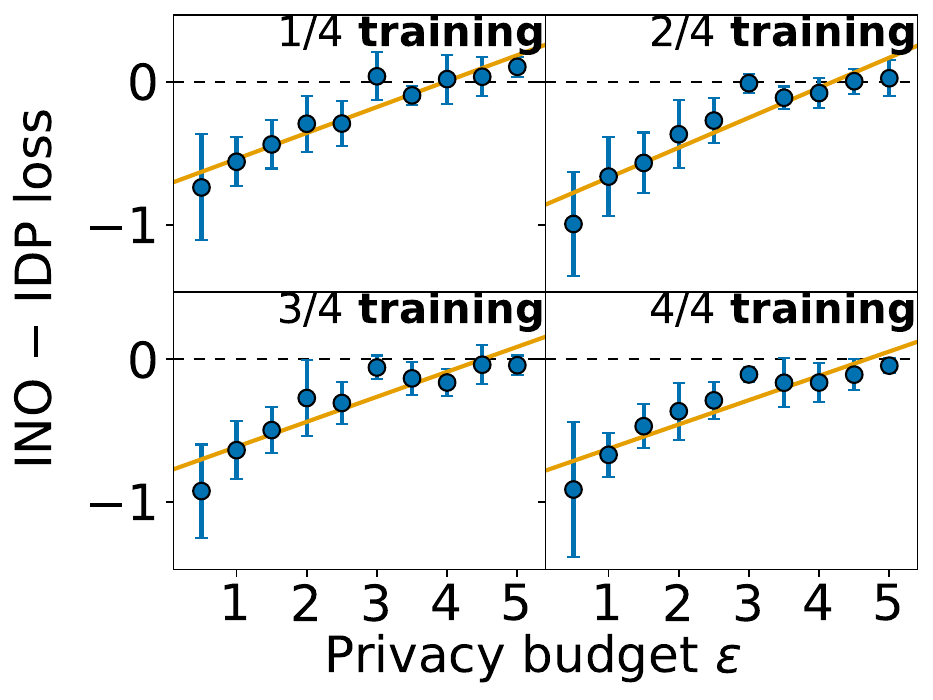}\label{appfig:cab-cifar-random-loss} }%
    \caption{\textbf{\inosgd's per-group utility minus \idpsgd's.} Model utilities for the more private owners show higher increase.}
    \label{appfig:change-against-budget}
\end{figure}

\begin{figure}
    \centering
    \subfigure[\textbf{CA-MLP accuracy.}]{\includegraphics[width=0.24\columnwidth]{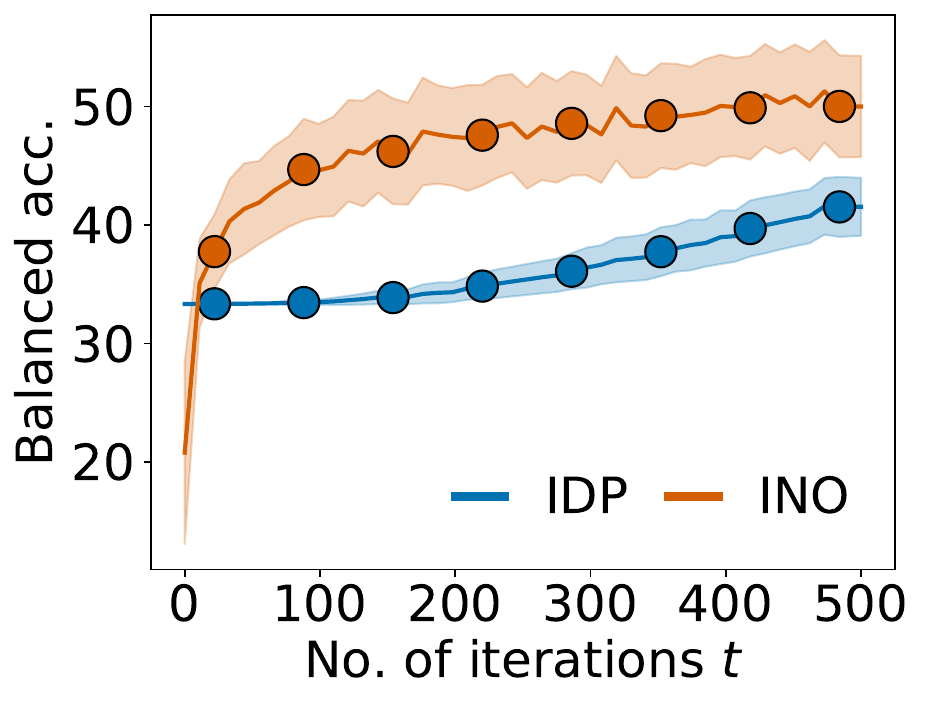} }%
    \subfigure[\textbf{CA-MLP loss.}]{\includegraphics[width=0.24\columnwidth]{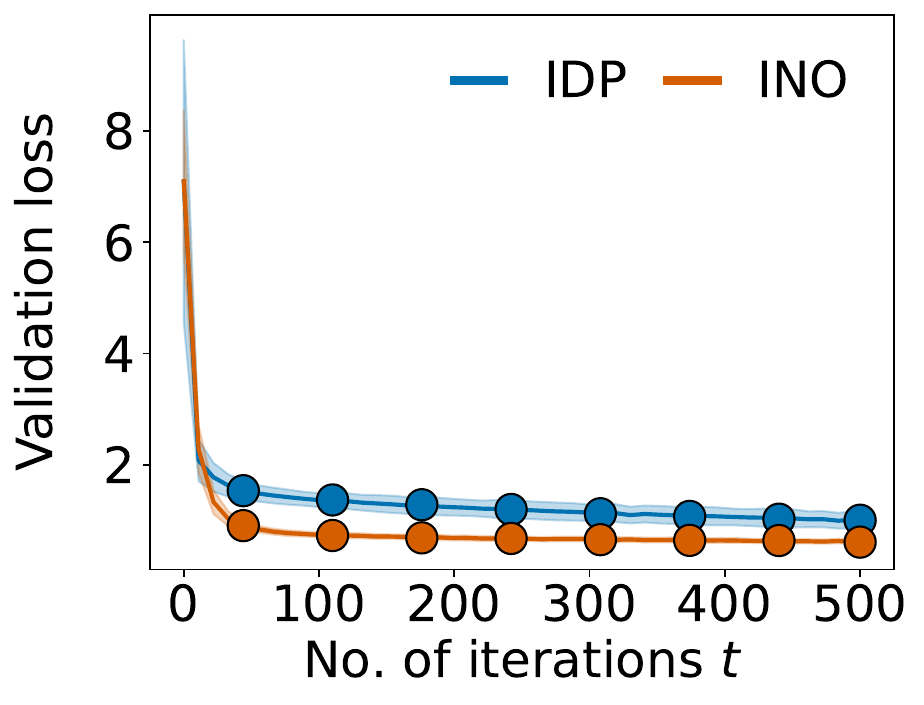} }%
    \subfigure[\textbf{PN-EN accuracy.}]{\includegraphics[width=0.24\columnwidth]{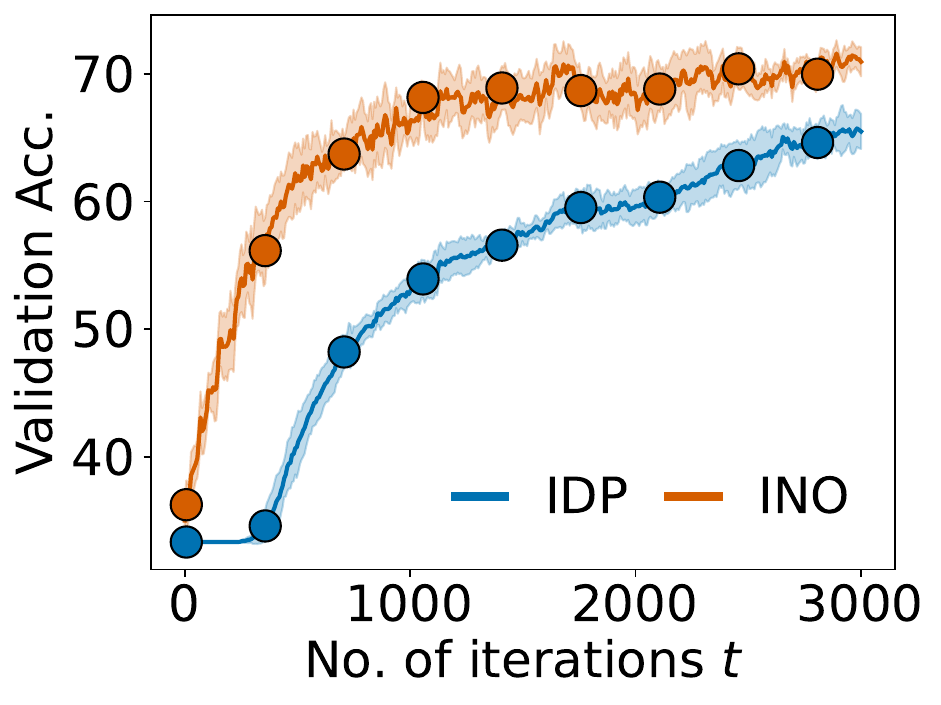} }%
    \subfigure[\textbf{PN-EN loss.}]{\includegraphics[width=0.24\columnwidth]{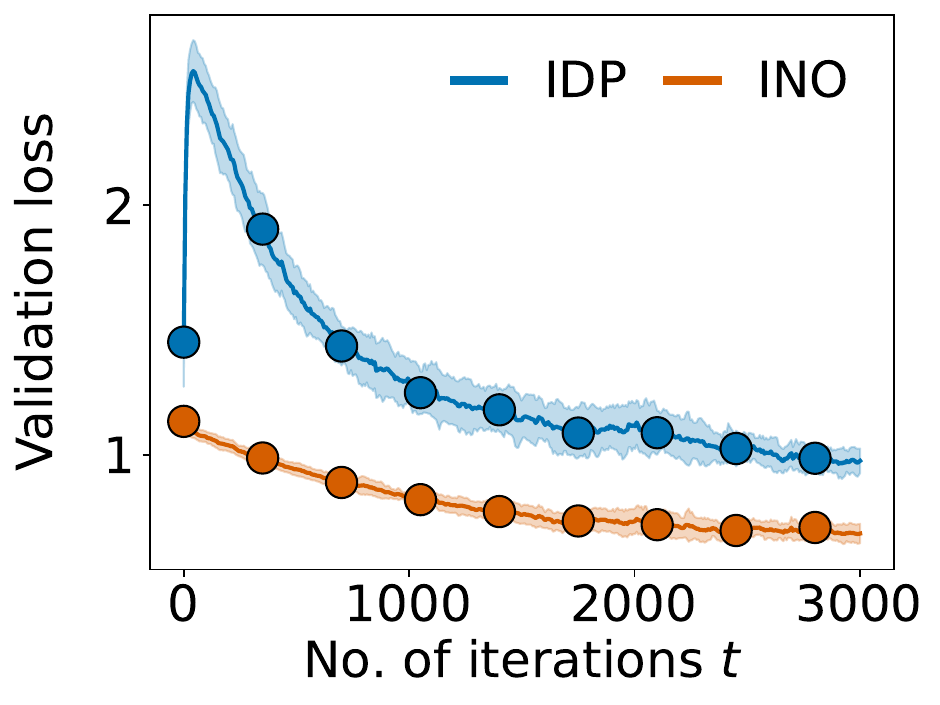} }%
    \\
    \subfigure[\textbf{MN-PC accuracy.}]{\includegraphics[width=0.24\columnwidth]{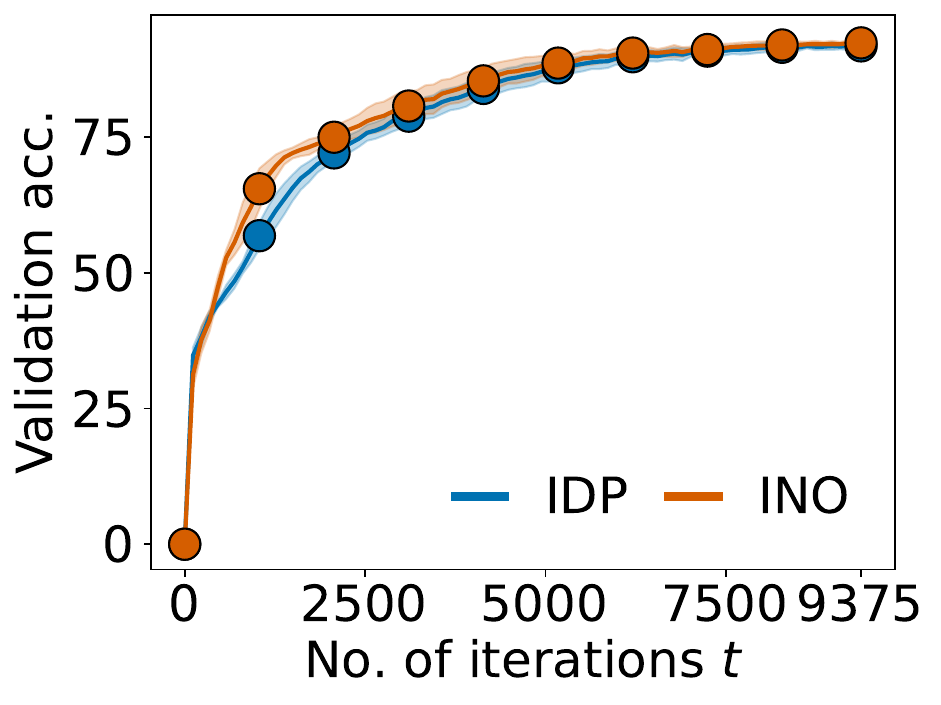} \label{appfig:overall-mnist-recall}}%
    \subfigure[\textbf{CF-PC loss.}]{\includegraphics[width=0.24\columnwidth]{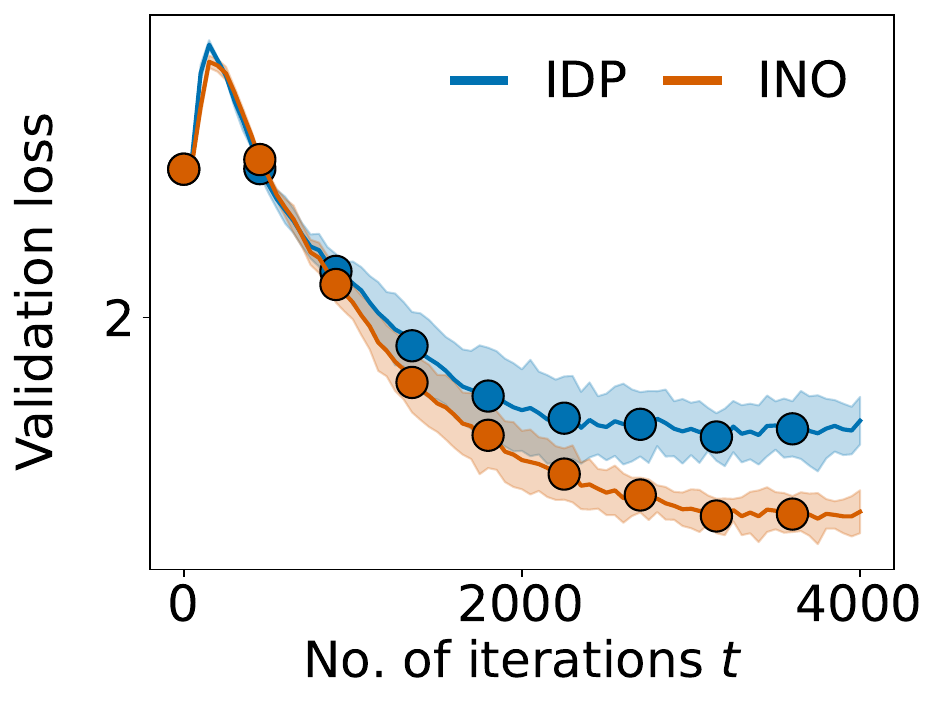} \label{appfig:overall-cifar100fv-loss}}%
    \subfigure[\textbf{SU-LS accuracy.}]{\includegraphics[width=0.24\columnwidth]{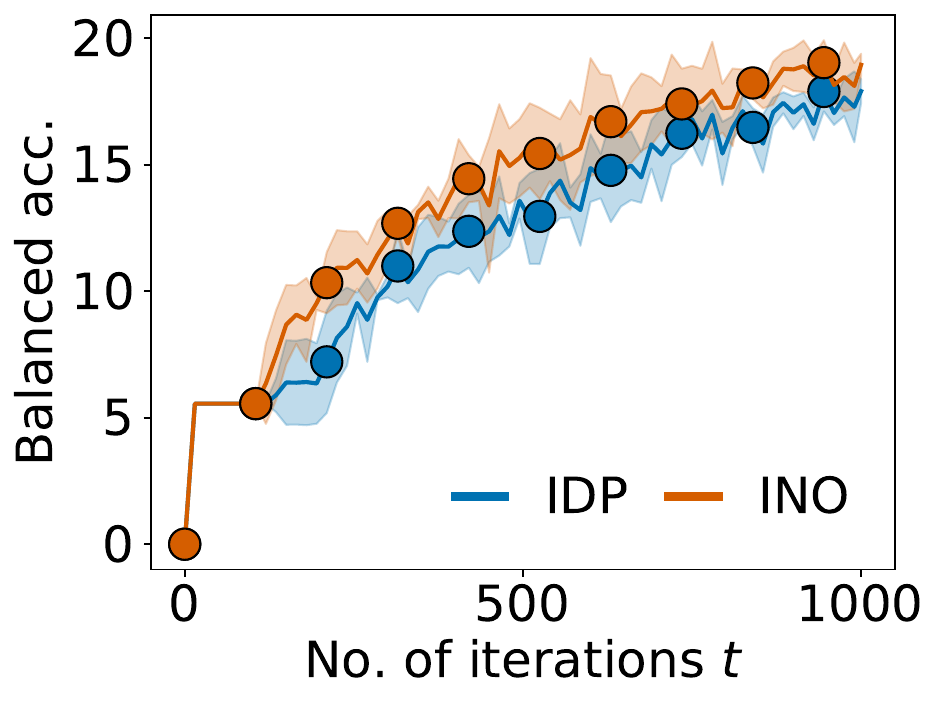} \label{appfig:overall-surnames-recall}}%
    \subfigure[\textbf{CFN-PC accuracy.}]{\includegraphics[width=0.24\columnwidth]{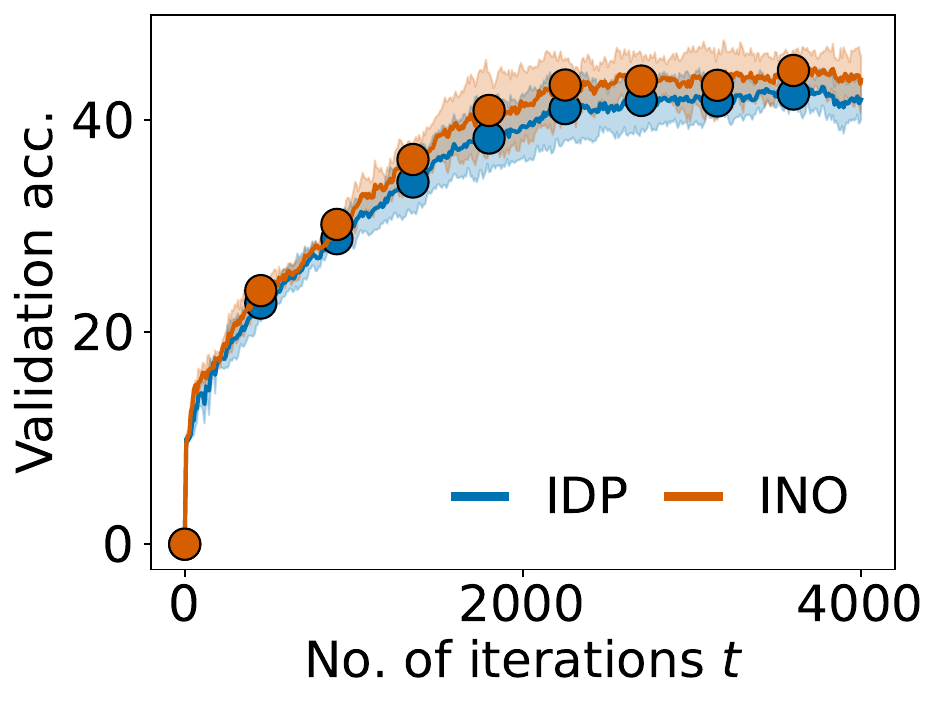} \label{appfig:overall-cifar100fv-noisy-recall}}%
    \\
    \subfigure[\textbf{CM-PC accuracy.}]{\includegraphics[width=0.24\columnwidth]{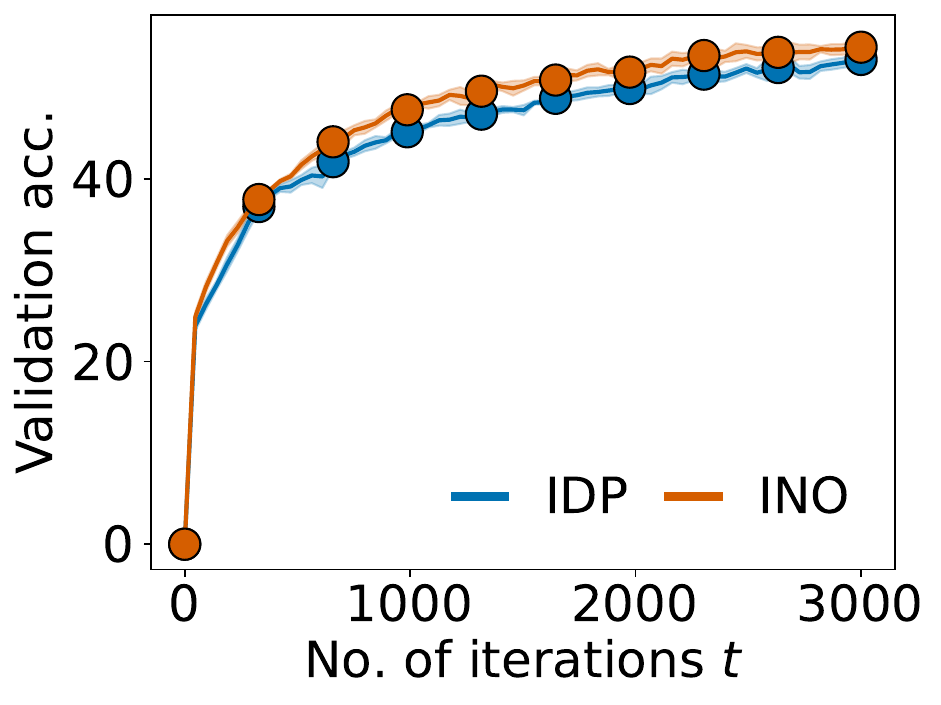} \label{appfig:overall-cifar-low-accuracy}}
    \subfigure[\textbf{CL-PC loss.}]{\includegraphics[width=0.24\columnwidth]{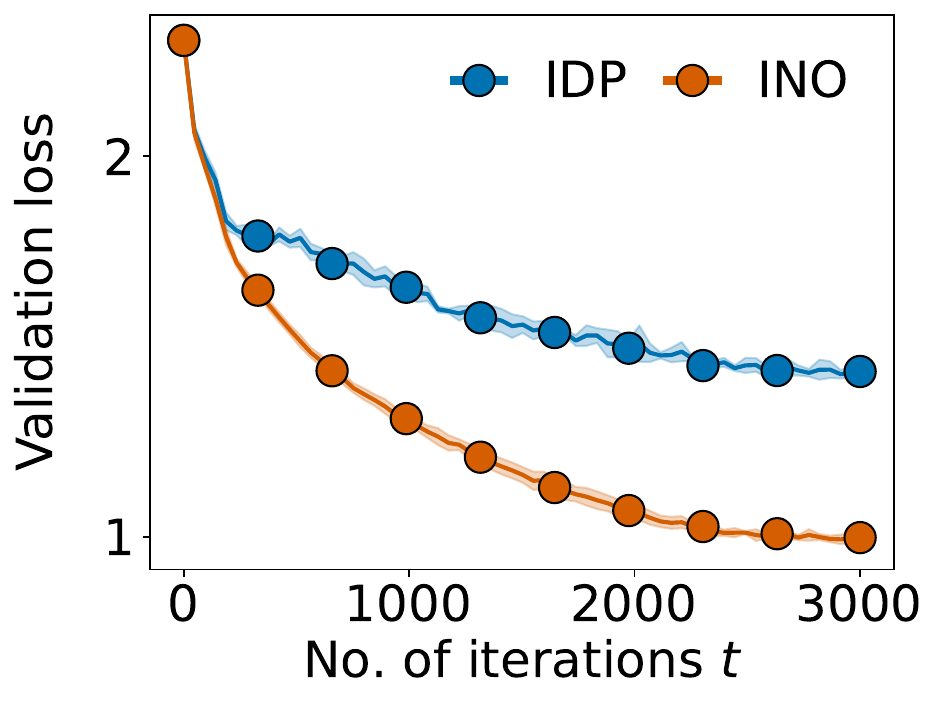}}
    \subfigure[\textbf{CM-RN accuracy.}]{\includegraphics[width=0.24\columnwidth]{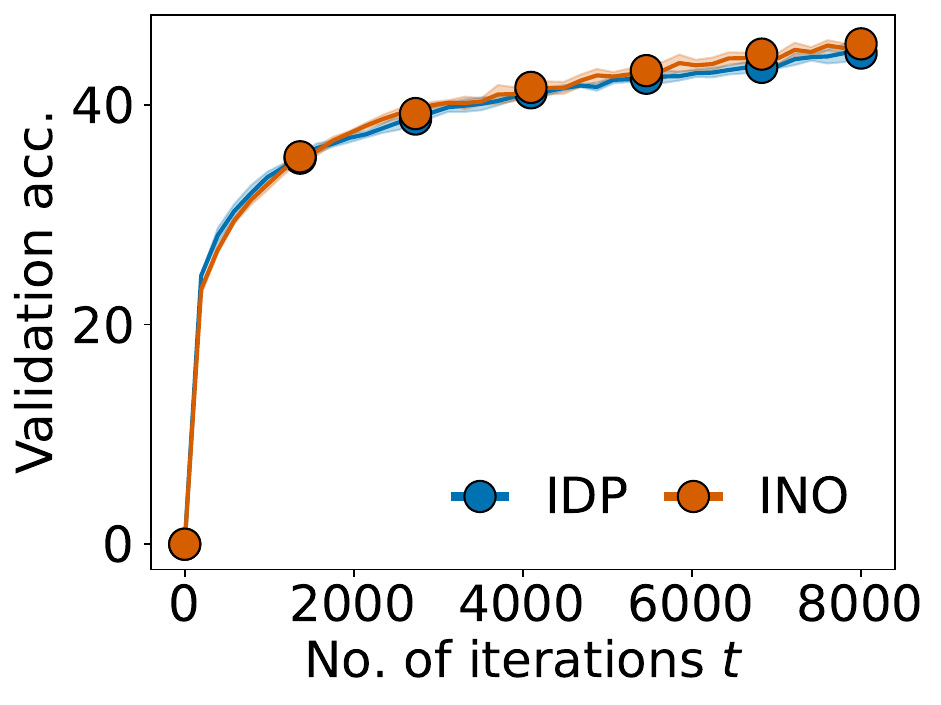}}
    \subfigure[\textbf{CM-RN loss.}]{\includegraphics[width=0.24\columnwidth]{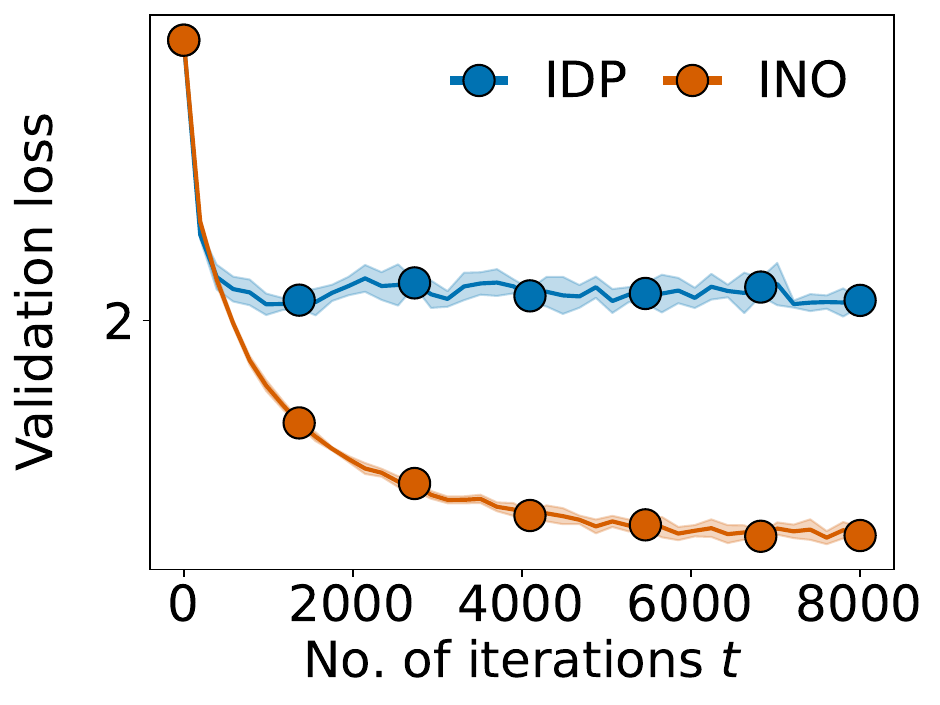}}
    \\
    \subfigure[\textbf{CM-SC accuracy.}]{\includegraphics[width=0.24\columnwidth]{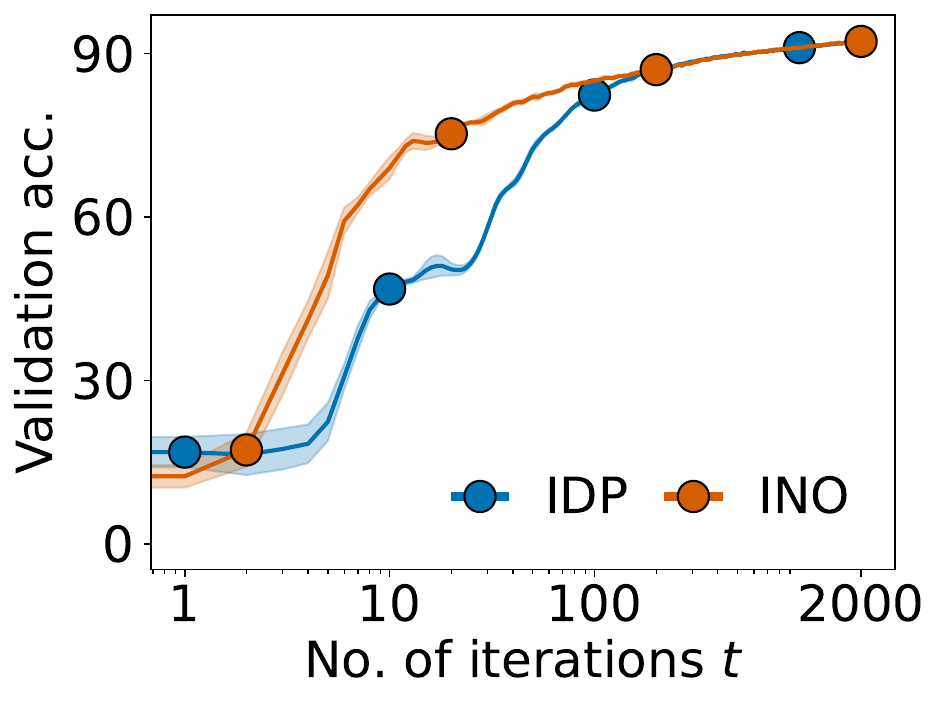} \label{appfig:overall-cifar-features-accuracy}}
    \subfigure[\textbf{COL-PC accuracy.}]{\includegraphics[width=0.24\columnwidth]{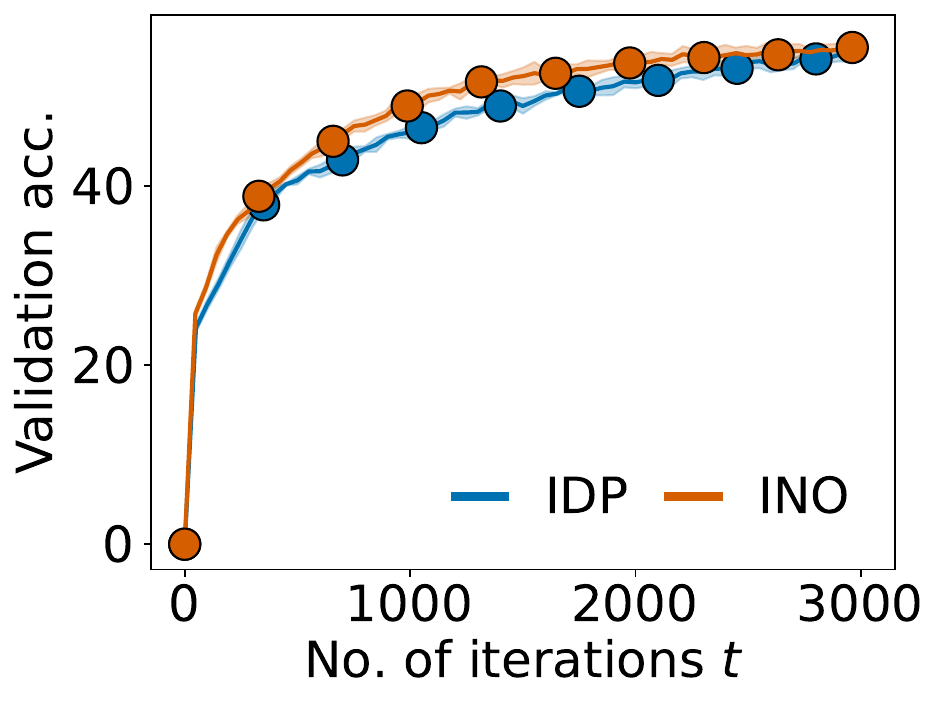}}
    \subfigure[\textbf{COM-PC accuracy.}]{\includegraphics[width=0.24\columnwidth]{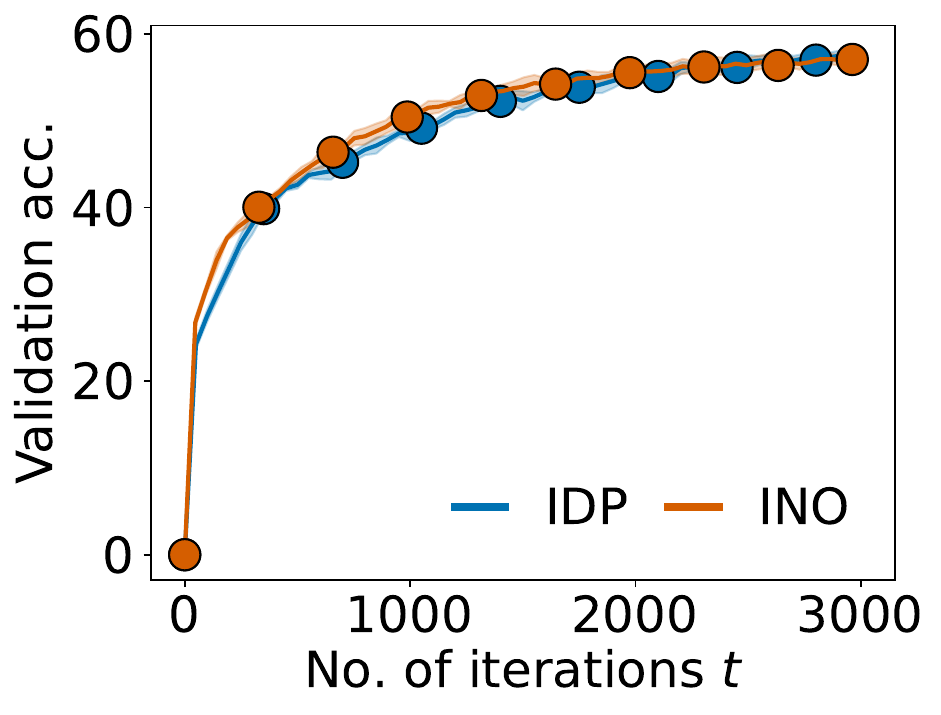}}
    \subfigure[\textbf{CH-PC accuracy.}]{\includegraphics[width=0.24\columnwidth]{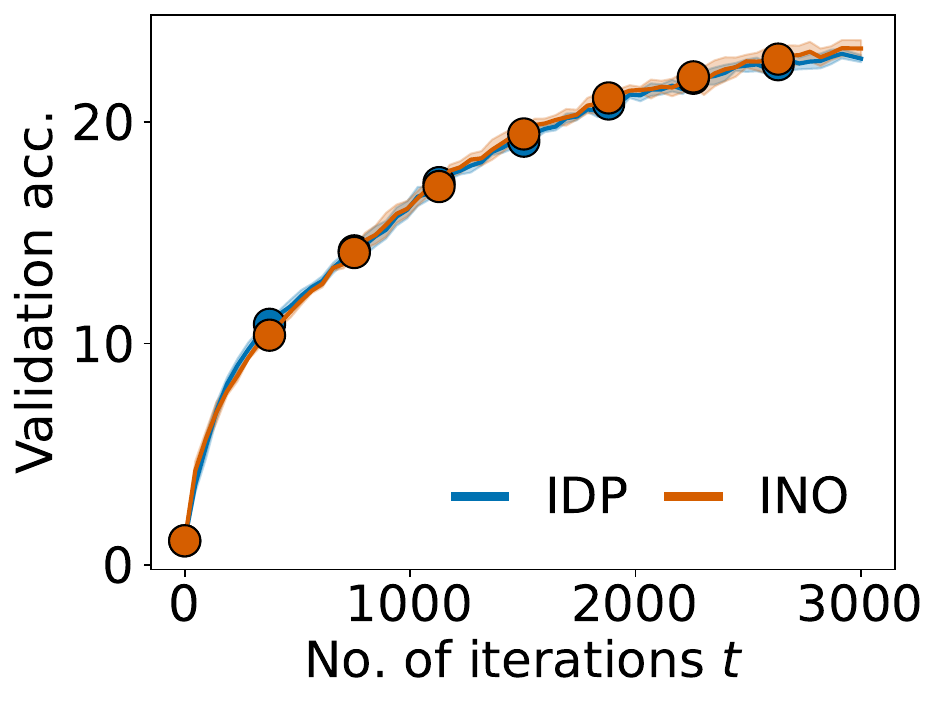}}
    \caption{\textbf{Comparison of overall model performance between \idpsgd\ and \inosgd.} \inosgd\ consistently improves the overall model performance.}
    \label{appfig:preserving-overall-model-utility}
\end{figure}

\subsection{Preserving Overall Model Utility} \label{app:preserving-overall-model-utility}

In the main paper, Fig.~\ref{fig:preserving-overall-model-utility}\subref{fig:overall-mnist-loss}, \subref{fig:overall-cifar-high-accuracy} and \subref{fig:overall-cifar100fv-accuracy} are under the \textbf{MN-PC}, \textbf{CL-PC} and \textbf{CF-PC} setting respectively.

In Fig.~\ref{appfig:preserving-overall-model-utility}, we present more experiments that illustrate how \inosgd\ improves/preserves overall model utility as compared with \idpsgd. In particular, we use balanced accuracy for the Surnames dataset (\textbf{SU-LS}) and Cardiotocography dataset (\textbf{CA-MLP}) since their validation sets are highly imbalanced. It can be seen that \inosgd\ demonstrates the same trends under the various settings noted in App.~\ref{app:addressing-utility-imbalance}.

\subsection{Other Empirical Benefits and Discussions} \label{app:other-empirical-benefits}

In this section, we demonstrate and discuss several other benefits of \inosgd.

\begin{figure}
    \centering
    \subfigure[\textbf{MN-PC O$1$ C$2,4$.}]{\includegraphics[width=0.24\columnwidth]{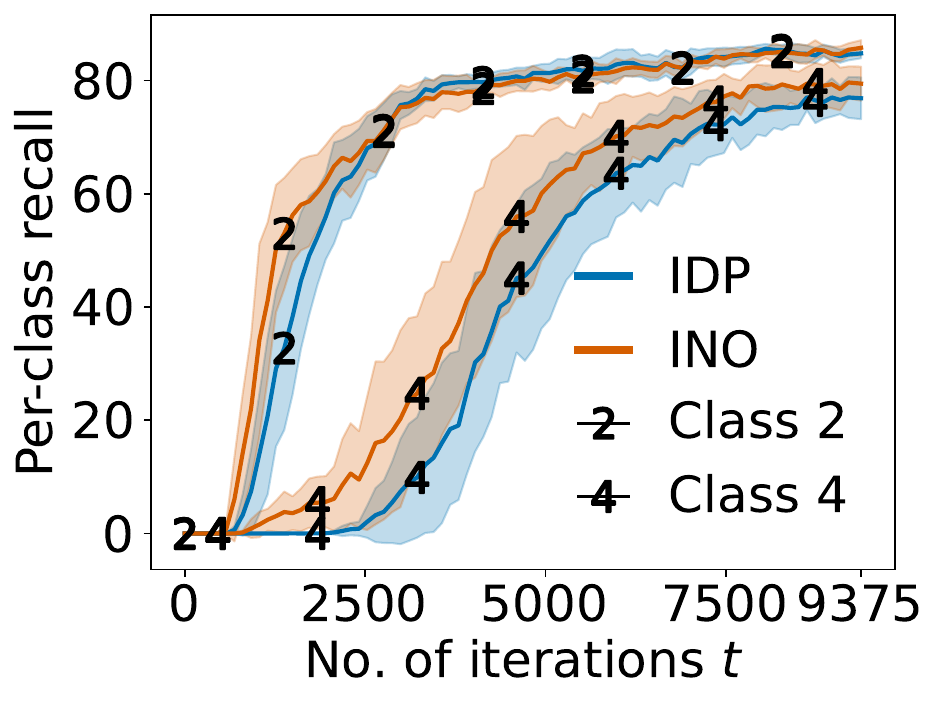}}
    \subfigure[\textbf{SU-LS O$1$ C$7,10$.}]{\includegraphics[width=0.24\columnwidth]{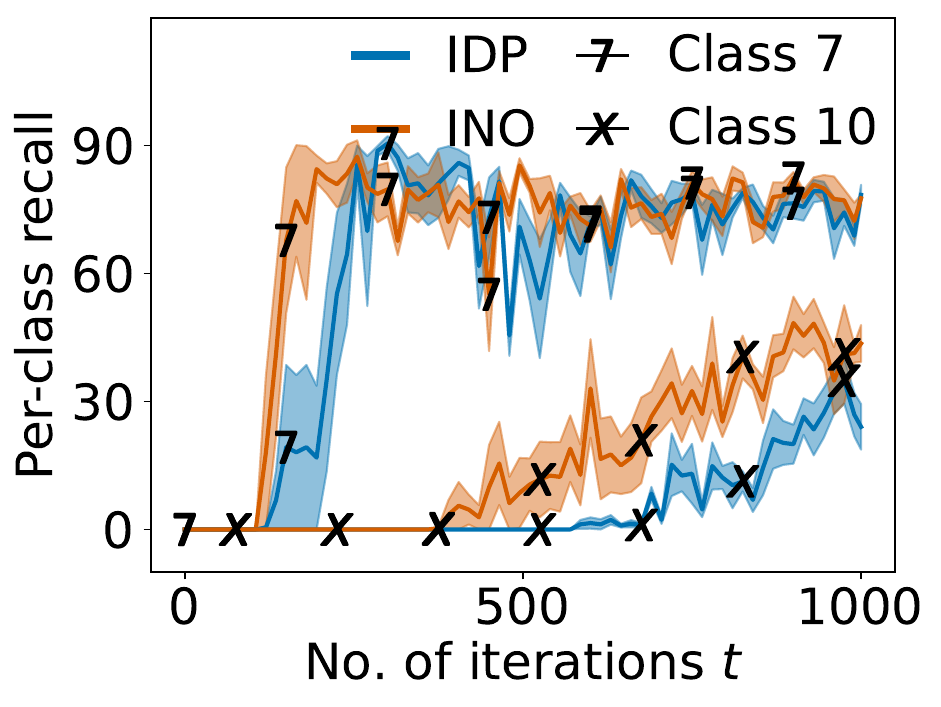}}
    \subfigure[\textbf{CF-PC O$1$ C$0,3$.}]{\includegraphics[width=0.24\columnwidth]{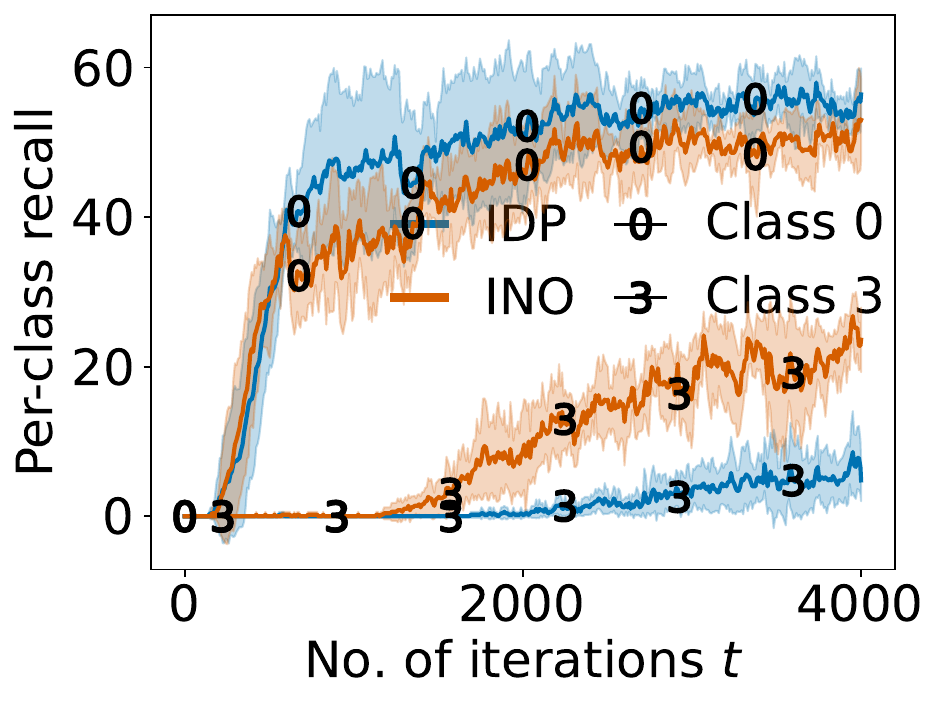}\label{appfig:wo-cifar100fv-recall-03}}
    \subfigure[\textbf{CF-PC O$1$ C$6, 9$.}]{\includegraphics[width=0.24\columnwidth]{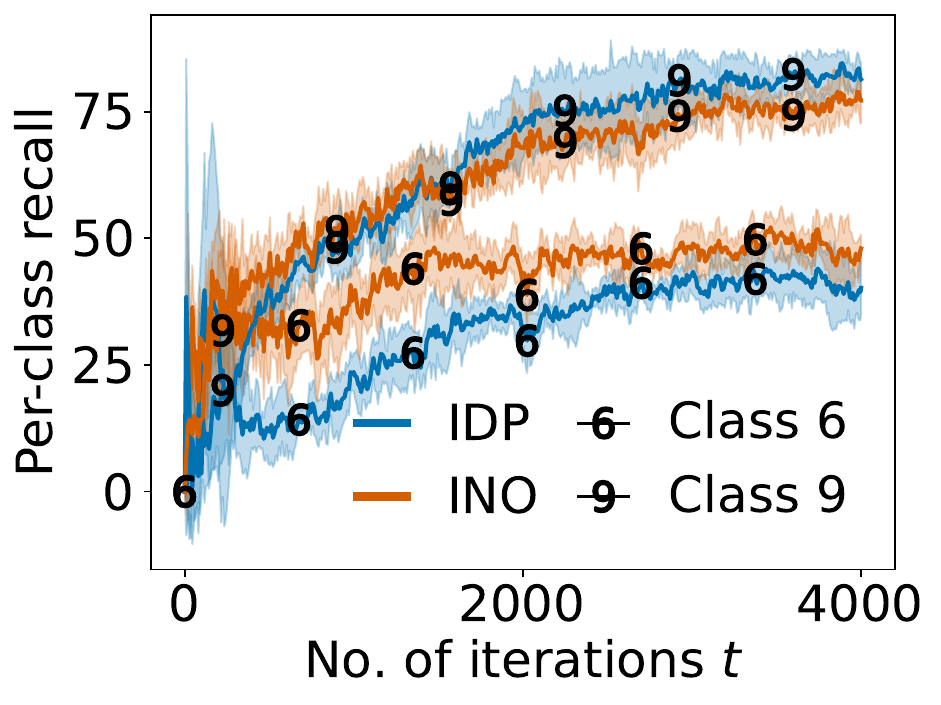}}
    \caption{\textbf{Per-class learning dynamics within a data owner.} Here \textbf{O$1$ C$2,4$} refers to Owner $1$ Classes $2$ and $4$, and likewise for the others. Different classes within a data owner are learnt with different dynamics. \inosgd\ improves such within-owner learning dynamics too.}
    \label{appfig:better-within-owner-learning-dynamics}
\end{figure}

\subsubsection{Better Within-Owner Learning Dynamics} \label{app:better-within-owner-learning-dynamics}

We notice that the learning curve within a single data owner is not smooth and demonstrates certain fluctuations when it holds several different groups of data. In real life, this represents groups with data of different distribution but similar privacy preferences (e.g., two types of one stigmatized disease). To investigate it, we analyze the per-class recall for different classes within the same data owner in \textbf{MN}, \textbf{SU} and \textbf{CF} and show it in Fig.~\ref{appfig:better-within-owner-learning-dynamics}. Although classes/groups belonging to the same data owner share the same privacy budget, they may have different learning complexities and hence different learning dynamics and duration of the MID effect. For example, in Fig.~\ref{appfig:wo-cifar100fv-recall-03}, Class $0$ is much easier to learn than Class $3$ (possibly because $0$ has a more distinguishable shape). Thus, the MID effect of Class $0$ (around iteration $0$ to $100$) is much shorter than that of Class $3$ (around iteration $0$ to $3000$). 

This experiment also demonstrates another benefit of \inosgd: it also improves the learning dynamics of different groups within the same data owner. This is because an easy-to-learn group within a private data owner can also be learnt fast and its loss will decrease quickly. Samples from such group will then be down-weighted by the \inosgd\ algorithm so that we can focus more on reducing the loss for the more difficult samples from the private data owner. For example, in Fig.~\ref{appfig:wo-cifar100fv-recall-03} again, the MID effect with regards to class $3$ is significantly improved under \inosgd\ (shortened to around iteration $0$ to $1500$). This shows that \inosgd\ can not only address IDP-induced utility imbalance, but also \textbf{mitigate the inherent imbalance caused by different learning complexities among groups}.

\subsubsection{Components of \inosgd} \label{app:components-of-inosgd}

The \inosgd\ algorithm has two main components: the TIF and the ordering of data used. In this section, we conduct ablation studies by removing/varying each component to assess its impact. In \textbf{(I)} to \textbf{(III)}, we demonstrate that \textbf{TIF is important}. Moreover, \textbf{\inosgd\ is robust to the choice of hyperparameters} so that it is easy for practitioners to apply it in real life. As mentioned in Sec.~\ref{subsec:inosgd-algorithm}, the hyperparameters of TIF $\tgf$ include the tail length $\gamma$ and the form of $\tgf$. Therefore, we run the \inosgd\ algorithm using different values of $\gamma$, parameters $\upalpha$ and $\upbeta$ of Beta distribution and also try different function forms such as the step function mentioned in App.~\ref{app:alternative-tif}. In \textbf{(IV)}, we verify the benefit of using a descending order of loss by comparing it with a random and an ascending order of loss.

\paragraph{(I) Tail length, $\gamma$.} In Fig.~\ref{appfig:components-of-inosgd-tail-length}, we show how the learning dynamics change as we change the tail length $\gamma$. It can be seen that in general, \inosgd\ shows an improved learning dynamics when we set $\gamma$ to be less than half of the expected sum of clipping thresholds calculated by $\sum_{n \in [N]} |D_n|q_{ n} C_{n}$ (which equals around $256$ for CIFAR-100-FV (\textbf{CF-PC}) and $1987$ for CIFAR-10 (\textbf{CM-PC})). If we set $\gamma$ to be even larger, there will be too little good information left in the batch and the summed gradient will get more noisy due to the added Gaussian noise. It will start trading overall utility for better balance.

\begin{figure}[!h]
    \centering
    \subfigure[\textbf{CF-PC overall.}]{\includegraphics[width=0.24\columnwidth]{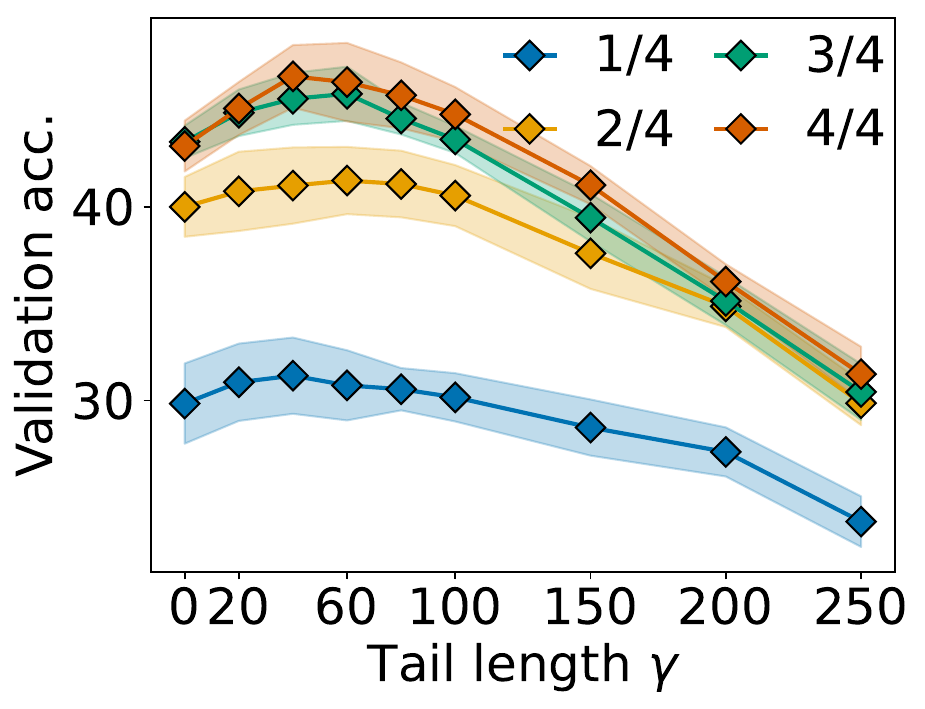}\label{fig:experiment-sensitivity-scale-a}}
    \subfigure[\textbf{CF-PC O$1,2$.}]{\includegraphics[width=0.24\columnwidth]{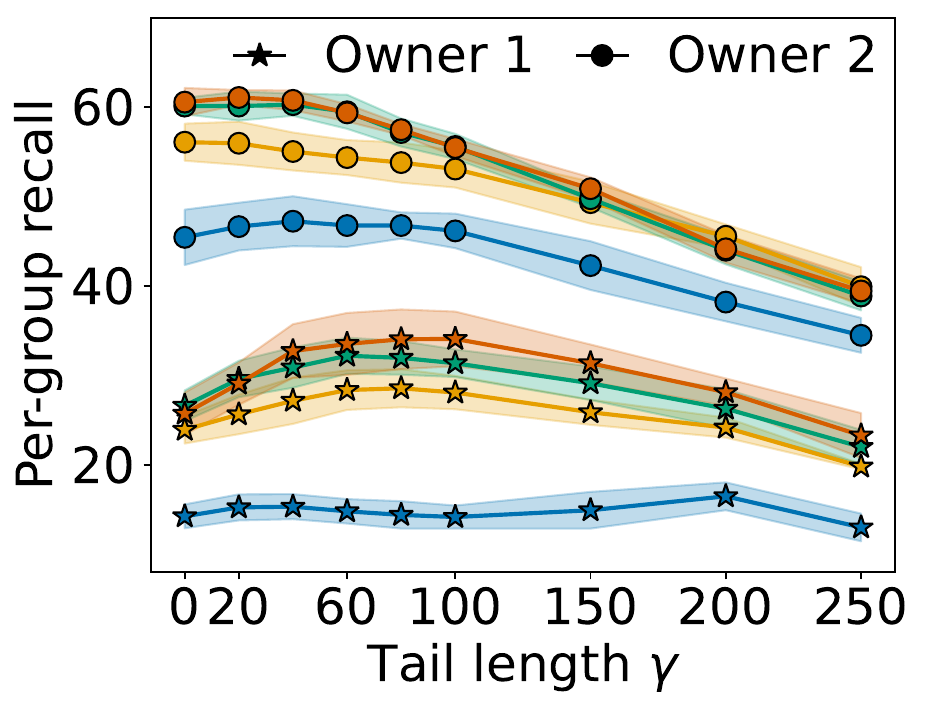}\label{fig:experiment-sensitivity-scale-b}}
    \subfigure[\textbf{CM-PC overall.}]{\includegraphics[width=0.24\columnwidth]{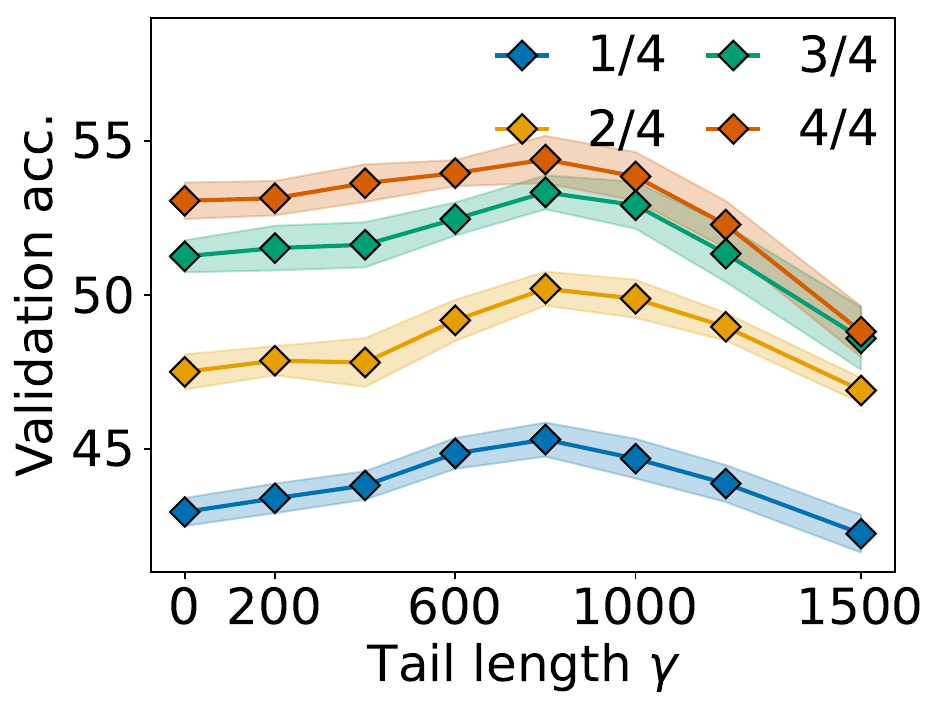}}
    \subfigure[\textbf{CM-PC O$2,9$.}]{\includegraphics[width=0.24\columnwidth]{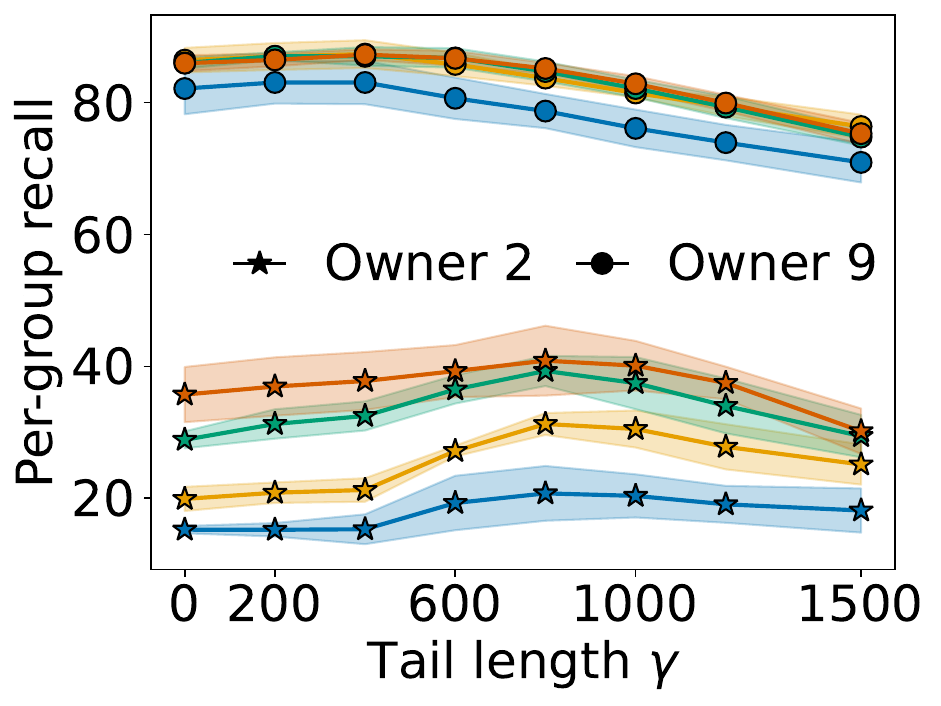}}
    \caption{\textbf{\inosgd\ is robust to choice of $\gamma$ for a limited range.} After that, it starts to trade overall utility for better balance. In particular, we plot the model performances at $1/4$, $2/4$, $3/4$, and $4/4$ of the entire training stage (in terms of iterations). For the CIFAR-10 dataset, we only show $2$ data owners for better visualization.}
    \label{appfig:components-of-inosgd-tail-length}
\end{figure}

\paragraph{(II) $\upalpha$ and $\upbeta$ in Beta distribution.}

In Fig.~\ref{appfig:components-of-inosgd-alpha-beta}, we show how the learning dynamics change as we change the value of $\upalpha$ and $\upbeta$ in Beta distribution when we set the tail importance function $\tgf$. All choices of $\upalpha$ and $\upbeta$ enable \inosgd\ to produce better learning dynamics than \idpsgd. Specifically, when $\upalpha$ is smaller and $\upbeta$ is larger, \inosgd\ will become more and more similar to \idpsgd\ because fewer gradients are down-weighted.

\begin{figure}[!h]
    \centering
    \subfigure[\textbf{CF-PC overall.}]{\includegraphics[width=0.24\columnwidth]{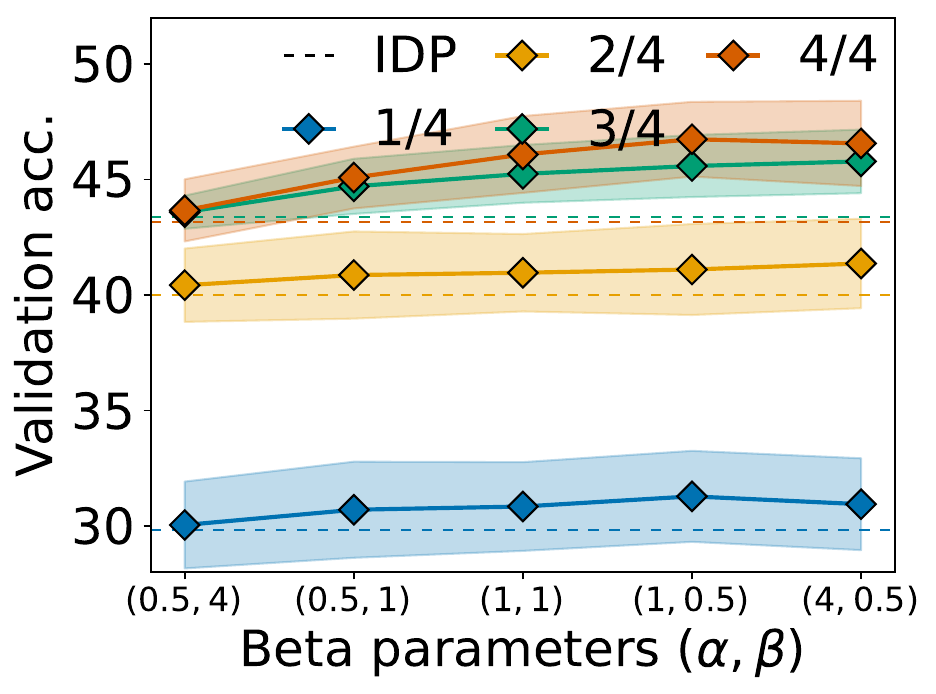}}
    \subfigure[\textbf{CF-PC O$1,2$.}]{\includegraphics[width=0.24\columnwidth]{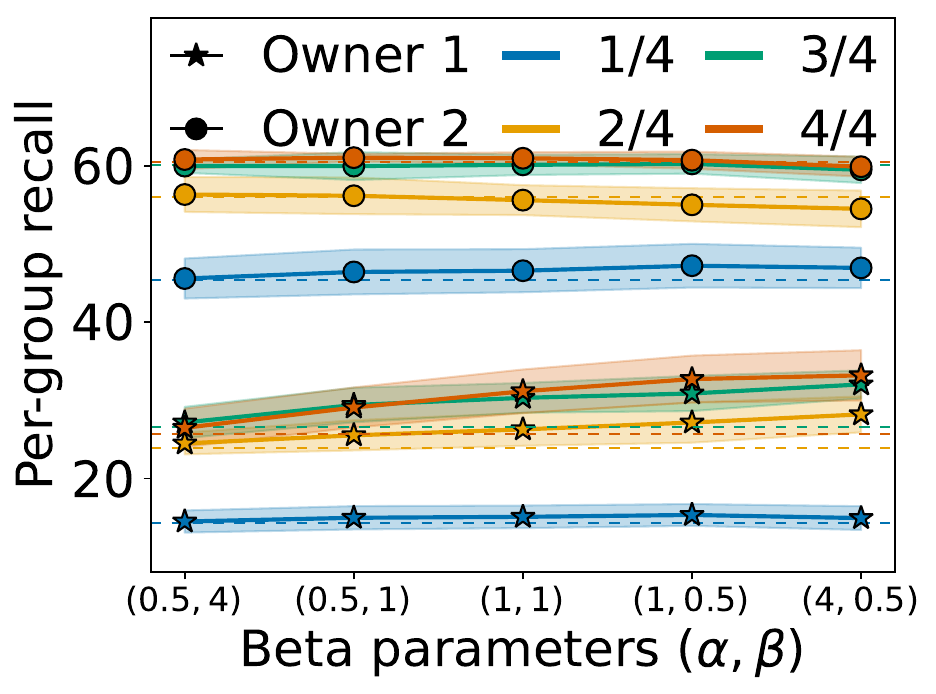}}
    \subfigure[\textbf{CM-PC overall.}]{\includegraphics[width=0.24\columnwidth]{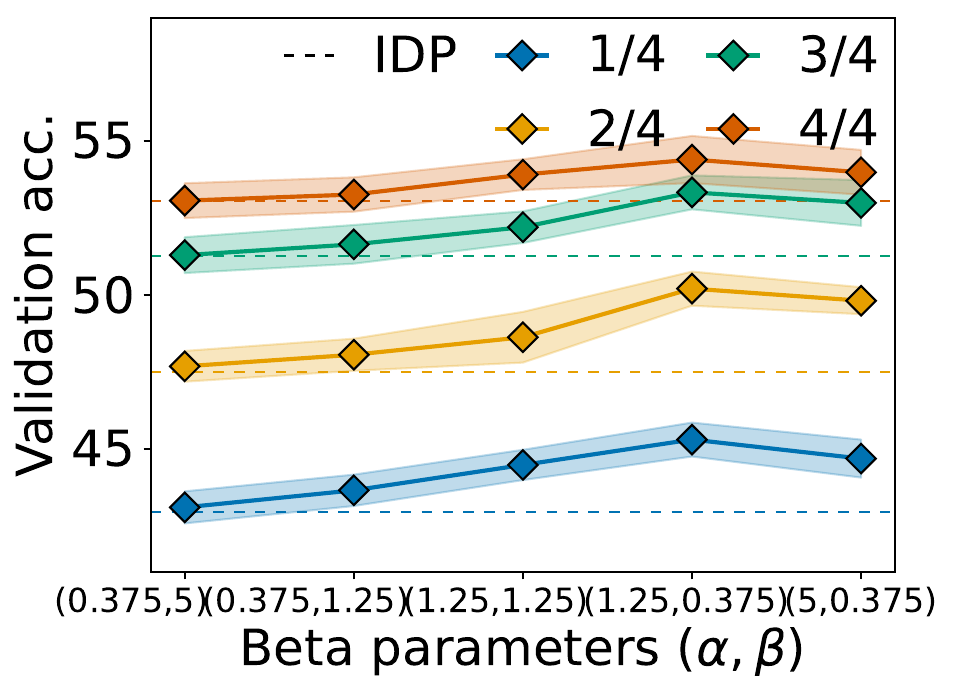}}
    \subfigure[\textbf{CM-PC O$2,9$.}]{\includegraphics[width=0.24\columnwidth]{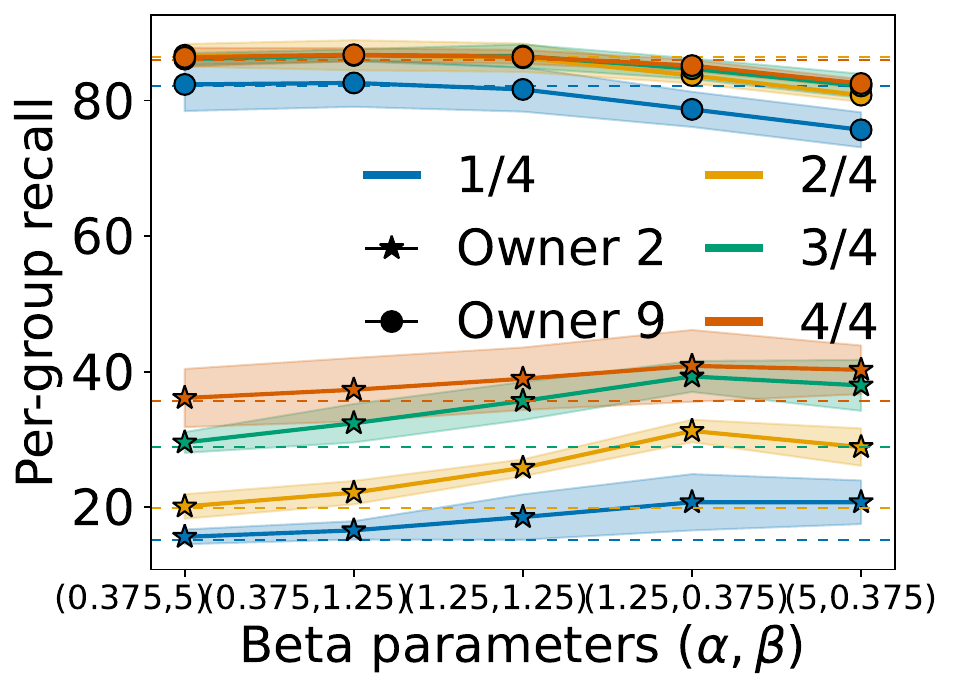}}
    \caption{\textbf{\inosgd\ is robust to the choice of $\upalpha$ and $\upbeta$ if the length of the tail is chosen adequately.} The dashed line represents our baseline \idpsgd. In particular, a small $\upalpha$ and large $\upbeta$ will cause \inosgd's performance to be similar to \idpsgd\ and vice versa.}
    \label{appfig:components-of-inosgd-alpha-beta}
\end{figure}

\paragraph{(III) Alternative TIF forms.}

In this section, we consider the alternative function form of tail importance function $\tgf$, a step function, as described in App.~\ref{app:alternative-tif}. Specifically, we use $4$ steps with a constant step length (i.e., instead of setting $\gamma$, $\upalpha$ and $\upbeta$, we only need to set the constant step length now). We vary the value of the constant step length to see whether our observed trends are still relevant when we use this alternative function form. The results are shown in Fig.~\ref{appfig:components-of-inosgd-func}, and in particular, we recover \idpsgd\ when we set the step length as $0$. It can be seen that as long as we choose a reasonable step length (one that is not too long, or more precisely the sum of four steps should not exceed half of $\sum_{n \in [N]} |D_n|q_{n} C_{n}$ (same as how we choose $\gamma$ previously)), we can get a better learning dynamics than \idpsgd. Therefore, we can conclude that our \inosgd\ is robust to the choice of tail importance functions as long as we choose a reasonable tail length $\gamma$. We still recommend using the TIF based on Beta distribution because it provides a flexible range of TIF shapes using only two parameters.

\begin{figure}[!h]
    \centering
    \subfigure[\textbf{CF-PC overall.}]{\includegraphics[width=0.24\columnwidth]{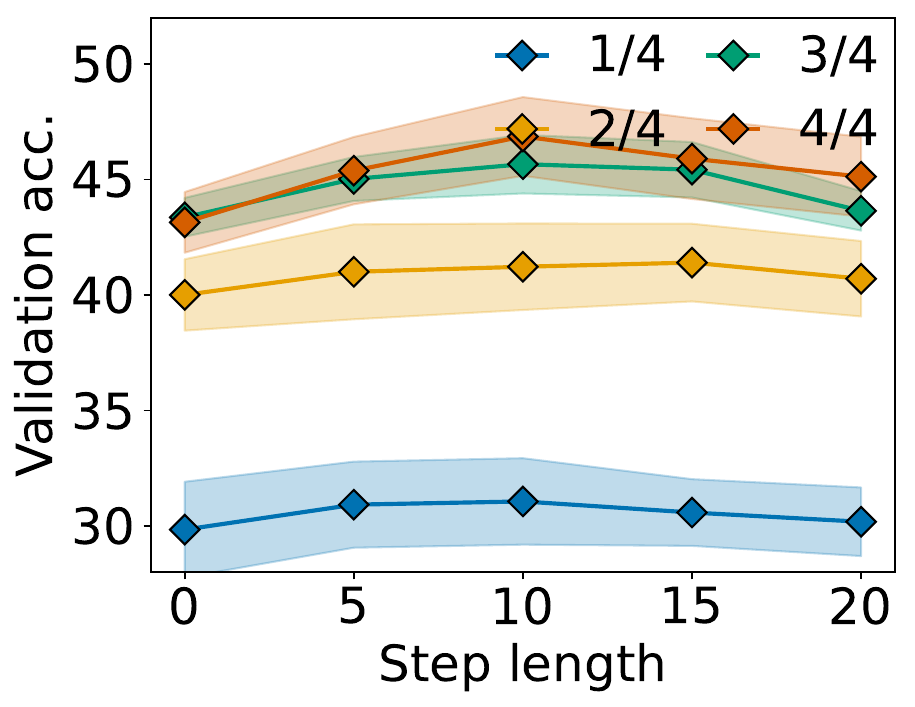}}
    \subfigure[\textbf{CF-PC O$1,2$.}]{\includegraphics[width=0.24\columnwidth]{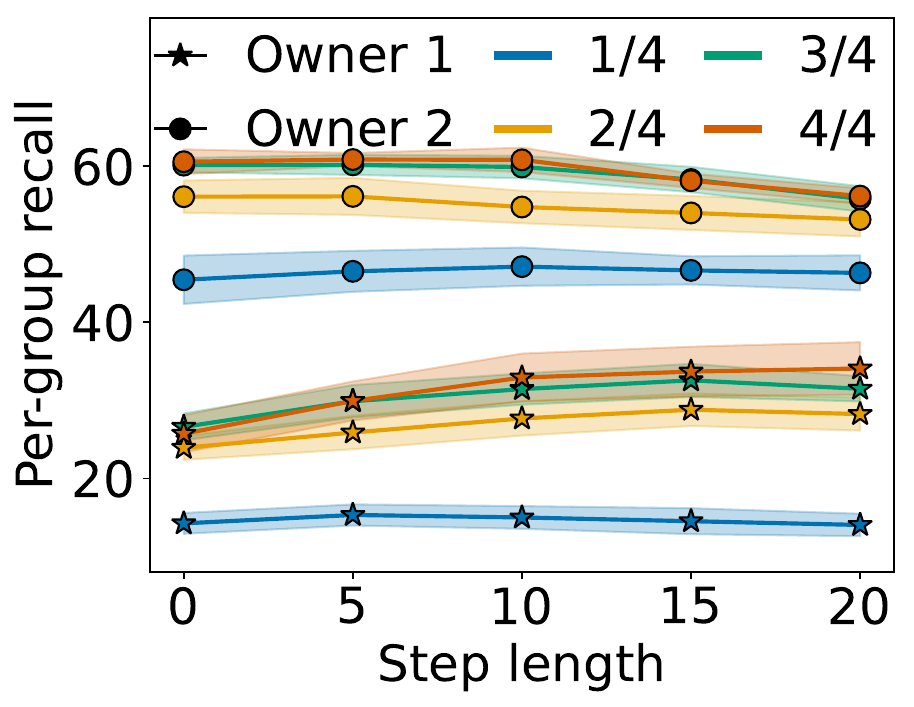}}
    \subfigure[\textbf{CM-PC overall.}]{\includegraphics[width=0.24\columnwidth]{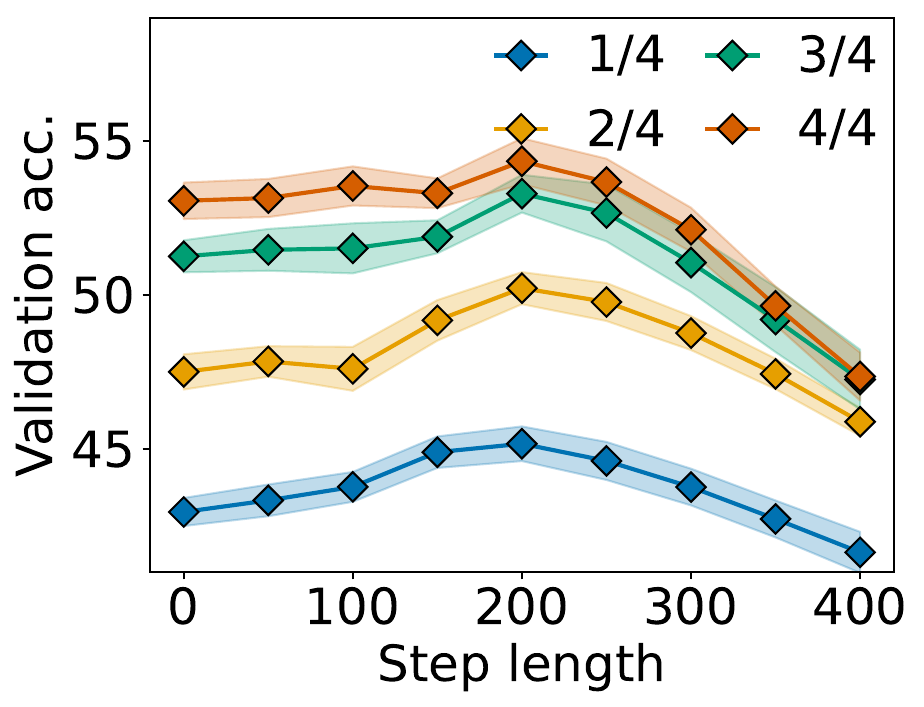}}
    \subfigure[\textbf{CM-PC O$2,9$.}]{\includegraphics[width=0.24\columnwidth]{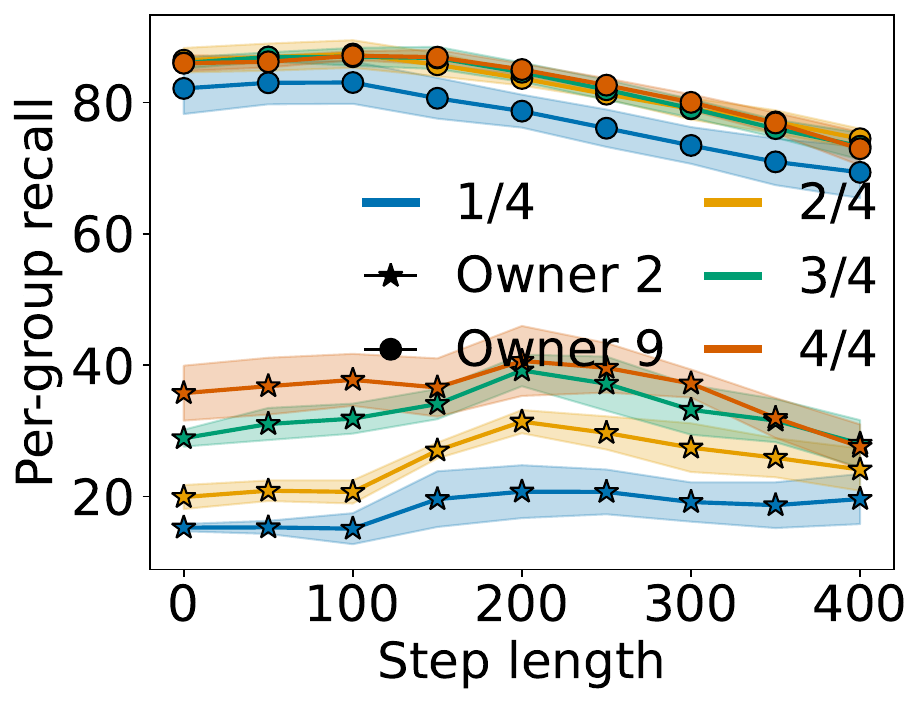}}
    \caption{\textbf{\inosgd\ is robust to forms of the tail importance function $\tgf$.} In this set of experiments, we use the step function with constant step length as described in App.~\ref{app:alternative-tif}: $1/2$ for the first step length, $1/4$ for the second step length, $1/8$ for the third step length, and $0$ for the last step length. Similar trends to our main experiments can be observed.}
    \label{appfig:components-of-inosgd-func}
\end{figure}

\paragraph{(IV) Impact of order.}

In Fig.~\ref{appfig:impact-of-order}, we show the benefit of using a descending order of loss by replacing it with (1) a random order (INO (random)) and (2) a reverse (ascending) order (INO (reverse)). The overall results in Fig.~\ref{appfig:order-cifar-overall} and the per-owner results in Fig.~\ref{appfig:order-cifar-po-28} and Fig.~\ref{appfig:order-cifar-po-157} all show that a descending order is crucial, which justifies our argument that the loss values indicate the importance of data. In particular, using a descending order of loss not only benefits the performance of the more private data owners (e.g., Owners $1$ and $2$), but also benefits the performance of the less private data owners (e.g., Owners $7$ and $8$) because the more difficult data within them are assigned higher importance too.

\begin{figure}[!ht]
    \centering
    \subfigure[\textbf{CM-PC overall.}]{\includegraphics[width=0.24\columnwidth]{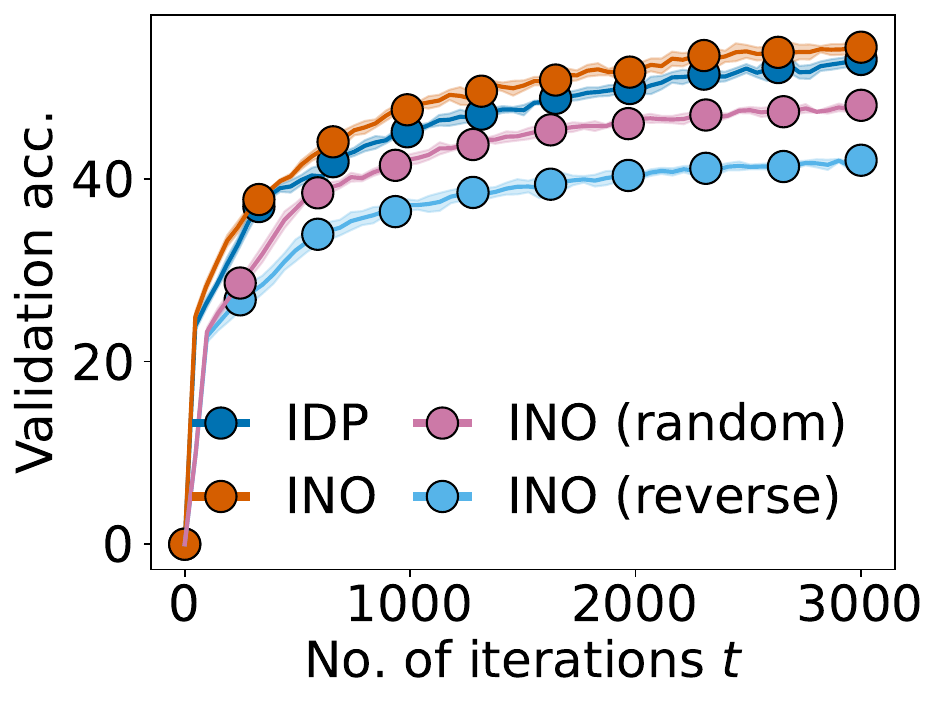} \label{appfig:order-cifar-overall}}
    \subfigure[\textbf{CM-PC O$2, 8$.}]{\includegraphics[width=0.24\columnwidth]{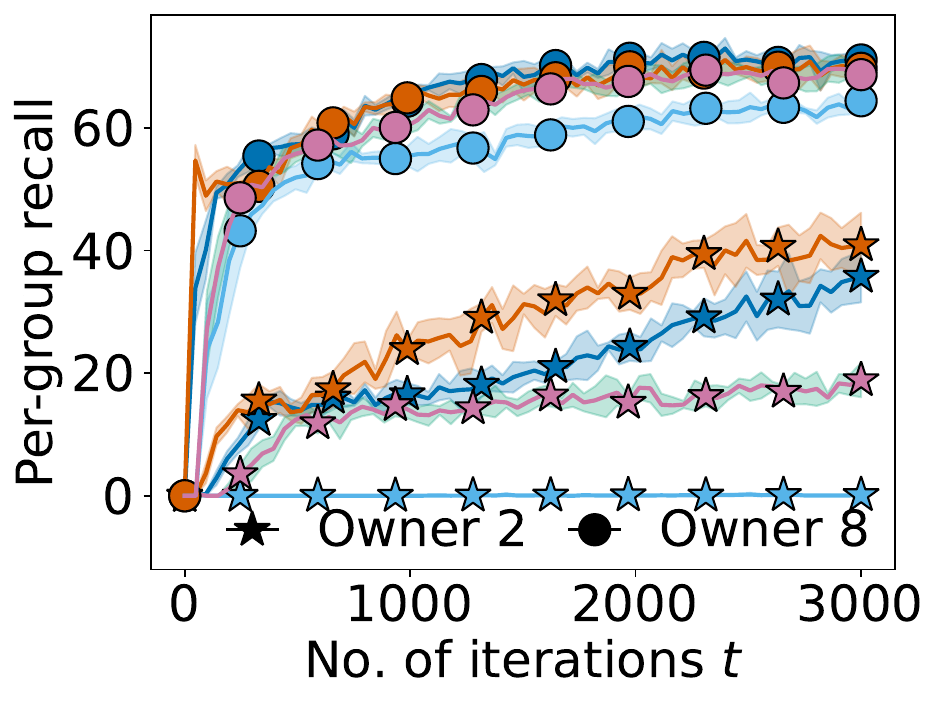} \label{appfig:order-cifar-po-28}}
    \subfigure[\textbf{CM-PC O$1, 5, 7$.}]{\includegraphics[width=0.24\columnwidth]{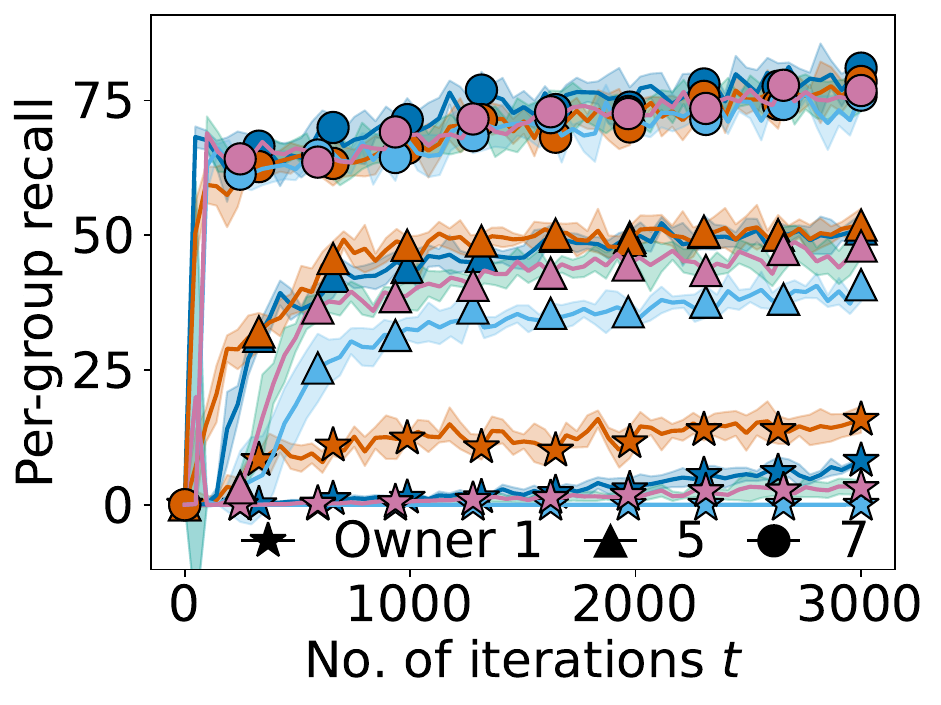} \label{appfig:order-cifar-po-157}}
    \caption{\textbf{Benefit of using a descending order of loss.}}
    \label{appfig:impact-of-order}
\end{figure}

\subsubsection{Pareto Superiority} \label{app:pareto-superiority}

Beyond the region where the model owner can simultaneously correct utility imbalance and improve/preserve overall model utility (which only \inosgd\ can achieve), the model owner can also choose to set the TIF more aggressively (e.g., downweighting more of the less important gradients by setting a larger tail length $\gamma$) in order to trade overall utility to further correct for utility imbalance. In this use case, method A is said to \textit{Pareto-dominate} method B if at any level of overall utility, method A corrects for utility imbalance better than method B; at any level of utility imbalance, method A achieves a higher overall utility than method B. In Fig.~\ref{appfig:pareto-superiority}, we demonstrate that \textbf{\inosgd\ Pareto-dominates naïve methods that satisfy IDP.} In particular, we control the \inosgd's tradeoff by using different tail length $\gamma$, and consider the baseline where every data owner decreases their privacy budgets proportionately (e.g. by $10\%$, $20\%$, $30\%$, $40\%$) to match the most private owner, and we use the standard deviation across per-group recalls to measure utility imbalance. Just as the privacy-utility tradeoff \citep{makhdoumi2013privacy} describes, decreasing privacy budgets would incur lower overall utility, so such methods cannot achieve a dual improvement in both utility balance and overall utility as \inosgd\ achieves. Even if the model owner is willing to trade some overall utility to further correct for utility imbalance, \inosgd\ is a better solution because of its Pareto superiority. 

\begin{remark*}
    The reverse C-shaped region at the right of \inosgd's tradeoff curve indicates the robustness of \inosgd\ discussed in App.~\ref{app:components-of-inosgd}: in this region, the model owner can correct for utility imbalance without sacrificing any overall utility. 
\end{remark*}

\begin{figure}
    \centering
    \subfigure[\textbf{CF-PC.}]{\includegraphics[width=0.24\columnwidth]{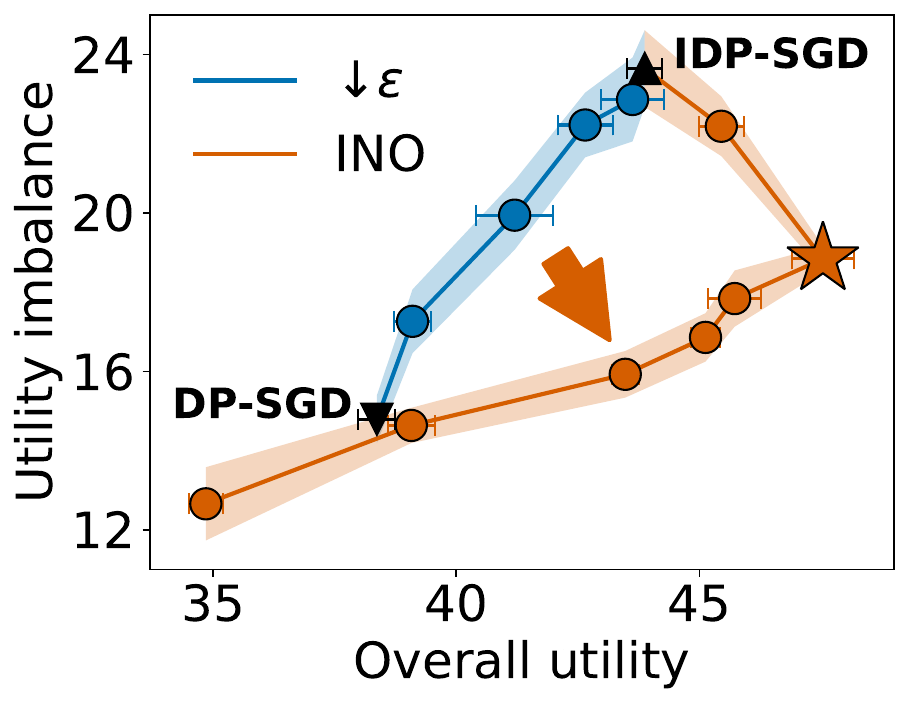} \label{appfig:pareto-cifar100fv}}
    \subfigure[\textbf{CM-PC.}]{\includegraphics[width=0.24\columnwidth]{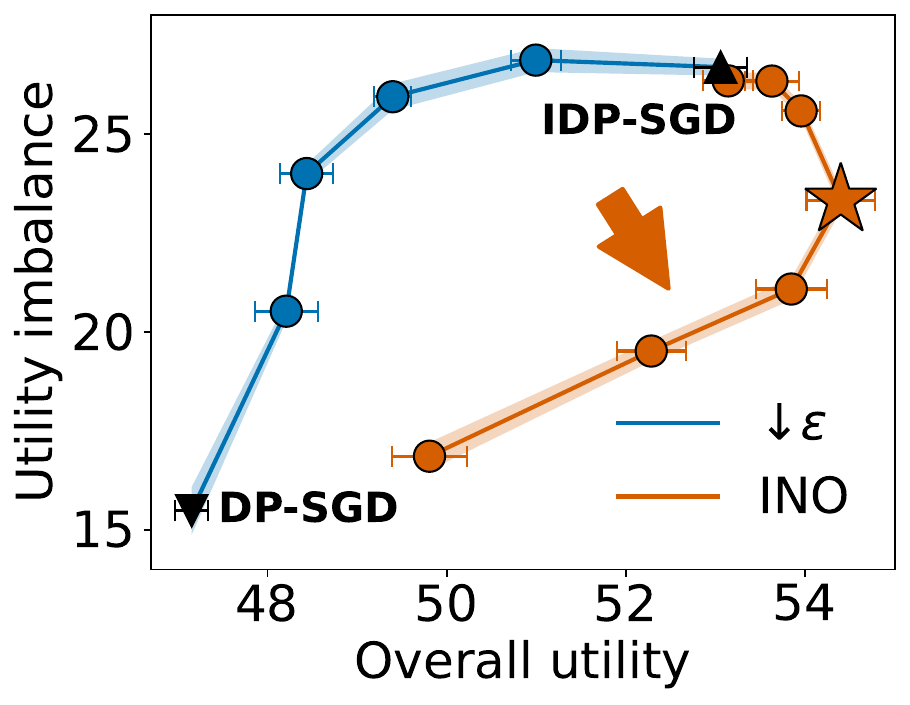} \label{appfig:pareto-cifar}}
    \caption{\textbf{Pareto superiority of \inosgd.} $\blacktriangle$ denotes \idpsgd\ and $\blacktriangledown$ denotes \dpsgd\ using the strongest privacy. \textcolor{red}{$\bigstar$} shows our reported results where the model owner attains the largest \textbf{dual improvement} in both utility balance and overall utility. The red arrow indicates that \inosgd\ Pareto-dominates any method whose tradeoff curve lies to the upper left of \inosgd's, including the simple baseline. Therefore, \inosgd\ offers the model owner a superior way to control the tradeoff between utility imbalance and overall model utility.}
    \label{appfig:pareto-superiority}
\end{figure}

\subsubsection{Compatibility with Adaptive Clipping} \label{app:compatibility-with-adaptive-clipping}

In App.~\ref{app:idp-induced-mid-theorem}, we note that adaptive clipping methods \citep{bu2023automatic, xia2023differentially} would further lengthen IDP-induced MID because they make Cond.~\ref{eqn:condition-informal} more difficult to be satisfied, thus worsening IDP-induced utility imbalance. Fig.~\ref{appfig:compatibility-with-adaptive-clipping} verifies this and demonstrates that \inosgd\ works effectively to address utility imbalance when using adaptive clipping, since \inosgd\ can effectively mitigate MID. 

\begin{figure}[!h]
    \centering
    \subfigure[\textbf{CM-PC overall.}]{\includegraphics[width=0.24\columnwidth]{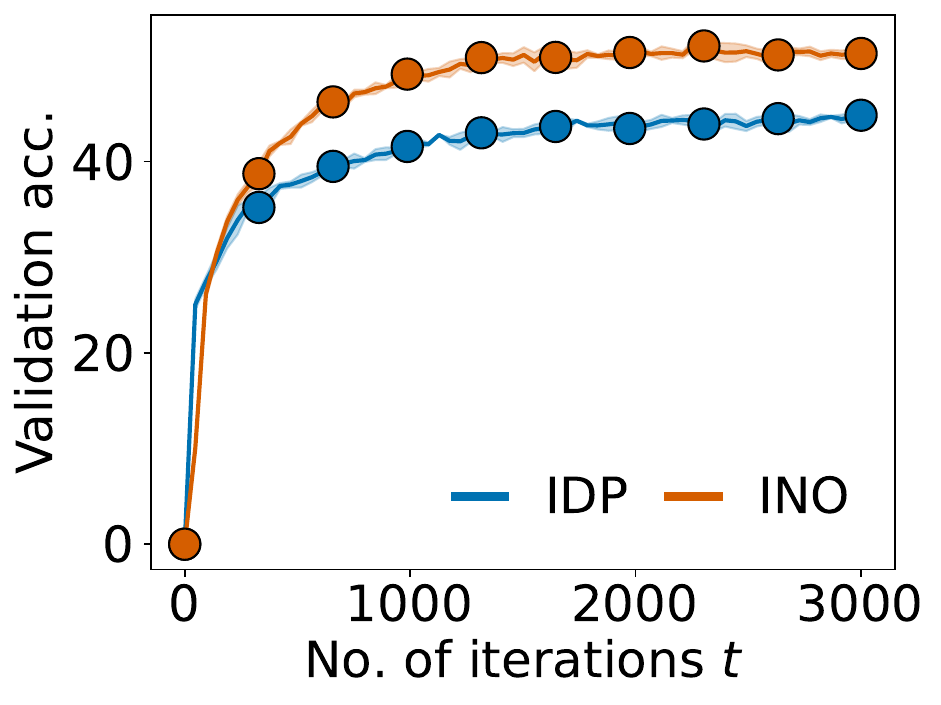} }
    \subfigure[\textbf{CM-PC overall.}]{\includegraphics[width=0.24\columnwidth]{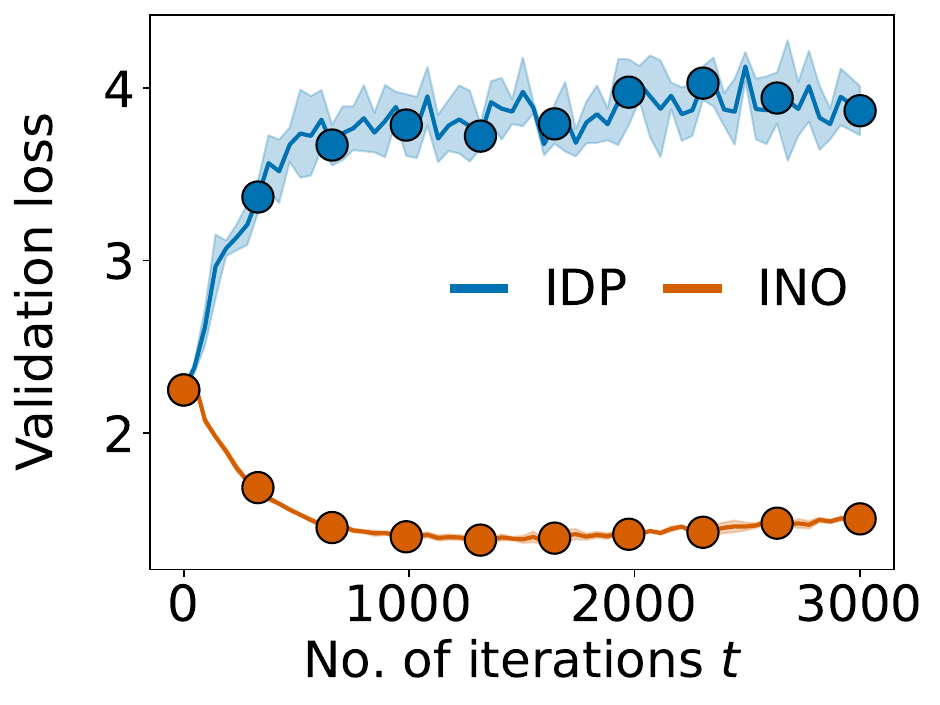}}
    \subfigure[\textbf{CM-PC O$1,4,7,10$.}]{\includegraphics[width=0.24\columnwidth]{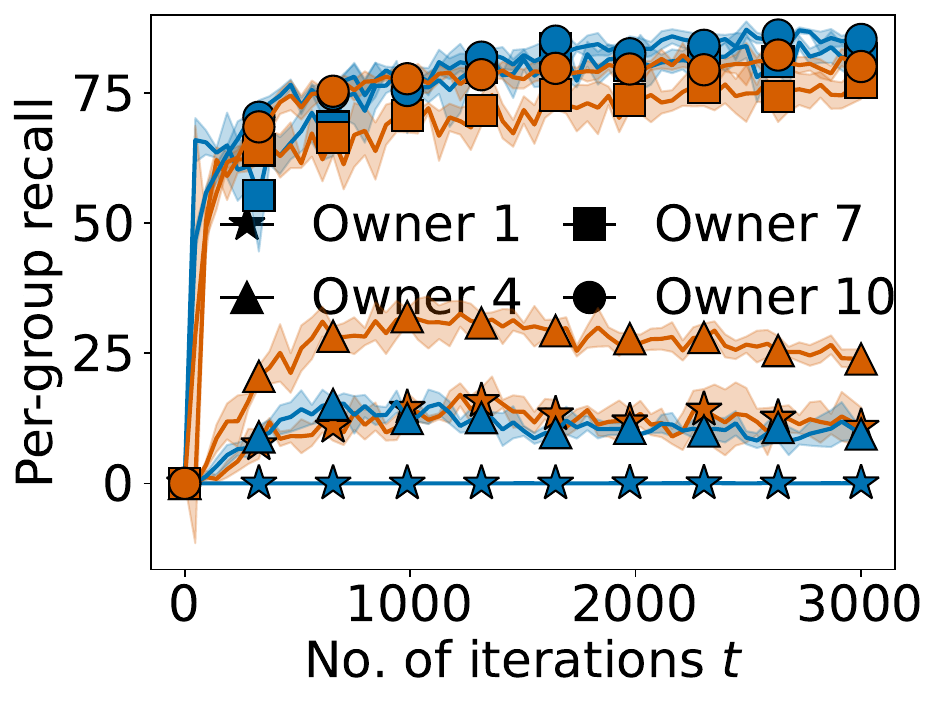}}
    \subfigure[\textbf{CM-PC O$3,6,9$.}]{\includegraphics[width=0.24\columnwidth]{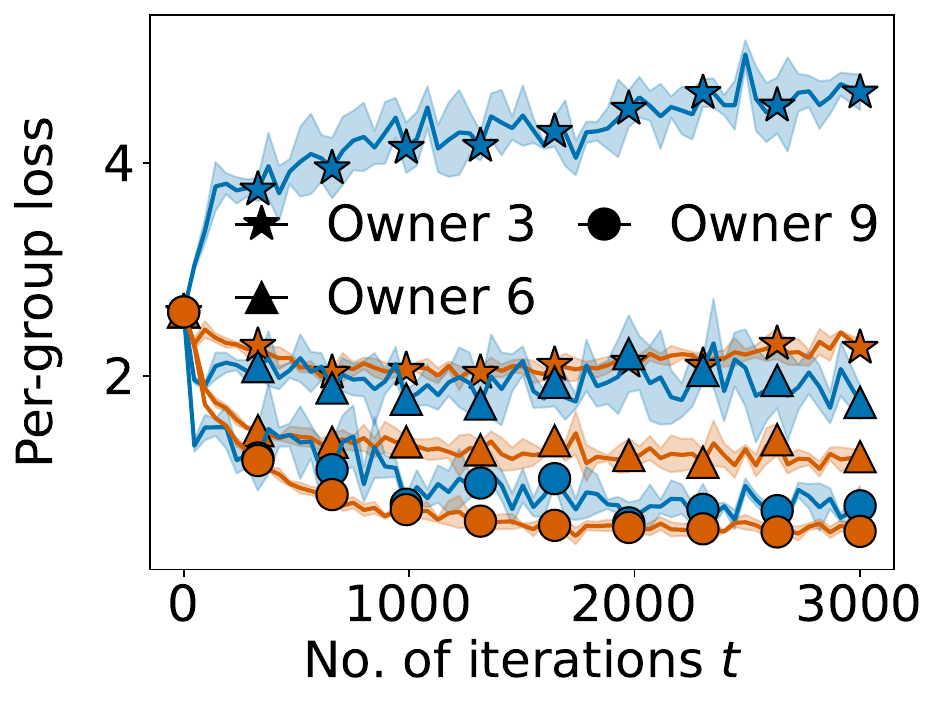} \label{appfig:ac-po-cifar-loss-369}}
    \caption{\textbf{\inosgd\ is highly compatible with adaptive clipping.} \subref{appfig:ac-po-cifar-loss-369} shows that for adaptive clipping Owner $3$ suffers from MID till the end, which verifies our theoretical analysis that adaptive clipping could lengthen IDP-induced MID. Since \inosgd\ addresses MID and utility imbalance, it significantly improves the performance of such methods.}
    \label{appfig:compatibility-with-adaptive-clipping}
\end{figure}

\subsubsection{Privacy Assessment via Membership Inference Attack}\label{app:privacy-assessment-via-membership-inference-attack}

In Sec.~\ref{subsubsec:inosgd-privacy} and App.~\ref{app:proof-of-inosgd-privacy}, we theoretically present and prove the privacy guarantees of \inosgd. In this section, we conduct the \textit{LiRA} \textit{membership inference attack} \citep{carlini2022membership} to empirically verify the privacy protection of \inosgd.

We first give a brief introduction to the LiRA method. The goal of LiRA is to infer whether a given datum $d$ is in the training set $D$ (i.e., whether $d$ has been used for training the target model $\Alg(D)$). Let $\mathcal{M}$ denote the larger pool of data which may or may not be used for training. LiRA firstly trains many \textit{shadow models} via $\Alg$, each using a subset of $\mathcal{M}$, $M \subseteq \mathcal{M}$. For each trained shadow model $\Alg(M)$ and the target model $\Alg(D)$, LiRA then computes a confidence score for each datum $d \in \mathcal{M}$ from its corresponding output logits. Let $z_M(d)$ denote $d$'s confidence score from the model trained on $M$. During the attack, for each datum $d \in \mathcal{M}$, we fit two Gaussian distributions $\mathcal{N}_d^{\text{in}}$ and $\mathcal{N}_d^{\text{out}}$ using the confidence scores $z_M(d)$ from shadow models trained with $d$ (i.e., $d \in M$) and shadow models trained without $d$ (i.e., $d \notin M$) respectively. The LiRA score of $d$ is then calculated using 
\begin{equation*}
\mathrm{LiRA}(d) = \frac{\Pr_{z \sim \mathcal{N}_d^{\text{in}}}[z = z_D(d)]}{\Pr_{z \sim \mathcal{N}_d^{\text{out}}}[z = z_D(d)]}.
\end{equation*}
Intuitively, the LiRA score represents how different $\mathcal{N}_d^{\text{in}}$ is from $\mathcal{N}_d^{\text{out}}$. If a datum $d$ is really protected by (I)DP, then the LiRA scores between data used for training ($d \in D$) and not used for training ($d \notin D$) should be indistinguishable. Empirically, the true positive rate ($\mathrm{TPR}$) and false positive rate ($\mathrm{FPR}$) should be approximately equal to each other for every chosen threshold. 

In this experiment, we follow \citet{boenisch2024have} and train $512$ shadow models via \inosgd, each using half of the data from each group. In Fig.~\ref{appfig:privacy-assessment-mia}, we show the \textit{receiver operating characteristic} (ROC) curve of \idpsgd\ and \inosgd. We observe that the ROC curves are very close to the line $\mathrm{TPR} = \mathrm{FPR}$, which shows that \inosgd\ indeed protects the data well. We also see the effect of IDP: for example, in Fig.~\ref{appfig:privacy-assessment-mia-inosgd-cfpc}, the ROC curve for the more private group with $\epsilon=1$ is closer to the line $\mathrm{TPR} = \mathrm{FPR}$ compared with that for the less private group with $\epsilon = 5$, which shows that the more private group indeed receives better privacy protection.

\begin{figure}[!h]
    \centering
    \subfigure[\protect\raisebox{-\baselineskip}{%
    \shortstack{\textbf{CIFAR-100-FV}\\\textbf{\idpsgd.}}}]{\includegraphics[width=0.24\columnwidth]{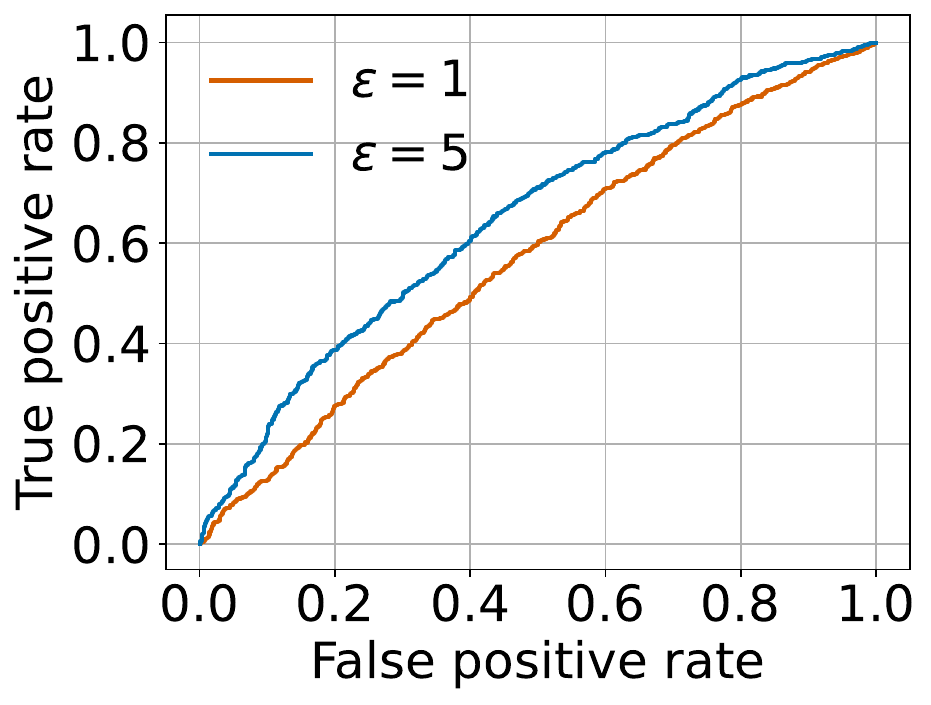} \label{appfig:privacy-assessment-mia-idpsgd-cfpc}}
    \subfigure[\protect\raisebox{-\baselineskip}{\shortstack{\textbf{CIFAR-100-FV}\\\textbf{\inosgd.}}}]{\includegraphics[width=0.24\columnwidth]{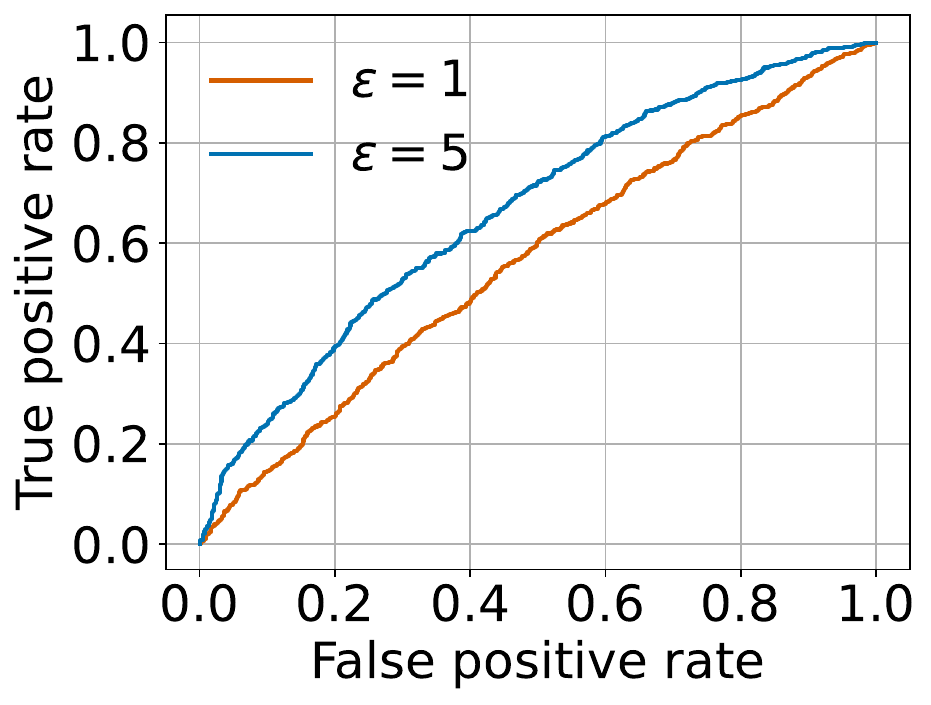} \label{appfig:privacy-assessment-mia-inosgd-cfpc}}
    \subfigure[\protect\raisebox{-\baselineskip}{%
    \shortstack{\textbf{CIFAR-10}\\\textbf{\idpsgd.}}}]{\includegraphics[width=0.24\columnwidth]{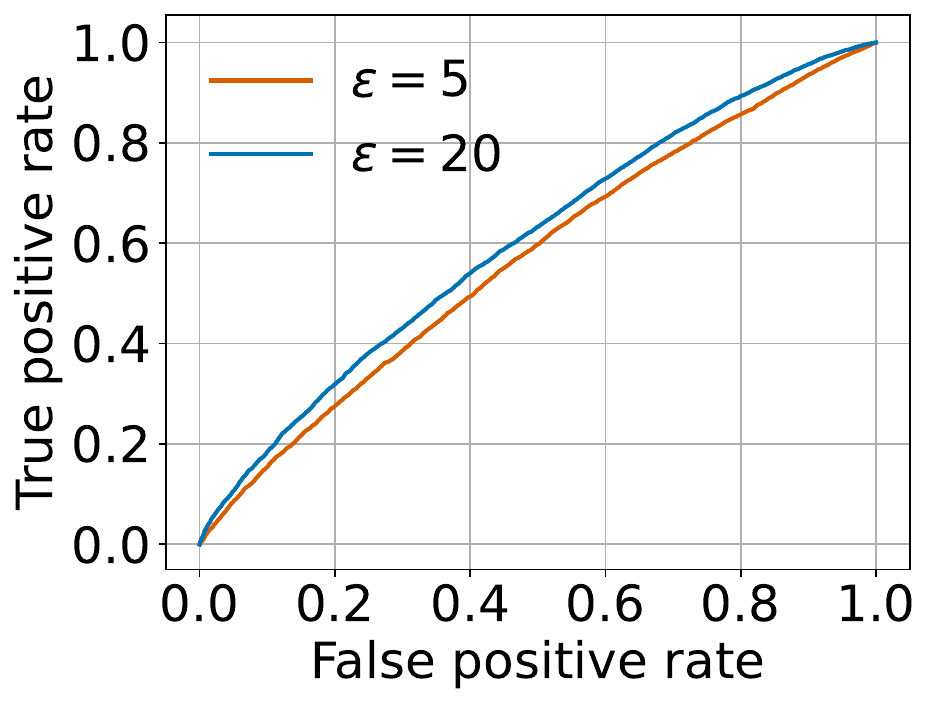} \label{appfig:privacy-assessment-mia-idpsgd-cmpc}}\subfigure[\protect\raisebox{-\baselineskip}{%
    \shortstack{\textbf{CIFAR-10}\\\textbf{\inosgd.}}}]{\includegraphics[width=0.24\columnwidth]{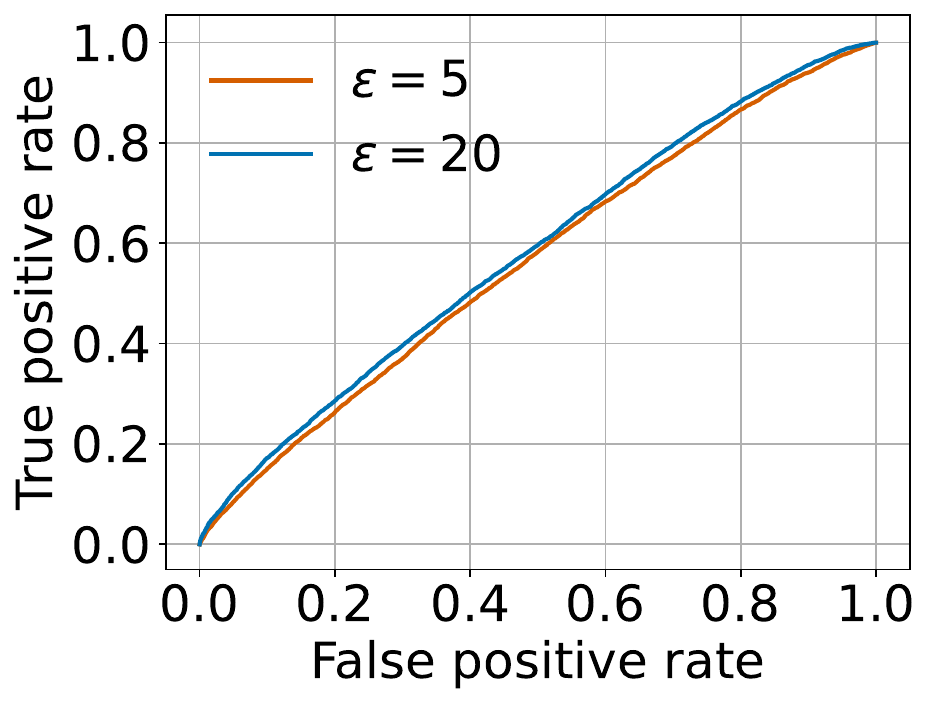} \label{appfig:privacy-assessment-mia-inosgd-cmpc}}
    \caption{\textbf{The privacy of \inosgd\ is validated by LiRA membership inference attack.} The AUROC for all owners are close to $0.5$ and the ROC curves for \idpsgd\ and \inosgd\ are similar. The AUROCs for the more private owner (smaller $\epsilon$) and less private owner (larger $\epsilon$) are respectively (a) $0.571, 0.645$, (b) $0.563, 0.663$, (c) $0.569, 0.623$, and (d) $0.563, 0.580$ for the subfigures.}
    \label{appfig:privacy-assessment-mia}
\end{figure}

\section{Additional Discussions}

\subsection{Limitations} \label{app:limitations}

In this section, we explain some possible limitations of the \inosgd\ algorithm and discuss their causes and possible future directions.

Firstly, increasing the utility of more private owners might unavoidably reduce that of the less private owners. This is because we have to weaken the dominance of gradients from the less private owners in order to achieve Eq.~\ref{eqn:condition-refined}, mitigate MID and BOO, and address IDP-induced utility imbalance. The need for tradeoff is also considered in the Pigou-Dalton principle \citep{pigou1912wealth}: if we want to improve the model utility for the more private data owners and hence make the utilities for all owners less imbalanced, we must transfer some resources from the less private owners to the more private ones. To mitigate this limitation and control the tradeoff in the \inosgd\ algorithm, the model owner can choose the TIF function $\tgf$ according to their desired utility for each owner.

Secondly, the \inosgd\ algorithm can only reduce utility imbalance across data owners to a certain extent without harming the overall model utility. As shown in Fig.~\ref{appfig:components-of-inosgd-tail-length} and \ref{appfig:pareto-superiority}, when a more potent TIF function $\tgf$ is chosen (i.e., TIF with longer tail $\gamma$), although the utility imbalance across all owners may be further reduced, the model utility for all data owners starts to drop and so does the overall utility. This can be explained by the IDP-balance-utility tradeoff elaborated in App.~\ref{app:idp-utility-fairness-tradeoff}: for the same IDP requirements, there exists a tradeoff between overall utility and utility balance. The \inosgd\ algorithm is closer to the Pareto frontier of the tradeoff than \idpsgd, because within a certain range \inosgd\ simultaneously improves both overall utility and utility balance.
To mitigate this limitation, the model owner can again choose the TIF function $\tgf$ carefully.

Thirdly, in this paper, we do not provide theoretical guarantees on how much model utility can be improved for each data owner without harming the overall utility. This requires a theoretical analysis of the IDP-balance-utility tradeoff which is highly non-trivial and beyond the scope of this paper. Future work can theoretically characterize the Pareto frontier of the IDP-balance-utility tradeoff and find Pareto-optimal solutions. On the other hand, the theoretical scope of this paper is to analyze the cause and impact of IDP-induced utility imbalance we identify (Sec.~\ref{subsec:idp-induced-utility-imbalance-and-learning-dynamics} and App.~\ref{app:idp-induced-utility-imbalanced}) and provide the privacy and objective analysis of the \inosgd\ algorithm we propose (Sec.~\ref{subsec:inosgd-theoretical-analysis}).
Nevertheless, our experiments in Sec.~\ref{sec:experiments} and App.~\ref{app:experiment} demonstrate the empirical benefits of our \inosgd\ algorithm over \idpsgd.

\subsection{The IDP-Balance-Utility Tradeoff}\label{app:idp-utility-fairness-tradeoff}

Our experiment has shown that \inosgd\ achieves a better learning dynamics than \idpsgd, but it seems that the improvement is somewhat limited (i.e., it does not result in equal performance for every data owner), which seemingly makes our contribution limited. However, in this section, we will briefly discuss the IDP-balance-utility tradeoff and justify why it is not possible to achieve equal performance for every data owner without loss of overall model utility.

The disparate impact of DP \citep{bagdasaryan2019differential} has been a well-known drawback that limits the performance of DP-ML models. It refers to the phenomenon that DP-ML models give much lower accuracy for underrepresented groups in the training set, and it is known to be unavoidable due to the nature of DP: since we need to protect the privacy of every datum, we cannot specially make the underrepresented groups identifiable. The only workaround is to undersample the overrepresented groups and waste some of their privacy budgets, so as to achieve %
``similar utility''
by giving the overrepresented groups much worse performance. This situation is known as the privacy-disparity-utility tradeoff. In IDP, the situation becomes even worse. In Sec.~\ref{subsec:idp-induced-utility-imbalance-and-learning-dynamics}, we explain that even if the original training set is balanced and there is no underrepresented group, IDP would forcefully create underrepresented groups because of the individualized privacy preferences. Therefore, we are expected to see a worse privacy-disparity-utility tradeoff in IDP than vanilla DP. Fig.~\ref{fig:idp-balance-utility-tradeoff} gives an intuitive explanation of such \textit{IDP-balance-utility tradeoff}.

\textbf{Our work successfully gets closer to the boundary of the IDP-balance-utility tradeoff}, because we are able to achieve better balance without losing the overall model utility. Nonetheless, we are still facing the IDP-balance-utility tradeoff. In fact, one can refer to Fig.~\ref{fig:experiment-sensitivity-scale-a} and \subref{fig:experiment-sensitivity-scale-b} to visualize our contribution to the IDP-balance-utility tradeoff. When the length of tail $\gamma < 100$, we are achieving a better balance without sacrificing overall model performance. This means that the original \idpsgd\ does not hit the IDP-balance-utility tradeoff bound while our \inosgd\ does better. However, when the length of tail $\gamma \geq 100$, although the utility imbalance is still decreasing, the overall model performance shows a rapid drop too. This is where we reach our Pareto frontier and experience the IDP-balance-utility tradeoff.

\begin{figure}[h]
    \centering
    \vspace{2em}
    \includegraphics[width=0.8\linewidth]{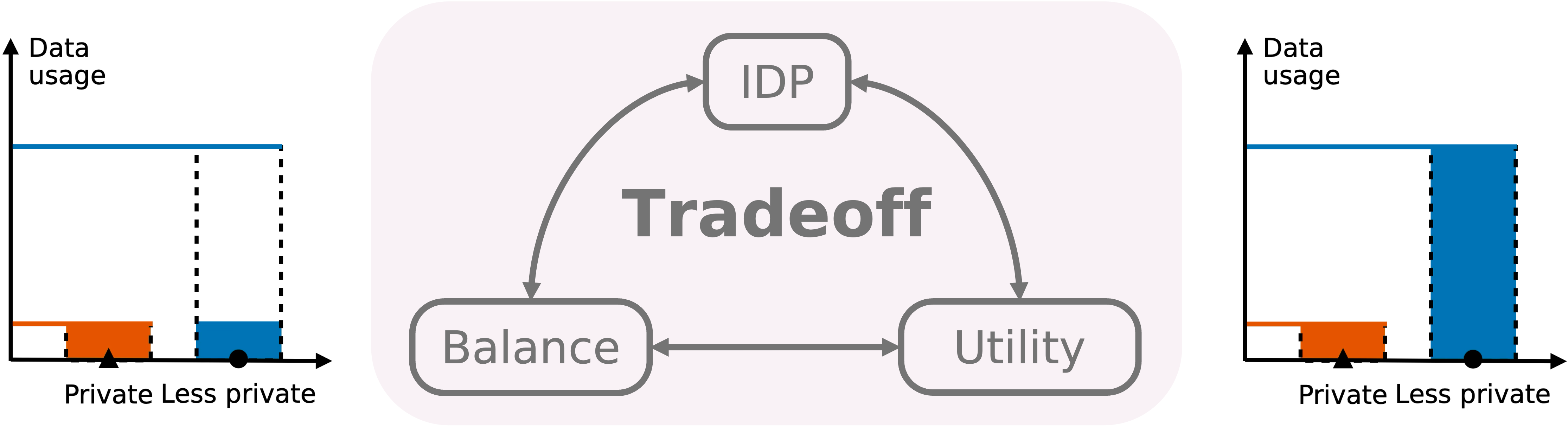}
    \caption{\textbf{Intuitive illustration of the IDP-balance-utility tradeoff.} The privacy budgets of two owners constrain how much the model may use their data, so the trained model will either under-utilize data from the less private owners and under-perform (left), or utilize data from both owners unevenly which causes utility imbalance (right).}
    \label{fig:idp-balance-utility-tradeoff}
\end{figure}

\section{Other Questions}\label{app:qa}

\begin{enumerate}[leftmargin=*,itemsep=0pt,topsep=0pt,parsep=0pt,partopsep=0pt]
    \item \textbf{Why do we consider the individualized privacy setting? Why would each owner not choose the maximum privacy level?}
    \\
    \\
    Research \citep{berendt2005privacy, jensen2005privacy} has shown \textbf{privacy preferences can vary across different data owners}. Moreover, data owners with more sensitive attributes (e.g., stigmatized diseases, race) would prefer stronger privacy as data leakage would expose them to discrimination or denial or services \citep{best2023stigma, lai2022examining}. We have provided several real life applications in App.~\ref{app:use-cases}. 
    \\
    \\
    However, choosing the maximum privacy level is undesirable: 
    The \textit{privacy-utility tradeoff} \citep{makhdoumi2013privacy} is well-studied and shows that using a stronger DP guarantees undesirably reduces the performance (utility) of the trained ML model.
    Thus, a model owner would want to use each data owner's data at a weaker privacy to train a model with the highest utility. In some cases such as collaborative ML \citep{sim2020collaborative, tian2024derdava}, data owners are also the users of trained ML models and hence they will choose the highest privacy budgets they find acceptable (e.g., when data owners are hospitals and the model owner is a medical institute who trains an ML model to predict the risk of heart disease). In other case,
    model owners can incentivize data owners to choose the highest privacy budgets they find acceptable by rewarding each data owner based on their data values or privacy budgets (i.e., selecting a higher privacy budget and weaker privacy will lead to more monetary compensation).
    \\

    \item \textbf{Why is the problem of the utility imbalance (and improving the utility on more private owners or difficult data) important?} %
    \\

    As motivated in Sec.~\ref{sec:introduction}, model owners need to collect data from multiple data owners in order to gather sufficient data to train a good ML model. Moreover, data owners with unique yet sensitive data would prefer a stronger privacy, which is not desired by model owners as this would lead to low model utility on these unique data. For example, in healthcare ML applications, data owners with some rare disease are likely to demand strong privacy requirements. However, the goal of training such an ML model is precisely to predict these rare diseases well instead of predicting well for control patients without the diseases. Thus, the model owner should consider using \inosgd\ to improve the utility on these more difficult examples and more private data owners. \\

    \item \textbf{What is the difference between this and other works that achieve group fairness?} 
    \\
    \\
    Our goal differs significantly from group fairness works as explained in App.~\ref{app:compare-fairness}. \\

    \item \textbf{Do we assume that MID and BOO always occur as a result of \idpsgd?}
    \\
    \\
    No, we do not assume that they always occur. In Thm.~\ref{theorem:idpsgd-condition}, we theoretically analyze when MID and BOO will not occur and how to correct for it when they occur. Based on that, we devise the \inosgd\ algorithm.

    However, we highlight that even for balanced datasets, MID and BOO can occur as a result of IDP requirements across different owners and theoretically prove it in App.~\ref{app:idp-induced-mid-theorem}. \\
    
    \item \textbf{In Sec.~\ref{sec:experiments}, why does the model achieve relatively low accuracy? Is this weaker than existing works?}
    \\
    \\
    We would like to highlight that our empirical results should be compared against \idpsgd\ with same privacy budgets or \dpsgd\ that maintains the smallest privacy budget for all owners to attain IDP. For example, in the CIFAR-10 experiment where \idpsgd\ achieves $\sim 50\%$ overall accuracy in Fig.~\ref{appfig:overall-cifar-low-accuracy}, the most private owner has a small privacy budget $\epsilon= 0.5$. When privacy budget is larger (minimum $\epsilon = 6$), our accuracy also increases and reaches $\sim67\%$ in \ref{fig:overall-cifar-high-accuracy}.

    Similar accuracy to ours (e.g. $<60\%$ for small $\epsilon$) in \dpsgd\ is common for CIFAR-10 training from scratch \citep{ganesh2023impact, muthu2024nearly, wei2022dpis}. \citep{bu2022scalable, ganesh2023impact, lin2023differentially} use pretrained models to achieve high accuracy around $90\%$ which we also show in Fig.~\ref{appfig:overall-cifar-features-accuracy}. Notably, our work considers IDP where no follow-up works apply.
 
    Besides privacy budget, the impact of data imbalance on model utility is important too. DP can potentially reduce accuracy on underrepresented groups \citep{bagdasaryan2019differential} and our work shows that IDP can cause it too. \inosgd\ would address this data imbalance.

\end{enumerate}

\section*{Use of LLMs}

We use LLMs to aid or polish writing such as checking grammar. The methodological contributions do not involve LLMs.

\end{document}